\crefname{assumption}{Assumption}{Assumptions}
\tikzset{point/.style={inner sep=1.8pt,fill,circle}}
\setlist{topsep=-0.1em, itemsep=-0.1em} 
\newenvironment{proofof}[1]%
{\par\noindent{\textbf{Proof of #1}\ }}%
{\hfill\BlackBox\vspace{2mm}}
\DeclareMathOperator*{\argmax}{arg\,max}
\DeclareMathOperator*{\Var}{Var}
\DeclareMathOperator*{\argmin}{arg\,min}
\DeclareMathOperator{\Uniform}{\mathrm{Uniform}}
\DeclareMathOperator{\tr}{tr}
\newcommand*{\pushforwardmeas}{\Sigma}
\newcommand*{\objective}{\mathcal{J}}
\newcommand*{\objectiveFVAE}{\objective^{\text{FVAE}}}
\newcommand*{\objectiveFAE}{\objective^{\text{FAE}}}
\newcommand*{\persampleloss}{\mathcal{L}}
\newcommand*{\ELBO}{\mathrm{ELBO}}
\newcommand*{\spd}[1]{\mathcal{S}_{+}(#1)}
\newcommand*{\Pwint}[2]{\Innerprod{#1}{#2}^{\sim}}
\newcommand*{\KLdiv}[2]{D_{\mathrm{KL}}(#1 \,\|\, #2)}
\newcommand*{\bigKLdiv}[2]{D_{\mathrm{KL}}\bigl(#1 \,\big\|\, #2\bigr)}
\newcommand*{\dataspace}{\mathcal{U}}
\newcommand*{\datameas}{\Upsilon}
\newcommand*{\datadensity}{\upsilon}
\newcommand*{\encodermap}{\mathsf{f}}
\newcommand*{\encodermean}{f}
\newcommand*{\encodercov}{\Sigma}
\newcommand*{\decodermap}{g}
\newcommand*{\decodernoisemeas}{\PP_{\eta}}
\newcommand*{\autoencodermeas}{\mathbb{A}}
\newcommand*{\latentmeas}{\mathbb{P}_{z}}
\newcommand*{\decodermeas}{\PP_{u \mid z}}
\newcommand*{\encoderparam}{\theta}
\newcommand*{\Encoderparam}{\Theta}
\newcommand*{\latentgenparam}{\varphi}
\newcommand*{\Latentgenparam}{\Phi}
\newcommand*{\decoderparam}{\psi}
\newcommand*{\Decoderparam}{\Psi}
\newcommand*{\genmeas}{\PP_{u}}
\newcommand*{\encodermeas}{\QQ_{z \mid u}}
\newcommand*{\varmeas}{\QQ_{z \mid u}}
\newcommand*{\posteriormeas}{\PP_{z \mid u}}
\newcommand*{\referencemeas}{\Lambda}
\newcommand*{\statisticaldistance}{\mathsf{d}}
\newcommand*{\encoderjointmeas}{\mathbb{Q}_{z, u}}
\newcommand*{\encoderlatentmeas}{\mathbb{Q}_{z}}
\newcommand*{\decoderjointmeas}{\mathbb{P}_{z, u}}
\newcommand*{\latentspace}{\mathcal{Z}}
\newcommand*{\CMspace}[1]{H(#1)}
\newcommand*{\TT}{\mathbb{T}}
\newcommand*{\defterm}{\textbf}
\renewcommand*{\epsilon}{\varepsilon}
\newcommand*{\Naturals}{\mathbb{N}}
\newcommand*{\one}{\mathds{1}}
\newcommand*{\PP}{\mathbb{P}}
\newcommand*{\QQ}{\mathbb{Q}}
\DeclareMathOperator*{\EE}{\mathbb{E}}
\newcommand*{\qefed}{\eqqcolon}
\newcommand*{\quark}{\setbox0\hbox{$x$}\hbox to\wd0{\hss$\cdot$\hss}}
\newcommand*{\rd}{\mathrm{d}}
\newcommand*{\Reals}{\mathbb{R}}
\newcommand*{\prob}[1]{\mathscr{P}(#1)}
\renewcommand*{\geq}{\geqslant}
\renewcommand*{\leq}{\leqslant}
\renewcommand*{\mathbf}{\boldsymbol}
\newcommand*{\dirac}[1]{\delta_{#1}}
\newcommand*{\arXiv}[1]{\bgroup\color{black}\href{https://arxiv.org/abs/#1}{arXiv:#1}\egroup}
\newcommand*{\doi}[1]{\bgroup\color{black}\href{https://doi.org/#1}{doi:#1}\egroup}
\newcommand{\innerprod}[2]{\langle #1 , #2 \rangle}
\newcommand{\norm}[1]{\lVert #1 \rVert}
\newcommand{\set}[2]{\{ #1 \mid #2 \}}
\newcommand{\bigabsval}[1]{\bigl\vert #1 \bigr\vert}
\newcommand{\biginnerprod}[2]{\bigl\langle #1 , #2 \bigr\rangle}
\newcommand{\bignorm}[1]{\bigl\Vert #1 \bigr\Vert}
\newcommand{\bigset}[2]{\bigl\{ #1 \,\big\vert\, #2 \bigr\}}
\newcommand{\Innerprod}[2]{\left\langle #1 , #2 \right\rangle}
\newcommand{\Norm}[1]{\left\Vert #1 \right\Vert}
\newcommand{\Set}[2]{\left\{ #1 \,\middle\vert\, #2 \right\}}
\newcommand*{\qedremark}{\hfill $\blacksquare$}
\newtheorem{assumption}[theorem]{Assumption}
\newenvironment{proofnosquare}{\par\noindent{\bf Proof\ }}{}
\begin{document}

\title{Autoencoders in Function Space}

\author{\name Justin Bunker \email jb2200@cantab.ac.uk \\
       \addr Department of Engineering\\
       University of Cambridge\\
       Cambridge, CB2 1TN, United Kingdom
       \AND
       \name Mark Girolami \email mgirolami@turing.ac.uk \\
	   \addr Department of Engineering, University of Cambridge\\
	   and Alan Turing Institute\\
	   Cambridge, CB2 1TN, United Kingdom
	   \AND
	   \name Hefin Lambley \email hefin.lambley@warwick.ac.uk\\
	   \addr Mathematics Institute\\
	   University of Warwick\\
	   Coventry, CV4 7AL, United Kingdom
	   \AND
	   \name Andrew M.~Stuart \email astuart@caltech.edu\\
	   \addr 
		Computing + Mathematical Sciences\\
		California Institute of Technology\\
		Pasadena, CA 91125, United States of America 
	   \AND
	   T.~J.~Sullivan \email t.j.sullivan@warwick.ac.uk\\
	   \addr Mathematics Institute \& School of Engineering\\
	   University of Warwick\\
	   Coventry, CV4 7AL, United Kingdom
   }
\editor{Mahdi Soltanolkotabi}

\maketitle

\begin{abstract}%
Autoencoders have found widespread application in both their original deterministic
form and in their variational formulation (VAEs). 
In scientific applications and in image processing it is often of interest to consider data that are viewed as functions;
while discretisation
(of differential equations arising in the sciences) or pixellation (of images) renders problems
finite dimensional in practice, conceiving first of algorithms that operate on functions, and only then discretising or pixellating, leads to better algorithms that smoothly operate between resolutions.
In this paper function-space versions of
the autoencoder (FAE) and variational autoencoder (FVAE) are introduced, analysed, and deployed.
Well-definedness of the objective governing VAEs is a subtle issue, particularly in function space, limiting applicability.
For the FVAE objective to be well defined requires compatibility of the data distribution with the chosen generative model; this can be achieved, for example, when the data arise from a stochastic differential equation, but is generally restrictive. The FAE objective, on the other hand, is well defined in many situations where FVAE fails to be.
Pairing the FVAE and FAE objectives with neural operator architectures that can be evaluated on any mesh enables new applications of autoencoders to inpainting, superresolution, and generative modelling of scientific data.
\end{abstract}

\begin{keywords}
	Variational inference on function space, operator learning, variational autoencoders, regularised autoencoders, scientific machine learning
\end{keywords}

\section{Introduction}

Functional data, and data that can be viewed as a high-resolution approximation of functions, are ubiquitous in data science \citep{Ramsay2002}.
Recent years have seen much interest in machine learning in this setting, with the promise of architectures that can be trained and evaluated across resolutions.
A variety of methods now exist for the supervised learning of operators between function spaces, starting with the work of \cite{ChenChen1993}, followed by DeepONet \citep{LuJinPangZhangKarniadakis2021}, PCA-Net \citep{BhattacharyaHosseiniKovachkiStuart2021}, Fourier neural operators (FNOs; \citealp{Lietal2021}) and variants \citep{Kovachkietal2023}.
These methods have proven useful in diverse applications such as surrogate modelling for costly simulators of dynamical systems \citep{Azizzadeneshelietal2024}.

Practical algorithms  for functional data must necessarily operate on discrete representations of the underlying infinite-dimensional objects, identifying salient features independent of resolution.
Some models make this dimension reduction explicit by representing outputs as a linear combination of basis functions---learned from data in DeepONet, and computed using principal component analysis (PCA) in PCA-Net.
Others do this implicitly, as in, for example, the deep layered structure of FNOs involving repeated application of the discrete Fourier transform followed by pointwise activation.

While linear dimension-reduction methods such as PCA adapt readily to function space, there are many types of data, such as solutions to advection-dominated partial differential equations (PDEs), for which linear approximations are provably inefficient---a phenomenon known as the Kolmogorov barrier \citep{Peherstorfer2022}.
This suggests the need for nonlinear dimension-reduction techniques on function space. 
Motivated by this we propose an extension of variational autoencoders (VAEs; \citealp{KingmaWelling2014}) to functional data using operator learning; we refer to the resulting model as the \textbf{functional variational autoencoder (FVAE)}.
As a probabilistic latent-variable model, FVAE allows for both dimension reduction and principled generative modelling on function space.

We define the FVAE objective as the Kullback--Leibler (KL) divergence between two joint distributions, both on the product of the data and latent spaces, defined by the encoder and decoder models;  
we then derive conditions under which this objective is well-defined.
This differs from the usual presentation of VAEs in which one maximises a lower bound on the data likelihood, the evidence lower bound (ELBO);
we show that our objective is equivalent and that it generalises naturally to function space.
The FVAE objective proves to be well defined only under a compatibility condition between the data and the generative model;
this condition is easily satisfied in finite dimensions but is restrictive for functional data. 
Our applications of FVAE rest on establishing such compatibility, which is possible for problems in the sciences such as those arising in Bayesian inverse problems with Gaussian priors \citep{Stuart2010}, and those governed by stochastic differential equations (SDEs) \citep{HairerStuartVoss2011}.
However, there are many problems in the sciences, and in generative models for PDEs in particular, for which application of FVAE fails because the generative model is incompatible with the data;
using FVAE in such settings leads to foundational theoretical issues and, as a result, to practical problems in the infinite-resolution and -data limits. 

To overcome these foundational issues we propose a deterministic regularised autoencoder that can be applied in very general settings, which we call the \defterm{functional autoencoder (FAE)}.
We show that FAE is an effective tool for dimension reduction, and that it can used as a versatile generative model for functional data.

We complement the FVAE and FAE objectives on function space with neural-operator architectures that can be discretised on any mesh. 
The ability to discretise both the encoder and the decoder on arbitrary meshes extends prior work such as the variational autoencoding neural operator (VANO; \citealp{SeidmanKissasPappasPerdikaris2023}), and is highly empowering, enabling a variety of new applications of autoencoders to scientific data such as inpainting and superresolution.
Code accompanying the paper is available at
\vspace{-0.5em}
\begin{center}
    \url{https://github.com/htlambley/functional_autoencoders}.
\end{center}
\vspace{-1em}

\paragraph{Contributions.}
We make the following contributions to the development and application of autoencoders to functional data:
\begin{enumerate}[label=(C\arabic*)]
	\item
    \label{item:contribution_FVAE}
    we propose FVAE, an extension of VAEs to function space, finding that the training objective is well defined so long as the generative model is compatible with the data;

    \item 
    \label{item:contribution_architecture}
    we complement the FVAE training objective with 
    mesh-invariant architectures that can be deployed on any discretisation---even irregular, non-grid discretisations;

    \item 
    \label{item:contribution_problems}
    we show that when the data and generative model are incompatible, the discretised FVAE objective may diverge in the infinite-resolution and -data limits or entirely fail to minimise the divergence between the encoder and decoder;

    \item 
    \label{item:contribution_FAE}
    we propose FAE, an extension of regularised autoencoders to function space, and show that its objective is 
    well defined in many cases where the FVAE objective is not;

    \item 
    \label{item:contribution_masking}
     exploiting mesh-invariance, we propose masked
     training schemes which exhibit greater robustness at inference
     time, faster training and lower memory usage;
    
    \item 
    \label{item:contribution_mesh-invariance}
    we validate FAE and FVAE on examples from the sciences, including problems governed by SDEs and PDEs, to discover low-dimensional latent structure from data, and use our models for inpainting, superresolution, and generative modelling, exploiting the ability to discretise the encoder and decoder on any mesh.
\end{enumerate}

\paragraph{Outline.} 
\Cref{sec:VAE} extends VAEs to function space (Contribution \ref{item:contribution_FVAE}).
We then describe mesh-invariant architectures of Contribution \ref{item:contribution_architecture}; we also validate our approach, FVAE, on examples such as SDE path distributions where compatibility between the data and the generative model can be verified.
\Cref{sec:problems} gives an example of the problems arising when applying FVAE in the ``misspecified'' setting of Contribution \ref{item:contribution_problems}. 
In \Cref{sec:regularised_AEs} we propose FAE (Contribution \ref{item:contribution_FAE}) and apply our data-driven method to two examples from scientific machine learning:
Navier--Stokes fluid flows and Darcy flows in porous media.
In these problems we make use of the mesh-invariance of the FAE to apply the masked training scheme of Contribution \ref{item:contribution_masking};
masking proves to be vital for the applications to inpainting and superresolution in Contribution~\ref{item:contribution_mesh-invariance}.
\Cref{sec:related_work} discusses related work, and \Cref{sec:outlook} discusses limitations and topics for future research.

\section{Variational Autoencoders on Function Space}
\label{sec:VAE}

In this section we define the VAE objective by minimizing
the KL divergence between two distinct joint distributions on the product of data and latent spaces, one defined by the encoder
and the other by the decoder. We show that this gives a tractable objective when written in terms of a suitable reference distribution (\cref{subsec:VAE_objective}). This objective coincides with maximising the ELBO in finite dimensions (\cref{subsec:VAE_in_finite_dimensions}) and extends readily to infinite dimensions.
The divergence between the encoder and decoder models is finite only under a compatibility condition between the data and the generative model. This condition is restrictive in infinite dimensions. We identify problem classes for which this compatibility holds (\cref{subsec:VAE_in_infinite_dimensions}) and pair the objective with mesh-invariant encoder and decoder architectures (\cref{subsec:VAE_architecture}), leading to a model we call the \defterm{functional variational autoencoder (FVAE)}.
We then validate our method on several problems from scientific machine learning (\cref{subsec:FVAE_numerical_examples}) where the data are governed by SDEs.

\subsection{Training Objective}
\label{subsec:VAE_objective}

We begin by formulating an objective in infinite dimensions, making the following standard assumption in unsupervised learning  throughout \cref{sec:VAE}.
At this stage we focus on the continuum problem, and address the question of discretisation in \cref{subsec:VAE_architecture}.
In what follows $\prob{X}$ is the set of Borel probability measures on the separable Banach space $X$.

\begin{assumption}
	\label[assumption]{assumption:FVAE_assumptions}
	Take the \defterm{data space} $(\dataspace, \norm{\quark})$ to be a separable Banach space.
	There exists a \defterm{data distribution} $\datameas \in \prob{\dataspace}$ from which we have access to $N$ independent and identically distributed samples $\{u^{(n)}\}_{n = 1}^{N} \subset \dataspace$. 
    \qedremark
\end{assumption}

This setting is convenient to work with yet general enough to include many spaces of interest;
$\dataspace$ could be, for example, a Euclidean space $\Reals^{k}$, or the infinite-dimensional space $L^{2}(\Omega)$ of (equivalence classes of) square-integrable functions with domain $\Omega \subseteq \Reals^{d}$.
Separability is a technical condition used to guarantee the existence of conditional distributions in the encoder and decoder models defined shortly \citep[see][Theorem~1]{ChangPollard1997}.

An autoencoder consists of two nonlinear transformations:
an \defterm{encoder} mapping data into a low-dimensional \defterm{latent space} $\latentspace$, and a \defterm{decoder} mapping from $\latentspace$ back into $\dataspace$.
The goal is to choose transformations such that, for $u \sim \datameas$, composing the encoder and decoder approximates the identity.
In this way the autoencoder can be used for dimension reduction, with the encoder output being a compressed representation of the data.

To make this precise, fix  $\latentspace = \Reals^{d_{\mathcal{Z}}}$ and define encoder and decoder transformations depending on parameters $\encoderparam \in \Encoderparam$ and $\decoderparam \in \Decoderparam$, respectively, by
\begin{subequations}
\begin{align}
    \text{(encoder)~~~~~} \dataspace \ni u &\mapsto \encodermeas^{\encoderparam} \in \prob{\latentspace}, \label{eq:FVAE_encoder} \\
    \text{(decoder)~~~~~} \latentspace \ni z &\mapsto \decodermeas^{\decoderparam} \,\in \prob{\dataspace}. \label{eq:FVAE_decoder}
\end{align}
\end{subequations}
To ensure the statistical models we are interested in are well defined, we require both maps to be Markov kernels---that is, for all Borel measurable sets $A \subseteq \latentspace$ and $B \subseteq \dataspace$ and all parameters $\encoderparam \in \Encoderparam$ and $\decoderparam \in \Decoderparam$, the maps $u \mapsto \varmeas^{\encoderparam}(A)$ and $z \mapsto \decodermeas^{\decoderparam}(B)$ are measurable.
Since the encoder and decoder both output probability distributions, we must be precise about what it means to compose them:
we mean the distribution 
\begin{equation}  \label{eq:autoencoding_distribution}
    \text{(autoencoding distribution)}~~~\autoencodermeas^{\encoderparam,\decoderparam}_{u}(B) = \int_{\latentspace} \decodermeas^{\decoderparam}(B) \,\encodermeas^{\encoderparam}(\rd z),\quad B \subseteq \dataspace \text{~measurable.}
\end{equation}

\begin{remark}
    \label{rem:dirac}
    Given $u \in \dataspace$, $\autoencodermeas^{\encoderparam,\decoderparam}_{u}(\cdot)$ is the distribution given by drawing $z \sim \varmeas^{\encoderparam}$ and then sampling from $\decodermeas^{\decoderparam}$;
we would like this to be close to a Dirac distribution $\dirac{u}$ when $u$ is drawn from $\datameas$; enforcing this, approximately,
will be used to determine the parameters $(\encoderparam,\decoderparam).$ 
\qedremark
\end{remark}

\paragraph{Autoencoding as Matching Joint Distributions.} 
Now let us fix a \defterm{latent distribution} $\latentmeas \in \prob{\latentspace}$ and define two distributions for $(z, u)$ on the product space $\latentspace \times \dataspace$:
\begin{subequations}
    \label{eq:encoder--decoder_model}
\begin{align}
	\text{(joint encoder model)}~~~~~~\encoderjointmeas^{\encoderparam}(\rd z, \rd u) &= \encodermeas^{\encoderparam}(\rd z) \datameas(\rd u), \label{eq:encoder_model} \\
	\text{(joint decoder model)}~~\,~~~~\decoderjointmeas^{\decoderparam}(\rd z, \rd u) &= \decodermeas^{\decoderparam}(\rd u) \latentmeas(\rd z). \label{eq:decoder_model}
\end{align}
\end{subequations}
The joint encoder model \eqref{eq:encoder_model} is the distribution on $(z, u)$ given by applying the encoder to data $u \sim \datameas$, while the joint decoder model \eqref{eq:decoder_model} is the distribution on $(z, u)$ given by applying the decoder  to latent vectors $z \sim \latentmeas$.
We emphasise that $\datameas$ is given, while the distributions $\varmeas^{\encoderparam}$, $\decodermeas^{\decoderparam}$, and $\latentmeas$ are to be specified;
the choice of decoder distribution $\decodermeas^{\decoderparam}$ on the (possibly infinite-dimensional) space $\dataspace$ will require particular attention in what follows.

The marginal and conditional distributions of \eqref{eq:encoder_model} and \eqref{eq:decoder_model} will be important and we write, for example, $\posteriormeas^{\decoderparam}$ for the $z \mid u$-conditional of the joint decoder model and $\encoderlatentmeas^{\encoderparam}$ for the $z$-marginal of the joint encoder model.
In particular, the $u$-marginal of the joint decoder model is the \defterm{generative model} $\genmeas^{\decoderparam}$.

To match the joint encoder model \eqref{eq:encoder_model} with the joint decoder model \eqref{eq:decoder_model}, we seek to solve, for some statistical distance or divergence $\statisticaldistance$ on $\prob{\latentspace \times \dataspace}$, the minimisation problem
\begin{equation} \label{eq:minimise_statistical_distance}
    \argmin_{\encoderparam \in \Encoderparam,\,\decoderparam \in \Decoderparam} \statisticaldistance\bigl(\encoderjointmeas^{\encoderparam} \,\|\, \decoderjointmeas^{\decoderparam}\bigr).
\end{equation}
Doing so determines an autoencoder and a generative model.
The choice of $\statisticaldistance$ is constrained by the need for it to be possible to evaluate the objective with $\Upsilon$ known only empirically; 
examples for which this is the case include the Wasserstein metrics and the KL divergence.

\paragraph{VAE Objective as Minimisation of KL Divergence.} 
Using the KL divergence as $\statisticaldistance$ in \eqref{eq:minimise_statistical_distance} leads to the goal of finding $\encoderparam \in \Encoderparam$ and $\decoderparam \in \Decoderparam$ to minimise
\begin{equation}
	\label{eq:joint_Kullback--Leibler_divergence}
     \bigKLdiv{\encoderjointmeas^{\encoderparam}}{\decoderjointmeas^{\decoderparam}} = \EE_{(z, u) \sim \encoderjointmeas^{\encoderparam}} \left[ \log \frac{\rd \encoderjointmeas^{\encoderparam}}{\rd \decoderjointmeas^{\decoderparam}}(z, u) \right] =  \EE_{u \sim \datameas} \EE_{z \sim \varmeas^{\encoderparam}} \left[ \log \frac{\rd \encoderjointmeas^{\encoderparam}}{\rd \decoderjointmeas^{\decoderparam}}(z, u) \right];
\end{equation}
here $\rd \encoderjointmeas^{\encoderparam} / \rd \decoderjointmeas^{\decoderparam}$ is the Radon--Nikodym derivative of $\encoderjointmeas^{\encoderparam}$ with respect to $\decoderjointmeas^{\decoderparam}$, the appropriate infinite-dimensional analogue of the ratio of probability densities. 
This exists only when $\encoderjointmeas^{\encoderparam}$ is absolutely continuous with respect to $\decoderjointmeas^{\decoderparam}$, meaning that $\encoderjointmeas^{\encoderparam}$ assigns probability zero to a set whenever $\decoderjointmeas^{\decoderparam}$ does;
we take \eqref{eq:joint_Kullback--Leibler_divergence} to be infinite otherwise. 
This objective is equivalent to the standard VAE objective in the case $\dataspace=\Reals^{k}$ but has the additional advantage that it can be used in the infinite-dimensional setting.

KL divergence has many benefits justifying its use across statistics.
Its value in inference and generative modelling comes from the connection between maximum-likelihood methods and minimising KL divergence, as well as its information-theoretic interpretation \citep[Sec.~2.3]{CoverThomas2006}.
KL divergence is asymmetric and has two useful properties justifying the order of the arguments in \eqref{eq:joint_Kullback--Leibler_divergence}:
it can be evaluated using only samples of the distribution in its first argument, and it requires no knowledge of the normalisation constant of the distribution in its second argument since minimisers of $\nu \mapsto \KLdiv{\nu}{\mu}$ are invariant when scaling $\mu$ (\citealp{chen2023sampling} and \citealp[Sec.~12.2]{BachBaptistaSanzAlonsoStuart2024}).

To justify the use of the joint divergence \eqref{eq:joint_Kullback--Leibler_divergence}, in \Cref{thm:FVAE_theoretical_justification} we decompose the objective as the sum of two interpretable terms.
In \Cref{thm:FVAE_tractable_objective} we write the objective in a
form that leads to actionable algorithms.

\begin{theorem}
	\label{thm:FVAE_theoretical_justification}
	For all parameters $\encoderparam \in \Encoderparam$ and $\decoderparam \in \Decoderparam$ for which $\KLdiv{\encoderjointmeas^{\encoderparam}}{\decoderjointmeas^{\decoderparam}} < \infty$,
	\begin{equation} 
		\label{eq:FVAE_theoretical_justification}
		\bigKLdiv{\encoderjointmeas^{\encoderparam}}{\decoderjointmeas^{\decoderparam}} = 
		 \underbrace{ \bigKLdiv{\datameas}{\genmeas^{\decoderparam}} }_{\textup{(I)}} + \underbrace{ \EE_{u \sim \datameas} \Bigl[\bigKLdiv{\varmeas^{\encoderparam}}{\posteriormeas^{\decoderparam}}\Bigr] }_{\textup{(II)}} .
	\end{equation}
\end{theorem}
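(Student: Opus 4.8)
The identity is the chain rule (tower property) for relative entropy, applied by disintegrating both joint measures over the $u$-coordinate; the plan is to establish this disintegration, factor the Radon--Nikodym derivative accordingly, and integrate. First I would record that finiteness of $\bigKLdiv{\encoderjointmeas^{\encoderparam}}{\decoderjointmeas^{\decoderparam}}$ forces $\encoderjointmeas^{\encoderparam} \ll \decoderjointmeas^{\decoderparam}$, so that the density $R \defeq \rd\encoderjointmeas^{\encoderparam}/\rd\decoderjointmeas^{\decoderparam}$ exists and $\log R$ is $\encoderjointmeas^{\encoderparam}$-integrable.

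Next I would disintegrate both joints along $u$. The encoder joint \eqref{eq:encoder_model} is already presented in this form, with $u$-marginal $\datameas$ and $z\mid u$-conditional $\varmeas^{\encoderparam}$. For the decoder joint \eqref{eq:decoder_model}, which is written with $z$ first, I would invoke the existence of regular conditional distributions---guaranteed because $\dataspace$ and $\latentspace = \Reals^{d_{\mathcal{Z}}}$ are separable, so the product is standard Borel, which is precisely the role of separability noted after \cref{assumption:FVAE_assumptions}---to re-express it as $\decoderjointmeas^{\decoderparam}(\rd z, \rd u) = \posteriormeas^{\decoderparam}(\rd z)\genmeas^{\decoderparam}(\rd u)$, with $u$-marginal the generative model $\genmeas^{\decoderparam}$ and $z\mid u$-conditional the posterior $\posteriormeas^{\decoderparam}$.

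The crux is then the factorisation
\begin{equation*}
    \frac{\rd\encoderjointmeas^{\encoderparam}}{\rd\decoderjointmeas^{\decoderparam}}(z,u) = \frac{\rd\datameas}{\rd\genmeas^{\decoderparam}}(u)\,\frac{\rd\varmeas^{\encoderparam}}{\rd\posteriormeas^{\decoderparam}}(z),
\end{equation*}
valid for $\encoderjointmeas^{\encoderparam}$-almost every $(z,u)$. I would prove this by verifying that the right-hand side, integrated against $\decoderjointmeas^{\decoderparam}$ over measurable rectangles $A\times B$, reproduces $\encoderjointmeas^{\encoderparam}(A\times B)$, and then appealing to uniqueness of the Radon--Nikodym derivative; along the way this yields $\datameas \ll \genmeas^{\decoderparam}$ and $\varmeas^{\encoderparam}\ll\posteriormeas^{\decoderparam}$ for $\datameas$-almost every $u$, so that both factors are genuine densities. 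Taking logarithms turns the product into a sum, and integrating against $\encoderjointmeas^{\encoderparam}$ splits the objective: the first factor depends only on $u$ and integrates to $\bigKLdiv{\datameas}{\genmeas^{\decoderparam}}$, while the inner $z$-integral of the second factor against $\varmeas^{\encoderparam}$ is exactly $\bigKLdiv{\varmeas^{\encoderparam}}{\posteriormeas^{\decoderparam}}$, whose $\datameas$-expectation gives term (II). The main obstacle is measure-theoretic rather than computational: justifying the $u$-disintegration of the decoder joint and the density factorisation in the infinite-dimensional setting, together with the bookkeeping of the almost-everywhere qualifiers (for instance, that the conditional divergence may be infinite only on a $\datameas$-null set, which is harmless after integration). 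Since a separable Banach space is Polish, the standard disintegration theorem applies and this obstacle is surmountable.
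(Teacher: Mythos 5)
Your proposal is correct and follows essentially the same route as the paper's proof: disintegrate the decoder joint over $u$ as $\decoderjointmeas^{\decoderparam}(\rd z, \rd u) = \posteriormeas^{\decoderparam}(\rd z)\,\genmeas^{\decoderparam}(\rd u)$, factor the Radon--Nikodym derivative into $\frac{\rd\datameas}{\rd\genmeas^{\decoderparam}}(u)\,\frac{\rd\varmeas^{\encoderparam}}{\rd\posteriormeas^{\decoderparam}}(z)$, take logarithms, and integrate. The only difference is that you spell out the measure-theoretic justifications (existence of the regular conditional distribution and the rectangle argument for the density factorisation) that the paper leaves implicit.
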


\begin{proof}
	Factorise $\encoderjointmeas^{\encoderparam}(\rd z, \rd u) = \encodermeas^{\encoderparam}(\rd z) \datameas(\rd u)$ and $\decoderjointmeas^{\decoderparam}(\rd z, \rd u) = \posteriormeas^{\decoderparam}(\rd z) \genmeas^{\decoderparam}(\rd u)$ and substitute into \eqref{eq:joint_Kullback--Leibler_divergence} to obtain
	\begin{align*}
		\bigKLdiv{\encoderjointmeas^{\encoderparam}}{\decoderjointmeas^{\decoderparam}} &= \EE_{\substack{(z, u) \\\sim \encoderjointmeas^{\encoderparam}}} \Biggl[ \log \frac{\rd \datameas}{\rd \genmeas^{\decoderparam}}(u) \frac{\rd \encodermeas^{\encoderparam}}{\rd \posteriormeas^{\decoderparam}}(z) \Biggr]  
		= \EE_{u \sim \datameas} \EE_{z \sim \varmeas^{\encoderparam}} \Biggl[ \log \frac{\rd \datameas}{\rd \genmeas^{\decoderparam}}(u) + \log \frac{\rd \encodermeas^{\encoderparam}}{\rd \posteriormeas^{\decoderparam}}(z) \Biggr].
	\end{align*}
	The result follows by inserting the definition of the KL divergences on the right-hand side of \eqref{eq:FVAE_theoretical_justification}, and all terms are finite owing to the assumption that $\KLdiv{\encoderjointmeas^{\encoderparam}}{\decoderjointmeas^{\decoderparam}} < \infty$.
\end{proof}

\Cref{thm:FVAE_theoretical_justification} decomposes the divergence \eqref{eq:joint_Kullback--Leibler_divergence} into (I) the error in the generative model and (II) the error in approximating the \defterm{decoder posterior} $\posteriormeas^{\decoderparam}$ with $\varmeas^{\encoderparam}$ via variational inference;
jointly training a variational-inference model and a generative model is exactly the goal of a VAE.
However, minimising \eqref{eq:joint_Kullback--Leibler_divergence} makes sense only if $(\encoderparam, \decoderparam) \mapsto \KLdiv{\encoderjointmeas^{\encoderparam}}{\decoderjointmeas^{\decoderparam}}$ is finite for some $\encoderparam$ and $\decoderparam$.
Verification of this property is a necessary step that we perform for our settings of interest in \Cref{prop:joint_divergence_finite_infimum_finite_dimensions,prop:joint_divergence_finite_infimum_infinite_dimensions,prop:FAE_infimum_finite}.

\begin{remark}[Posterior collapse]
	While minimising \eqref{eq:joint_Kullback--Leibler_divergence} typically leads to a useful autoencoder, this is not guaranteed: 
	the autoencoding distribution \eqref{eq:autoencoding_distribution} may be very far from a Dirac distribution.
	For example, when $\posteriormeas^{\decoderparam} \approx \latentmeas$, the optimal decoder distribution $\decodermeas^{\decoderparam}$ may well ignore the latent variable $z$ entirely.
	Indeed, if $\datameas = \varmeas = \decodermeas = \latentmeas = N(0, 1)$ on $\dataspace = \Reals$, then $\KLdiv{\encoderjointmeas}{\decoderjointmeas} = 0$, but the autoencoding distribution $\autoencodermeas_{u} = N(0, 1)$ for all $u \in \dataspace$.
    This is not close to the desired Dirac distribution referred to in Remark \ref{rem:dirac}.
	This issue is known in the VAE literature as \defterm{posterior collapse} \citep{WangBleiCunningham2021}. In practice, choice of
    model classes for $\varmeas$ and $\decodermeas$ avoids this issue.
    \qedremark
\end{remark}

\paragraph{Tractable Training Objective.} 
Since we can access $\datameas$ only through training data, we must decompose the 
objective function into a sum of a term which involves $\datameas$ only through samples and a term which is independent of the parameters $\encoderparam$ and $\decoderparam$. The first of these terms may then be used to define a tractable objective function over the parameters. To address this issue we introduce a \defterm{reference distribution} $\referencemeas \in \prob{\dataspace}$ and impose the following conditions on the encoder and the reference distribution.

\begin{assumption}
	\label[assumption]{assumption:tractable_objective}

    \begin{enumerate}[label=(\alph*)]
        \item 
        \label{item:FVAE_tractable_objective_reference_distribution}

        There exists a reference distribution $\referencemeas \in \prob{\dataspace}$ such that:
	\begin{enumerate}[label=(\roman*)]
		\item
		\label{item:FVAE_tractable_objective_decoder_equivalence}
		for all $\decoderparam \in \Decoderparam$ and $z \in \latentspace$, $\decodermeas^{\decoderparam}$ is mutually absolutely continuous with $\referencemeas$; and

		\item 
		\label{item:FVAE_tractable_objective_finite-information_condition}
		the data distribution $\datameas$ satisfies the finite-information condition $\KLdiv{\datameas}{\referencemeas} < \infty$.
	\end{enumerate}
    
        \item 
        \label{item:FVAE_tractable_objective_encoder_qualification}
        For all $\encoderparam \in \Encoderparam$ and $\datameas$-almost all $u \in \dataspace$,
$\KLdiv{\varmeas^{\encoderparam}}{\latentmeas} < \infty$. \qedremark
\end{enumerate}
   
\end{assumption}

\begin{theorem}
	\label{thm:FVAE_tractable_objective}
    If \Cref{assumption:tractable_objective} is satisfied for some $\referencemeas \in \prob{\dataspace}$,
	then for all parameters $\encoderparam \in \Encoderparam$ and $\decoderparam \in \Decoderparam$ for which $\KLdiv{\encoderjointmeas^{\encoderparam}}{\decoderjointmeas^{\decoderparam}} < \infty$, we have
	\begin{subequations}
	\begin{align}
		\bigKLdiv{\encoderjointmeas^{\encoderparam}}{\decoderjointmeas^{\decoderparam}} &= \EE_{u \sim \datameas}\bigl[ \persampleloss(u; \encoderparam, \decoderparam) \bigr] + \KLdiv{\datameas}{\referencemeas}, \label{eq:FVAE_tractable_objective}\\
		\text{(per-sample loss)}~~~~\persampleloss(u; \encoderparam, \decoderparam) &=
		\EE_{z \sim \varmeas^{\encoderparam}}\Biggl[  -\log \frac{\rd \decodermeas^{\decoderparam}}{\rd \referencemeas}(u) \Biggr] + \bigKLdiv{\varmeas^{\encoderparam}}{\latentmeas} \label{eq:per-sample_loss}.
	\end{align}
	\end{subequations}
\end{theorem}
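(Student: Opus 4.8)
The plan is to reduce the joint Radon--Nikodym derivative $\rd\encoderjointmeas^{\encoderparam}/\rd\decoderjointmeas^{\decoderparam}$ to a product of densities that factorise across $\latentspace$ and $\dataspace$ by passing through a common \emph{product} reference measure. Concretely, I would introduce $\mathbb{M} \defeq \latentmeas\otimes\referencemeas \in \prob{\latentspace\times\dataspace}$, compute $\rd\encoderjointmeas^{\encoderparam}/\rd\mathbb{M}$ and $\rd\decoderjointmeas^{\decoderparam}/\rd\mathbb{M}$ separately, and then divide via the chain rule. The choice of $\mathbb{M}$ is engineered so that the decoder density depends only on $u$ (because the $z$-marginal of $\decoderjointmeas^{\decoderparam}$ is exactly $\latentmeas$) while the encoder density splits into a $z$-part and a $u$-part.

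First I would establish absolute continuity and the factorised form of each density using \Cref{assumption:tractable_objective}. For the decoder model, disintegrating $\decoderjointmeas^{\decoderparam}(\rd z,\rd u) = \decodermeas^{\decoderparam}(\rd u)\,\latentmeas(\rd z)$ against $\mathbb{M}$ and invoking the decoder-equivalence condition ($\decodermeas^{\decoderparam}$ mutually absolutely continuous with $\referencemeas$) gives $\decoderjointmeas^{\decoderparam}\ll\mathbb{M}$ with $\frac{\rd\decoderjointmeas^{\decoderparam}}{\rd\mathbb{M}}(z,u) = \frac{\rd\decodermeas^{\decoderparam}}{\rd\referencemeas}(u)$. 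For the encoder model $\encoderjointmeas^{\encoderparam}(\rd z,\rd u) = \varmeas^{\encoderparam}(\rd z)\,\datameas(\rd u)$, combining $\datameas\ll\referencemeas$ (finite-information condition) with $\varmeas^{\encoderparam}\ll\latentmeas$ for $\datameas$-almost every $u$ (encoder condition) yields $\encoderjointmeas^{\encoderparam}\ll\mathbb{M}$ with $\frac{\rd\encoderjointmeas^{\encoderparam}}{\rd\mathbb{M}}(z,u) = \frac{\rd\varmeas^{\encoderparam}}{\rd\latentmeas}(z)\,\frac{\rd\datameas}{\rd\referencemeas}(u)$, where a jointly measurable version of $(z,u)\mapsto\frac{\rd\varmeas^{\encoderparam}}{\rd\latentmeas}(z)$ exists because $\varmeas^{\encoderparam}$ is a Markov kernel on a separable space. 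The standing hypothesis $\KLdiv{\encoderjointmeas^{\encoderparam}}{\decoderjointmeas^{\decoderparam}}<\infty$ forces $\encoderjointmeas^{\encoderparam}\ll\decoderjointmeas^{\decoderparam}$, so the decoder density is positive $\encoderjointmeas^{\encoderparam}$-almost everywhere and the chain rule gives, $\encoderjointmeas^{\encoderparam}$-almost everywhere,
\[
\log\frac{\rd\encoderjointmeas^{\encoderparam}}{\rd\decoderjointmeas^{\decoderparam}}(z,u) = \log\frac{\rd\varmeas^{\encoderparam}}{\rd\latentmeas}(z) + \log\frac{\rd\datameas}{\rd\referencemeas}(u) - \log\frac{\rd\decodermeas^{\decoderparam}}{\rd\referencemeas}(u) \eqqcolon A(z,u) + B(u) + C(z,u).
\]

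The main obstacle is the integrability bookkeeping: splitting the integral of this sum into a sum of integrals must never produce $\infty-\infty$. I would avoid this by peeling off the terms in the right order. Since finiteness of $\KLdiv{\datameas}{\referencemeas}$ controls both the positive and negative parts of $B$, the term $B$ is $\datameas$-integrable and contributes exactly $\KLdiv{\datameas}{\referencemeas}$; and since $\log\frac{\rd\encoderjointmeas^{\encoderparam}}{\rd\decoderjointmeas^{\decoderparam}}$ is $\encoderjointmeas^{\encoderparam}$-integrable (finiteness of the joint divergence), the difference $A+C$ is also $\encoderjointmeas^{\encoderparam}$-integrable, so $\KLdiv{\datameas}{\referencemeas}$ separates cleanly. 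For the remaining contribution I would apply Fubini to write $\EE_{(z,u)\sim\encoderjointmeas^{\encoderparam}}[A+C] = \EE_{u\sim\datameas}\int_{\latentspace}(A+C)\,\varmeas^{\encoderparam}(\rd z)$, and then split the inner integral for each fixed $u$. This split is legitimate precisely because the encoder condition makes $\int_{\latentspace}A\,\varmeas^{\encoderparam}(\rd z) = \KLdiv{\varmeas^{\encoderparam}}{\latentmeas}$ finite for $\datameas$-almost every $u$, so no cancellation of infinities occurs at the per-sample level. Recognising the two inner integrals as $\EE_{z\sim\varmeas^{\encoderparam}}\bigl[-\log\frac{\rd\decodermeas^{\decoderparam}}{\rd\referencemeas}(u)\bigr]$ and $\KLdiv{\varmeas^{\encoderparam}}{\latentmeas}$ identifies $\int_{\latentspace}(A+C)\,\varmeas^{\encoderparam}(\rd z) = \persampleloss(u;\encoderparam,\decoderparam)$, and integrating over $u\sim\datameas$ yields the claimed identity.
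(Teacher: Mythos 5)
Your proposal is correct and follows essentially the same route as the paper's proof: both factor the Radon--Nikodym derivative $\rd\encoderjointmeas^{\encoderparam}/\rd\decoderjointmeas^{\decoderparam}$ through the reference measure into the product $\frac{\rd\varmeas^{\encoderparam}}{\rd\latentmeas}(z)\,\frac{\rd\referencemeas}{\rd\decodermeas^{\decoderparam}}(u)\,\frac{\rd\datameas}{\rd\referencemeas}(u)$ and split the expectation into the three resulting terms, with your explicit product reference measure $\mathbb{M}=\latentmeas\otimes\referencemeas$ and the careful order of peeling off integrable terms simply making rigorous what the paper compresses into ``factor through $\referencemeas$'' and ``all terms are finite by \Cref{assumption:tractable_objective}''. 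One small imprecision: the density $\frac{\rd\decodermeas^{\decoderparam}}{\rd\referencemeas}(u)$ does depend on $z$ as well (since $\decodermeas^{\decoderparam}=\PP_{u\mid z}^{\decoderparam}$ is indexed by $z$), so it is not true that ``the decoder density depends only on $u$''---but you correctly track it as $C(z,u)$ thereafter, so this does not affect the argument.
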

\begin{proof}
	Write $\encoderjointmeas^{\encoderparam}(\rd z, \rd u) = \encodermeas^{\encoderparam}(\rd z) \datameas(\rd u)$ and $\decoderjointmeas^{\decoderparam}(\rd z, \rd u) = \decodermeas^{\decoderparam}(\rd u) \latentmeas(\rd z)$, factor through the distribution $\referencemeas$ in \eqref{eq:joint_Kullback--Leibler_divergence}, and apply the definition of the KL divergence to obtain
	\begin{align*}
		\bigKLdiv{\encoderjointmeas^{\encoderparam}}{\decoderjointmeas^{\decoderparam}} &= \EE_{(z, u) \sim \encoderjointmeas^{\encoderparam}} \Biggl[\log \frac{\rd \varmeas^{\encoderparam}}{\rd \latentmeas}(z)  \frac{\rd \referencemeas}{\rd \decodermeas^{\decoderparam}}(u) \frac{\rd \datameas}{\rd \referencemeas}(u) \Biggr] \\
		&= \EE_{u \sim \datameas} \Biggl[\EE_{z \sim \varmeas^{\encoderparam}} \Biggl[ - \log \frac{\rd \decodermeas^{\decoderparam}}{\rd \referencemeas}(u) + \log \frac{\rd \varmeas^{\encoderparam}}{\rd \latentmeas}(z) \Biggr] + \log \frac{\rd \datameas}{\rd \referencemeas}(u)  \Biggr] \\
		&= \EE_{u \sim \datameas} \Biggl[ \EE_{z \sim \varmeas^{\encoderparam}} \Biggl[-\log \frac{\rd \decodermeas^{\decoderparam}}{\rd \referencemeas}(u) \Biggr] + \bigKLdiv{\varmeas^{\encoderparam}}{\latentmeas} \Biggr]  + \KLdiv{\datameas}{\referencemeas},
    \end{align*} 
    where all terms are finite as a consequence of Assumptions~\ref{assumption:tractable_objective}\ref{item:FVAE_tractable_objective_reference_distribution}\ref{item:FVAE_tractable_objective_finite-information_condition} and \ref{item:FVAE_tractable_objective_encoder_qualification}.
\end{proof}

Since $\KLdiv{\datameas}{\referencemeas}$ is assumed to be finite and depends on neither $\encoderparam$ nor $\decoderparam$, \Cref{thm:FVAE_tractable_objective} shows that the joint KL divergence \eqref{eq:joint_Kullback--Leibler_divergence} is equivalent, up to a finite constant, to
\begin{equation}
    \label{eq:FVAE_objective}
	\text{(FVAE objective)}~~~~~~~~~~~~~~\objectiveFVAE(\encoderparam, \decoderparam) = \EE_{u \sim \datameas} \bigl[ \persampleloss(u; \encoderparam, \decoderparam) \bigr].
\end{equation}
In particular, minimising $\objectiveFVAE$ is equivalent to minimising the joint divergence, and $\objectiveFVAE(\encoderparam, \decoderparam)$ is finite if and only if $\KLdiv{\encoderjointmeas^{\encoderparam}}{\decoderjointmeas^{\decoderparam}}$ is finite.
Moreover \eqref{eq:FVAE_objective} can be approximated using samples from $\datameas$:
\begin{equation} \label{eq:empiricalised_FVAE_objective}
	\text{(empirical FVAE objective)}~~~~\objectiveFVAE_{N}(\encoderparam, \decoderparam) = \frac{1}{N} \sum_{n = 1}^{N} \persampleloss\bigl(u^{(n)}; \encoderparam, \decoderparam\bigr).
\end{equation}

In the limit of infinite data, the empirical objective \eqref{eq:empiricalised_FVAE_objective} converges to \eqref{eq:FVAE_objective};
but both \eqref{eq:FVAE_objective} and \eqref{eq:empiricalised_FVAE_objective} may be infinite in many practical settings, as we shall see in the following sections.

\begin{remark}
	\begin{enumerate}[label=(\alph*)]
		\item 
		\Cref{assumption:tractable_objective}\ref{item:FVAE_tractable_objective_reference_distribution} ensures that the density $\rd \decodermeas^{\decoderparam}/\rd \referencemeas$ in the per-sample loss exists, and that minimising \eqref{eq:joint_Kullback--Leibler_divergence} and \eqref{eq:FVAE_objective} is equivalent;
		\Cref{assumption:tractable_objective}\ref{item:FVAE_tractable_objective_encoder_qualification} ensures that, when the joint divergence \eqref{eq:joint_Kullback--Leibler_divergence} is finite, the per-sample loss $\persampleloss$ is finite for $\datameas$-almost all $u \in \dataspace$.
		We could also formulate \Cref{assumption:tractable_objective}\ref{item:FVAE_tractable_objective_reference_distribution} with a $\sigma$-finite reference measure $\referencemeas$, e.g., Lebesgue measure, but for our theory it suffices to consider probability measures.
		
		\item
		The proof of \Cref{thm:FVAE_tractable_objective} shows why we take $\encoderjointmeas^{\encoderparam}$ as the first argument and $\decoderjointmeas^{\decoderparam}$ as the second in \eqref{eq:joint_Kullback--Leibler_divergence} to obtain a tractable objective.
		Reversing the arguments in the divergence gives an expectation with respect to the joint decoder model \eqref{eq:decoder_model}:
		\begin{equation*} 
			\bigKLdiv{\decoderjointmeas^{\decoderparam}}{\encoderjointmeas^{\encoderparam}} = \EE_{(z, u) \sim \decoderjointmeas^{\decoderparam}} \Biggl[\log \frac{\rd \latentmeas}{\rd \varmeas^{\encoderparam}}(z) + \log \frac{\rd \decodermeas^{\decoderparam}}{\rd \referencemeas}(u) \Biggr] + \EE_{(z, u) \sim \decoderjointmeas^{\decoderparam}} \Biggl[ \log \frac{\rd \referencemeas}{\rd \datameas}(u) \Biggr].
		\end{equation*}
        Unlike in \Cref{thm:FVAE_tractable_objective}, the term involving $\rd \referencemeas / \rd \datameas$ is not a constant: 
        it depends on the parameter $\decoderparam$ and would therefore need to be evaluated during the optimisation process, but this is intractable because we have only samples from $\datameas$ and not its density. \qedremark
	\end{enumerate}
\end{remark}

\subsection{Objective in Finite Dimensions}
\label{subsec:VAE_in_finite_dimensions}

We now show that the theory in \cref{subsec:VAE_objective} simplifies in finite dimensions to the usual VAE objective.
To do so we assume $\dataspace = \Reals^{k}$ and that $\datameas \in \prob{\dataspace}$ has strictly positive probability density $\datadensity \colon \dataspace \to (0, \infty)$.
We moreover assume that $\latentspace = \Reals^{d_{\latentspace}}$ and that the latent distribution and the distributions returned by the encoder \eqref{eq:FVAE_encoder} and decoder \eqref{eq:FVAE_decoder} are Gaussian, taking the form 
\begin{subequations}
\begin{align}
	\varmeas^{\encoderparam} &= N\bigl(\encodermean(u; \encoderparam), \encodercov(u; \encoderparam) \bigr) = \encodermean(u; \encoderparam) + \encodercov(u; \encoderparam)^{\frac12} N(0, I_{\latentspace}) \label{eq:encoder_finite_dimensions}, \\
	\decodermeas^{\decoderparam} &= N\bigl( \decodermap(z; \decoderparam), \beta I_{\dataspace} \bigr)  = \decodermap(z; \decoderparam) + \beta^{\frac12} N\bigl(0, I_{\dataspace}\bigr) \label{eq:decoder_finite_dimensions}, \\
	\mathbb{P}_{z}  &= N\bigl(0, I_{\latentspace}\bigr), \label{eq:latent_distribution_finite_dimensions}
\end{align}
\end{subequations}
where $\beta > 0$ is fixed and the parameters of \eqref{eq:encoder_finite_dimensions} and \eqref{eq:decoder_finite_dimensions} are given by learnable maps
\begin{subequations}
\begin{align}
	\encodermap = (\encodermean, \encodercov) &\colon \dataspace \times \Encoderparam \to \latentspace \times \spd{\latentspace} \label{eq:FVAE_encoder_map}, \\
	\decodermap &\colon \latentspace \times \Decoderparam \to \dataspace, \label{eq:FVAE_decoder_map}
\end{align}
\end{subequations}
with $\spd{\latentspace}$ denoting the set of positive semidefinite matrices on $\latentspace$.
The Gaussian model \eqref{eq:encoder_finite_dimensions}--\eqref{eq:latent_distribution_finite_dimensions} is the standard setting in which VAEs are applied, resulting in the joint decoder model \eqref{eq:decoder_model} being the distribution of $(z, u)$ in the model
\begin{equation}
    \label{eq:VAE_decoder_model}
	u \mid z = \decodermap(z; \decoderparam) + \eta,\qquad
	z \sim N\bigl(0, I_{\latentspace}\bigr),\qquad
	\eta \sim \decodernoisemeas = \beta^{\frac12} N\bigl(0, I_{\dataspace}\bigr).
\end{equation}
Other decoder models are also possible \citep{KingmaWelling2014}, and in infinite dimensions we will consider a wide class of decoders, including Gaussians as a particular case.
In practice \eqref{eq:FVAE_encoder_map}, \eqref{eq:FVAE_decoder_map} will come from a parametrised class of functions, e.g., a class of neural networks.
Provided these classes are sufficiently large and the data distribution has finite information with respect to $\decodernoisemeas$, the joint divergence \eqref{eq:joint_Kullback--Leibler_divergence} is finite for at least one choice of $\encoderparam$ and $\decoderparam$.

\begin{proposition}
	\label[proposition]{prop:joint_divergence_finite_infimum_finite_dimensions}
	Suppose that for parameters $\encoderparam^{\star} \in \Encoderparam$ and $\decoderparam^{\star} \in \Decoderparam$ we have $\encodermap(u; \encoderparam^{\star}) = (0, I_{\latentspace})$ and $\decodermap(z; \decoderparam^{\star}) = 0$.
	Then
	\begin{equation*} 
		\bigKLdiv{\encoderjointmeas^{\encoderparam^{\star}}}{\decoderjointmeas^{\decoderparam^\star}} = \KLdiv{\datameas}{\decodernoisemeas}.
	\end{equation*}
	In particular, if $\KLdiv{\datameas}{\decodernoisemeas} < \infty$, then the joint divergence \eqref{eq:joint_Kullback--Leibler_divergence} has finite infimum.
\end{proposition}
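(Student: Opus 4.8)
The plan is to compute the joint divergence directly at the stated parameters, exploiting the fact that both joint models collapse to product measures. First I would substitute $\encoderparam^{\star}$ and $\decoderparam^{\star}$ into the Gaussian encoder \eqref{eq:encoder_finite_dimensions} and decoder \eqref{eq:decoder_finite_dimensions}: since $\encodermap(u; \encoderparam^{\star}) = (0, I_{\latentspace})$ we obtain $\varmeas^{\encoderparam^{\star}} = N(0, I_{\latentspace})$ independently of $u$, and since $\decodermap(z; \decoderparam^{\star}) = 0$ we obtain $\decodermeas^{\decoderparam^{\star}} = N(0, \beta I_{\dataspace}) = \decodernoisemeas$ independently of $z$. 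Consequently the joint encoder and decoder models \eqref{eq:encoder_model}--\eqref{eq:decoder_model} are product measures on $\latentspace \times \dataspace$, namely $\encoderjointmeas^{\encoderparam^{\star}} = N(0, I_{\latentspace}) \otimes \datameas$ and $\decoderjointmeas^{\decoderparam^\star} = N(0, I_{\latentspace}) \otimes \decodernoisemeas$, both sharing the same $z$-marginal $\latentmeas = N(0, I_{\latentspace})$ of \eqref{eq:latent_distribution_finite_dimensions}.

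Next I would apply the tensorisation identity for \ac{KL} divergence: for product measures $\mu_z \otimes \mu_u$ and $\nu_z \otimes \nu_u$ the Radon--Nikodym derivative factorises (when it exists), and one obtains $\bigKLdiv{\mu_z \otimes \mu_u}{\nu_z \otimes \nu_u} = \KLdiv{\mu_z}{\nu_z} + \KLdiv{\mu_u}{\nu_u}$, an equality valid in $[0, +\infty]$ irrespective of finiteness. Here the $z$-factors coincide, so $\KLdiv{\mu_z}{\nu_z} = \KLdiv{N(0, I_{\latentspace})}{N(0, I_{\latentspace})} = 0$, leaving $\bigKLdiv{\encoderjointmeas^{\encoderparam^{\star}}}{\decoderjointmeas^{\decoderparam^\star}} = \KLdiv{\datameas}{\decodernoisemeas}$, which is the claimed identity. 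The ``in particular'' clause then follows immediately: if $\KLdiv{\datameas}{\decodernoisemeas} < \infty$ we have exhibited a single pair $(\encoderparam^{\star}, \decoderparam^{\star})$ at which the joint divergence is finite, and since \ac{KL} divergence is nonnegative the infimum over $\Encoderparam \times \Decoderparam$ lies in $[0, \KLdiv{\datameas}{\decodernoisemeas}]$ and is therefore finite.

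The main subtlety to handle is avoiding a circular appeal to \Cref{thm:FVAE_theoretical_justification}. That decomposition would also deliver the identity---since here the generative model is $\genmeas^{\decoderparam^{\star}} = \decodernoisemeas$ and the decoder posterior $\posteriormeas^{\decoderparam^{\star}}$ equals $\latentmeas = \varmeas^{\encoderparam^{\star}}$, making term~(II) vanish---but it is asserted only under the standing hypothesis that the joint divergence is already finite, so it cannot on its own establish finiteness. The direct product-measure computation sidesteps this, because the tensorisation identity holds as an equality in $[0, +\infty]$. The one point requiring a line of care is the degenerate case $\datameas \not\ll \decodernoisemeas$: there both sides are simultaneously $+\infty$, the failure of mutual absolute continuity of the joint measures arising exactly from that of their $u$-marginals, so the asserted equality still holds.
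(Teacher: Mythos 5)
Your proposal is correct and follows essentially the same route as the paper: both evaluate the two joint models at $(\encoderparam^{\star}, \decoderparam^{\star})$ to see that they become product measures with common $z$-marginal $\latentmeas$, so the divergence reduces to $\KLdiv{\datameas}{\decodernoisemeas}$. Your explicit invocation of the tensorisation identity as an equality in $[0, +\infty]$, and your remark on the degenerate case $\datameas \not\ll \decodernoisemeas$, are slightly more careful than the paper's one-line computation but do not constitute a different argument.
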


\begin{proofnosquare}
	Evaluating the joint encoder model \eqref{eq:encoder_model} and the joint decoder model \eqref{eq:decoder_model} at $\encoderparam^{\star}$ and $\decoderparam^{\star}$ gives $\encoderjointmeas^{\encoderparam^{\star}}(\rd z, \rd u) = \latentmeas(\rd z) \datameas(\rd u)$ and  $\decoderjointmeas^{\decoderparam^{\star}}(\rd z, \rd u) = \latentmeas(\rd z) \decodernoisemeas(\rd u)$, so
	\begin{equation*}
		\bigKLdiv{\encoderjointmeas^{\encoderparam^{\star}}}{\decoderjointmeas^{\decoderparam^{\star}}} = \EE_{u \sim \datameas} \EE_{z \sim \latentmeas} \left[\log \frac{\rd \latentmeas}{\rd \latentmeas}(z) + \log \frac{\rd \datameas}{\rd \decodernoisemeas}(u) \right] = \KLdiv{\datameas}{\decodernoisemeas}.~~~~~\BlackBox
	\end{equation*}
\end{proofnosquare}

\begin{remark}
In finite dimensions, many data distributions satisfy $\KLdiv{\datameas}{\decodernoisemeas} < \infty$. However there are cases in which this fails, e.g., when $\datameas$ is a Cauchy distribution on $\Reals$,
In such cases \eqref{eq:joint_Kullback--Leibler_divergence} is infinite even with the trivial maps $\encodermap(u; \encoderparam^{\star}) = (0, I_{\latentspace})$ and $\decodermap(z; \decoderparam^{\star}) = 0$.
\qedremark
\end{remark}

\paragraph{Training Objective.}
In \cref{subsec:VAE_objective} we proved that the joint divergence \eqref{eq:joint_Kullback--Leibler_divergence} can be approximated by the tractable objective \eqref{eq:FVAE_objective} and its empiricalisation \eqref{eq:empiricalised_FVAE_objective} whenever there is a reference distribution $\referencemeas$ satisfying \Cref{assumption:tractable_objective}.
We now show that taking $\referencemeas = \datameas$ satisfies this assumption and results in a training objective equivalent to maximising the ELBO. 
Recall the data density $\datadensity$ associated with data measure $\datameas.$

\begin{proposition}
    Under the model \eqref{eq:encoder_finite_dimensions}--\eqref{eq:latent_distribution_finite_dimensions} with reference distribution $\referencemeas = \datameas$, \Cref{assumption:tractable_objective} is satisfied, and, for some finite constant $C > 0$ independent of $u$, $\encoderparam$, and $\decoderparam$,
    \begin{subequations}
\begin{align}
	\persampleloss(u; \encoderparam, \decoderparam) &= \EE_{z \sim \varmeas^{\encoderparam}} \bigl[-\log p_{u \mid z}^{\decoderparam}(u)\bigr] + \log \datadensity(u) + \bigKLdiv{\varmeas^{\encoderparam}}{\latentmeas} \label{eq:per-sample_loss_finite_dims}
  \\&= \EE_{z \sim \varmeas^{\encoderparam}} \left[ (2\beta)^{-1} \bigl\| \decodermap(z; \decoderparam) - u \bigr\|^{2}_{2} \right] + \log \datadensity(u) + \bigKLdiv{\varmeas^{\encoderparam}}{\latentmeas} + C.
\end{align}
\end{subequations}
\end{proposition}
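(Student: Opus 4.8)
The plan is to specialise \Cref{thm:FVAE_tractable_objective} to the Gaussian model \eqref{eq:encoder_finite_dimensions}--\eqref{eq:latent_distribution_finite_dimensions} with the reference distribution taken to be the data distribution itself, $\referencemeas = \datameas$. First I would verify that this choice satisfies all parts of \Cref{assumption:tractable_objective}, since the formula \eqref{eq:per-sample_loss} is valid only under that hypothesis. For \Cref{assumption:tractable_objective}\ref{item:FVAE_tractable_objective_reference_distribution}\ref{item:FVAE_tractable_objective_decoder_equivalence}, note that $\decodermeas^{\decoderparam} = N(\decodermap(z;\decoderparam), \beta I_{\dataspace})$ has a strictly positive Lebesgue density on $\dataspace = \Reals^{k}$, as does $\datameas$ by the standing assumption that $\datadensity > 0$; hence both measures are mutually absolutely continuous with Lebesgue measure, and therefore with each other. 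For \Cref{assumption:tractable_objective}\ref{item:FVAE_tractable_objective_reference_distribution}\ref{item:FVAE_tractable_objective_finite-information_condition}, the choice $\referencemeas = \datameas$ gives $\KLdiv{\datameas}{\referencemeas} = 0 < \infty$ trivially. For \Cref{assumption:tractable_objective}\ref{item:FVAE_tractable_objective_encoder_qualification}, the encoder and latent distributions are finite-dimensional Gaussians, so $\bigKLdiv{\varmeas^{\encoderparam}}{\latentmeas}$ is given by the standard closed form $\tfrac12\bigl(\tr \encodercov(u;\encoderparam) + \norm{\encodermean(u;\encoderparam)}_2^2 - d_{\latentspace} - \log\det \encodercov(u;\encoderparam)\bigr)$, which is finite. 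Here I would flag the one delicate point: this requires $\encodercov(u;\encoderparam)$ to be nondegenerate (strictly positive definite), since a singular covariance would make $\varmeas^{\encoderparam}$ fail to be absolutely continuous with respect to $\latentmeas$ and render the divergence infinite.

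With the assumption in hand, I would compute the per-sample loss directly from \eqref{eq:per-sample_loss}. In finite dimensions $\decodermeas^{\decoderparam}$ and $\datameas = \referencemeas$ both admit Lebesgue densities---namely $p_{u \mid z}^{\decoderparam}$ and $\datadensity$---so the Radon--Nikodym derivative factorises as $\tfrac{\rd \decodermeas^{\decoderparam}}{\rd \referencemeas}(u) = p_{u \mid z}^{\decoderparam}(u)/\datadensity(u)$, whence $-\log \tfrac{\rd \decodermeas^{\decoderparam}}{\rd \referencemeas}(u) = -\log p_{u \mid z}^{\decoderparam}(u) + \log \datadensity(u)$. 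Substituting this into \eqref{eq:per-sample_loss} and pulling the $z$-independent term $\log \datadensity(u)$ out of the expectation over $z \sim \varmeas^{\encoderparam}$ yields the first displayed line \eqref{eq:per-sample_loss_finite_dims}.

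Finally I would expand the Gaussian decoder density explicitly. Since $\decodermeas^{\decoderparam} = N(\decodermap(z;\decoderparam), \beta I_{\dataspace})$, we have $-\log p_{u \mid z}^{\decoderparam}(u) = (2\beta)^{-1}\norm{\decodermap(z;\decoderparam) - u}_2^2 + \tfrac{k}{2}\log(2\pi\beta)$. Setting $C = \tfrac{k}{2}\log(2\pi\beta)$, which is finite and depends on neither $u$, $\encoderparam$, nor $\decoderparam$ because $\beta$ and $k$ are fixed, and taking the expectation over $z \sim \varmeas^{\encoderparam}$ gives the second displayed line. The computation is otherwise entirely routine; the only genuine care is in verifying \Cref{assumption:tractable_objective}\ref{item:FVAE_tractable_objective_encoder_qualification} and in checking that the Gaussian normalising constant $C$ carries no dependence on the optimisation parameters, so that it may be discarded when minimising the objective.
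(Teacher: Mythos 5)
Your proof is correct and follows essentially the same route as the paper: verify Assumption~\ref{assumption:tractable_objective} via strict positivity of the densities, the identity $\KLdiv{\datameas}{\datameas} = 0$, and the closed-form Gaussian \ac{KL} divergence, then factorise $\rd \decodermeas^{\decoderparam}/\rd \datameas = p_{u\mid z}^{\decoderparam}/\datadensity$ and expand the Gaussian log-density. Your extra observations---the explicit constant $C = \tfrac{k}{2}\log(2\pi\beta)$ and the caveat that $\encodercov(u;\encoderparam)$ must be nondegenerate for \Cref{assumption:tractable_objective}\ref{item:FVAE_tractable_objective_encoder_qualification}---are legitimate refinements the paper leaves implicit.
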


\begin{proof}
For \Cref{assumption:tractable_objective}\ref{item:FVAE_tractable_objective_reference_distribution}, 
mutual absolute continuity of $\decodermeas^{\decoderparam}$ and $\referencemeas$ follows as both distributions have strictly positive densities, and evidently
$\KLdiv{\datameas}{\referencemeas} = 0$. \Cref{assumption:tractable_objective}\ref{item:FVAE_tractable_objective_encoder_qualification} holds since both the encoder distribution \eqref{eq:encoder_finite_dimensions} and the latent distribution \eqref{eq:latent_distribution_finite_dimensions} are Gaussian, with KL divergence available in closed form (\Cref{rk:VAE_regularised_autoencoder}).
The expression for $\persampleloss$ follows from \eqref{eq:per-sample_loss} using that $\decodermeas^{\decoderparam}$ is Gaussian with density $p_{u \mid z}^{\decoderparam}$ and $\rd \decodermeas^{\decoderparam} / \rd \datameas(u) = p_{u \mid z}^{\decoderparam}(u) / \datadensity(u)$.
\end{proof}

While the per-sample loss \eqref{eq:per-sample_loss_finite_dims} involves the unknown density $\datadensity$, we can drop this without affecting the objective \eqref{eq:FVAE_objective} if $\datameas$ has finite \defterm{differential entropy} $\EE_{u \sim \datameas} \bigl[-\log \datadensity(u)\bigr]$. 
This follows because
\begin{subequations}
	\begin{align}
		\objectiveFVAE(\encoderparam, \decoderparam) &= \EE_{u \sim \datameas} \bigl[\persampleloss^{\text{VAE}}(u; \encoderparam, \decoderparam) \bigr] - \EE_{u \sim \datameas} \bigl[-\log \datadensity(u) \bigr] + C, \\
		\persampleloss^{\text{VAE}}(u; \encoderparam, \decoderparam) &= \EE_{z \sim \varmeas^{\encoderparam}} \Bigl[ (2\beta)^{-1} \bigl\| \decodermap(z; \decoderparam) - u \bigr\|^{2}_{2}  \Bigr] + \bigKLdiv{\varmeas^{\encoderparam}}{\latentmeas}. \label{eq:per-sample_loss_VAE}
	\end{align}
\end{subequations}
Thus, $\objective^{\text{VAE}}(\encoderparam, \decoderparam) = \EE_{u \sim \datameas} [\persampleloss^{\text{VAE}}(\encoderparam, \decoderparam)]$ is equivalent, up to a finite constant, to $\objectiveFVAE(\encoderparam, \decoderparam)$, and $\objective^{\text{VAE}}$ is tractable.
Requiring that $\datameas$ has finite differential entropy is a mild condition, and one can expect this to be the case for the vast majority of distributions arising in the finite-dimensional setting.

\begin{remark}[VAEs as regularised autoencoders]
    \label[remark]{rk:VAE_regularised_autoencoder}
	We can write the divergence in \eqref{eq:per-sample_loss_VAE} in closed form as
    $\KLdiv{\varmeas^{\encoderparam}}{\latentmeas} = \frac{1}{2} \bigl( \norm{\encodermean(u; \encoderparam)}^{2}_{2} + \tr\bigl(\encodercov(u; \encoderparam) - \log \encodercov(u; \encoderparam) \bigr) - d_{\latentspace} \bigr)$.
	Applying the reparametrisation trick \citep{KingmaWelling2014} to write the expectation over $z \sim \varmeas^{\encoderparam}$ in terms of $\xi \sim N(0, I_{\latentspace})$, the interpretation of a VAE as a regularised autoencoder is clear:
	\begin{align*}
		\persampleloss^{\text{VAE}}(u; \encoderparam, \decoderparam) &\propto \EE_{\xi \sim N(0, I_{\latentspace})} \biggl[ (2\beta)^{-1} \Bigl\| \decodermap\Bigl(\encodermean(u;\encoderparam) + \encodercov(u; \encoderparam)\xi ;\decoderparam\Bigr) - u\Bigr\|^{2}_{2}   \biggr] \\
        &+ \tfrac{1}{2} \Norm{\encodermean(u; \encoderparam)}^{2}_{2}  +  \tfrac{1}{2} \tr \bigl(\encodercov(u; \encoderparam) - \log \encodercov(u; \encoderparam)\bigr).~~~~~~~~~~~~~~~~~~~~~~~~~~~~~~~~\blacksquare
	\end{align*}
\end{remark}

\begin{remark}[The evidence lower bound]
	The usual derivation of VAEs specifies the decoder model \eqref{eq:VAE_decoder_model} and performs variational inference on the posterior for $z \mid u$, seeking to maximise a lower bound, the ELBO, on the likelihood of the data.
	Denoting by $p_{u}^{\decoderparam}$ the density of the generative model $\genmeas^{\decoderparam}$, by $q_{z \mid u}^{\encoderparam}$ the density of $\varmeas^{\encoderparam}$, and so forth, the log-likelihood of $u \in \dataspace$ is 
	\begin{subequations}
	\begin{align}
		\label{eq:VAE_ELBO_equality}
		\log p_{u}^{\decoderparam}(u) &= \bigKLdiv{q_{z \mid u}^{\encoderparam}}{p_{z \mid u}^{\decoderparam}} + \ELBO(u; \encoderparam, \decoderparam),\\
		\ELBO(u; \encoderparam, \decoderparam) &= \EE_{z \sim q_{z \mid u}^{\encoderparam}} \bigl[\log p_{u \mid z}^{\decoderparam}(u) \bigr] - \bigKLdiv{q_{z \mid u}^{\encoderparam}}{p_{z}}.
		\label{eq:VAE_ELBO}
	\end{align}
	\end{subequations}
	This demonstrates that $\ELBO(u; \encoderparam, \decoderparam)$ is indeed a lower bound on the log-data likelihood under the decoder model.
	Minimising our per-sample loss \eqref{eq:per-sample_loss_finite_dims} is equivalent to maximising the ELBO \eqref{eq:VAE_ELBO}, but, notably, our underlying objective is shown to correspond exactly to the underlying KL divergence \eqref{eq:joint_Kullback--Leibler_divergence} and avoids the use of any bounds on data likelihood. 
    \qedremark
\end{remark}

\begin{remark}
An interpretation of the VAE objective as minimisation of \eqref{eq:joint_Kullback--Leibler_divergence} is also adopted by \citet[Sec.~2.8]{Kingma2017} and \citet{KingmaWelling2019}.
Our approach differs by writing the joint decoder model \eqref{eq:decoder_model} in terms of $\datameas$ rather than the empirical distribution
\begin{equation*}
	\text{(empirical data distribution)~~~~~~} \datameas_{N} = \frac{1}{N} \sum_{n = 1}^{N} \dirac{u^{(n)}}.
\end{equation*}
Since 
$\datameas_{N}$ is not absolutely continuous with respect to the generative model $\genmeas^{\decoderparam}$ of \eqref{eq:encoder_finite_dimensions}--\eqref{eq:latent_distribution_finite_dimensions}, using this would result in the joint divergence \eqref{eq:joint_Kullback--Leibler_divergence} being infinite for all $\encoderparam$ and $\decoderparam$. \qedremark
\end{remark}

\subsection{Objective in Infinite Dimensions}
\label{subsec:VAE_in_infinite_dimensions}

We return to the setting of \Cref{assumption:FVAE_assumptions} and adopt a generalisation of the Gaussian model \eqref{eq:encoder_finite_dimensions}--\eqref{eq:latent_distribution_finite_dimensions}, with distributional parameters given by learnable maps $\encodermap = (\encodermean, \encodercov)\colon \dataspace \times \Encoderparam \to \latentspace \times \spd{\latentspace}$ and $\decodermap \colon \latentspace \times \Decoderparam \to \dataspace$:
\begin{subequations}
\begin{align}
	\varmeas^{\encoderparam} &= N\bigl(\encodermean(u; \encoderparam), \encodercov(u; \encoderparam)\bigr) = \encodermean(u; \encoderparam) + \encodercov(u; \encoderparam)^{\frac12} N(0, I_{\latentspace}), \label{eq:encoder_infinite_dimensions} \\
	\decodermeas^{\decoderparam} &= \decodernoisemeas\bigl(\quark - \decodermap(z; \decoderparam) \bigr) = \decodermap(z; \decoderparam) + \decodernoisemeas,  \label{eq:decoder_infinite_dimensions_1} \\
	\decodernoisemeas &\in \prob{\dataspace}, \label{eq:decoder_infinite_dimensions_2}  \\
	\mathbb{P}_{z} &= N\bigl(0, I_{\latentspace}\bigr).\label{eq:latent_distribution_infinite_dimensions} 
\end{align}
\end{subequations}
The only change from the finite-dimensional model is in the decoder distribution \eqref{eq:decoder_infinite_dimensions_1}, which we now write as the shift of the \defterm{decoder-noise distribution} $\decodernoisemeas$ by the mean $\decodermap(z; \decoderparam)$.
As we will see shortly, the choice of decoder noise will be very important in infinite dimensions, and restricting attention solely to Gaussian white noise will no longer be feasible.

\paragraph{Obstacles in Infinite Dimensions.}
Many new issues arise in infinite dimensions, necessitating a more careful treatment of the FVAE objective;
for example, we can no longer work with probability density functions, since there is no uniform measure analogous to Lebesgue measure \citep{Sudakov1959}.
The fundamental obstacle, however, is that---unlike in finite dimensions---the joint divergence \eqref{eq:joint_Kullback--Leibler_divergence} is often ill defined, satisfying
\begin{equation*}
	\bigKLdiv{\encoderjointmeas^{\encoderparam}}{\decoderjointmeas^{\decoderparam}} = \infty \text{~~~for all $\encoderparam \in \Encoderparam$ and $\decoderparam \in \Decoderparam$.}
\end{equation*}
An important situation in which this arises is misspecification of the generative model $\genmeas^{\decoderparam}$;
consequently, great care is needed in the choice of decoder to avoid this issue.
To illustrate this we show that the extension of the white-noise model of \eqref{eq:encoder_finite_dimensions}--\eqref{eq:latent_distribution_finite_dimensions} is ill-defined in infinite dimensions:
the resulting joint divergence \eqref{eq:joint_Kullback--Leibler_divergence} is always infinite.
To do this we define white noise using a Karhunen--Lo\`eve expansion \citep[Sec.~11.1]{Sullivan2015}.

\begin{definition}
    Let $\dataspace = L^{2}([0, 1])$ with orthonormal basis $e_{j}(x) = \sqrt{2} \sin(\pi j x)$, $j \in \Naturals$.
    We say that the random variable $\eta$ is \defterm{$L^{2}$-white noise} if it has Karhunen--Lo\`eve expansion 
    \begin{equation*}
        \text{($L^{2}$-white noise)}~~~~~~~~~~~~~~~~~~~\eta = \sum_{j \in \Naturals} \xi_{j} e_{j},\qquad \xi_{j} \overset{\text{i.i.d.}}{\sim} N(0, 1).~~~~~~~~~~~~~~~~~~~~~~~\blacksquare
    \end{equation*}
\end{definition}

\begin{proposition}
    \label[proposition]{prop:white_noise}
    Let $\decodernoisemeas$ be the distribution of the $L^{2}$-white noise $\eta$.
    Then $\decodernoisemeas$ is a probability distribution supported on the Sobolev space $H^{s}([0, 1])$ if and only if $s < -\nicefrac{1}{2}$; in particular
    $\decodernoisemeas$ assigns probability zero to $L^{2}([0, 1])$.
    Moreover, these statements remain true for any shift $\decodernoisemeas(\quark - h)$ of the distribution $\decodernoisemeas$ by $h \in L^{2}([0, 1])$.
\end{proposition}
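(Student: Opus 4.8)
The plan is to translate membership in $H^{s}([0,1])$ into an almost-sure convergence question for a series of independent random variables and then settle it with a zero--one law and a moment criterion. Since the $e_{j}$ are the Dirichlet eigenfunctions of $-\rd^{2}/\rd x^{2}$ on $[0,1]$ with eigenvalues $(\pi j)^{2}$, the Sobolev norm admits the spectral characterisation $\norm{f}_{H^{s}}^{2} \asymp \sum_{j \in \Naturals} (1 + (\pi j)^{2})^{s} \absval{\innerprod{f}{e_{j}}}^{2}$, with implied constants independent of $f$; for the range of $s$ at issue this is equivalent to the standard Sobolev norm, so I may work with it directly. Because $\innerprod{\eta}{e_{j}} = \xi_{j}$, the event $\{\eta \in H^{s}\}$ agrees up to a null set with $\bigl\{\sum_{j} (1+(\pi j)^{2})^{s} \xi_{j}^{2} < \infty\bigr\}$, a sum of independent scaled $\chi^{2}_{1}$ variables.

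First I would note that this event lies in the tail $\sigma$-algebra of the independent sequence $(\xi_{j})_{j}$, so by Kolmogorov's zero--one law it has probability $0$ or $1$; hence $\decodernoisemeas$ either charges $H^{s}$ fully or assigns it measure zero, and it remains only to decide which. For $s < -\nicefrac{1}{2}$ the exponents satisfy $\sum_{j} (1+(\pi j)^{2})^{s} < \infty$, so $\EE \norm{\eta}_{H^{s}}^{2} = \sum_{j} (1 + (\pi j)^{2})^{s} < \infty$ and the nonnegative series converges almost surely by monotone convergence; thus $\decodernoisemeas(H^{s}) = 1$. For $s \ge -\nicefrac{1}{2}$ I would prove almost-sure divergence via Kolmogorov's three-series theorem applied to $X_{j} \defeq (1+(\pi j)^{2})^{s} \xi_{j}^{2}$: truncating at level one, the Gaussian tail gives $\sum_{j} \mathbb{P}(X_{j} > 1) < \infty$, while, since the truncation threshold $(1+(\pi j)^{2})^{-s}$ tends to $+\infty$, the truncated means obey $\EE[X_{j} \one_{X_{j} \le 1}] \sim (\pi j)^{2s}$, so $\sum_{j} \EE[X_{j} \one_{X_{j} \le 1}] = \infty$ precisely when $\sum_{j} j^{2s} = \infty$, that is, when $s \ge -\nicefrac{1}{2}$. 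Failure of the three-series condition forces the series to diverge almost surely, giving $\decodernoisemeas(H^{s}) = 0$; the case $s = 0$ yields $\decodernoisemeas(L^{2}([0,1])) = 0$.

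The hard part will be the sharp divergence claim at and beyond the endpoint $s = -\nicefrac{1}{2}$: the crude computation $\EE\norm{\eta}_{H^{s}}^{2} = \infty$ does not on its own rule out almost-sure finiteness, so the independence must be exploited, and it is the three-series theorem (equivalently, the criterion $\sum_{j} \EE[\min(X_{j},1)] = \infty$) that converts divergence of $\sum_{j} j^{2s}$ into almost-sure divergence of the norm. The only analytic care needed is the truncated-moment asymptotic $\EE[\xi_{j}^{2} \one_{\xi_{j}^{2} \le (1+(\pi j)^{2})^{-s}}] \to 1$, which is immediate as the threshold diverges for $s < 0$.

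Finally, for the shift by $h \in L^{2}([0,1])$ I would argue from the vector-space structure of the Sobolev scale. For $s \le 0$ we have $h \in L^{2} = H^{0} \subseteq H^{s}$, so $H^{s}$ is invariant under translation by $h$ and $\{\eta + h \in H^{s}\}$ coincides with $\{\eta \in H^{s}\}$; this covers both the positive case $s < -\nicefrac{1}{2}$ and the negative-$s$ portion of the null case. For $s > 0$ I would instead use $H^{s} \subseteq L^{2}$ together with $\eta \notin L^{2}$ almost surely: were $\eta + h \in H^{s}$, then $\eta = (\eta + h) - h \in L^{2}$, a contradiction. In every case the shifted law $\decodernoisemeas(\quark - h)$ charges $H^{s}$ exactly when $\decodernoisemeas$ does, so the characterisation is unchanged.
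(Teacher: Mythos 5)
Your argument is correct and shares the paper's skeleton---expand in the Karhunen--Lo\`eve basis, use the spectral form of the $H^{s}$-norm, and reduce to almost-sure convergence of a random series of independent nonnegative terms---but the key lemmas and the treatment of the shift differ. For convergence when $s < -\nicefrac{1}{2}$ the paper applies the Kolmogorov two-series theorem directly to the shifted coefficients $\xi_{j} + h_{j}$, whereas you use the more elementary observation that a nonnegative series with finite expectation is almost surely finite, and defer the shift entirely. For divergence, the paper only treats $s = 0$ via Borel--Cantelli (showing $\decodernoisemeas(\quark - h)(L^{2}([0,1])) = 0$) and leaves the ``only if'' for $-\nicefrac{1}{2} \le s < 0$ implicit; your three-series argument actually proves the sharp statement $\decodernoisemeas(H^{s}([0,1])) = 0$ for all $s \ge -\nicefrac{1}{2}$, which is a genuine gain in completeness. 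Your handling of the shift is also cleaner: rather than redoing the series computation with $\xi_j + h_j$, you reduce to the unshifted case by noting that $H^{s}$ is invariant under translation by $h \in L^{2} \subseteq H^{s}$ when $s \le 0$, and that $H^{s} \subseteq L^{2}$ together with $\eta \notin L^{2}$ handles $s > 0$. One small imprecision: your claims that $\sum_{j} \PP(X_{j} > 1) < \infty$ and that the truncation threshold tends to $+\infty$ both presuppose $s < 0$; for $s \ge 0$ the written argument does not apply verbatim, though this is harmless because divergence at $s = -\nicefrac{1}{2}$ propagates to all larger $s$ via the inclusion $H^{s} \subseteq H^{-1/2}$ (or, equivalently, because $\sum_{j}\EE[\min(X_{j},1)] = \infty$ trivially when $s \ge 0$). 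The Kolmogorov zero--one law observation is a nice touch but is not needed once the three-series theorem is invoked.
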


The proof, which makes use of the Borel--Cantelli lemma and the Kolmogorov two-series theorem, is stated in \cref{sec:supporting_results}.

\begin{example}
\label[example]{ex:white-noise_on_L2}

Let $\dataspace = L^{2}([0, 1])$, fix a data distribution $\datameas \in \prob{\dataspace}$ and take the model \eqref{eq:encoder_infinite_dimensions}--\eqref{eq:latent_distribution_infinite_dimensions}, 
where we further assume that $\decodermap(z; \decoderparam) \in L^{2}([0, 1])$ for all $z \in \latentspace$ and $\decoderparam \in \Decoderparam$, and with
$\decodernoisemeas$ taken to be the distribution of $L^{2}$-white noise.
Under this model, the joint divergence \eqref{eq:joint_Kullback--Leibler_divergence} is infinite for all $\encoderparam$ and $\decoderparam$.
To see this, note that $\datameas$ assigns probability one to $\dataspace$, but, as $\decodermeas^{\decoderparam}$ assigns zero probability to $\dataspace$ by \Cref{prop:white_noise}, we have
\begin{equation*}
	\genmeas^{\decoderparam}(\dataspace) = \int_{\latentspace} \decodermeas^{\decoderparam}(\dataspace) \,\latentmeas(\rd z) = 0.
\end{equation*}
Thus $\datameas$ is not absolutely continuous with respect to $\genmeas^{\decoderparam}$, and so $\KLdiv{\encoderjointmeas^{\encoderparam}}{\decoderjointmeas^{\decoderparam}} = \infty$.
\qedremark
\end{example}

The VANO model \citep{SeidmanKissasPappasPerdikaris2023}, which we discuss in detail in the related work  (\cref{sec:related_work}), adopts the setting of \eqref{eq:encoder_infinite_dimensions}--\eqref{eq:latent_distribution_infinite_dimensions} with white decoder noise. It thus
suffers from not being well-defined.
The issues in \Cref{ex:white-noise_on_L2} stem from the difference in regularity between the data, which lies in $L^2([0,1])$, and draws from the generative model, which lie in $H^{s}([0, 1])$, $s < -\nicefrac{1}{2}$, and not in $L^2([0,1])$.
A difference in regularity is not the only possible issue:
even two Gaussians supported on the same space need not be absolutely continuous owing to the Feldman--H\'ajek theorem \citep[Ex.~2.7.4]{Bogachev1998}.
But, as in \Cref{prop:joint_divergence_finite_infimum_finite_dimensions}, we can state a sufficient condition for the divergence \eqref{eq:joint_Kullback--Leibler_divergence} to have finite infimum.

\begin{proposition}
	\label[proposition]{prop:joint_divergence_finite_infimum_infinite_dimensions}
	Suppose that $\encodermap(u; \encoderparam^{\star}) = (0, I_{\latentspace})$ and $\decodermap(z; \decoderparam^{\star}) = 0$ for some $\encoderparam^{\star} \in \Encoderparam$ and $\decoderparam^{\star} \in \Decoderparam$, and that $\KLdiv{\datameas}{\decodernoisemeas} < \infty$.
	Then \eqref{eq:joint_Kullback--Leibler_divergence} has finite infimum.
\end{proposition}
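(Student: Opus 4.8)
The plan is to exhibit a single pair of parameters at which the joint divergence \eqref{eq:joint_Kullback--Leibler_divergence} is finite; since the infimum over $\encoderparam \in \Encoderparam$ and $\decoderparam \in \Decoderparam$ can only be smaller, this establishes that the infimum is finite. The natural candidate is the distinguished pair $(\encoderparam^{\star}, \decoderparam^{\star})$ provided in the hypothesis, and the argument parallels the finite-dimensional \Cref{prop:joint_divergence_finite_infimum_finite_dimensions}, the only real difference being that in infinite dimensions one cannot manipulate probability density functions and must instead reason at the level of measures and Radon--Nikodym derivatives.

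First I would substitute the trivial maps into the model \eqref{eq:encoder_infinite_dimensions}--\eqref{eq:latent_distribution_infinite_dimensions}. Since $\encodermap(u; \encoderparam^{\star}) = (0, I_{\latentspace})$, the encoder distribution collapses to $\varmeas^{\encoderparam^{\star}} = N(0, I_{\latentspace}) = \latentmeas$ independently of $u$; and since $\decodermap(z; \decoderparam^{\star}) = 0$, the decoder distribution \eqref{eq:decoder_infinite_dimensions_1} reduces to the unshifted noise $\decodermeas^{\decoderparam^{\star}} = \decodernoisemeas$ independently of $z$. Substituting into the joint models \eqref{eq:encoder_model} and \eqref{eq:decoder_model} yields the two product measures
\begin{equation*}
    \encoderjointmeas^{\encoderparam^{\star}}(\rd z, \rd u) = \latentmeas(\rd z)\,\datameas(\rd u), \qquad \decoderjointmeas^{\decoderparam^{\star}}(\rd z, \rd u) = \latentmeas(\rd z)\,\decodernoisemeas(\rd u),
\end{equation*}
which share the common $z$-marginal $\latentmeas$ and differ only in their $u$-marginal.

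The key step is then to invoke the tensorisation property of the \ac{KL} divergence for product measures with a common factor, giving
\begin{equation*}
    \bigKLdiv{\encoderjointmeas^{\encoderparam^{\star}}}{\decoderjointmeas^{\decoderparam^{\star}}} = \KLdiv{\latentmeas}{\latentmeas} + \KLdiv{\datameas}{\decodernoisemeas} = \KLdiv{\datameas}{\decodernoisemeas},
\end{equation*}
which is finite by hypothesis, completing the proof. I expect the point requiring the most care to be the justification of this factorisation in infinite dimensions. The argument is clean precisely because the two joints have identical $z$-marginals: the Radon--Nikodym derivative $\rd \encoderjointmeas^{\encoderparam^{\star}} / \rd \decoderjointmeas^{\decoderparam^{\star}}$ then depends on $u$ alone and equals $\rd \datameas / \rd \decodernoisemeas$. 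The requisite absolute continuity $\encoderjointmeas^{\encoderparam^{\star}} \ll \decoderjointmeas^{\decoderparam^{\star}}$ reduces to $\datameas \ll \decodernoisemeas$, which is itself guaranteed by the finiteness of $\KLdiv{\datameas}{\decodernoisemeas}$, and the expectation defining the divergence then factorises over the two coordinates by Fubini's theorem.
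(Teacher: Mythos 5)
Your proof is correct and follows essentially the same route as the paper, which argues exactly as in \Cref{prop:joint_divergence_finite_infimum_finite_dimensions}: evaluate at $(\encoderparam^{\star}, \decoderparam^{\star})$ to obtain the product measures $\latentmeas \otimes \datameas$ and $\latentmeas \otimes \decodernoisemeas$, and observe that the joint divergence collapses to $\KLdiv{\datameas}{\decodernoisemeas}$, which is finite by hypothesis. Your additional remarks on the absolute continuity $\datameas \ll \decodernoisemeas$ and the fact that the Radon--Nikodym derivative depends on $u$ alone correctly supply the measure-theoretic care needed in infinite dimensions.
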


\paragraph{Well-Defined Objective for Specific Problem Classes.}
The issues we have seen suggest that one must choose the decoder model based on the structure of the data distribution:
fixing a decoder model \emph{a priori} typically results, in infinite dimensions, in the joint divergence being infinite. However we now give examples to show that, in important classes of problems arising in science and engineering, there is a clear choice of decoder noise $\decodernoisemeas$ arising from the problem structure.
Adopting this decoder noise and taking the reference distribution $\referencemeas = \decodernoisemeas$ will ensure that the hypotheses of \Cref{thm:FVAE_tractable_objective} are satisfied: 
the joint divergence can be shown to have finite infimum, and \Cref{assumption:tractable_objective} holds.
Thus we can apply the actionable algorithms derived in \Cref{subsec:VAE_objective}.

\subsubsection{SDE Path Distributions}
\label{subsubsec:SDE}

One class of data to which we can apply FVAE arises in the study of random dynamical systems \citep{ERenVandenEijnden2004}.
We choose $\datameas$ to be the distribution over paths of the diffusion process defined, for a standard Brownian motion $(w_{t})_{t \in [0, T]}$ on $\Reals^{m}$
and $\varepsilon > 0$, by
\begin{equation}
	\label{eq:SDE}
	\rd u_{t} = b(u_{t}) \,\rd t + \sqrt{\varepsilon} \,\rd w_{t},\qquad u_{0} = 0,\qquad t \in [0, T].
\end{equation}
We assume that the drift $b \colon \Reals^{m}\to \Reals^{m}$ is regular enough that \eqref{eq:SDE} is well defined.
The theory we outline also applies to systems with anisotropic diffusion and time-dependent coefficients \citep[Sec.~7.3]{SaerkkaeSolin2019}, and to systems with nonzero initial condition, but we focus on the setting \eqref{eq:SDE} for simplicity.
The path distribution $\datameas$ is defined on the space $\dataspace = C_{0}([0, T], \Reals^{m})$ of continuous functions $u\colon [0, T] \to \Reals^{m}$ with $u(0) = 0$. 

Recall \eqref{eq:decoder_infinite_dimensions_1} where we define the
decoder distribution $\decodermeas^{\decoderparam}$ as the shift of the noise distribution $\decodernoisemeas$ by $\decodermap(z; \decoderparam)$, and recall \cref{assumption:tractable_objective}, which in particular demands that $\KLdiv{\datameas}{\decodernoisemeas} < \infty$  and that $\decodermeas^{\decoderparam}$ is mutually absolutely continuous with $\decodernoisemeas$.
Here we choose $\decodernoisemeas$ to be the law of an auxiliary diffusion process, which in our examples will be an Ornstein--Uhlenbeck (OU) process.
This auxiliary process must have zero initial condition and must have the
same noise structure as that in \eqref{eq:SDE} to ensure that $\KLdiv{\datameas}{\decodernoisemeas} < \infty$. 
We will learn the shift $\decodermap$ and this will have to satisfy the zero initial condition to ensure that
$\decodermeas^{\decoderparam}$ is mutually absolutely continuous with $\decodernoisemeas$.

\paragraph{Decoder-Noise Distribution $\decodernoisemeas$.} 
We take $\decodernoisemeas$ to be the law of the auxiliary SDE
\begin{equation} \label{eq:modified_SDE}
	\rd \eta_{t} = c(\eta_{t}) \,\rd t + \sqrt{\varepsilon} \,\rd w_{t},\qquad \eta_{0} = 0,\qquad t \in [0, T],
\end{equation}
with drift $c \colon \Reals^{m} \to \Reals^{m}$, and with $\varepsilon > 0$ being the same in both \eqref{eq:SDE} and \eqref{eq:modified_SDE}.
To determine whether $\KLdiv{\datameas}{\decodernoisemeas} < \infty$, we will need to evaluate the term $\rd \datameas / \rd \decodernoisemeas$ in the KL divergence; 
an expression for this is given by the Girsanov theorem \citep[Chap.~7]{LiptserShiryaev2001}.

\begin{proposition}[Girsanov theorem]
    \label[proposition]{prop:Girsanov}
	Suppose that $\dataspace = C_{0}([0, T], \Reals^{m})$ and that $\mu \in \prob{\dataspace}$ and $\nu \in \prob{\dataspace}$ are the laws of the $\Reals^{m}$-valued diffusions
	\begin{equation*}
    \begin{alignedat}{3}
		\rd u_{t} &= p(u_{t}) \,\rd t + \sqrt{\varepsilon} \,\rd w_{t},\quad &&u_{0} =0,\quad &&t \in [0, T],\\
		\rd v_{t} &= q(v_{t}) \,\rd t + \sqrt{\varepsilon} \,\rd w_{t},\quad &&v_{0} = 0,\quad\, &&t \in [0, T].
        \end{alignedat}
	\end{equation*}
	Suppose that the Novikov condition \citep[eq.~(8.6.8)]{Oksendal2003} holds for both processes:
	\begin{equation} \label{eq:Novikov_condition}
		\EE_{u \sim \mu} \left[\int_{0}^{T} \norm{p(u_{t})}_{2}^{2} \,\rd t\right] < \infty \text{~~~and~~~}
		\EE_{v \sim \nu} \left[\int_{0}^{T} \norm{q(v_{t})}_{2}^{2} \,\rd t\right] < \infty.
	\end{equation}
	Then
	\begin{equation} \label{eq:Girsanov_density}
		\frac{\rd \mu}{\rd \nu}(u) = \exp\left(\frac{1}{2\varepsilon} \int_{0}^{T} \norm{q(u_{t})}_{2}^{2} - \norm{p(u_{t})}_{2}^{2} \,\rd t - \frac{1}{\varepsilon} \int_{0}^{T} \langle q(u_{t}) - p(u_{t}), \,\rd u_{t} \rangle\right).
	\end{equation}
\end{proposition}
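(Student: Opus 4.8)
The plan is to realise both $\mu$ and $\nu$ as absolutely continuous changes of a single reference measure---the law $W_{\varepsilon} \in \prob{\dataspace}$ of the driftless process $\rd X_{t} = \sqrt{\varepsilon}\,\rd w_{t}$ with $X_{0} = 0$---and then to recover \eqref{eq:Girsanov_density} from the Radon--Nikodym chain rule $\rd\mu/\rd\nu = (\rd\mu/\rd W_{\varepsilon})(\rd\nu/\rd W_{\varepsilon})^{-1}$. Under $W_{\varepsilon}$ the canonical path $X$ is a continuous martingale with quadratic variation $\varepsilon t$, and the laws $\mu$ and $\nu$ are obtained by appending the drifts $p$ and $q$ respectively.

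First I would apply the classical Girsanov theorem \citep[Chap.~7]{LiptserShiryaev2001} to each process separately. The first Novikov bound in \eqref{eq:Novikov_condition} guarantees that the exponential process $\exp\bigl(\varepsilon^{-1}\int_{0}^{t}\langle p(X_{s}),\rd X_{s}\rangle - (2\varepsilon)^{-1}\int_{0}^{t}\norm{p(X_{s})}_{2}^{2}\,\rd s\bigr)$ is a genuine $W_{\varepsilon}$-martingale rather than a mere local martingale; this is the one place the Novikov hypothesis is essential. The associated change of measure adds the drift $p$ to $X$, and by uniqueness in law of the solution to \eqref{eq:SDE}---implicit in the hypothesis that $\mu$ is \emph{the} law of the $p$-diffusion---the resulting measure is exactly $\mu$, with
\[
\frac{\rd\mu}{\rd W_{\varepsilon}}(u) = \exp\left(\frac{1}{\varepsilon}\int_{0}^{T}\langle p(u_{t}),\rd u_{t}\rangle - \frac{1}{2\varepsilon}\int_{0}^{T}\norm{p(u_{t})}_{2}^{2}\,\rd t\right),
\]
the stochastic integral being the It\^o integral of $p(u)$ against the reference martingale. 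The identical argument with the second Novikov bound gives $\rd\nu/\rd W_{\varepsilon}$ with $q$ in place of $p$. Both densities are strictly positive, so $\mu$ and $\nu$ are mutually absolutely continuous and the ratio is well defined.

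Dividing the two densities and cancelling the common reference yields
\[
\frac{\rd\mu}{\rd\nu}(u) = \exp\left(\frac{1}{\varepsilon}\int_{0}^{T}\langle p(u_{t}) - q(u_{t}),\rd u_{t}\rangle - \frac{1}{2\varepsilon}\int_{0}^{T}\norm{p(u_{t})}_{2}^{2} - \norm{q(u_{t})}_{2}^{2}\,\rd t\right),
\]
which is precisely \eqref{eq:Girsanov_density} after reversing the order of the two terms in the exponent and the signs within each.

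The step I expect to require the most care is the consistency of the It\^o integral $\int_{0}^{T}\langle\,\cdot\,,\rd u_{t}\rangle$ across the three measures $W_{\varepsilon}$, $\mu$, and $\nu$: stochastic integrals are defined only up to null sets, and a priori these measures have different null sets, so the cancellation in the chain rule is not purely formal. I would resolve this by constructing the integral once under $W_{\varepsilon}$ and transporting it along the absolute-continuity relations, so that the same pathwise functional appears in both $\rd\mu/\rd W_{\varepsilon}$ and $\rd\nu/\rd W_{\varepsilon}$; the Novikov conditions then ensure this functional is finite almost surely under each law. Once this identification is secured, everything else is an algebraic manipulation inside the exponential.
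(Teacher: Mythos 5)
The paper does not prove \cref{prop:Girsanov} at all: it is imported as a classical result with a pointer to \citet[Chap.~7]{LiptserShiryaev2001}, and the only related argument in the paper is the short derivation of $\KLdiv{\mu}{\nu}$ in \cref{lem:SDE_KL}, which \emph{uses} the density \eqref{eq:Girsanov_density} rather than establishing it. Your proposal supplies the standard textbook derivation---realise $\mu$ and $\nu$ as exponential tiltings of the common Wiener reference measure $W_{\varepsilon}$ and divide via the Radon--Nikodym chain rule---and the structure is correct: the two one-sided densities are the right ones, the sign bookkeeping matches \eqref{eq:Girsanov_density}, and your closing remark about constructing the stochastic integral once under $W_{\varepsilon}$ and transporting it along the equivalences is exactly the point where a careless version of this argument would be only formal.

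One caveat deserves attention. The hypothesis \eqref{eq:Novikov_condition} as written is a \emph{linear} moment bound, $\EE\bigl[\int_{0}^{T}\norm{p(u_{t})}_{2}^{2}\,\rd t\bigr]<\infty$, whereas the Novikov criterion you invoke to upgrade the exponential local martingale to a true martingale requires the \emph{exponential} moment $\EE\bigl[\exp\bigl(\tfrac{1}{2\varepsilon}\int_{0}^{T}\norm{p(X_{t})}_{2}^{2}\,\rd t\bigr)\bigr]<\infty$ (this is what \citet[eq.~(8.6.8)]{Oksendal2003} actually states). So the step ``the first Novikov bound guarantees the exponential process is a genuine $W_{\varepsilon}$-martingale'' does not follow from the condition as literally stated; under the finite-energy hypotheses of \eqref{eq:Novikov_condition} one instead obtains mutual absolute continuity by the localisation arguments of \citet[Chap.~7]{LiptserShiryaev2001}. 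This is arguably an imprecision inherited from the paper's own labelling of the hypothesis, but in a self-contained proof you would need either to strengthen the assumption to the genuine Novikov condition or to replace that one sentence with the Liptser--Shiryaev argument.
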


The second integral in \eqref{eq:Girsanov_density} is a \defterm{stochastic integral} with respect to $(u_{t})_{t \in [0, T]}$ \citep[Chap.~4]{SaerkkaeSolin2019}.
The Novikov condition suffices for our needs, but the theorem also holds under weaker conditions such as the Kazamaki condition \citep[p.~249]{LiptserShiryaev2001}.
Applying \Cref{prop:Girsanov} to evaluate the KL divergence (\cref{sec:supporting_results}) yields
\begin{equation*}
	\bigKLdiv{\datameas}{\decodernoisemeas} = \EE_{u \sim \datameas} \left[\frac{1}{2\varepsilon} \int_{0}^{T} \norm{b(u_{t}) - c(u_{t})}^{2}_{2} \,\rd t \right].
\end{equation*}
Thus the condition $\KLdiv{\datameas}{\decodernoisemeas} < \infty$ is satisfied quite broadly, e.g., if $b$ and $c$ are bounded.

\paragraph{Per-Sample Loss.}
With the condition $\KLdiv{\datameas}{\decodernoisemeas} < \infty$ verified, it remains to choose $g$ to ensure that $\decodermeas^{\decoderparam}$ is mutually absolutely continuous with $\decodernoisemeas$, and to derive the corresponding density.
Once this is done, we arrive at the actionable per-sample loss derived in \cref{thm:FVAE_tractable_objective}.
To do this we again apply the Girsanov theorem, using that, when $\decodermap(z; \decoderparam) \in H^{1}([0, T])$ takes value zero at $t = 0$, the distribution $\decodermeas^{\decoderparam}$ is the law of the SDE
\begin{equation} \label{eq:decoder_process}
	\rd v_{t} = \decodermap(z; \decoderparam)'(t) \,\rd t + c\bigl(v_{t} - \decodermap(z; \decoderparam)\bigr) \,\rd t + \sqrt{\varepsilon} \,\rd w_{t},\quad v_{0} = 0,\quad t \in [0, T].
\end{equation}
As we will parametrise $\decodermap(z; \decoderparam)$ using a neural network, we can assume it to be in $C^{1}([0, T])$, and hence in
$H^{1}([0, T])$;
moreover we shall enforce that $\decodermap(z; \decoderparam)(0) \approx 0$ through the loss, as described shortly.
To be concrete, we restrict attention to decoder noise with $c(x) = -\kappa x$, making $(\eta_{t})_{t \in [0, T]}$ an OU process if $\kappa > 0$ and Brownian motion if $\kappa = 0$.
In this case the per-sample loss can be derived by applying the Girsanov theorem to \eqref{eq:modified_SDE} and \eqref{eq:decoder_process}.

\begin{proposition}[SDE per-sample loss]
\label{prop:psl}
    Suppose that $c(x) = -\kappa x$, $\kappa \geq 0$, and that $\decodermap(z; \decoderparam) \in H^{1}([0, T])$ with $\decodermap(z; \decoderparam)(0) = 0$ for all $z \in \latentspace$ and $\decoderparam \in \Decoderparam$.
    Then \cref{assumption:tractable_objective} holds and
    \begin{align*}
        \persampleloss(u; \encoderparam, \decoderparam) &= \EE_{z \sim \varmeas^{\encoderparam}} \biggl[-\log \frac{\rd \decodermeas^{\decoderparam}}{\rd \decodernoisemeas}(u) \biggr] + \KLdiv{\varmeas^{\encoderparam}}{\latentmeas},\\
        	\log \frac{\rd \decodermeas^{\decoderparam}}{\rd \decodernoisemeas}(u) &=
		\frac{1}{\varepsilon} \int_{0}^{T} \bigl\langle \decodermap(z; \decoderparam)'(t) + \kappa \decodermap(z; \decoderparam)(t), \rd u_{t} \bigr\rangle \\
		- \frac{1}{2\varepsilon}&\int_{0}^{T}\Bigl( \Norm{\decodermap(z; \decoderparam)'(t) - \kappa \bigl(u(t) - \decodermap(z; \decoderparam)(t)\bigr)}_{2}^{2} - \norm{\kappa u(t)}_{2}^{2}\Bigr) \,\rd t . 
    \end{align*}
\end{proposition}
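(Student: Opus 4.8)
The plan is to realise the decoder distribution $\decodermeas^{\decoderparam}$ as the law of an explicit diffusion, apply the Girsanov theorem to compute its density against the decoder-noise law $\decodernoisemeas$, and then read the per-sample loss directly off \Cref{thm:FVAE_tractable_objective} with reference distribution $\referencemeas = \decodernoisemeas$. First I would confirm that $\decodermeas^{\decoderparam}$, defined in \eqref{eq:decoder_infinite_dimensions_1} as the shift of $\decodernoisemeas$ by $\decodermap(z; \decoderparam)$, is the law of the process \eqref{eq:decoder_process}: if $\eta$ solves the auxiliary equation \eqref{eq:modified_SDE}, then using $\decodermap(z;\decoderparam) \in H^{1}([0,T])$ and $\decodermap(z;\decoderparam)(0) = 0$, the shifted path $v_t = \eta_t + \decodermap(z;\decoderparam)(t)$ solves \eqref{eq:decoder_process} with $v_0 = 0$. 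With $c(x) = -\kappa x$ the decoder drift is $\decodermap(z;\decoderparam)'(t) - \kappa(v_t - \decodermap(z;\decoderparam)(t)) = -\kappa v_t + \bigl(\decodermap(z;\decoderparam)'(t) + \kappa\decodermap(z;\decoderparam)(t)\bigr)$, while the reference drift is $-\kappa\eta_t$; the key observation is that the two drifts differ by the deterministic quantity $\decodermap(z;\decoderparam)'(t) + \kappa\decodermap(z;\decoderparam)(t)$, independent of the state.

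Next I would apply Girsanov with $\mu = \decodermeas^{\decoderparam}$ and $\nu = \decodernoisemeas$ to obtain $\rd\decodermeas^{\decoderparam}/\rd\decodernoisemeas$. Since $H^{1}([0,T])$ embeds into continuous functions, the drift difference lies in $L^{2}([0,T])$, so the stochastic integral is well defined, and the required integrability holds because both the \ac{OU} process and its shift $v = \eta + \decodermap(z;\decoderparam)$ have finite second moments on $[0,T]$. Substituting $p(t,x) = -\kappa x + \decodermap(z;\decoderparam)'(t) + \kappa\decodermap(z;\decoderparam)(t)$ and $q(x) = -\kappa x$ into the Girsanov density, the drift difference produces the stochastic-integral term $\tfrac{1}{\varepsilon}\int_0^T \langle \decodermap(z;\decoderparam)'(t) + \kappa\decodermap(z;\decoderparam)(t), \rd u_t\rangle$, while expanding $\norm{p(t,u_t)}^{2}_{2} - \norm{q(u_t)}^{2}_{2} = \norm{\decodermap(z;\decoderparam)'(t) - \kappa(u(t)-\decodermap(z;\decoderparam)(t))}^{2}_{2} - \norm{\kappa u(t)}^{2}_{2}$ supplies the remaining term.

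Finally I would verify \Cref{assumption:tractable_objective} with $\referencemeas = \decodernoisemeas$: part (a)(i) holds because the Girsanov density is an almost-surely positive, finite exponential, so $\decodermeas^{\decoderparam}$ and $\decodernoisemeas$ are mutually absolutely continuous; part (a)(ii), $\KLdiv{\datameas}{\decodernoisemeas} < \infty$, was established earlier in this subsection; and part (b) holds because $\varmeas^{\encoderparam}$ and $\latentmeas$ are Gaussians on the finite-dimensional latent space $\latentspace$, with finite \ac{KL} divergence in closed form. Inserting $\rd\decodermeas^{\decoderparam}/\rd\decodernoisemeas$ into \eqref{eq:per-sample_loss} then yields the stated loss. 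The main obstacle is the Girsanov step: \Cref{prop:Girsanov} is stated only for autonomous drifts, so one must invoke its time-dependent extension \citep{LiptserShiryaev2001} and verify the attendant integrability; one must also note that $\KLdiv{\datameas}{\decodernoisemeas} < \infty$ forces $\datameas \ll \decodernoisemeas$, so the $\decodernoisemeas$-almost-sure Girsanov identity, and in particular the stochastic integral against $\rd u_t$, is well defined $\datameas$-almost surely, which is exactly where the loss is evaluated.
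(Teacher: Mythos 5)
Your proposal is correct and follows essentially the same route the paper takes: the paper's own (sketched) derivation consists precisely of identifying $\decodermeas^{\decoderparam}$ with the law of the shifted diffusion \eqref{eq:decoder_process} and applying the Girsanov theorem to \eqref{eq:modified_SDE} and \eqref{eq:decoder_process}, after which the density and the per-sample loss follow from \Cref{thm:FVAE_tractable_objective} with $\referencemeas = \decodernoisemeas$. Your sign bookkeeping in the Girsanov density matches the stated formula, and your added care about the time-dependent drift, the Novikov-type integrability, and the $\datameas$-almost-sure meaning of the stochastic integral only makes explicit what the paper leaves implicit.
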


In practice, we make two modifications to $\persampleloss$.
First, the initial condition $\decodermap(z; \decoderparam)(0) = 0$ is not enforced exactly;
instead, we add a Tikhonov-like zero-penalty term with regularisation parameter $\lambda > 0$ to favour $\decodermap(z; \decoderparam)(0) \approx 0$.
Second, to allow variation of the strength of the KL regularisation, we multiply the term $\KLdiv{\varmeas^{\encoderparam}}{\latentmeas}$ by a regularisation parameter $\beta > 0$.
Setting $\beta \neq 1$ breaks the exact correspondence between the FVAE objective and the joint KL divergence \eqref{eq:joint_Kullback--Leibler_divergence}, but can nevertheless be useful in computational practice \citep{Higginsetal2017}.
This leads us to the SDE per-sample loss
\begin{equation*} 
	\begin{split}
		\persampleloss^{\text{SDE}}_{\lambda, \beta}(u; \encoderparam, \decoderparam) = \EE_{z \sim \varmeas^{\encoderparam}} \biggl[  -\log \frac{\rd \decodermeas^{\decoderparam}}{\rd \decodernoisemeas}(u) + \lambda \norm{\decodermap(z; \decoderparam)(0)}_{2}^{2} \biggr] + \beta \bigKLdiv{\varmeas^{\encoderparam}}{\latentmeas}.
	\end{split}
\end{equation*}

\subsubsection{Posterior Distributions in Bayesian Inverse Problems}

Our theory can also be applied to to posterior distributions arising in Bayesian inverse problems \citep{Stuart2010}, which we illustrate through the following additive-noise inverse problem. 
Let $\dataspace$ be a separable Hilbert space with norm $\norm{\quark}_{\dataspace}$, let $Y = \Reals^{d_{Y}}$, and let $\mathcal{G} \colon \dataspace \to Y$ be a (possibly nonlinear) observation operator.
Suppose that $y \in Y$ is given by the model
\begin{equation} \label{eq:BIP}
    y = \mathcal{G}(u) + \xi,\qquad u \sim \mu_{0} \in \prob{\dataspace},\qquad \xi \sim N(0, \Sigma) \in \prob{Y}, 
\end{equation}
with noise covariance $\Sigma \in \spd{Y}$, and with prior distribution $\mu_{0} = N(0, C)$ having covariance operator $C \colon \dataspace \to \dataspace$.
Models of this type arise, for example, in both Eulerian and Lagrangian data assimilation problems in oceanography \citep{CotterDashtiStuart2010}.
Given an observation $y \in Y$ from \eqref{eq:BIP}, the Bayesian approach seeks to infer $u \in \dataspace$ by computing the posterior distribution $\mu^{y} \in \prob{\dataspace}$ representing the distribution of $u \mid y$.
In the setting of \eqref{eq:BIP}, $\mu^{y}$ has a density with respect to $\mu_{0}$ thanks to Bayes' rule \citep[Theorem~14]{DashtiStuart2017}, taking the form
\begin{equation*}
    \frac{\rd \mu^{y}}{\rd \mu_{0}}(u) = \frac{1}{Z(y)} \exp\bigl(-\Phi(u; y)\bigr),\quad \Phi(u; y) = \frac{1}{2} \norm{\mathcal{G}(u) - y}_{\Sigma}^{2},\quad \norm{\quark}_{\Sigma} = \norm{\Sigma^{-1/2} \quark}_{2},
\end{equation*}
where $Z(y) \in (0, 1]$ owing to the nonnegativity of $\Phi$.
A simple calculation then reveals 
\begin{equation}
	\label{eq:posterior--prior_KL_divergence}
    \KLdiv{\mu^{y}}{\mu_{0}} = \EE_{u \sim \mu^{y}}\left[\log \frac{\rd \mu^{y}}{\rd \mu_{0}}(u) \right] = \EE_{u \sim \datameas} \bigl[ -\log Z(y) -\Phi(u) \bigr] \leq -\log Z(y) < \infty.
\end{equation}
Similar arguments apply quite generally for observation models other than \eqref{eq:BIP}, provided the resulting log-density $\log \rd \mu^{y} / \rd \mu_{0}$ satisfies suitable boundedness or integrability conditions.

We now assume that the data distribution $\datameas$ to be learned is the posterior $\mu^{y}$, and that we have samples from it.
This setting could arise, for example, when attempting to generate further approximate samples from the posterior $\mu^{y}$, taking as data the output of a function-space MCMC method \citep{CotterRobertsStuartWhite2013}, with the ambition of faster sampling under FVAE than under MCMC.
Recall the definition of the decoder distribution $\decodermeas^{\decoderparam}$ in \eqref{eq:decoder_infinite_dimensions_1}.
We take $\decodernoisemeas$ to be the prior $\mu_{0}$;
this is a natural choice as \eqref{eq:posterior--prior_KL_divergence} shows that $\KLdiv{\datameas}{\decodernoisemeas} < \infty$.
We next discuss the choice of shift $g$, and the per-sample loss that results from these choices.

\paragraph{Per-Sample Loss.}
Since $\decodernoisemeas$ and $\decodermeas^{\decoderparam}$ are Gaussian, we can use the Cameron--Martin theorem \citep[Corollary~2.4.3]{Bogachev1998} to derive conditions for their mutual absolute continuity.
For this to be the case, the shift $\decodermap(z; \decoderparam)$ must lie 
in the \defterm{Cameron--Martin space} $\CMspace{\decodernoisemeas} \subset \dataspace$.
Before stating the theorem, we recall the following facts about Gaussian measures.
The space $\CMspace{\decodernoisemeas}$ is Hilbert, and for fixed $h \in \CMspace{\decodernoisemeas}$, the $\CMspace{\decodernoisemeas}$-inner product $\innerprod{h}{\quark}_{\CMspace{\decodernoisemeas}}$ extends uniquely (up to equivalence $\decodernoisemeas$-almost everywhere) to a \defterm{measurable linear functional} \citep[Theorem~2.10.11]{Bogachev1998}, denoted by $\dataspace \ni u \mapsto \Pwint{h}{u}_{\CMspace{\decodernoisemeas}}$.

\begin{proposition}[Cameron--Martin theorem]
	\label[proposition]{prop:CM_theorem}
	Let $\decodernoisemeas \in \prob{\dataspace}$ be a Gaussian measure with Cameron--Martin space $\CMspace{\decodernoisemeas}$.
	Then $\decodermeas^{\decoderparam} = \decodernoisemeas\bigl(\quark - \decodermap(z; \decoderparam)\bigr)$ is mutually absolutely continuous with $\decodernoisemeas$ if and only if $\decodermap(z; \decoderparam) \in \CMspace{\decodernoisemeas}$, and 
	\begin{equation} \label{eq:Cameron--Martin_formula}
		\frac{\rd \decodermeas^{\decoderparam}}{\rd \decodernoisemeas}(u) = \exp\Bigl( \Pwint{\decodermap(z; \decoderparam)}{u}_{\CMspace{\decodernoisemeas}} - \tfrac{1}{2} \norm{\decodermap(z; \decoderparam)}_{\CMspace{\decodernoisemeas}}^{2} \Bigr).
	\end{equation}
\end{proposition}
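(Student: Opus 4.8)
The plan is to reduce this infinite-dimensional statement to a countable product of one-dimensional Gaussian shifts and then invoke Kakutani's dichotomy for product measures. Since $\decodernoisemeas = N(0, C)$ is Gaussian on the separable Hilbert space $\dataspace$, its covariance operator $C$ is self-adjoint, positive, and trace-class, so there is an orthonormal eigenbasis $(e_j)_{j}$ of $\dataspace$ with $C e_j = \lambda_j e_j$ and $\lambda_j > 0$. Writing $u_j \defeq \innerprod{u}{e_j}$ and $h_j \defeq \innerprod{\decodermap(z;\decoderparam)}{e_j}$, the coordinate map $u \mapsto (u_j)_j$ pushes $\decodernoisemeas$ forward to the product measure $\bigotimes_j N(0, \lambda_j)$ and pushes $\decodermeas^{\decoderparam} = \decodernoisemeas(\quark - \decodermap(z;\decoderparam))$ forward to $\bigotimes_j N(h_j, \lambda_j)$. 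In this representation the Cameron--Martin space is $\CMspace{\decodernoisemeas} = \bigset{h \in \dataspace}{\sum_j h_j^2/\lambda_j < \infty}$ with $\norm{h}_{\CMspace{\decodernoisemeas}}^2 = \sum_j h_j^2/\lambda_j$, so the claimed criterion $\decodermap(z;\decoderparam) \in \CMspace{\decodernoisemeas}$ is exactly summability of $(h_j^2/\lambda_j)_j$.

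Next I would settle the equivalence-or-singularity dichotomy via the Hellinger integral. For each coordinate the affinity between $N(h_j,\lambda_j)$ and $N(0,\lambda_j)$ is $\int \sqrt{\rd N(h_j,\lambda_j)/\rd N(0,\lambda_j)}\,\rd N(0,\lambda_j) = \exp(-h_j^2/(8\lambda_j))$, so the product affinity is $\exp\bigl(-\tfrac18\sum_j h_j^2/\lambda_j\bigr)$. Kakutani's theorem states that two product measures with mutually absolutely continuous factors are either mutually absolutely continuous (precisely when the product affinity is strictly positive) or mutually singular (when it vanishes). Hence the product measures---and therefore $\decodermeas^{\decoderparam}$ and $\decodernoisemeas$---are mutually absolutely continuous iff $\sum_j h_j^2/\lambda_j < \infty$, i.e. iff $\decodermap(z;\decoderparam) \in \CMspace{\decodernoisemeas}$, and mutually singular otherwise. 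This disposes of both directions of the ``if and only if'' at once; in particular it simultaneously yields the singular alternative in the excluded case, which is the substantive content of the ``only if'' direction.

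Finally I would identify the density. On the first $n$ coordinates the Radon--Nikodym derivative is $\prod_{j\le n}\exp\bigl(h_j u_j/\lambda_j - h_j^2/(2\lambda_j)\bigr)$; when $\decodermap(z;\decoderparam) \in \CMspace{\decodernoisemeas}$ these partial products form a nonnegative $\decodernoisemeas$-martingale that converges in $L^1(\decodernoisemeas)$ to $\rd\decodermeas^{\decoderparam}/\rd\decodernoisemeas$. The exponent $\sum_{j\le n} h_j u_j/\lambda_j$ is a sum of independent centred Gaussians under $\decodernoisemeas$ with variances $h_j^2/\lambda_j$; by summability it converges in $L^2(\decodernoisemeas)$, and its limit is by definition the measurable linear functional $\Pwint{\decodermap(z;\decoderparam)}{u}_{\CMspace{\decodernoisemeas}}$, whose $L^2$-isometry encodes $\norm{\decodermap(z;\decoderparam)}_{\CMspace{\decodernoisemeas}}^2 = \sum_j h_j^2/\lambda_j$. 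Passing to the limit gives exactly the Cameron--Martin formula \eqref{eq:Cameron--Martin_formula}. I expect the main obstacle to be the rigorous interchange of limits here: justifying $L^1$-convergence of the density martingale (for which uniform integrability, e.g. via an $L^p$ bound with $p>1$ obtained from the explicit Gaussian moment generating function, is the cleanest route) and carefully identifying the $L^2$-limit of the Gaussian series with the measurable linear functional $\Pwint{\quark}{\quark}_{\CMspace{\decodernoisemeas}}$ supplied by the theory of Gaussian measures. An alternative, slicker route for the absolutely continuous direction is to verify directly that the candidate measure $\rho\,\rd\decodernoisemeas$, with $\rho$ the right-hand side of \eqref{eq:Cameron--Martin_formula}, has the same characteristic functional $\ell \mapsto \exp\bigl(\iunit\innerprod{\ell}{\decodermap(z;\decoderparam)} - \tfrac12\innerprod{\ell}{C\ell}\bigr)$ as the shift $\decodernoisemeas(\quark - \decodermap(z;\decoderparam))$, using the joint Gaussianity of $(\innerprod{\ell}{u}, \Pwint{\decodermap(z;\decoderparam)}{u}_{\CMspace{\decodernoisemeas}})$ and uniqueness of characteristic functionals on separable Banach spaces.
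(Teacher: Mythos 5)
Your proof is essentially correct, but it is worth noting that the paper offers no proof of this proposition at all: it is quoted as a classical result with a citation to Bogachev (Corollary~2.4.3), so any argument you give is by construction "a different route". Your route---diagonalising the covariance, reducing to a countable product of one-dimensional Gaussian shifts, applying Kakutani's dichotomy via the Hellinger affinity $\exp(-h_j^2/(8\lambda_j))$, and then identifying the density as the $L^1$-limit of the finite-dimensional density martingale---is the standard textbook proof in the separable Hilbert space setting, and all the individual steps check out: the affinity computation is right, the $L^p$ bound $\EE[M_n^p] = \exp\bigl(\tfrac{p(p-1)}{2}\sum_{j\le n}h_j^2/\lambda_j\bigr)$ does give uniform integrability, and the $L^2$-limit of $\sum_j h_j u_j/\lambda_j$ is exactly the measurable linear functional $\Pwint{h}{u}_{\CMspace{\decodernoisemeas}}$. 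The one caveat is scope: your reduction presupposes that $\dataspace$ is a separable Hilbert space and that $C$ is a nondegenerate trace-class operator, which covers the Bayesian-inverse-problem subsection where the proposition is stated (and the white-noise examples after embedding in $H^{s}$), but the paper also invokes the Cameron--Martin formula for Wiener measure on the Banach space $C_{0}([0,T],\Reals^{m})$ in \cref{rk:Cameron--Martin_exponent}; the cited result of Bogachev holds for Gaussian measures on general locally convex spaces, so the citation buys generality that your Hilbert-space argument does not reach without an additional abstract-Wiener-space or embedding step (and a word about restricting to the closed support of $\decodernoisemeas$ when some $\lambda_j = 0$). Within the Hilbert setting your argument is complete and self-contained, which the paper's citation is not.
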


\begin{remark}
\label{rk:Cameron--Martin_exponent}

The exponent in \eqref{eq:Cameron--Martin_formula} should be viewed as the misfit $\tfrac{1}{2} \norm{\decodermap(z; \decoderparam) - u}_{\CMspace{\decodernoisemeas}}^{2}$ with the almost-surely-infinite term $\tfrac{1}{2} \norm{u}_{\CMspace{\decodernoisemeas}}^{2}$ subtracted \citep[Remark~3.8]{Stuart2010}.
When $\decodernoisemeas$ is Brownian motion on $\Reals$, for example, $\Pwint{\decodermap(z; \decoderparam)}{u}_{\CMspace{\decodernoisemeas}}$ is a stochastic integral and $\CMspace{\decodernoisemeas} = H^{1}([0, T])$; this is implicit
in the calculations underlying \Cref{prop:psl}.
When $\decodernoisemeas$ is $L^{2}$-white noise, $\CMspace{\decodernoisemeas} = L^{2}([0, 1])$. 
\qedremark
\end{remark}

When $\decodermap$ takes values in $\CMspace{\decodernoisemeas}$, we can use the Cameron--Martin theorem to write down the per-sample loss explicitly since $\CMspace{\decodernoisemeas} = C^{\nicefrac{1}{2}} \dataspace$, with $\norm{h}_{\CMspace{\decodernoisemeas}} = \norm{C^{-\nicefrac{1}{2}} h}_{\dataspace}$ and $\Pwint{h}{u}_{\CMspace{\decodernoisemeas}} = \innerprod{C^{-\nicefrac{1}{2}} h}{C^{-\nicefrac{1}{2}} u}_{\dataspace}$ for $h \in \CMspace{\decodernoisemeas}$ and $u \in \dataspace$.

\begin{proposition}[Bayesian inverse problem per-sample loss]
    Suppose that $\decodermap(z; \decoderparam) \in \CMspace{\decodernoisemeas}$ for all $z \in \latentspace$ and $\decoderparam \in \Decoderparam$. Then \Cref{assumption:tractable_objective} holds and
    \begin{equation*}
	\persampleloss^{\text{BIP}}(u; \encoderparam, \decoderparam) = \EE_{z \sim \varmeas^{\encoderparam}} \left[  \tfrac{1}{2} \bignorm{C^{-\frac{1}{2}} \decodermap(z; \decoderparam)}_{\dataspace}^{2} - \biginnerprod{C^{-\frac{1}{2}} \decodermap(z; \decoderparam)}{C^{-\frac{1}{2}} u}_{\dataspace} \right]  + \bigKLdiv{\varmeas^{\encoderparam}}{\latentmeas}.
    \end{equation*}
\end{proposition}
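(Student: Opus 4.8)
The plan is to reduce the claim to the general tractable per-sample loss of \Cref{thm:FVAE_tractable_objective}, taking the reference distribution $\referencemeas = \decodernoisemeas = \mu_{0}$, and then to evaluate the single density $\rd \decodermeas^{\decoderparam}/\rd \decodernoisemeas$ appearing in \eqref{eq:per-sample_loss} by means of the Cameron--Martin theorem (\Cref{prop:CM_theorem}). First I would verify that \Cref{assumption:tractable_objective} holds for this choice of $\referencemeas$. The mutual absolute continuity of $\decodermeas^{\decoderparam}$ and $\decodernoisemeas$ (\Cref{assumption:tractable_objective}\ref{item:FVAE_tractable_objective_reference_distribution}\ref{item:FVAE_tractable_objective_decoder_equivalence}) is exactly the ``if'' direction of \Cref{prop:CM_theorem}, since by hypothesis $\decodermap(z; \decoderparam) \in \CMspace{\decodernoisemeas}$ for every $z$ and $\decoderparam$. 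The finite-information condition (\Cref{assumption:tractable_objective}\ref{item:FVAE_tractable_objective_reference_distribution}\ref{item:FVAE_tractable_objective_finite-information_condition}) is already recorded in \eqref{eq:posterior--prior_KL_divergence}, where $\KLdiv{\datameas}{\decodernoisemeas} = \KLdiv{\mu^{y}}{\mu_{0}} \leq -\log Z(y) < \infty$. Finally, the encoder qualification (\Cref{assumption:tractable_objective}\ref{item:FVAE_tractable_objective_encoder_qualification}) holds because $\varmeas^{\encoderparam}$ and $\latentmeas$ are finite-dimensional Gaussians, so $\KLdiv{\varmeas^{\encoderparam}}{\latentmeas}$ is finite, with the closed form given in \Cref{rk:VAE_regularised_autoencoder}.

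Once the assumption is in place, \Cref{thm:FVAE_tractable_objective} reduces the task to evaluating $-\log(\rd \decodermeas^{\decoderparam}/\rd \decodernoisemeas)(u)$ inside the expectation over $z \sim \varmeas^{\encoderparam}$. I would substitute $h = \decodermap(z; \decoderparam)$ into the Cameron--Martin formula \eqref{eq:Cameron--Martin_formula} to obtain
\[
	-\log \frac{\rd \decodermeas^{\decoderparam}}{\rd \decodernoisemeas}(u) = \tfrac{1}{2} \bignorm{\decodermap(z; \decoderparam)}_{\CMspace{\decodernoisemeas}}^{2} - \Pwint{\decodermap(z; \decoderparam)}{u}_{\CMspace{\decodernoisemeas}},
\]
and then apply the explicit Hilbert-space identifications recorded immediately before the statement, namely $\norm{h}_{\CMspace{\decodernoisemeas}} = \bignorm{C^{-\nicefrac{1}{2}} h}_{\dataspace}$ and $\Pwint{h}{u}_{\CMspace{\decodernoisemeas}} = \biginnerprod{C^{-\nicefrac{1}{2}} h}{C^{-\nicefrac{1}{2}} u}_{\dataspace}$. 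This turns the log-density into $\tfrac{1}{2}\bignorm{C^{-\nicefrac{1}{2}} \decodermap(z; \decoderparam)}_{\dataspace}^{2} - \biginnerprod{C^{-\nicefrac{1}{2}} \decodermap(z; \decoderparam)}{C^{-\nicefrac{1}{2}} u}_{\dataspace}$, and inserting it back into the per-sample loss yields the claimed expression for $\persampleloss^{\text{BIP}}$.

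The hard part will be conceptual rather than computational: the pairing $\Pwint{\decodermap(z; \decoderparam)}{u}_{\CMspace{\decodernoisemeas}}$ must not be read as a genuine inner product. For $\datameas$-almost every $u$ one has $u \notin \CMspace{\decodernoisemeas}$, so $C^{-\nicefrac{1}{2}} u$ need not exist in $\dataspace$; the expression $\biginnerprod{C^{-\nicefrac{1}{2}} \decodermap(z; \decoderparam)}{C^{-\nicefrac{1}{2}} u}_{\dataspace}$ is therefore shorthand for the measurable linear functional extending $\innerprod{\decodermap(z; \decoderparam)}{\quark}_{\CMspace{\decodernoisemeas}}$ off the Cameron--Martin space, as discussed around \Cref{rk:Cameron--Martin_exponent}. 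Since the existence of this extension and its identification via $C^{-\nicefrac{1}{2}}$ are quoted from Bogachev before the statement, I can invoke them directly; the remaining steps are routine substitution. Integrating over $z \sim \varmeas^{\encoderparam}$ and appending $\KLdiv{\varmeas^{\encoderparam}}{\latentmeas}$ then completes the proof.
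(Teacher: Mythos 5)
Your proposal is correct and is exactly the argument the paper intends: the paper states this proposition without an explicit proof, but the surrounding text (the Cameron--Martin theorem in \Cref{prop:CM_theorem}, the identification $\CMspace{\decodernoisemeas} = C^{\nicefrac{1}{2}}\dataspace$ with the associated norm and pairing, and the finiteness of $\KLdiv{\mu^{y}}{\mu_{0}}$ from \eqref{eq:posterior--prior_KL_divergence}) prescribes precisely the reduction to \Cref{thm:FVAE_tractable_objective} with $\referencemeas = \decodernoisemeas = \mu_{0}$ that you carry out. Your remark that $\biginnerprod{C^{-\nicefrac{1}{2}} \decodermap(z; \decoderparam)}{C^{-\nicefrac{1}{2}} u}_{\dataspace}$ must be read as the measurable linear functional rather than a literal inner product is the one genuine subtlety, and you handle it correctly.
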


Depending on the choice of $\decodernoisemeas$, the condition $\decodermap(z; \decoderparam) \in \CMspace{\decodernoisemeas}$ may follow immediately, e.g., when $\decodernoisemeas$ is Brownian motion and $\decodermap$ is parametrised by a neural network.

\subsection{Architecture and Algorithms}
\label{subsec:VAE_architecture}

In practice, we do not have access to training data $\{u^{(n)}\}_{n = 1}^{N} \subset \dataspace$;
instead we have access to finite-dimensional discretisations $\mathbf{u}^{(n)}$.
We would like to evaluate the encoder and decoder, and to compute the empirical objective \eqref{eq:empiricalised_FVAE_objective} for training, using only these discretisations.

In our architectures we will assume that $\dataspace$ is a Banach space of functions evaluable pointwise almost everywhere with domain $\Omega \subseteq \Reals^{d}$ and range $\Reals^{m}$;
in this setting we will assume that the discretisation $\mathbf{u}$ of a function $u \in \dataspace$ consists of evaluations at $\{x_{i}\}_{i = 1}^{I} \subset \Omega$:
\begin{equation*}
    \text{(discretisation of function $u \in \dataspace$)}\qquad \mathbf{u} = \Bigl\{ \Bigl(x_{i}, u(x_{i})\Bigr) \Bigr\}_{i = 1}^{I} \subset \Omega \times \Reals^{m}.
\end{equation*}
Crucially, the number and location of mesh points may differ for each discretised sample---our aim is to allow for FVAE to be trained and evaluated across different resolutions, with data provided on sparse and potentially irregular meshes.
We therefore discuss how to discretise the loss, and propose encoder/decoder architectures that can be evaluated on any mesh.

\subsubsection{Encoder Architecture}
\label{subsec:encoder_architecture}

The encoder $\encodermap$ is a map from a function $u \colon \Omega \to \Reals^{m}$ to the parameters of the encoder distribution \eqref{eq:encoder_infinite_dimensions}: the mean $\encodermean(u; \encoderparam) \in \latentspace$ and covariance matrix $\encodercov(u; \encoderparam) \in \spd{\latentspace}$.
We assume $\encodercov(u; \encoderparam)$ is diagonal, so $\encodermap$ need only return two vectors:
the mean $\encodermean(u; \encoderparam)$ and the log-diagonal of $\encodercov(u; \encoderparam)$.
We thus define
\begin{equation} \label{eq:encoder_integral_kernel}
	\encodermap(u; \encoderparam) = \rho\left(\int_{\Omega} \kappa\bigl(x, u(x); \encoderparam\bigr) \,\rd x; \encoderparam \right) \in \latentspace \times \latentspace = \Reals^{2d_{\latentspace}},
\end{equation}
where $\kappa \colon \Omega \times \Reals^{m} \times \Encoderparam \to \Reals^{\ell}$ is parametrised as a neural network with two hidden layers of width 64 and output dimension $\ell = 64$, using GELU activation \citep{HendrycksGimpel2016}, and $\rho \colon \Reals^{\ell} \times \Encoderparam \to \Reals^{2d_{\latentspace}}$ is parametrised as a linear layer $\rho(v; \encoderparam) = W^{\encoderparam}v + b^{\encoderparam}$, with $W^{\encoderparam} \in \Reals^{2d_{\latentspace} \times \ell}$ and $b^{\encoderparam} \in \Reals^{2d_{\latentspace}}$.
We augment $x \in \Omega$ with 16 random Fourier features  (\cref{subsec:details_base_architecture}) to aid learning of high-frequency features \citep{Tanciketal2020}.
After discretisation on data $\mathbf{u} = \{(x_{i}, u(x_{i}))\}_{i = 1}^{I}$, in which we approximate the integral over $\Omega$ by a normalised sum,
our architecture resembles set-to-vector maps such as deep sets \citep{Zaheeretal2017}, PointNet \citep{QiSuMoGuibas2017}, and statistic networks \citep{EdwardsStorkey2017}, which take the form
\begin{equation*}
	\bigset{\bigl(x_{i}, u(x_{i})\bigr)}{i = 1, 2, \dots, I} \mapsto \rho\Bigl(\mathrm{pool}\Bigl(\Set{\kappa\bigl(x_{i}, u(x_{i}); \encoderparam \bigr)}{i = 1, 2, \dots, I}\Bigr); \encoderparam\Bigr),
\end{equation*}
where $\mathrm{pool}$ is a pooling operation invariant to the order of its inputs---in our case, the mean.
Unlike these works we design our architecture for functions and only then discretise;
we believe there is great potential to extend other point-cloud and set architectures similarly.

Many other function-to-vector architectures have been proposed, e.g., the variable-input DeepONet (VIDON; \citealp{PrasthoferDeRyckMishra2022}), the mesh-independent neural operator (MINO; \citealp{Lee2022}) and continuum attention \citep{CalvelloKovachkiLevineStuart2024}, and our proposal is most similar to
the linear-functional layer of Fourier neural mappings \citep{HuangNelsenTrautner2024} and the neural functional of \citet{Rahmanetal2022}.
These differ from our approach by preceding \eqref{eq:encoder_integral_kernel} by a neural operator; on the problems we consider, we find our encoder map to be equally expressive.

\subsubsection{Decoder Architecture}
\label{subsec:decoder_architecture}

The decoder $\decodermap$ is a map from a latent vector $z \in \latentspace$ to a function $\decodermap(z; \decoderparam) \colon \Omega \to \Reals^{m}$, which we parametrise using a coordinate neural network $\gamma \colon \latentspace \times \Omega \times \Decoderparam \to \Reals^{m}$ with 5 hidden layers of width 100 using GELU activation throughout, so that
\begin{equation} \label{eq:nonlinear_decoder}
	\decodermap(z; \decoderparam)(x) = \gamma(z, x; \decoderparam).
\end{equation}
As before, we augment $x \in \Omega$ with 16 random Fourier features (\cref{subsec:details_base_architecture}).
Our proposed architecture allows for discretisation of the decoded function $\decodermap(z; \decoderparam)$ on any mesh, and the cost of evaluating the decoder \eqref{eq:nonlinear_decoder} grows linearly with the number of mesh points. 

There are several related approaches in the literature to parametrise vector-to-function maps.
\Citet{HuangNelsenTrautner2024} lift the input by multiplying with a learnable constant function, then apply an operator architecture such as FNO.
\Citet{SeidmanKissasPappasPerdikaris2023} propose both a DeepONet-inspired decoder using a linear combination of learnable basis functions, and a nonlinear decoder essentially the same as what we propose, which is also similar to the architectures of the nonlinear manifold decoder (NOMAD; \citealp{SeidmanKissasPerdikarisPappas2022}) and PARA-Net \citep{DeHoopHuangQianStuart2022}.
Also related are implicit neural representations \citep{Sitzmannetal2020}, in which one regresses on a fixed image using a coordinate neural network and  treats the resulting weights as a resolution-independent representation of the data.

\subsubsection{Discretisation of Per-Sample Loss}
\label{subsec:discretised_Losses}

To discretise the per-sample losses derived in \cref{subsec:VAE_in_infinite_dimensions}, we make two approximations.
First, we approximate the expectation over $z \sim \varmeas^{\encoderparam}$ by Monte Carlo sampling \citep{KingmaWelling2014}, with the number of samples viewed as a hyperparameter.
Second, we approximate the integrals, norms, and inner products arising in the loss, as we now outline.

\paragraph{Per-Sample Loss $\persampleloss^{\text{SDE}}_{\lambda, \beta}$.}
Since the terms appearing in $\persampleloss^{\text{SDE}}_{\lambda, \beta}$ are integral functionals of the data and decoded functions, we can discretise on any partition $0 = t_{0} < t_{1} < \cdots < t_{I} = T$ and work with data discretised at any time step.
The deterministic integral can be approximated by a normalised sum, and the stochastic integral can be discretised as
\begin{align*} 
	\int_{0}^{T} \bigl\langle \decodermap(z; \decoderparam)'(t),  \,\rd u_{t}\bigr\rangle &\approx \sum_{i = 1}^{I} \Bigl\langle \decodermap(z; \decoderparam)'(t_{i-1}),  u(t_{i}) - u(t_{i - 1})\Bigr\rangle,
\end{align*}
which converges in probability in the limit $I \to \infty$ to the true stochastic integral \citep[eq.~(4.6)]{SaerkkaeSolin2019}. 
Since the decoder will be a differentiable neural network, terms involving the derivative $\decodermap(z; \decoderparam)'(t)$ can be computed using automatic differentiation;
we find this to be much more stable than using a finite-difference approximation of the derivative.

\paragraph{Per-Sample Loss $\persampleloss^{\text{BIP}}$.}
For many Bayesian inverse problems, $\dataspace$ is a function space such as $L^{2}(\Omega)$, and so again the norms and inner products are integral functionals amenable to discretisation on any mesh.
However, applying the operator $C^{-\nicefrac{1}{2}}$ is typically tractable only in special cases. 
One widely used setting is the one in which the eigenbasis of $C$ is known and basis coefficients are readily computable; this arises when $C$ is an inverse power of the Laplacian on a rectangle and a fast Fourier transform may be used.

\subsection{Numerical Experiments}
\label{subsec:FVAE_numerical_examples}

We now apply FVAE on two examples where $\datameas$ is an SDE path distribution. Both examples serve as prototypes for more complex problems such as those arising in molecular dynamics.
For all experiments, we adopt the architecture of \cref{subsec:VAE_architecture}.
A summary of conclusions to be drawn from the numerical experiments with these examples is as follows:
\begin{enumerate}[label=(\alph*)]
	\item FVAE captures properties of individual paths as well as ensemble properties of the data set, with the learned latent variables being physically interpretable (\cref{subsec:Brownian_dynamics});
	\item choosing decoder noise that accurately reflects the stochastic variability in the data is essential to obtain a high-quality generative model (\cref{subsec:Brownian_dynamics});
	\item FVAE is robust to changes of mesh in the encoder and decoder, enabling training with heterogeneous data and generative modelling at any resolution (\cref{subsec:MSM}).
\end{enumerate}

\vspace{0.3em}

We emphasise in these experiments that FVAE does this purely from data, with no knowledge of the data-generating process other than in the choice of decoder noise.

\subsubsection{Brownian Dynamics}
\label{subsec:Brownian_dynamics}

The Brownian dynamics model \citep[see][Chap.~14]{Schlick2010}, also known as the Langevin model, is a stochastic approximation of deterministic Newtonian models for molecular dynamics.
In this model, the configuration $u_{t}$ (in some configuration space $X \subseteq \Reals^{m}$) of a molecule is assumed to follow the gradient flow of a \defterm{potential} $U \colon X \to \Reals$ perturbed by additive thermal noise with \defterm{temperature} $\varepsilon > 0$. 
This leads to the Langevin SDE
\begin{equation} \label{eq:Brownian_dynamics}
	\rd u_{t} = -\nabla U(u_{t}) \,\rd t + \sqrt{\varepsilon} \,\rd w_{t},\qquad t \in [0, T],
\end{equation}
where $(w_{t})_{t \in [0, T]}$ is a Brownian motion on $\Reals^{m}$.
As a prototype for the more sophisticated, high-dimensional potentials arising in molecular dynamics, such as the Lennard--Jones potential \citep{Schlick2010}, we take $X = \Reals$ and consider the asymmetric double-well potential 
\begin{equation} \label{eq:Brownian_dynamics_potential}
	U(x) \propto 3x^{4} + 2 x^{3} - 6x^{2} - 6x.
\end{equation}
This has a local minimum at $x_{1} = -1$ and a global minimum at $x_{2} = +1$ (\Cref{fig:sde_potential_and_sample_paths}(a)).
We take $\datameas$ to be the corresponding path distribution, with temperature $\varepsilon = 1$, final time $T = 5$, and initial condition $u_{0} = x_{1}$. (The preceding developments fixed $u_{0}=0$ but are readily adapted to any fixed initial condition.)
The training data set consists of 8,192 paths with time step $\nicefrac{5}{512}$ in $[0, T]$, and in each path 50\% of time steps are missing (see \cref{subsec:details_Brownian_dynamics}).

\begin{figure}[htb]
	\centering
	\vspace{-0.5em}
	\begin{tikzpicture}
		\node at (0, 0) {\includegraphics[width=13cm, clip, trim=0.4cm 0.45cm 0.4cm 0.4cm]{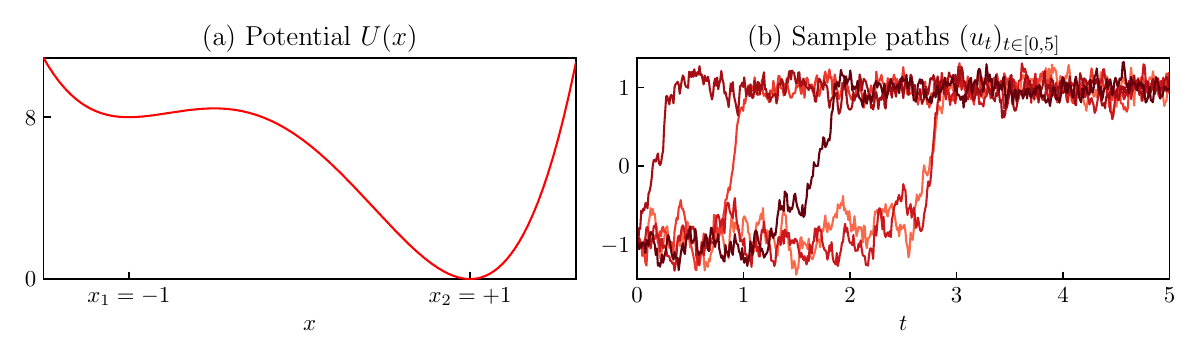}};
	\end{tikzpicture}
	\vspace{-0.9em}
	\caption{(a) Realisations of the SDE \eqref{eq:Brownian_dynamics} follow the gradient flow of the potential $U$. (b) Sample paths $(u_{t})_{t \in [0, T]} \sim \datameas$ begin at $x_{1} = -1$ and transition with high probability to the lower-potential state $x_{2} = +1$ as a result of  the additive thermal noise.}
	\label{fig:sde_potential_and_sample_paths}
\end{figure}

Sample paths drawn from $\datameas$ start at $x_{1}$ and transition with very high probability to the potential-minimising state $x_{2}$;
the time at which the transition begins is determined by the thermal noise, but, once the transition has begun, the manner in which the transition occurs is largely consistent across realisations.
Such universal transition phenomena occur quite generally in the study of random dynamical systems, as a consequence of large-deviation theory \citep[see][]{ERenVandenEijnden2004}.

We train FVAE using the SDE loss (\cref{subsubsec:SDE}) with regularisation parameter $\beta = 1.2$ and zero-penalty scale $\lambda = 10$.
Motivated by the observation that trajectories are determined chiefly by the transition time, we use latent dimension $d_{\latentspace} = 1$.

\paragraph{Choice of Noise Process.}
The choice of decoder noise greatly affects FVAE's performance as an autoencoder and a generative model.
To investigate this, we first train three instances of FVAE with different restoring forces $\kappa$ in the decoder-noise process.
Then, to evaluate autoencoding performance, we draw samples $u \sim \datameas$ from the held-out set and compute the \emph{reconstruction} $\decodermap(\encodermean(u; \encoderparam); \decoderparam)$, which is the mean of the decoder distribution $\decodermeas^{\decoderparam}$ with $z = \encodermean(u; \encoderparam)$ taken to be the mean of the encoder distribution $\varmeas^{\encoderparam}$ (\cref{fig:sde_reconstructions_and_samples}(a)).
To evaluate FVAE as a generative model, we draw samples from the latent distribution $\latentmeas$ and display the mean $\decodermap(z; \decoderparam)$ of the decoder distribution $\decodermeas^{\decoderparam}$ along with a shaded region indicating one standard deviation of the noise process (\cref{fig:sde_reconstructions_and_samples}(b));
moreover we draw samples $\decodermap(z; \decoderparam) + \eta$ to illustrate their qualitative behaviour  (\cref{fig:sde_reconstructions_and_samples}(c)).

\begin{figure}[t]
	\centering
	\begin{tikzpicture}
		\node at (0, 0) {\includegraphics[width=0.98\textwidth, clip, trim=0.4cm 0.5cm 0.4cm 0.9cm]{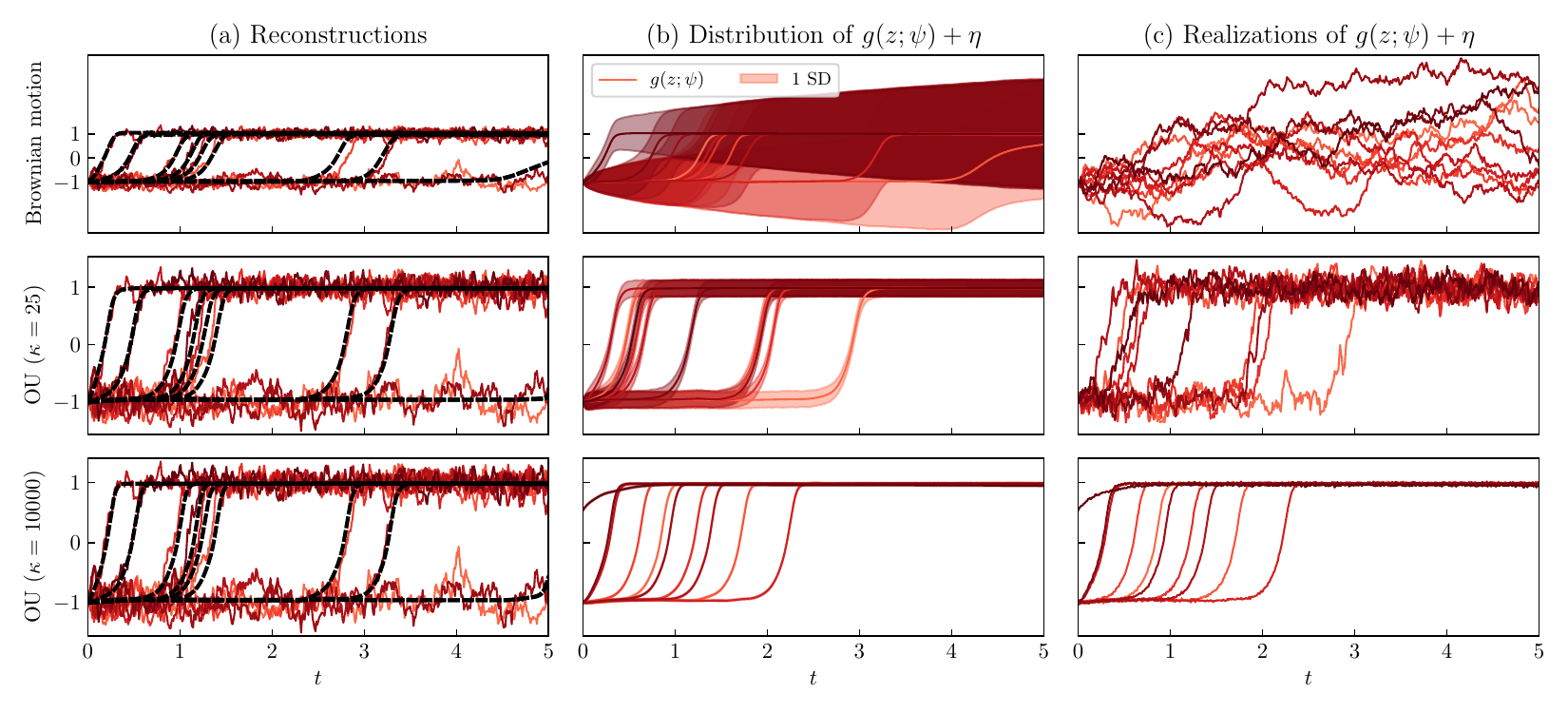}};
		\node at (-4.9, 3.75) {\footnotesize (a) Data $u \sim \datameas$ (red) \& };
		\node at (-4.85, 3.4) {\footnotesize reconstructions $\decodermap(\encodermean(u; \encoderparam); \decoderparam)$ (black)};
		\node at (0.3, 3.75) {\footnotesize (b) Distribution of $\decodermap(z; \decoderparam) + \eta$ with};
		\node at (0.3, 3.4) {\footnotesize 1 standard deviation of noise};
		\node at (5.1, 3.75) {\footnotesize (c) Samples $\decodermap(z;\decoderparam)+\eta$ from};
		\node at (5.1, 3.4) {\footnotesize $\genmeas^{\decoderparam}$, $z \sim \latentmeas$, $\eta \sim \decodernoisemeas$};
	\end{tikzpicture}
	\vspace{-2.0em}
	\caption{The SDE loss gives much freedom in the choice of noise process $(\eta_{t})_{t}$. The top row uses Brownian motion as the decoder noise and the second and third rows use OU processes with different asymptotic variances. While all choices lead to high-quality reconstructions, only the OU process with $\kappa = 25$ gives a generative model that agrees well with the data.}
	\label{fig:sde_reconstructions_and_samples}
    \vspace{-1em}
\end{figure}

Using Brownian motion as the decoder noise leads to excellent reconstructions, but samples from the generative model appear different from the training data.
By using OU-distributed noise with restoring force $\kappa > 0$ we obtain similar reconstructions to those achieved under Brownian motion, but samples from the generative model match the data distribution more closely.
This is because the variance of Brownian motion grows unboundedly with time, while the 
asymptotic variance under the OU process is $\nicefrac{\varepsilon}{2\kappa}$, better reflecting the behaviour of the data for well-chosen $\kappa$.

\begin{wrapfigure}{r}{0.48\textwidth}
	\vspace{-1.8em}
	\begin{tikzpicture}
		\node at (0, 0.2) {\includegraphics[width=0.48\textwidth,height=4.1cm]{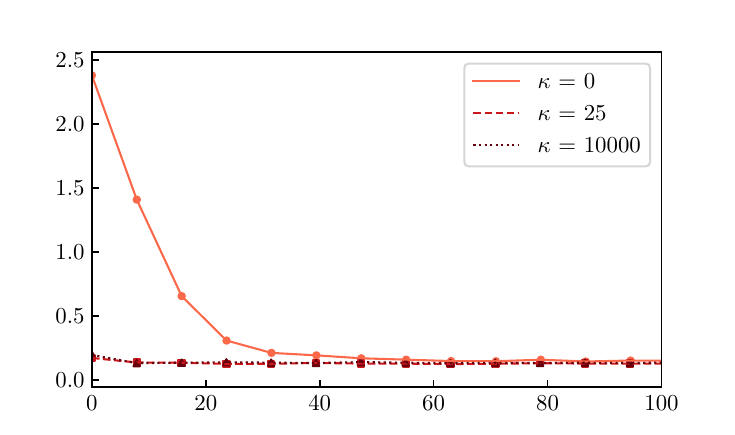}};
		\node at (0, -1.85) {\footnotesize Wall-clock time [s]};
		\node at (0, 2.05) {\footnotesize Reconstruction MSE on held-out set};
	\end{tikzpicture}
	\vspace{-2.2em}
		\caption{Using OU noise ($\kappa > 0$) leads to faster training convergence.}
		\label{fig:sde1d_wallclock}
		\vspace{-1.3em}
\end{wrapfigure}
On this data set, choosing a suitable noise process $(\eta_{t})_{t}$ has the added benefit of significantly accelerating training:
autoencoding mean-squared error (MSE) decreases much faster under OU noise ($\kappa > 0$) than under Brownian motion noise (\cref{fig:sde1d_wallclock}).
We expect the choice of noise should depend in general on properties of the data distribution, with the OU process being particularly suited to this data set;
in the discussion that follows, we use an OU process with restoring force $\kappa = 25$.

\paragraph{Unsupervised Learning of Physically Relevant Quantities.}
Our choice of latent dimension $d_{\latentspace} = 1$ was motivated by the heuristic that the time of the transition from $x_{1} = -1$ to $x_{2} = +1$ essentially determines the SDE trajectory.
FVAE identifies this purely from data, with the learned latent variable $z \in \latentspace$ being in correspondence with the transition time: larger values of $z$ map to paths $\decodermap(z; \decoderparam)$ transitioning later in time (\cref{fig:sde_latent_variable}(a)).
\begin{figure}[t]
	\centering
	\begin{tikzpicture}
		\node at (0, 0) {
			\includegraphics[width=0.85\textwidth, clip, trim=0.8cm 0.5cm 0.4cm 0.9cm]{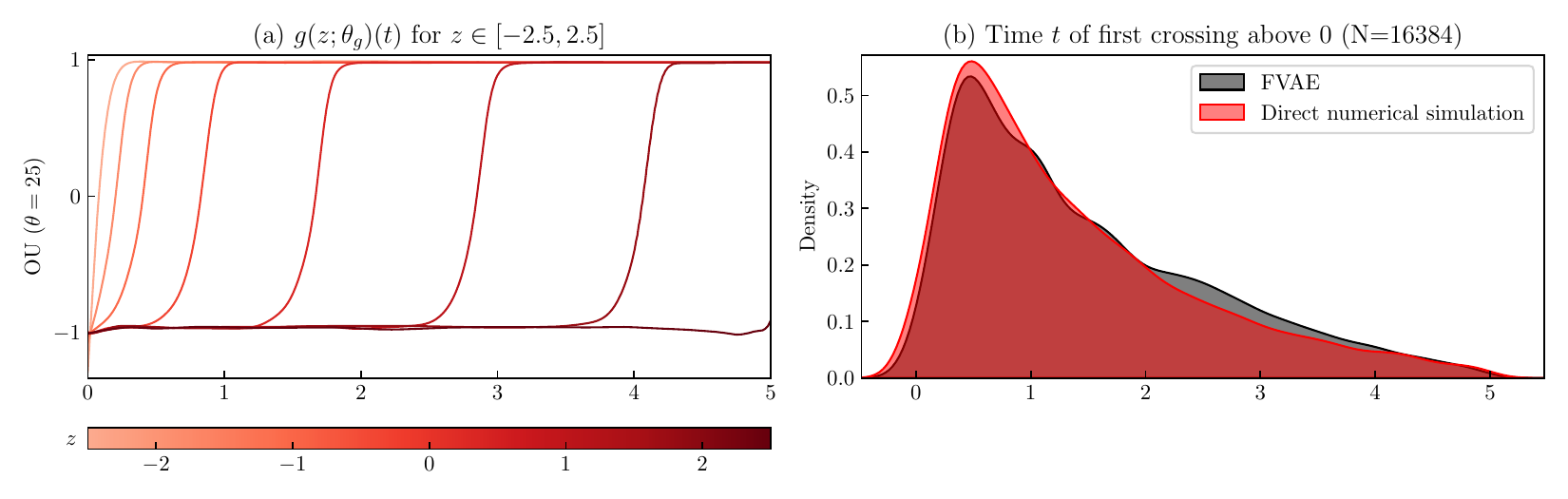}};
		\node at (-3.2, 2) {\footnotesize (a) Decoded paths $\decodermap(z; \decoderparam)(t)$, $z \in [-2.5, 2.5]$};
		\node at (3.55, 2) {\footnotesize (b) Distribution of $T_{0}(u)$ [16,384 samples]};
		\node at (-3.2, -1.27) {\tiny $t$};
		\node at (3.55, -1.45) {\footnotesize $T_{0}(u)$};
		\end{tikzpicture}
		\vspace{-0.8em}
		\caption{(a) The latent variable $z$ identified by FVAE corresponds to the first-crossing time $T_{0}$ of the decoded path $\decodermap(z; \decoderparam)$. (b) Kernel density estimates of the distributions of $T_{0}$ under the FVAE generative model, and when computed using direct simulations, closely agree.}
	\label{fig:sde_latent_variable}
\end{figure}

To understand whether FVAE  captures ensemble statistical properties of the data distribution, we compare the distributions of the first-crossing time $T_{0}(u) = \inf \bigset{t > 0}{u_{t} \geq 0}$
estimated using 16,384 paths from the generative model and 16,384 direct simulations, using kernel density estimates based on Gaussian kernels with bandwidths selected by Scott's rule \citep[eq.~(6.44)]{Scott2015};
we find that the two distributions closely agree (\cref{fig:sde_latent_variable}(b)).

\subsubsection{Estimation of Markov State Models}
\label{subsec:MSM}

In practical applications of molecular dynamics, one is often interested in the evolution of large molecules on long timescales.
For example in the study of protein folding \citep{Konovalovetal2021}, it is of interest to capture the complex, multistage transitions of proteins between configurations.
Moving beyond the toy one-dimensional problem in \cref{subsec:Brownian_dynamics}, the chief difficulty is the very high dimension of such systems, which makes simulations possible only on timescales orders of magnitudes shorter than those of physical interest.

Markov state models (MSMs) offer one method of distilling many simulations  on short timescales into a statistical model permitting much longer simulations \citep{HusicPande2018}.
Assuming that the dynamics are given by a random process $(u_{t})_{t \geq 0}$ taking values in the configuration space $X$, an MSM can be constructed by partitioning $X$ into disjoint state sets $X = X_{1} \cup \cdots \cup X_{p}$, and, for some \defterm{lag time} $\tau > 0$, considering the discrete-time process $(U_{k})_{k \in \Naturals}$ for which $U_{k} = i$ if and only if $u_{k\tau} \in X_{i}$.
One hopes that, if $\tau$ is sufficiently large, the process $(U_{k})_{k \in \Naturals}$ is approximately Markov, and thus its distribution can be characterised by learning the probabilities of transitioning in time $\tau$ from one state to another.
These probabilities can be determined using the short-run simulations---which can be generated in parallel---and the resulting MSM can be used to simulate on longer timescales.

Motivated by this application, we consider the problem of constructing an MSM from data provided at sparse or irregular intervals that do not necessarily align with the lag time $\tau$;
in this case, computing the probability of transition in time $\tau$ directly may not be possible.
We show the power of FVAE in this problem by first learning a generative model from the heterogeneously sampled data and then using the generative model to draw paths sampled at the regular time step $\tau$;
constructing an MSM from these paths is then straightforward.

We give an example based on the Brownian dynamics model \eqref{eq:Brownian_dynamics} on $X = \Reals^{2}$ using a multiwell potential $U$ (\cref{fig:MSM_transition_probabilities}(a)), stated precisely in \cref{subsec:details_MSM}, which we take to be a quadratic bowl,  perturbed by a linear function to break the symmetry, and by six Gaussian densities to act as potential wells with minima at $(0, 0)$, $(0.2, 0.2)$, $(-0.2, -0.2)$, $(0.2, -0.2)$, $(0, 0.2)$ and $(-0.2, 0)$.
We take $\dataspace = C_{0}([0, T], X)$ and let $\datameas \in \prob{\dataspace}$ be the path distribution, with temperature $\varepsilon = 0.1$, final time $T = 3$ and initial condition $u_{0} = 0$.
The training data set consists of 16,384 paths discretised with time step $\nicefrac{3}{512}$, where, for each sample, it is assumed that 50\% of steps are missing (details in \cref{subsec:details_MSM}).
We train FVAE using the SDE loss (\cref{subsubsec:SDE}) with $\kappa = 100$, $\lambda = 50$, $\beta = 0.02$, and latent dimension $d_{\latentspace} = 16$.

\begin{figure}[t]
	\vspace{-2em}
	\begin{tikzpicture}[trim left=8.2cm]
		\node at (0.3, 0.3) {\scalebox{0.72}{
		\begin{tikzpicture}
			\node[anchor=west, minimum width=10cm, minimum height=8cm, inner sep=0, outer sep=0] at (38.65, -16.65) {\includegraphics[width=5.2cm,height=5.2cm,clip, trim=2.4cm 0.95cm 3.8cm 0cm]{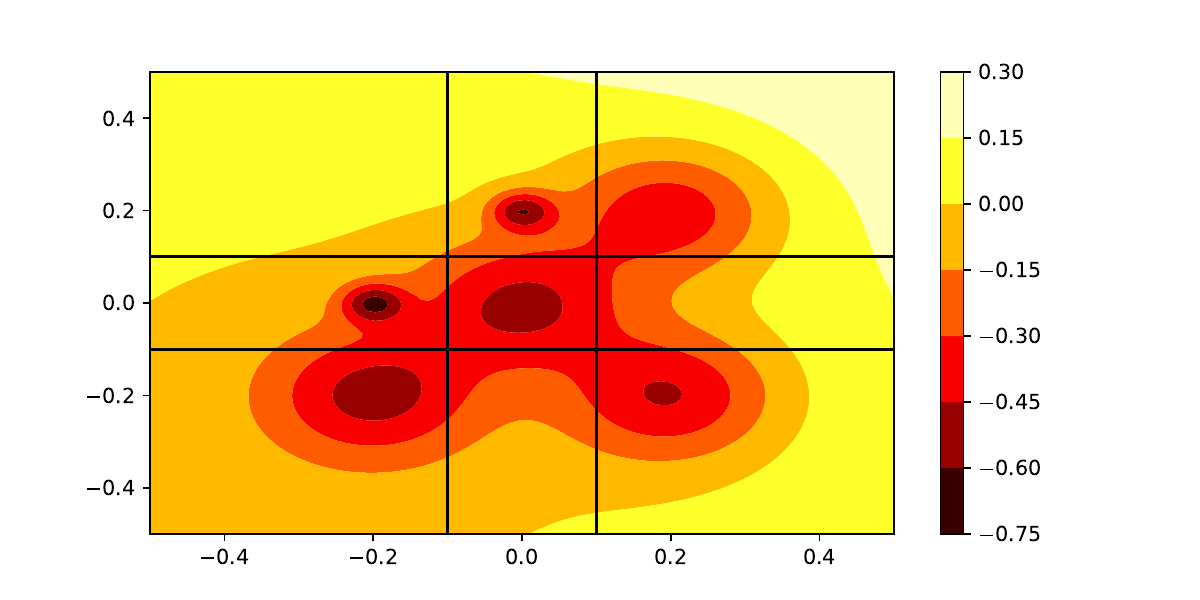}};
	\node[anchor=west] at (46.2, -14.77) {0.3};
	\node[anchor=west] at (46.2, -15.37) {0.15};
	\node[anchor=west] at (46.2, -15.97) {0};
	\node[anchor=west] at (46.2, -16.6) {-0.15};
	\node[anchor=west] at (46.2, -17.22) {-0.3};
	\node[anchor=west] at (46.2, -17.85) {-0.45};
	\node[anchor=west] at (46.2, -18.5) {-0.6};
	\node[anchor=west] at (46.2, -19.11) {-0.75};
	\node [anchor=east] at (41.1, -18.7) {-0.4};
	\node [anchor=east] at (41.1, -17.8) {-0.2};
	\node [anchor=east] at (41.1, -16.95) {0.0};
	\node [anchor=east] at (40.3, -16.95) {$x_{2}$};
	\node [anchor=east] at (41.1, -16.05) {0.2};
	\node [anchor=east] at (41.1, -15.15) {0.4};
	\node [anchor=east] at (41.95, -19.4) {-0.4};
	\node [anchor=east] at (42.85, -19.4) {-0.2};
	\node [anchor=east] at (43.78, -19.4) {0.0};
	\node [anchor=east] at (43.78, -19.8) {$x_{1}$};
	\node [anchor=east] at (44.69, -19.4) {0.2};
	\node [anchor=east] at (45.59, -19.4) {0.4};
	\node[anchor=west] at (41.75, -18.5) {\small \textbf{1}};
	\node[anchor=west] at (43.2, -18.5) {\small \textbf{2}};
	\node[anchor=west] at (44.6, -18.5) {\small \textbf{3}};
	\node[anchor=west] at (41.75, -17) {\small \textbf{4}};
	\node[anchor=west,white] at (43.2, -17) {\small \textbf{5}};
	\node[anchor=west] at (44.6, -17) {\small \textbf{6}};
	\node[anchor=west] at (41.75, -15.5) {\small \textbf{7}};
	\node[anchor=west] at (43.2, -15.5) {\small \textbf{8}};
	\node[anchor=west] at (44.6, -15.5) {\small \textbf{9}};

\end{tikzpicture}}};

\node at (1.7, -0.2) {\scalebox{0.8}{\begin{tikzpicture}[trim left=26cm]
	\node[anchor=west, inner sep=0, outer sep=0] at (68.9, -17.1) {\includegraphics[clip, width=3.7cm, trim=1.6cm 1.4cm 1.3cm 1.4cm]{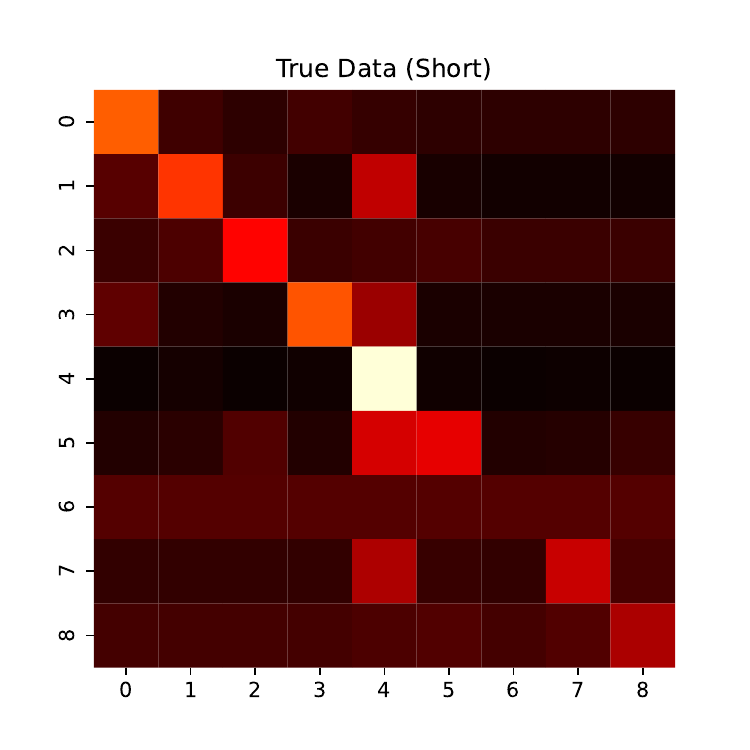}};
	\node[anchor=west, inner sep=0, outer sep=0] at (72.8, -17.1) {\includegraphics[clip, width=3.7cm, trim=1.6cm 1.4cm 1.3cm 1.4cm]{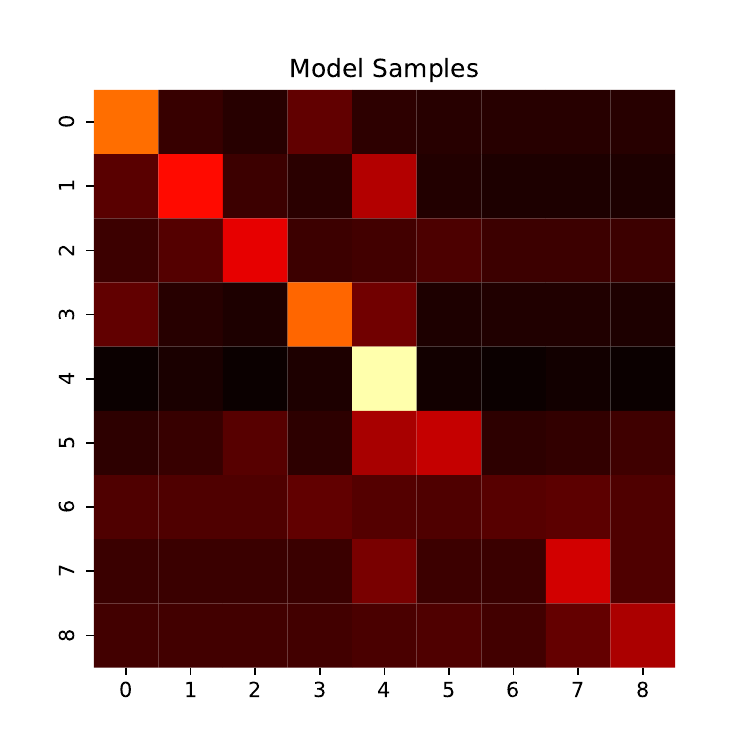}};
	\node[anchor=west, inner sep=0, outer sep=0] at (76.7, -17.27) {\includegraphics[clip, height=4cm, trim=17.15cm 0.7cm 2.7cm 0.72cm]{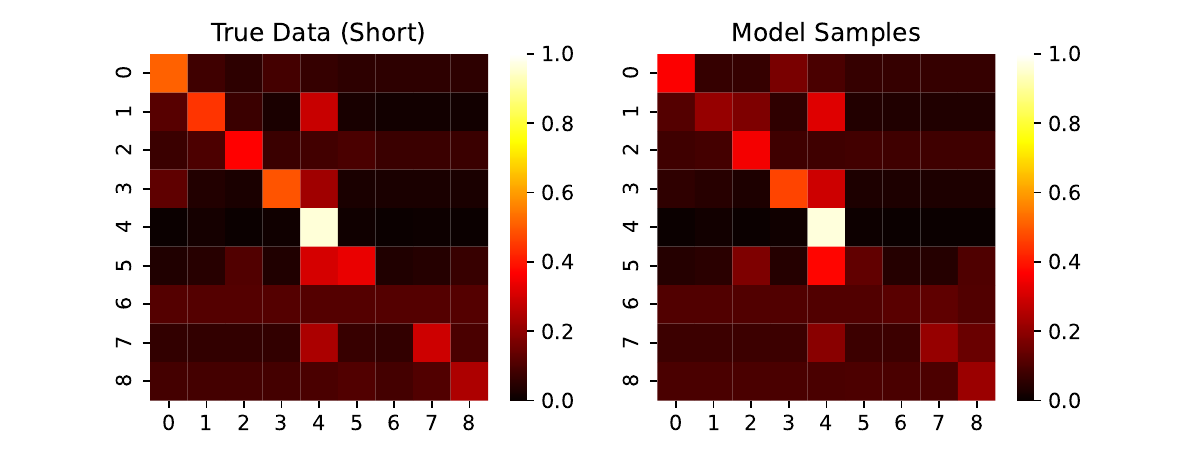}};	
\end{tikzpicture}}};
			\node at (11.2, 2.3) {\footnotesize (a) Potential $U \colon X \to \Reals$ };
			\node at (11.2, 2) {\footnotesize and states $X_{i}$, $i = 1,\dots, 9$ };
			\node at (18.8, 2.3) {\footnotesize (b) Maximum-likelihood transition matrices};
			\node at (17.1, 1.95) {\footnotesize (i) Direct numerical };
			\node at (17.1, 1.62) {\footnotesize simulation $\mathsf{T}^{\text{DNS}}(\tau)$};
			\node at (20.2, 1.95) {\footnotesize (ii) FVAE};
			\node at (20.2, 1.60) {\footnotesize $\mathsf{T}^{\text{FVAE}}(\tau)$};
			\node at (22.35, -1.74) {\footnotesize 0.0};
			\node at (22.35, -1.74+0.6) {\footnotesize 0.2};
			\node at (22.35, -1.74+0.6+0.6) {\footnotesize 0.4};
			\node at (22.35, -1.74+0.6+1.2) {\footnotesize 0.6};
			\node at (22.35, -1.74+1.2+1.2) {\footnotesize 0.8};
			\node at (22.35, -1.74+3) {\footnotesize 1.0};
		\end{tikzpicture}
	\vspace{-2.5em}
	\caption{(a) Contour plot of the potential $U$ and the division of the state space $X$. All paths start at $t = 0$ at the origin (in state $5$).  (b) Transition matrices with lag $\tau = \nicefrac{3}{512}$ computed using FVAE and through direct simulation, both on the time interval $[0, 3]$.}
	\label{fig:MSM_transition_probabilities}
\end{figure}

\paragraph{Partitioning the Configuration Space.}
The states of an MSM can be selected manually using expert knowledge or automatically using variational or machine-learning methods \citep{MardtPasqualiWuNoe2018}.
For simplicity, we choose the states by hand, partitioning $X = \Reals^{2}$ into $p = 9$ disjoint regions (\cref{fig:MSM_transition_probabilities}(a)) divided by the four lines $x_{1} = \pm 0.1$ and $x_{2} = \pm 0.1$.

\paragraph{Estimating Transition Probabilities with FVAE.} 
After training FVAE with irregularly sampled data, we draw samples from the generative model with regular time step $\tau$ and use these samples to compute the MSM transition probabilities.
Setting aside the question of Markovianity for simplicity, we draw from the generative model $M = \text{2,048}$ paths $\{v^{(m)}\}_{m = 1}^{M}$ discretised on a mesh of $K = 513$ equally spaced points with time step $\tau = \nicefrac{3}{512}$ on $[0, T]$, and compute the count matrix
\begin{equation*}
	\mathsf{C}^{\text{FVAE}}(\tau) = \bigl(\mathsf{C}^{\text{FVAE}}_{ij}(\tau)\bigr)_{i, j \in \{1, \dots, p\}},\quad \mathsf{C}^{\text{FVAE}}_{ij}(\tau) = \sum_{k = 0}^{K} \sum_{m = 1}^{M} \one \left[ v^{(m)}_{k\tau} \in X_{i} \text{~and~} v^{(m)}_{(k + 1)\tau} \in X_{j} \right].
\end{equation*}
We then derive the corresponding maximum-likelihood transition matrix $\mathsf{T}^{\text{FVAE}}(\tau)$ by normalising each row of $\mathsf{C}^{\text{FVAE}}(\tau)$ to sum to one; 
for simplicity we do not constrain the transition matrix to satisfy the detailed-balance condition \citep[see][Sec.~IV.D]{Prinzetal2011}.
The resulting transition matrix $\mathsf{T}^{\text{FVAE}}(\tau)$ agrees closely with the matrix $\mathsf{T}^{\text{DNS}}(\tau)$ computed analogously using 2,048 direct numerical simulations on the regular time step $\tau$ (\cref{fig:MSM_transition_probabilities}(b)).

\section{Problems with VAEs in Infinite Dimensions}
\label{sec:problems}

As we have seen in \cref{subsec:VAE_objective}, the empirical FVAE objective $\objectiveFVAE_{N}(\encoderparam, \decoderparam)$ for the data set $\{u^{(n)}\}_{n = 1}^{N}$ is based on a sequence of approximations and equalities:
\[
    \underbrace{ \frac{1}{N} \sum_{n = 1}^{N} \persampleloss(u^{(n)};\encoderparam, \decoderparam) }_{\qefed \objectiveFVAE_{N}(\encoderparam, \decoderparam)}
    \underset{\text{(A)}}{~~\approx~~}
    \underbrace{ \EE_{u \sim \datameas} [ \persampleloss(u;\encoderparam, \decoderparam) ] }_{\qefed \objectiveFVAE(\encoderparam, \decoderparam)}
    \underset{\text{(B)}}{~~=~~}
    \KLdiv{ \encoderjointmeas^{\encoderparam} }{ \decoderjointmeas^{\decoderparam} } - \underset{\text{``constant''}}{\underbrace{\KLdiv{\datameas}{\referencemeas}}}.
\]
The approximation (A) is based on the law of large numbers and is an equality almost surely in the limit $N \to \infty$.
The equality (B) is true by \cref{thm:FVAE_tractable_objective} with a finite constant $\KLdiv{\datameas}{\referencemeas}$ provided \cref{assumption:tractable_objective} holds---but if the assumption does not hold, this ``constant'' may well be infinite.
So, while it is tempting to apply the empirical objective $\objectiveFVAE_{N}$ without first checking the validity of (A) and (B), this strategy is fraught with pitfalls.

To illustrate this we apply FVAE in the white-noise setting of \Cref{ex:white-noise_on_L2};
this coincides with the setting of the VANO model \citep{SeidmanKissasPappasPerdikaris2023}, which we discuss in detail in the related work (\cref{sec:related_work}).
In this example we derive the per-sample loss $\persampleloss$ and apply the resulting empirical objective $\objectiveFVAE_{N}$ for training;
but both the joint divergence $\KLdiv{\encoderjointmeas^{\encoderparam}}{\decoderjointmeas^{\decoderparam}}$ and the constant $\KLdiv{\datameas}{\referencemeas}$ turn out to be infinite.
Consequently, the approximations (A) and (B) break down, which we see numerically:
discretisations of $\objectiveFVAE_{N}$ appear to diverge as resolution is refined, suggesting that they have no continuum limit.

\begin{example}
    \label[example]{ex:infinite_ELBO_Dirac}
	Take $\dataspace = H^{-1}([0, 1])$ and assume that $\datameas$ is the distribution of $u$ in the model
    \begin{align*}
        \xi &\sim \Uniform[0, 1]\\
        u \mid \xi &= \dirac{\xi}.
    \end{align*}
	Realisations of $u$ in this model lie in $H^{s}([0, 1])$, $s < -\nicefrac{1}{2}$, with probability one, so $\datameas \in \prob{\dataspace}$.
	This data can be viewed as a prototype for rough behaviour such as the derivative of shock profiles arising in hyperbolic PDEs with random initial data. 
    It will serve as an extreme example allowing us to isolate the numerical issues associated with using FVAE or VANO in the misspecified setting.

    Take the model \eqref{eq:encoder_infinite_dimensions}--\eqref{eq:latent_distribution_infinite_dimensions} for real-valued functions on $[0, 1]$, with $\decodernoisemeas$ taken to be $L^{2}$-white noise.
	As discussed in \Cref{prop:white_noise}, $\decodernoisemeas \in \prob{H^{s}([0, 1])}$, $s < -\nicefrac{1}{2}$, and $\CMspace{\decodernoisemeas} = L^{2}([0, 1])$.
	Fixing the reference distribution $\referencemeas = \decodernoisemeas$ and assuming $\decodermap$ takes values in $L^{2}([0, 1])$,
	the Cameron--Martin theorem (\Cref{prop:CM_theorem}) ensures that the density $\rd \decodermeas^{\decoderparam} / \rd \referencemeas$ exists, and consequently
	\begin{equation} \label{eq:L2_white_noise_per-sample_loss} 
		\persampleloss(u; \encoderparam, \decoderparam)  = \EE_{z \sim \varmeas^{\encoderparam}}\Bigl[ \tfrac{1}{2} \norm{\decodermap(z; \decoderparam)}_{L^{2}}^{2}  - \Pwint{\decodermap(z; \decoderparam)}{u}_{L^{2}} \Bigr] + \bigKLdiv{\varmeas^{\encoderparam}}{\latentmeas}.
    \end{equation}
	At this stage, we have not verified that the joint divergence $\KLdiv{\encoderjointmeas^{\encoderparam}}{\decoderjointmeas^{\decoderparam}}$ has finite infimum, nor that \Cref{assumption:tractable_objective} holds;
    indeed we will see that both of these conditions fail.
	Nevertheless we can attempt to train FVAE with the empirical objective resulting from \eqref{eq:L2_white_noise_per-sample_loss}.
	To do this we choose $\latentspace = \Reals$ and use an encoder map $\encodermap$ and a decoder map $\decodermap$ tailored to this problem, since the architectures of \cref{subsec:VAE_architecture} are not equipped to deal with functions of negative Sobolev regularity.
	We parametrise $\encodermap$ as 
\begin{equation*}
	\encodermap(u; \encoderparam) = \rho\biggl( \argmax_{x \in [0, 1]} (\varphi \ast u)(x); \encoderparam \biggr) \in \latentspace \times \latentspace = \Reals^{2},
\end{equation*}
where $\rho$ is a neural network and $\varphi$ is a compactly supported smooth mollifier, chosen such that $\varphi \ast u$ has well-defined maximum, and we parametrise $\decodermap$ to return the Gaussian density
\begin{align*}
	g\bigl(z; \decoderparam\bigr)(x) = N\bigl(\mu(z; \decoderparam), \sigma(z; \decoderparam)^{2} ; x\bigr),
\end{align*}
with mean $\mu(z; \decoderparam) \in [0, 1]$ and standard deviation $\sigma(z; \decoderparam) > 0$ computed from a neural network (see \cref{subsec:details_random_dirac}).
In other words, $\encodermap$ applies a neural network to the location of the maximum of $u$, while $\decodermap$ returns a Gaussian density with learned mean and variance (\cref{fig:Dirac_variable-centre_results}(a)).
To investigate the behaviour of discretisations of $\objectiveFVAE_{N}$ as resolution is refined, we generate a sequence of data sets in which we discretise on a mesh of $I \in \{8, 16, 32, 64, 128\}$ equally spaced points $\{\nicefrac{i}{I + 1}\}_{i = 1, \dots, I} \subset [0, 1]$.
At each resolution we generate $I$ training samples: 
one discretised Dirac function at each mesh point, normalised to have unit $L^{1}$-norm.
We train 50 independent instances of FVAE at each resolution and record the value of the empirical objective at convergence (\cref{fig:Dirac_variable-centre_results}(a)).
Notably, the empirical objective appears to diverge as resolution is refined.
\begin{figure}[t]
	\centering
	\begin{tikzpicture}
        \node at (-4.8, 2.95) {\footnotesize (a) Schematic diagram of };
        \node at (-4.8, 2.6) {\footnotesize \cref{ex:infinite_ELBO_Dirac}};
        \node at (-5.0, 0.21) {\includegraphics[width=5cm, height=4.4cm, clip, trim=0.5cm 0.5cm 19.2cm 0.67cm]{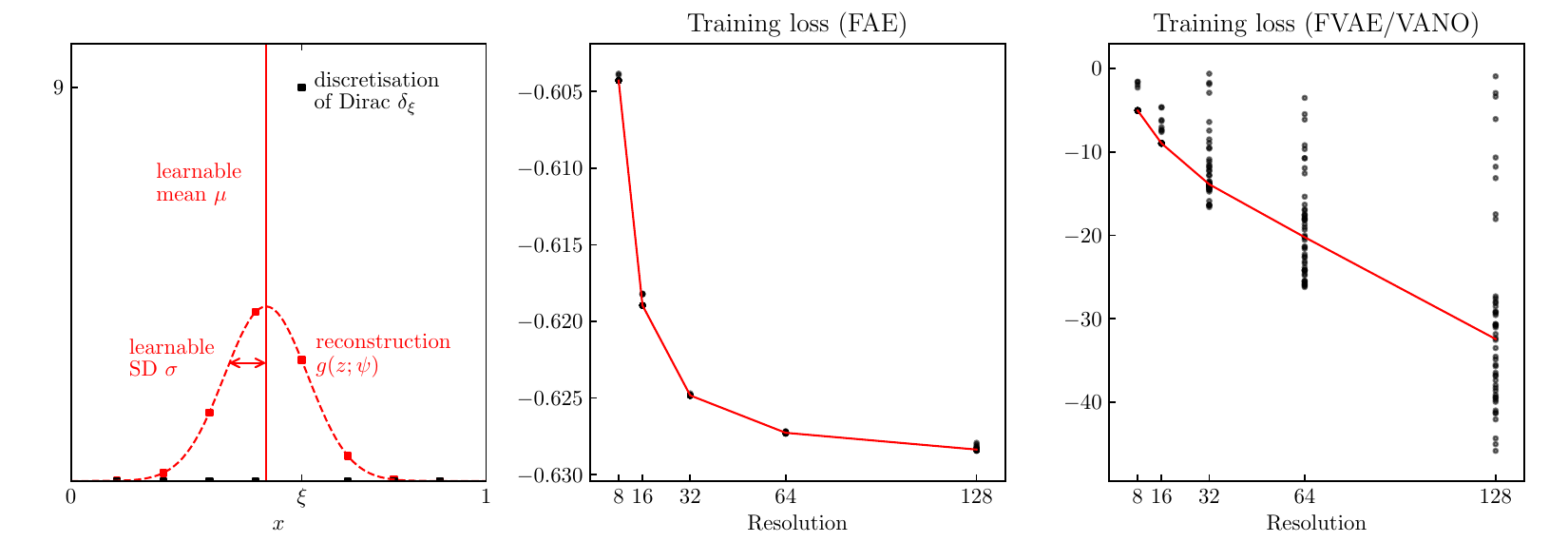}};
    
		\node at (0.2, 2.95) {\footnotesize (b) FVAE empirical objective};
		\node at (0.2, 2.60) {\footnotesize $\decodernoisemeas = N(0, I)$; 50 runs};
		\node at (0, 0.21) {\includegraphics[width=5cm, height=4.4cm, clip, trim=18.9cm 0.5cm 0.5cm 0.67cm]{images/dirac_moving_centre_results}};
		\node at (0.1, -2.08) {\footnotesize Resolution};
		\node[red] at (0.9, 0.59) {\footnotesize median};
		\node at (5.2, 2.95) {\footnotesize (c) FAE empirical objective};
		\node at (5.25, 2.60) {\footnotesize $\dataspace = H^{-1}([0, 1])$; 50 runs};
		\node at (5.0, 0.21) {\includegraphics[width=5cm, height=4.4cm, clip, trim=9.1cm 0.5cm 9.9cm 0.67cm]{images/dirac_moving_centre_results}};
		\node[red] at (4.3, 0.92) {\footnotesize median};
		\node at (5.17, -2.08) {\footnotesize Resolution};
	\end{tikzpicture}
	\caption{(a) The discrete representations (squares) of $\delta_{\xi}$ and $\decodermap(z; \decoderparam)$ on a grid of $8$ points. (b) In \Cref{ex:infinite_ELBO_Dirac},  the FVAE empirical objective at the minimising parameters diverges as resolution is increased. (c) To overcome this, we propose a regularised autoencoder, FAE, in \cref{sec:regularised_AEs}. Repeating the experiment with this objective suggests that the FAE empirical objective has a well-defined continuum limit.}
	\label{fig:Dirac_variable-centre_results}
\end{figure}
This has two major causes:
\begin{enumerate}[label=(\alph*)]
	\item 
	$\datameas$ is not absolutely continuous with respect to $\referencemeas = \decodernoisemeas$:
	the set $\set{\dirac{x}}{x \in [0, 1]}$ has probability zero under $\referencemeas$ but probability one under $\datameas$;
	thus $\KLdiv{\datameas}{\referencemeas} = \infty$.

	\item
	$\datameas$ is not absolutely continuous with respect to 
	$\genmeas^{\decoderparam}$, meaning that  $\KLdiv{\encoderjointmeas^{\encoderparam}}{\decoderjointmeas^{\decoderparam}} = \infty$ for all $\encoderparam$ and $\decoderparam$.
	To see this, we again note that $\set{\dirac{x}}{x \in [0, 1]}$ has probability one under $\datameas$, but, as a consequence of the Cameron--Martin theorem, $\decodermeas^{\decoderparam}$ and $\decodernoisemeas$ are mutually absolutely continuous, and thus as $\decodernoisemeas(\set{\dirac{x}}{x \in [0, 1]}) = 0$,
	\begin{equation*}
		\genmeas^{\decoderparam}\bigl( \set{\dirac{x}}{x \in [0, 1]} \bigr) = \int_{\latentspace} \decodermeas^{\decoderparam}\bigl(\set{\dirac{x}}{x \in [0, 1]}\bigr) \,\latentmeas(\rd z) = 0.
	\end{equation*}
\end{enumerate}
The problematic term in the per-sample loss is the measurable linear functional $\Pwint{\decodermap(z; \decoderparam)}{u}_{L^{2}}$;
this is defined only up to modification on $\decodernoisemeas$-probability zero sets, and yet we evaluate it on just such sets---namely, the $\decodernoisemeas$-probability zero set $\set{\dirac{x}}{x \in [0, 1]}$.
\qedremark
\end{example}

\begin{remark}
    \label[remark]{rk:problems_L2_data}
    The joint divergence $\KLdiv{\encoderjointmeas^{\encoderparam}}{\decoderjointmeas^{\decoderparam}}$ would also be infinite if $\datameas$ was supported on $L^{2}([0, 1])$, as seen in \cref{ex:white-noise_on_L2}, but it is harder to observe any numerical issue in training.
    This is because the measurable linear functional $\Pwint{\decodermap(z; \decoderparam)}{u}_{L^{2}}$ reduces to the usual $L^{2}$-inner product, and so, even though the FVAE objective is not well defined, the per-sample loss can still be viewed as a regularised misfit (see \cref{rk:Cameron--Martin_exponent}):
    \begin{equation*}
        \persampleloss(u; \encoderparam, \decoderparam) = \EE_{z \sim \varmeas^{\encoderparam}} \Bigl[ \tfrac{1}{2} \norm{\decodermap(z; \decoderparam) - u}^{2}_{L^{2}} \Bigr] + \KLdiv{\varmeas^{\encoderparam}}{\latentmeas} - \tfrac{1}{2} \norm{u}_{L^{2}}^{2}.
    \end{equation*}
    As a result one can reinterpret the objective as that of a regularised autoencoder (see \cref{rk:VAE_regularised_autoencoder}).
    This motivates our use of a regularised autoencoder, FAE, in \cref{sec:regularised_AEs}.
    \qedremark
\end{remark}

\section{Regularised Autoencoders on Function Space}
\label{sec:regularised_AEs}

To overcome the issues in applying VAEs in infinite dimensions, we set aside the probabilistic motivation for FVAE and define a regularised autoencoder in function space, the \defterm{functional autoencoder (FAE)}, avoiding the need for onerous conditions on the data distribution.
In \cref{subsec:FAE_objective}, we state the FAE objective and make connections to the FVAE objective.
\Cref{subsec:FAE_architecture} outlines minor adaptations to the FVAE encoder and decoder for use with FAE.
\Cref{subsec:FAE_numerical_examples} demonstrates FAE on two examples from the sciences: incompressible
fluid flows governed by the Navier--Stokes equation; and porous-medium flows governed by Darcy's law.

\subsection{Training Objective}
\label{subsec:FAE_objective}

Throughout \cref{sec:regularised_AEs} we make the following assumption on the data,  postponing discussion of discretisation to \cref{subsec:FAE_architecture}.
Unlike in \cref{sec:VAE} we do not need $\dataspace$ to be separable, allowing us to consider data from an even wider variety of spaces, such as the (non-separable) space $\mathrm{BV}(\Omega)$ of bounded-variation functions on $\Omega \subset \Reals^{d}$.

\begin{assumption}
	Let $(\dataspace, \norm{\quark})$ be a Banach space. 
	There exists a data distribution $\datameas \in \prob{\dataspace}$ from which we have access to $N$ independent and identically distributed samples $\{u^{(n)}\}_{n = 1}^{N} \subset \dataspace$. 
    \qedremark
\end{assumption}

To define our regularised autoencoder, we fix a latent space $\latentspace = \Reals^{d_{\latentspace}}$ and define encoder and decoder transformations $\encodermean$ and $\decodermap$, which, unlike in FVAE, return points rather than probability distributions:
\begin{subequations}
\begin{align}
	\text{(encoder)}~~~~~~\dataspace \ni u &\mapsto \encodermean(u; \encoderparam) \in \latentspace, \label{eq:FAE_encoder}
	\\
	\text{(decoder)}~~~~~~\latentspace \ni z &\mapsto \decodermap(z; \decoderparam) \in \dataspace. \label{eq:FAE_decoder}
\end{align}
\end{subequations}
We then take as our objective the sum of a misfit term between the data and its reconstruction, and a regularisation term with \defterm{regularisation parameter} $\beta > 0$ on the encoded vectors:
\begin{equation} \label{eq:FAE_objective}
	\text{(FAE objective)}~~~~\objectiveFAE_{\beta}(\encoderparam, \decoderparam) = \EE_{u \sim \datameas} \Bigl[ \tfrac{1}{2} \bignorm{\decodermap\bigl(\encodermean(u; \encoderparam); \decoderparam\bigr) - u}^{2} + \beta \bignorm{\encodermean(u; \encoderparam)}_{2}^{2} \Bigr].
\end{equation}
As in \cref{subsec:VAE_objective}, the expectation over $u \sim \datameas$ is approximated by an average over the training data, resulting in the empirical objective $\objectiveFAE_{\beta, N}$.
There is great flexibility in the choice of regularisation term;
we adopt the squared Euclidean norm $\norm{\encodermean(u; \encoderparam)}_{2}^{2}$ as a simplifying choice consistent with using a Gaussian prior $\latentmeas$ in a VAE (\Cref{rk:VAE_regularised_autoencoder}).
While \eqref{eq:FAE_objective} has much in common with the FVAE objective $\objectiveFVAE$, it is not marred by the foundational issues raised in \cref{sec:problems};
indeed, the FAE objective is broadly applicable as the following result shows.

\begin{proposition}
	\label[proposition]{prop:FAE_infimum_finite}
    
	Suppose $\datameas$ has finite second moment, i.e., $\EE_{u \sim \datameas} \bigl[\norm{u}^{2} \bigr] < \infty$.
    If there exist $\encoderparam^{\star} \in \Encoderparam$ and $\decoderparam^{\star} \in \Decoderparam$ such that $\encodermean(u; \encoderparam^{\star}) = 0$ and $\decodermap(z; \decoderparam^{\star}) = 0$,
    then \eqref{eq:FAE_objective} has finite infimum.
\end{proposition}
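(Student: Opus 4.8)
The plan is to evaluate the objective \eqref{eq:FAE_objective} directly at the trivial parameters $\encoderparam^{\star}$ and $\decoderparam^{\star}$, exactly as in the proofs of \cref{prop:joint_divergence_finite_infimum_finite_dimensions,prop:joint_divergence_finite_infimum_infinite_dimensions}. Since the infimum is taken over all admissible parameters, any single feasible choice of $(\encoderparam, \decoderparam)$ furnishes an upper bound on the infimum, so it suffices to exhibit one choice at which the objective is finite.

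First I would substitute $\encoderparam = \encoderparam^{\star}$ and $\decoderparam = \decoderparam^{\star}$ into \eqref{eq:FAE_objective}. Under the hypotheses $\encodermean(u; \encoderparam^{\star}) = 0$ for all $u \in \dataspace$ and $\decodermap(z; \decoderparam^{\star}) = 0$ for all $z \in \latentspace$, the encoded vector vanishes identically, so the regularisation term $\beta \norm{\encodermean(u; \encoderparam^{\star})}_{2}^{2}$ is zero; moreover the reconstruction satisfies $\decodermap\bigl(\encodermean(u; \encoderparam^{\star}); \decoderparam^{\star}\bigr) = \decodermap(0; \decoderparam^{\star}) = 0$, so the misfit reduces to $\tfrac{1}{2} \norm{u}^{2}$. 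Hence
\begin{equation*}
    \objectiveFAE_{\beta}(\encoderparam^{\star}, \decoderparam^{\star}) = \EE_{u \sim \datameas} \Bigl[ \tfrac{1}{2} \norm{u}^{2} \Bigr] = \tfrac{1}{2} \EE_{u \sim \datameas}\bigl[\norm{u}^{2}\bigr],
\end{equation*}
which is finite precisely by the second-moment assumption $\EE_{u \sim \datameas}[\norm{u}^{2}] < \infty$; note that this bound is independent of $\beta$.

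Finally, since both terms in the integrand of \eqref{eq:FAE_objective} are nonnegative, the objective is bounded below by zero for every $(\encoderparam, \decoderparam)$. Combining this lower bound with the upper bound just obtained shows that the infimum of $\objectiveFAE_{\beta}$ lies in the interval $[0, \tfrac{1}{2} \EE_{u \sim \datameas}[\norm{u}^{2}]]$ and is therefore finite. There is no genuine obstacle here: the only point requiring care is the finiteness of the second moment, which is exactly the stated hypothesis, while the vanishing of the reconstruction and regularisation terms at $(\encoderparam^{\star}, \decoderparam^{\star})$ are immediate substitutions.
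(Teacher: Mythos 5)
Your proof is correct and follows exactly the paper's argument: evaluate $\objectiveFAE_{\beta}$ at the trivial parameters $(\encoderparam^{\star}, \decoderparam^{\star})$ to obtain the upper bound $\tfrac{1}{2}\EE_{u \sim \datameas}[\norm{u}^{2}]$, which is finite by hypothesis. The additional observation that the objective is nonnegative (so the infimum cannot be $-\infty$) is a small but worthwhile point of care that the paper leaves implicit.
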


\begin{proof}
	This follows immediately from evaluating $\objectiveFAE_{\beta}$ at $\encoderparam^{\star}$ and $\decoderparam^{\star}$, since
	$\objectiveFAE_{\beta}(\encoderparam^{\star}, \decoderparam^{\star}) = \EE_{u \sim \datameas} \bigl[ \tfrac{1}{2} \Norm{u}^{2} \bigr]$,
	and the expectation is finite by hypothesis.
\end{proof}

\subsection{Architecture and Algorithms}
\label{subsec:FAE_architecture}

To train FAE we must discretise the objective $\objectiveFAE_{\beta}$ and parametrise the encoder $\encodermean$ and decoder $\decodermap$ with learnable maps. 
We moreover propose a masked training scheme that appears to be new to the operator-learning literature;
as we will show in \cref{subsec:FAE_numerical_examples}, this scheme greatly improves the robustness of FAE to changes of mesh.

\paragraph{Encoder and Decoder Architecture.}
As in \cref{subsec:VAE_architecture}, we will construct encoder and decoder architectures under the assumption that $\dataspace$ is a Banach space of functions evaluable pointwise almost everywhere with domain $\Omega \subseteq \Reals^{d}$ and range $\Reals^{m}$, and that we have access to discretisations $\mathbf{u}^{(n)}$ of the data $u^{(n)}$ comprised of evaluations at finitely many mesh points.
We adopt an architecture near identical to that used for FVAE.
More precisely, we parametrise the encoder as
\begin{equation*}
    \encodermean(u; \encoderparam) = \rho\left( \int_{\Omega} \kappa\bigl(x, u(x); \encoderparam \bigr) \,\rd x; \encoderparam \right) \in \latentspace,
\end{equation*}
with $\kappa \colon \Omega \times \Reals^{m} \times \Encoderparam \to \Reals^{\ell}$ parametrised as a neural network with two hidden layers of width 64, output dimension $\ell = 64$, and with $\rho \colon \Reals^{\ell} \times \Encoderparam \to \Reals^{d_{\latentspace}}$ parametrised as the linear layer $\rho(v; \encoderparam) = W^{\encoderparam} v + b^{\encoderparam}$.
We parametrise the decoder as the coordinate neural network $\gamma \colon \latentspace \times \Omega \times \Decoderparam \to \Reals^{m}$ with 5 hidden layers of width 100, so that
\begin{equation*} 
	\decodermap(z; \decoderparam)(x) = \gamma(z, x; \decoderparam) \in \Reals^{m}.
\end{equation*}
In both cases we use GELU activation and augment $x$ with 16 random Fourier features (\cref{subsec:details_base_architecture}).
Relative to the architectures of \cref{subsec:VAE_architecture}, the only change is in the range of $\encodermean$, which now takes values in $\latentspace$ instead of returning distributional parameters for $\varmeas^{\encoderparam}$.

\paragraph{Discretisation of $\objectiveFAE_{\beta}$.}
The FAE objective can be applied whenever $\dataspace$ is Banach, but in this article we take $\dataspace = L^{2}(\Omega)$, where $\Omega = \TT^{d}$ or $\Omega = [0, 1]^{d}$, and discretise the $\dataspace$-norm with a normalised sum.
One can readily imagine other possibilities, e.g., taking $\dataspace$ to be a Sobolev space of order $s \geq 1$ if derivative information is available \citep{Czarneckietal2017}, and approximating the $L^{2}$-norm of the data and its derivatives by sums.
More generally we may use linear functionals as the starting point for approximation.

\subsubsection{Masked Training}

Self-supervised training---learning to predict the missing data from masked inputs---has proven valuable in both language models such as BERT \citep{DevlinChangLeeToutanova2019} and vision models such as the masked autoencoder (MAE; \citealp{Heetal2022}). 
This method has been shown to both reduce training time and to improve generalisation.
We propose two schemes making use of the ability to discretise the encoder and decoder on arbitrary meshes:
\defterm{complement masking} and \defterm{random masking}.
Under both schemes we transform each discretised training sample $\mathbf{u} = \{(x_{i}, u(x_{i}))\}_{i = 1}^{I}$ by subsampling with index sets $\mathcal{I}_{\text{enc}}$ and $\mathcal{I}_{\text{dec}}$, which may change at each training step, to obtain
\begin{equation*}
	\mathbf{u}_{\text{enc}} = \Set{\bigl(x_{i}, u(x_{i})\bigr)}{i \in \mathcal{I}_{\text{enc}}},\qquad
	\mathbf{u}_{\text{dec}} = \Set{\bigl(x_{i}, u(x_{i})\bigr)}{i \in \mathcal{I}_{\text{dec}}}.
\end{equation*}
We supply $\mathbf{u}_{\text{enc}}$ as input to the discretised encoder;
moreover we discretise the decoder on the mesh $\{x_{i}\}_{i \in \mathcal{I}_{\text{dec}}}$ and compare the decoder output to the masked data $\mathbf{u}_{\text{dec}}$.
In both of the strategies we propose $\mathcal{I}_{\text{enc}}$ and $\mathcal{I}_{\text{dec}}$ will be unstructured random subsets of $\{1, \dots, I\}$, but in principle other masks---such as polygons---could be considered.

\paragraph{Complement Masking.}
The chief strategy used in the numerical experiments is to draw $\mathcal{I}_{\text{enc}}$ as a random subset of $\{1, \dots, I\}$ and take $\mathcal{I}_{\text{dec}} = \{1, \dots, I\} \setminus \mathcal{I}_{\text{enc}}$, fixing the \textbf{(encoder) point ratio} $r_{\text{enc}} = |\mathcal{I}_{\text{enc}}| / I$ as a hyperparameter.
This is similar to the strategy adopted by MAE---though our approach differs by masking individual mesh points instead of patches.
We explore the tradeoffs in the choice of point ratio in \cref{subsec:Navier--Stokes}.

\paragraph{Random Masking.}
A second strategy we consider is to independently draw $\mathcal{I}_{\text{enc}}$ and $\mathcal{I}_{\text{dec}}$ as random subsets of $\{1, \dots, I\}$, fixing both the encoder point ratio $r_{\text{enc}} = |\mathcal{I}_{\text{enc}}|/I$ and the decoder point ratio $r_{\text{dec}} = |\mathcal{I}_{\text{dec}}|/I$.
This gives greater control of the cost of evaluating the encoder and decoder:
by taking both $r_{\text{enc}}$ and $r_{\text{dec}}$ to be small, we significantly reduce the cost of each training step, which is useful when the number of mesh points $I$ is large.

\subsection{Numerical Experiments}
\label{subsec:FAE_numerical_examples}

In \cref{subsec:Navier--Stokes,subsec:Darcy}, we apply FAE as an out-of-the-box method to discover a low-dimensional latent space for solutions to the incompressible 
Navier--Stokes equations and the  Darcy model for flow in a porous medium, respectively. We find that for these data sets:
\begin{enumerate}[label=(\alph*)]
    \item FAE's mesh-invariant architecture autoencodes with performance comparable to convolutional neural network (CNN) architectures of similar size (\cref{subsec:Navier--Stokes});
	\item the ability to discretise the encoder and decoder on different meshes enables new applications to inpainting and data-driven superresolution (\cref{subsec:Navier--Stokes}), as well as extensions of existing zero-shot superresolution as proposed for VANO by \cite{SeidmanKissasPappasPerdikaris2023};
	\item masked training significantly improves performance under mesh changes (\cref{subsec:Navier--Stokes}) and can accelerate training while reducing memory demand (\cref{subsec:Darcy});
	\item training a generative model on the FAE latent space leads to a resolution-invariant generative model which accurately captures distributional properties (\cref{subsec:Darcy}).
\end{enumerate}

\subsubsection{Incompressible Navier--Stokes Equations}
\label{subsec:Navier--Stokes}

We first illustrate how FAE can be used to learn a low-dimensional representation for snapshots of the vorticity of a fluid flow in two spatial dimensions governed by the 
incompressible Navier--Stokes equations, and illustrate some of the benefits of our mesh-invariant model.

Let $\Omega = \TT^{2}$ be the torus, viewed as the square $[0, 1]^{2}$ with opposing edges identified and with unit normal $\hat{z}$, and let $\dataspace = L^{2}(\Omega)$.
While the incompressible Navier--Stokes equations are typically formulated in terms of the primitive variables of velocity $u$ and pressure $p$, it is more natural in this case to work with the 
vorticity--streamfunction formulation \citep[see][eq.~(2.6)]{ChandlerKerswell2013}. 
In particular the vorticity 
$\nabla \times u$ is zero except in the out-of-plane component $\hat{z} \omega$. The scalar component
of the vorticity, $\omega$, then satisfies
\begin{equation} \label{eq:Navier--Stokes_velocity--vorticity}
	\begin{alignedat}{2}
		\partial_{t} \omega &= \hat{z} \cdot \bigl( \nabla \times (u \times \omega \hat{z})  \bigr) + \nu \Delta \omega + \varphi, &&\text{~~~~ $(x,t) \in \Omega \times (0, T]$}, \\
		\omega(x, 0) &= \omega_{0}(x), &&\text{~~~~ $x \in \Omega.$}
	\end{alignedat}
\end{equation}
In this setting the velocity is given by $u = \nabla \times (\psi_{s} \hat{z})$, where the streamfunction $\psi_{s}$ satisfies $\omega = \Delta \psi_{s}$.\footnote{While it is typical in the literature \citep[e.g.,][]{ChandlerKerswell2013} to denote the streamfunction by $\psi$, we instead denote it by $\psi_{s}$ to avoid ambiguity with the decoder parameter $\psi$.}
Thus, using periodicity, $\psi_{s}$ is uniquely defined, up to an irrelevant constant, in
terms of $\omega$, and \eqref{eq:Navier--Stokes_velocity--vorticity} defines a closed evolution equation for $\omega$.
We suppose that $\datameas \in \prob{\dataspace}$ is the distribution of the scalar vorticity $\omega(\quark, T = 50)$ given by \eqref{eq:Navier--Stokes_velocity--vorticity}, with
viscosity $\nu = 10^{-4}$ and forcing 
\begin{equation*}
 \varphi(x) = \tfrac{1}{10} \sin(2\pi x_{1} + 2\pi x_{2}) + \tfrac{1}{10} \cos(2\pi x_{1} + 2 \pi x_{2}).
\end{equation*}
We assume that the initial condition $\omega_{0}$ has distribution $N(0, C)$ with covariance operator $C = 7^{3/2}(49I - \Delta)^{-5/2}$, where $\Delta$  is the Laplacian operator for spatially-mean-zero functions on the torus $\TT^{2}$.
The training data set, based on that of \citet{Lietal2021}, consists of 8,000 samples from $\datameas$ generated on a $64 \times 64$ grid using a pseudospectral solver, with a further 2,000 independent samples held out as an evaluation set.
The data are scaled so that $\omega(x, T) \in [0, 1]$ for all $x \in \Omega$.
Further details are provided in \Cref{subsec:details_Navier--Stokes}.

We train FAE with latent dimension $d_{\latentspace} = 64$ and regularisation parameter $\beta = 10^{-3}$, and train with complement masking using a point ratio $r_{\text{enc}}$ of 30\%.

\paragraph{Performance at Fixed Resolution.}
\begin{table}[th]
            \vspace{-0.4em}
	\centering
            {\footnotesize
		\begin{tabular}{lcc}
			\multicolumn{3}{c}{\footnotesize Autoencoding on evaluation set ($64 \times 64$ grid)} \\
		\toprule
		& MSE & Parameters \\\midrule
			FAE architecture & $4.82 \times 10^{-4}$ & 64,857 \vspace{-0.25em}\\
				   & \tiny $\pm 2.57 \times 10^{-5}$ & \\
			CNN architecture & $2.38 \times 10^{-4}$ & 71,553 \vspace{-0.25em}\\
		 & \tiny $\pm 9.43 \times 10^{-6} $ & \\
		\bottomrule
		\end{tabular}
  
            \vspace{0.2em}
			\footnotesize ~~~Mean $\pm$ 1 standard deviation; 5 training runs}
			\vspace{-0.6em}
		\caption{Our resolution-invariant architecture performs comparably to CNNs with similar parameter counts.}
	\label{fig:Navier--Stokes_reconstruction}

\end{table}
We compare the autoencoding performance of our mesh-invariant FAE architecture to a standard fixed-resolution CNN architecture, both trained using the FAE objective with all other hyperparameters the same.
Our goal is to understand whether our architecture is competitive even without the inductive bias of CNNs.

To do this we fix a class of FAE and CNN architectures for $64 \times 64$ unmasked data, and perform a search to select the best-performing models with similar parameter counts (details in \Cref{subsec:details_Navier--Stokes}).
The FAE architecture achieves reconstruction MSE slightly greater than the CNN on the held-out data, with a comparable number of parameters (\Cref{fig:Navier--Stokes_reconstruction}). 
It is reasonable to expect that mesh-invariance of the FAE architecture comes at some cost to performance at fixed resolution, especially as the CNN benefits from a strong inductive bias, but the results of \Cref{fig:Navier--Stokes_reconstruction} suggest that the cost is modest.
Further research is desirable to close this gap through better mesh-invariant architectures.

\paragraph{Inpainting.}
Methods for inpainting---inferring missing parts of an input based on observations and prior knowledge from training data---and related inverse problems are of great interest in computer vision and in scientific applications \citep{Quanetal2024}.
We exploit the ability of FAE to encode on any mesh, and decode on any other mesh, to solve a variety of inpainting tasks.
More precisely, after training FAE, we take data from the held-out set on a $64 \times 64$ grid and, for each discretised sample, we apply one of three possible masks:
\begin{enumerate}[label=(\roman*)]
    \item random masking with point ratio 5\%, i.e., masking 95\% of mesh points; or
	\item masking of all mesh points lying in a square with random centre and side length; or
	\item masking of all mesh points in the $-0.05$-superlevel set of a draw from the Gaussian random field $N(0, (30^{2} I - \Delta)^{-1.2})$, where $\Delta$ is the Laplacian for functions on the torus.
\end{enumerate}
\vspace{0.3em}

\begin{figure}[ht]
	\centering
	\begin{tikzpicture}
		\node[minimum width=5.5cm, minimum height=5.2cm, inner sep=0, outer sep=0] at (9.8, 2.0) {\includegraphics[width=5.2cm, clip, trim=0cm 0.0cm 0cm 0cm]{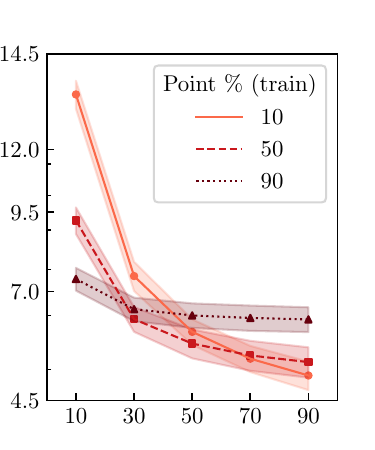}};
		\node at (9.7, -1.02) {\footnotesize Point \% (evaluation)};
		\node at (9.75, 5.4) {\footnotesize (b) Reconstruction MSE [$\times 10^{-4}$] };
        \node at (9.75, 5.0) {\footnotesize Mean $\pm$ 1 standard deviation;};
        \node at (9.75, 4.68) {\footnotesize 5 training runs};

		\node at (2.5, 5.4) {\footnotesize (a) Inpainting with missing mesh points};
		\node[anchor=north] at (0, 5.35) {\footnotesize input};
		\node[anchor=north] at (2.5, 5.33) {\footnotesize reconstruction};
		\node[anchor=north] at (5, 5.35) {\footnotesize ground truth};

		\node[outer sep=0, inner sep=0, minimum width=1.8cm, minimum height=1.8cm] at (0,0) {\includegraphics[height=1.8cm, clip, trim=6cm 0.8cm 6cm 0.8cm]{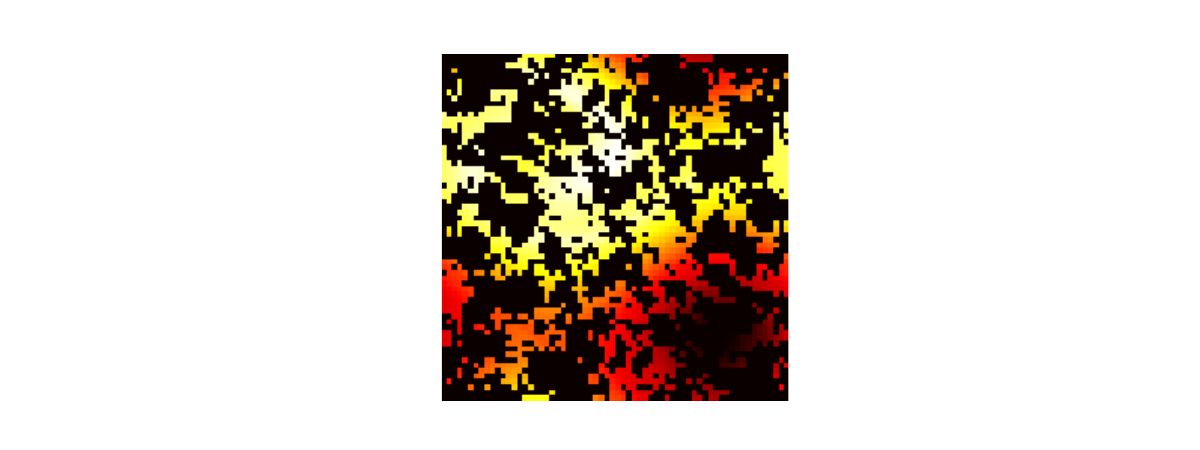}};
		\node[outer sep=0, inner sep=0, minimum width=1.8cm, minimum height=1.8cm] at (2.5,0) {\includegraphics[height=1.8cm, clip, trim=8.9cm 0.9cm 4.5cm 0.9cm]{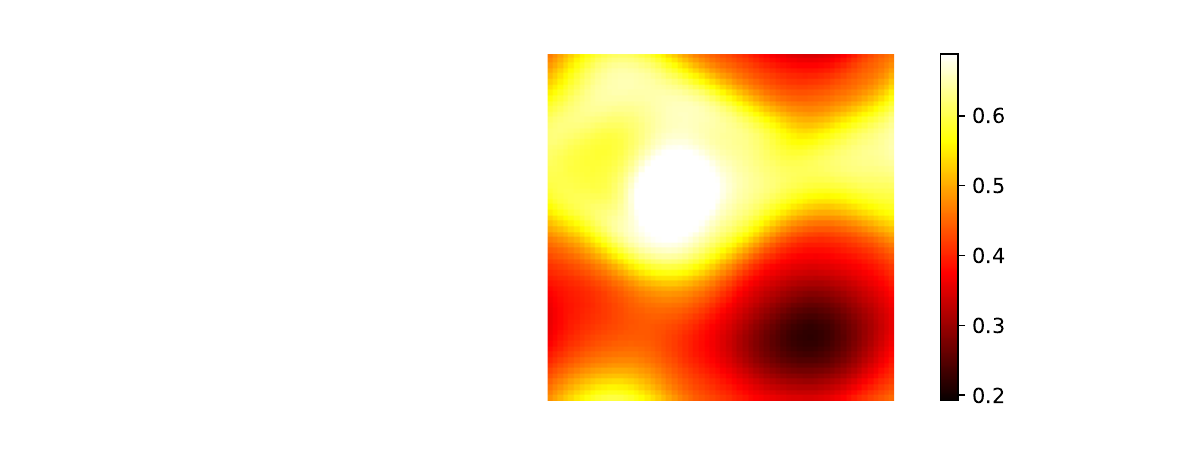}};
		\node[outer sep=0, inner sep=0, minimum width=1.8cm, minimum height=1.8cm] at (5,0) {\includegraphics[height=1.8cm, clip, trim=15.4cm 1cm 14.4cm 1cm]{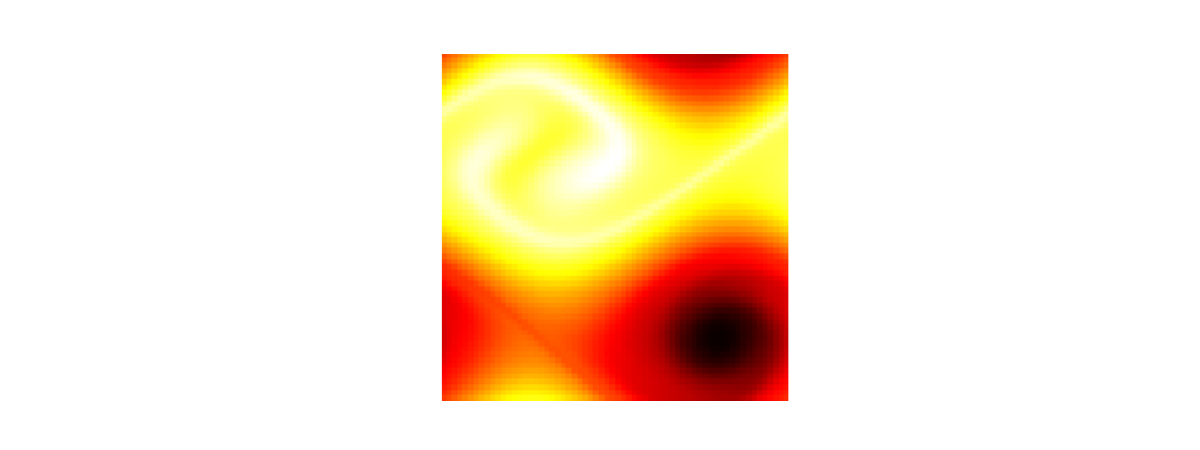}};
		\node[outer sep=0, inner sep=0, minimum width=1.8cm, minimum height=1.8cm] at (6.04,0) {\includegraphics[height=1.8cm, clip, trim=15.5cm 0.9cm 4cm 0.9cm]{images/ns_inpainting/grf/reconstruction}};
		\node at (-1.35, 0) {\footnotesize (iii)};
		\node at (6.35, -0.90) {\tiny 0.2};
		\node at (6.35, -0.90+0.37) {\tiny 0.3};
		\node at (6.35, -0.90+0.37+0.37) {\tiny 0.4};
		\node at (6.35, -0.90+0.37+0.37+0.37) {\tiny 0.5};
		\node at (6.35, -0.90+0.37+0.37+0.37+0.37) {\tiny 0.6};

		\node[outer sep=0, inner sep=0, minimum width=1.8cm, minimum height=1.8cm] at (0,2) {\includegraphics[height=1.8cm, clip, trim=6cm 0.8cm 6cm 0.8cm]{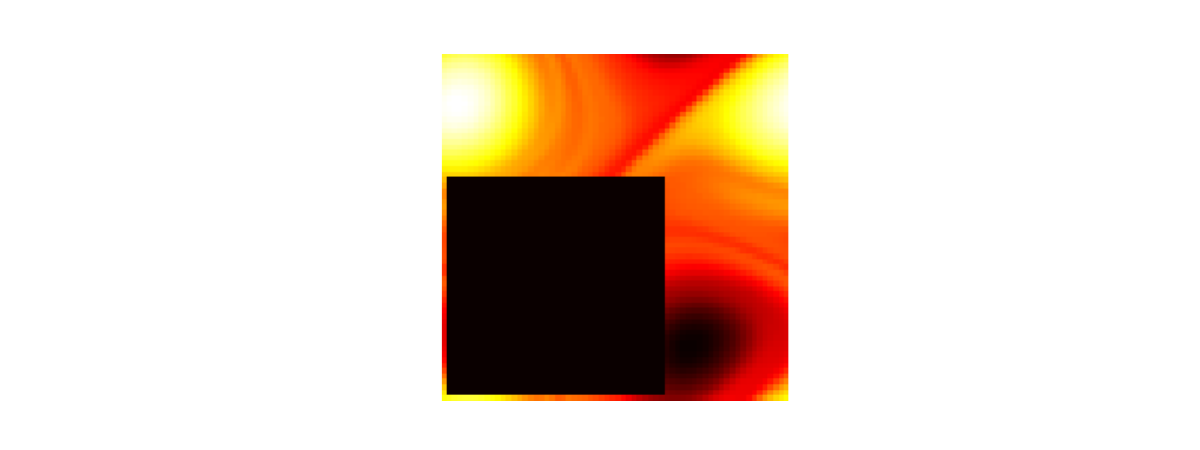}};
		\node[outer sep=0, inner sep=0, minimum width=1.8cm, minimum height=1.8cm] at (2.5,2) {\includegraphics[height=1.8cm, clip, trim=8.9cm 0.9cm 4.5cm 0.9cm]{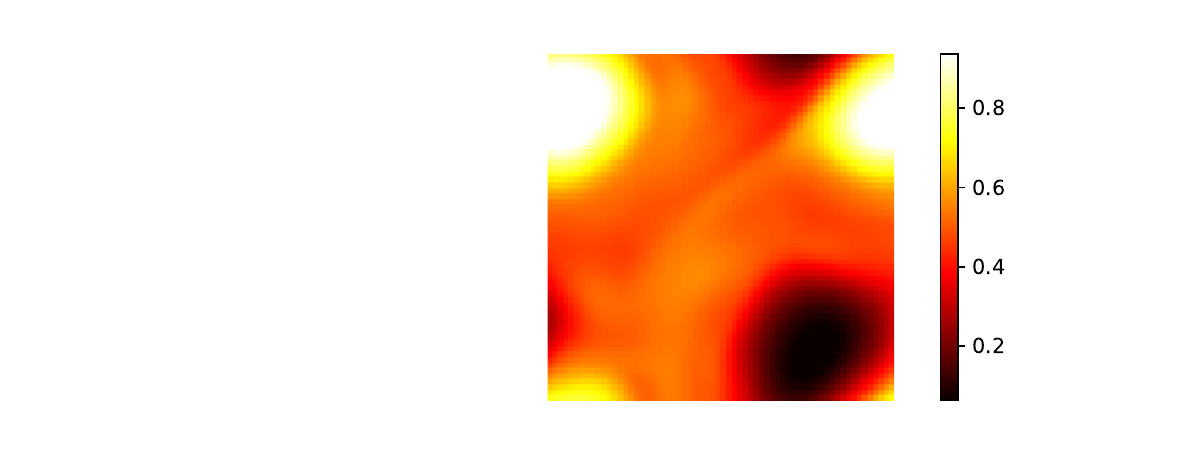}};
		\node[outer sep=0, inner sep=0, minimum width=1.8cm, minimum height=1.8cm] at (5,2) {\includegraphics[height=1.8cm, clip, trim=15.4cm 1cm 14.4cm 1cm]{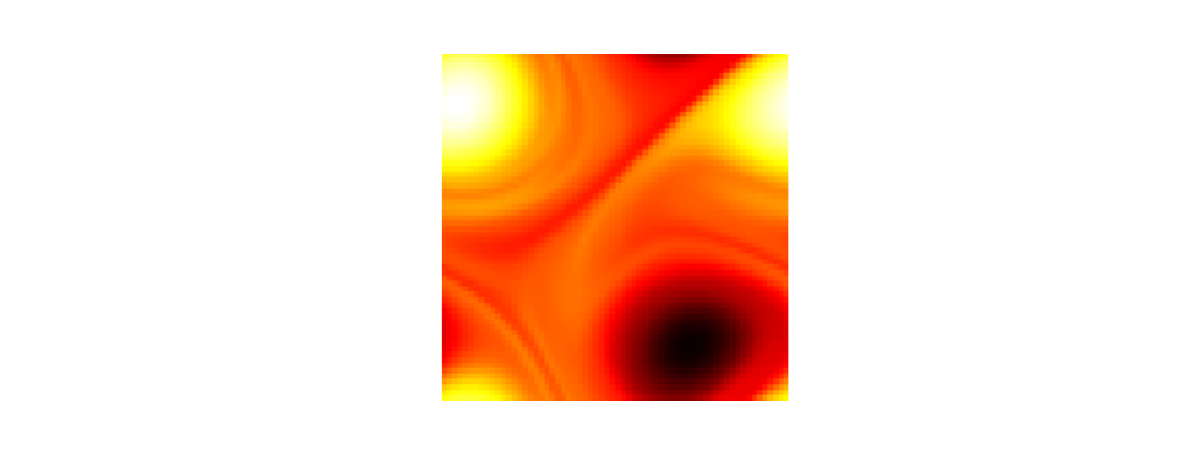}};
		\node[outer sep=0, inner sep=0, minimum width=1.8cm, minimum height=1.8cm] at (6.04,2) {\includegraphics[height=1.8cm, clip, trim=15.5cm 0.9cm 4cm 0.9cm]{images/ns_inpainting/square/reconstruction}};

		\node at (-1.35, 2) {\footnotesize (ii)};
		\node at (6.35, 1.37) {\tiny 0.2};
		\node at (6.35, 1.37+0.42) {\tiny 0.4};
		\node at (6.35, 1.37+0.84) {\tiny 0.6};
		\node at (6.35, 1.37+1.25) {\tiny 0.8};

		\node[outer sep=0, inner sep=0, minimum width=1.8cm, minimum height=1.8cm] at (0,4) {\includegraphics[height=1.8cm, clip, trim=25cm 0.8cm 25cm 0.8cm]{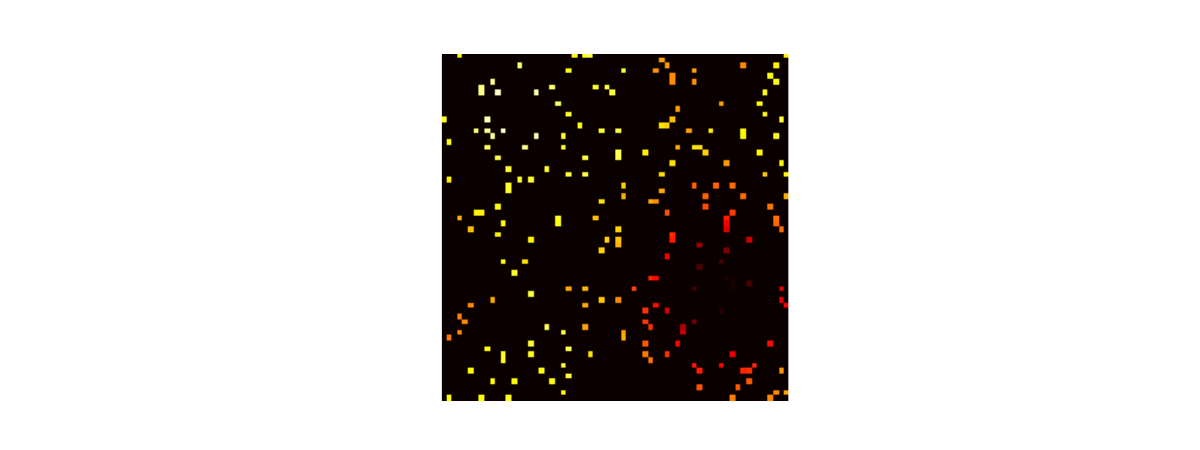}};
		\node[outer sep=0, inner sep=0, minimum width=1.8cm, minimum height=1.8cm] at (2.4,4) {\includegraphics[height=1.8cm, clip, trim=8.5cm 0.9cm 5cm 0.9cm]{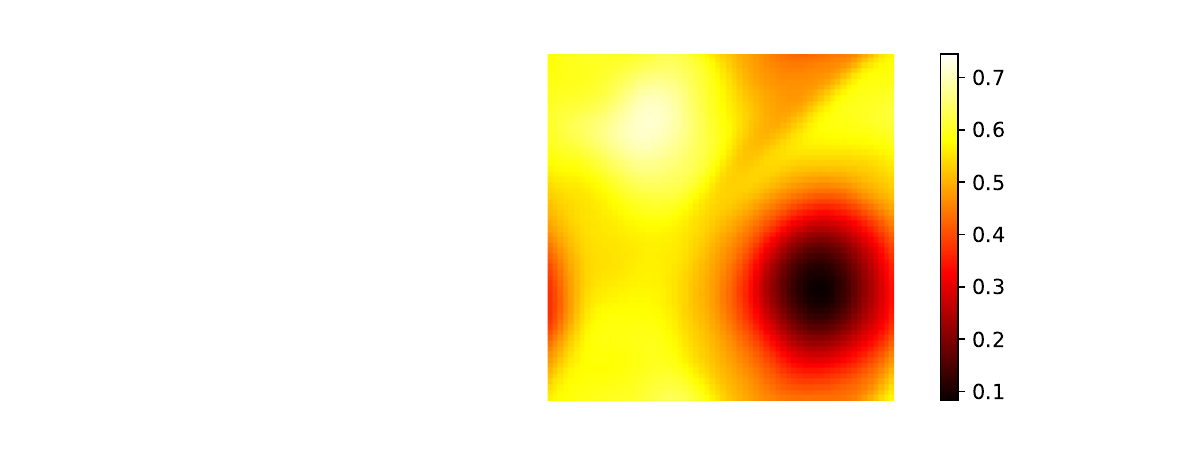}};
		\node[outer sep=0, inner sep=0, minimum width=1.8cm, minimum height=1.8cm] at (6.05, 4) {\includegraphics[height=1.8cm, clip, trim=15.5cm 0.9cm 3.9cm 0.9cm]{images/ns_inpainting/uniform/reconstruction}};
		\node[outer sep=0, inner sep=0, minimum width=1.8cm, minimum height=1.8cm] at (4.87,4) {\includegraphics[height=1.8cm, clip, trim=7cm 1cm 7cm 1cm]{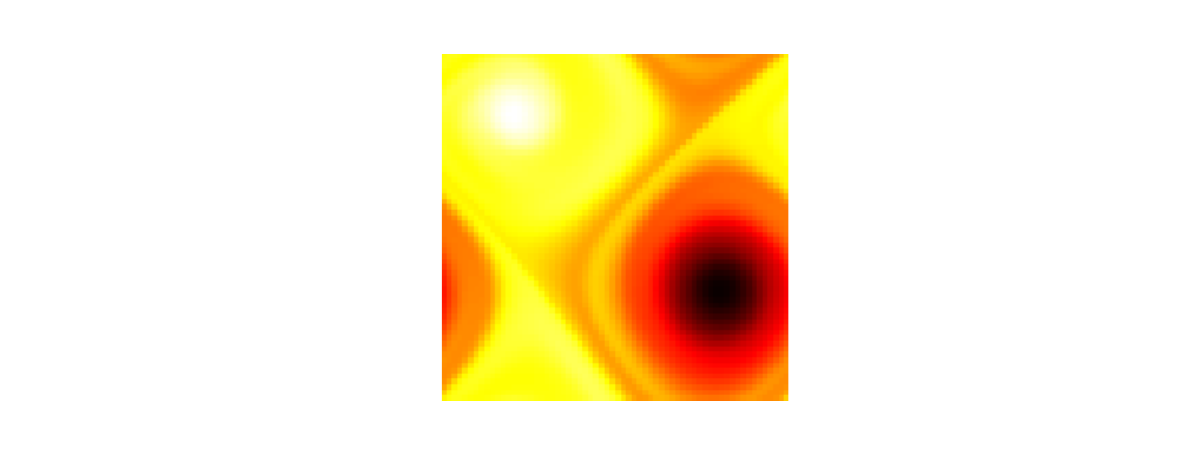}};
		\node at (6.39, 3.14) {\tiny 0.1};
		\node at (6.39, 3.14+0.27) {\tiny 0.2};
		\node at (6.39, 3.14+0.54) {\tiny 0.3};
		\node at (6.39, 3.14+0.54+0.27) {\tiny 0.4};
		\node at (6.39, 3.14+0.54+0.54) {\tiny 0.5};
		\node at (6.39, 3.16+0.54+0.54+0.27) {\tiny 0.6};
		\node at (6.39, 3.14+0.54+0.54+0.54) {\tiny 0.7};
		\node at (-1.3, 4) {\footnotesize (i)};
	\end{tikzpicture}
	\vspace{-0.6em}
	\caption{(a) FAE can solve a variety of inpainting tasks; further samples in \cref{subsec:details_Navier--Stokes}. (b) Training the encoder on a sparse mesh has a regularising effect on FAE, leading to lower evaluation MSE on dense meshes, but harms performance on very sparse meshes.} 
	\label{fig:Navier--Stokes_inpainting}
	\vspace{-0.4em}
\end{figure}

Decoding these samples on a $64 \times 64$ grid (\cref{fig:Navier--Stokes_inpainting}) leads to reconstructions that agree well with the ground truth, and we find FAE to be robust even with a significant amount of the original mesh missing.
As a consequence of the autoencoding procedure, the observed region of the input may also be modified, an effect most pronounced in (ii), where some features in the input are oversmoothed in the reconstruction. 
We hypothesise that the failure to capture fine-scale features could be mitigated with better neural-operator architectures.

To understand the effect of the training point ratio on inpainting quality, we first train instances of FAE with complement masking with point ratios 10\%, 50\%, and 90\%.
Then, for each model, we evaluate its autoencoding performance by applying random masking to each sample from the held-out set with point ratio $r_{\text{eval}} \in \{10\%, 30\%, 50\%, 70\%, 90\%\}$, reconstructing on the full $64 \times 64$ grid, and computing the reconstruction MSE averaged over the held-out set (\cref{fig:Navier--Stokes_inpainting}(b)).
We observe that the best choice of $r_{\text{enc}}$ depends on the point ratio $r_{\text{eval}}$ of the input.
We hypothesise that when $r_{\text{eval}}$ is large, training with $r_{\text{enc}}$ small is helpful as training with few mesh points regularises the model.
On the other hand, when $r_{\text{eval}}$ is small, it is likely that the evaluation mesh is ``almost disjoint'' from any mesh seen during training, harming performance.
Further analysis is provided in \cref{subsec:details_Navier--Stokes}.

\paragraph{Superresolution.}
The ability to encode and decode on different meshes also enables the use of FAE for single-image superresolution: high-resolution reconstruction of a low-resolution input.
Superresolution methods based on deep learning have found use in varied applications including imaging \citep{Lietal2024} and fluid dynamics, e.g., in increasing the resolution (``downscaling'') of numerical simulations \citep{Kochkovetal2021,BischoffDeck2024}. 
Generalising other continuous superresolution models such as the Local Implicit Image Function \citep{Chenetal2021}, a single trained FAE model can be applied with any upsampling factor, and FAE has the further advantage of accepting inputs at any resolution.

\begin{figure}[ht]
	\centering
	\begin{tikzpicture}
        \node[anchor=center] at (10.3, 2.9) {\footnotesize (b)(ii) Number of decoder network evaluations}; \node[anchor=center] at (9.55, 2.6) {\footnotesize for superresolution on:};
        \node at (11.0, 2) {\begin{tikzpicture}

        \node[anchor=west] at (4.3, 1.25) {\footnotesize full grid};
        \node[anchor=west, minimum width=3cm, minimum height=0.3cm, draw] at (5.9, 1.2) {};
        \node[anchor=west] at (8.9, 1.2) {\footnotesize 160k};

		\node[anchor=west] at (4.3, 0.85) {\footnotesize patch};
        \node[anchor=west, inner sep=0, outer sep=0, minimum width=0.7125cm, text width=0.375cm, minimum height=0.3cm, draw] at (5.9, 0.8) {};
		\node[anchor=west] at (8.9, 0.8) {\footnotesize 38k};
        \end{tikzpicture}
        };
    
		\node at (10.5, 4.8) {
\begin{tikzpicture}
		\node[minimum width=2.6cm, minimum height=2.6cm, anchor=east, outer sep=0, inner sep=0] at (0, 0.42) {\includegraphics[width=2.3cm, clip, trim=14cm 1.2cm 13.65cm 1.2cm]{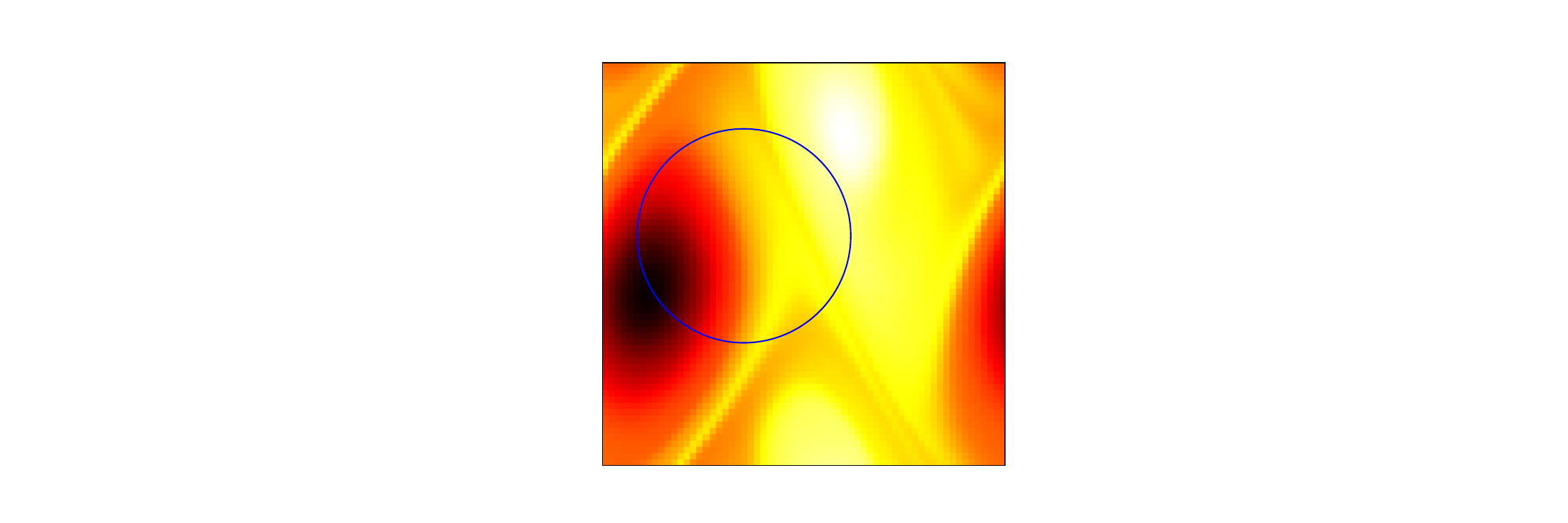}};
        \node at (-1.3, -0.95) {\footnotesize input data on };
        \node at (-1.3, -1.25) {\footnotesize coarse mesh ($64 \times 64$)};

        \draw[-stealth,blue] (-0.8, 0.7) to [bend left=8] (0.6, 0.6);
		\node[outer sep=0, inner sep=0, minimum width=2.4cm, minimum height=2.4cm, anchor=west] at (0.6, 0.45) {\includegraphics[width=2.3cm, clip, trim=18cm 1cm 9.5cm 1cm]{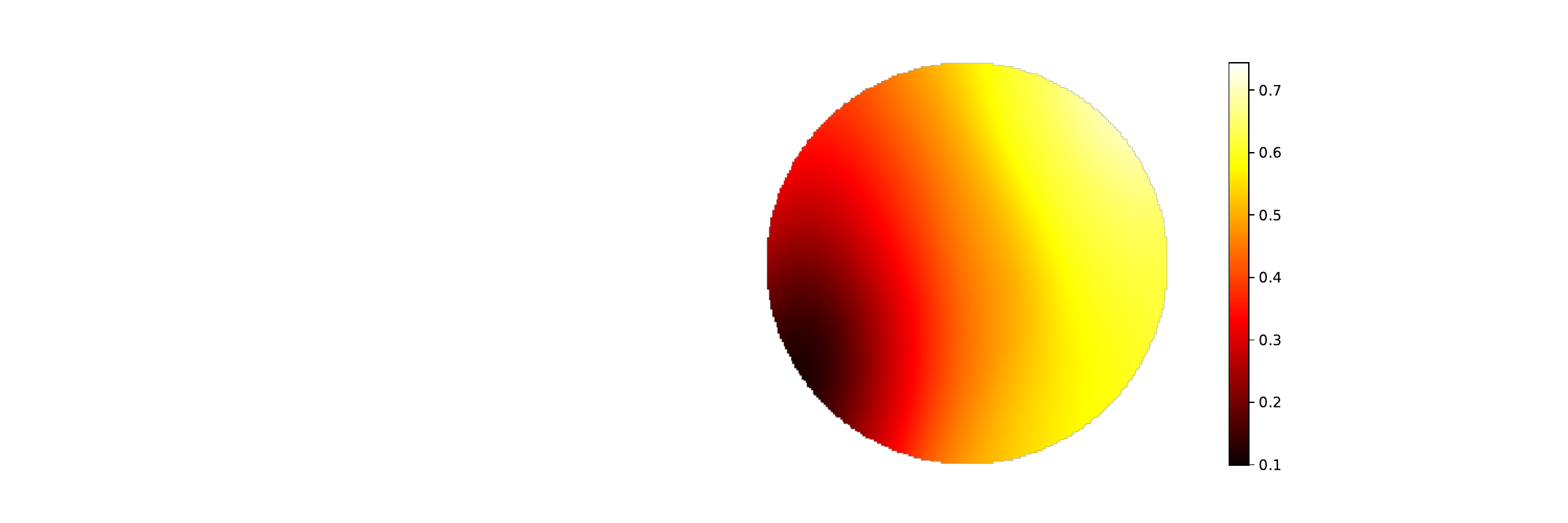}};
		\node[outer sep=0, inner sep=0, minimum width=2.4cm, minimum height=2.4cm, anchor=west] at (2, 0.55) {\includegraphics[height=2.4cm, clip, trim=29.5cm 1cm 7.5cm 1cm]{images/ns_patch_superresolution/reconstructed_patch}};
        \node at (0.68+1.25, -0.91) {\footnotesize refine in specific region};
		\node at (0.68+1.25, -1.21) {\footnotesize (mesh spacing $\nicefrac{1}{400}$)};

        \node at (3.5, -0.55) {\tiny 0.1};
		\node at (3.5, -0.55+0.33) {\tiny 0.2};
		\node at (3.5, -0.55+0.67) {\tiny 0.3};
		\node at (3.5, -0.55+1) {\tiny 0.4};
		\node at (3.5, -0.55+0.99+0.36) {\tiny 0.5};
		\node at (3.5, -0.55+0.99+0.7) {\tiny 0.6};
		\node at (3.5, -0.55+0.99+1.04) {\tiny 0.7};
    \end{tikzpicture}
        
        };

		\node at (2.5, 6.5) {\footnotesize (a) Examples of data-driven superresolution};
        \node[anchor=center] at (10.5, 6.5) {\footnotesize (b)(i) Superresolution on patches};
		\node at (0, 6.15) {\footnotesize input};
		\node at (2.5, 6.15) {\footnotesize reconstruction};
		\node at (5, 6.15) {\footnotesize ground truth};

		\node[minimum width=2cm, minimum height=2cm] at (0,2.65) {\includegraphics[height=2cm, clip, trim=8cm 1cm 7.3cm 1cm]{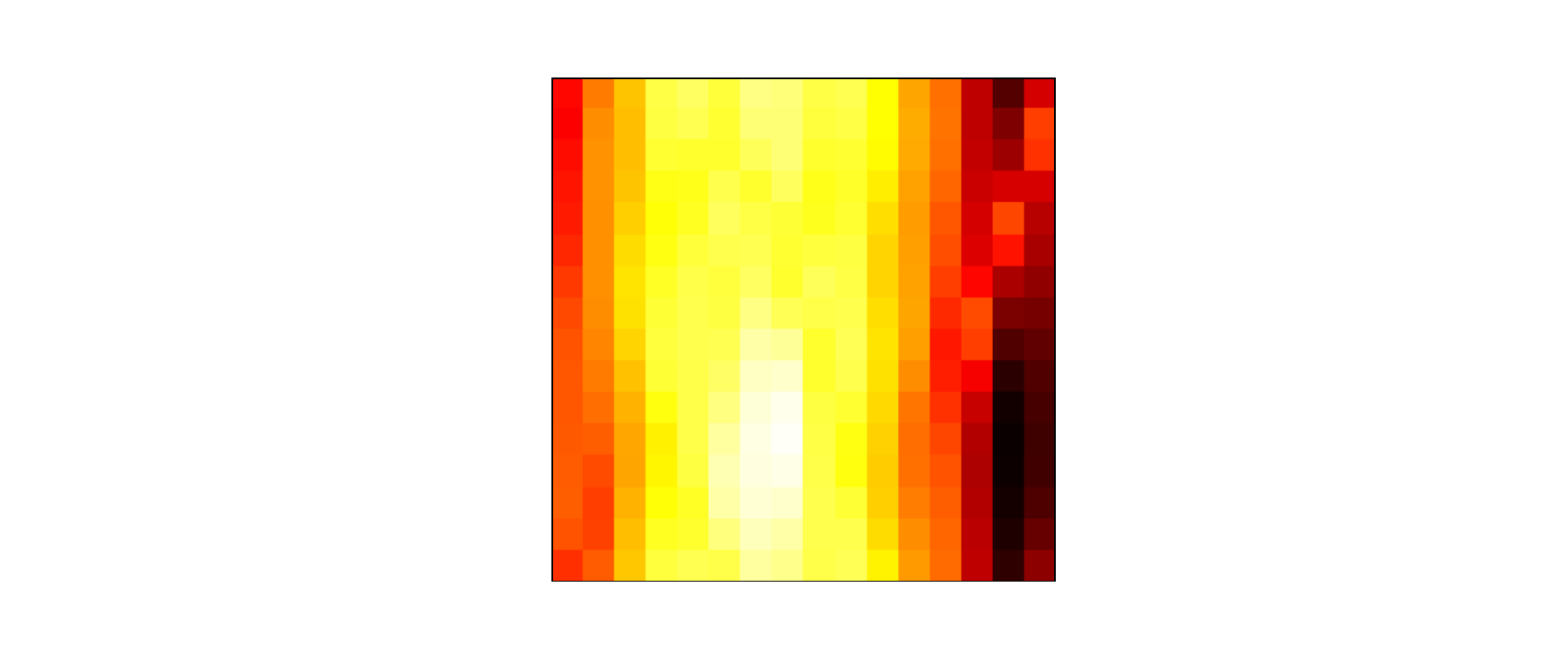}};
		\node[minimum width=2cm, minimum height=2cm] at (2.5,2.65) {\includegraphics[height=2cm, clip, trim=12.5cm 1cm 7.3cm 1cm]{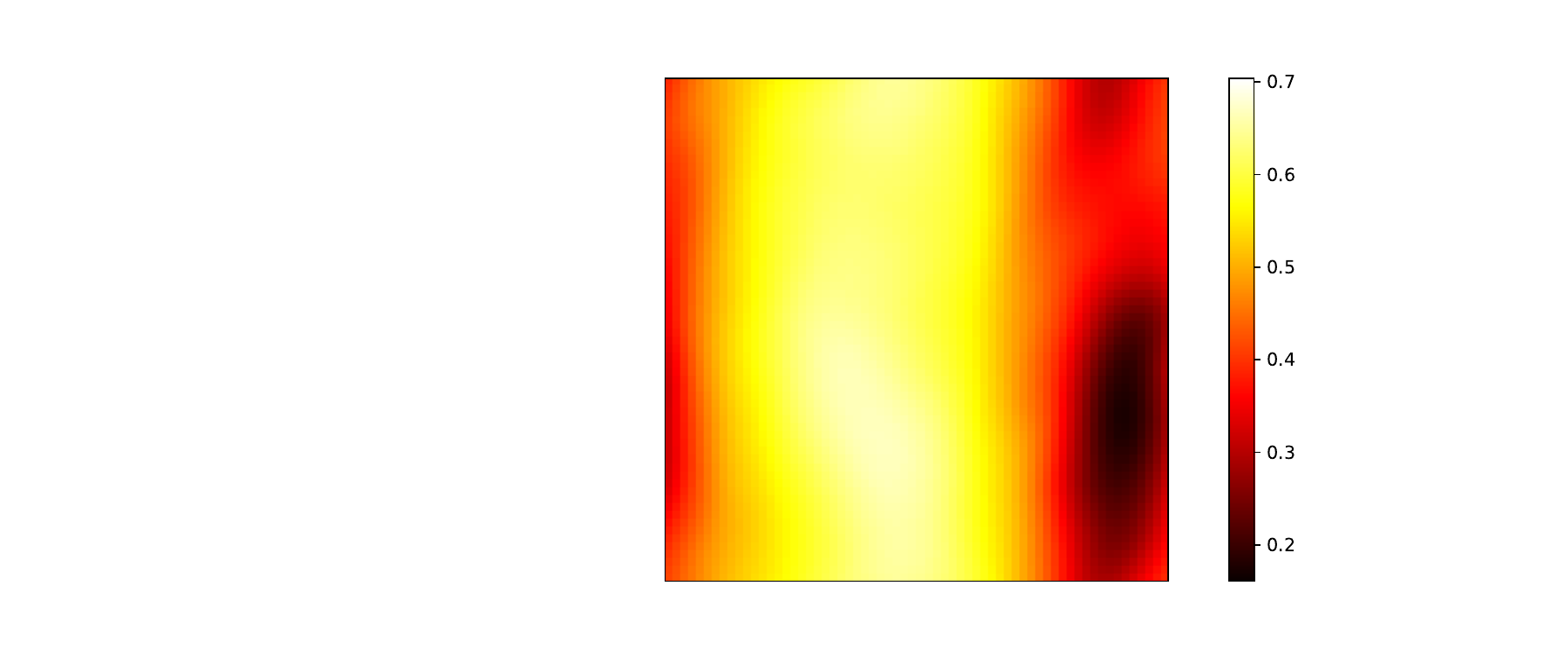}};
		\node[minimum width=2cm, minimum height=2cm] at (5,2.65) {\includegraphics[height=2cm, clip, trim=8cm 1cm 7.3cm 1cm]{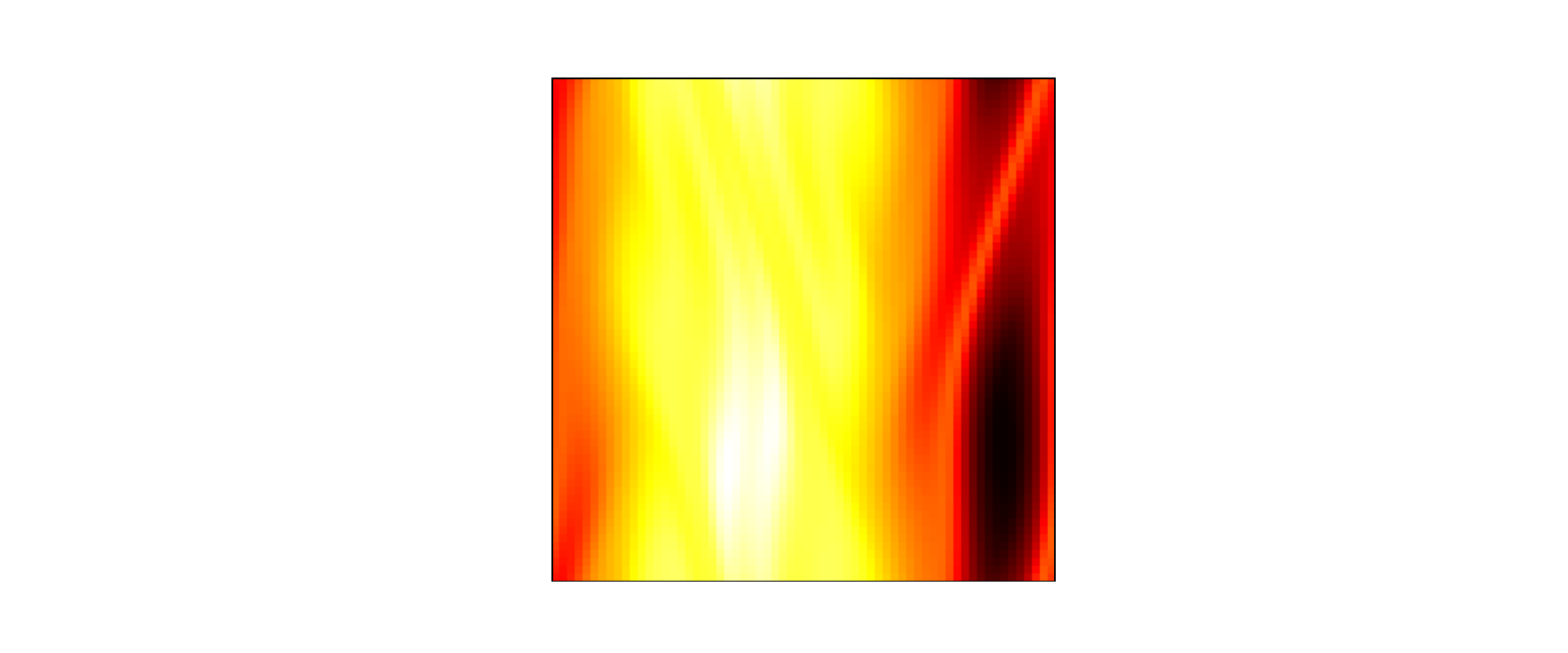}};
		\node[minimum width=2cm, minimum height=2cm] at (6.1,2.65) {\includegraphics[height=2cm, clip, trim=23.7cm 1cm 5.9cm 1cm]{images/ns_superresolution/16/reconstruction}};
		\node at (6.35, 1.86) {\tiny 0.2};
		\node at (6.35, 1.86+0.33) {\tiny 0.3};
		\node at (6.35, 1.86+0.66) {\tiny 0.4};
		\node at (6.35, 1.86+1.01) {\tiny 0.5};
		\node at (6.35, 1.86+1.02+0.33) {\tiny 0.6};
		\node at (6.35, 1.86+1.02+0.66) {\tiny 0.7};
		\node at (0, 1.43) {\footnotesize $16 \times 16$};
		\node at (2.5, 1.43) {\footnotesize $64 \times 64$};
		\node at (5, 1.43) {\footnotesize $64 \times 64$};

		\node[minimum width=2cm, minimum height=2cm] at (0,5) {\includegraphics[height=2cm, clip, trim=8cm 1cm 7.3cm 1cm]{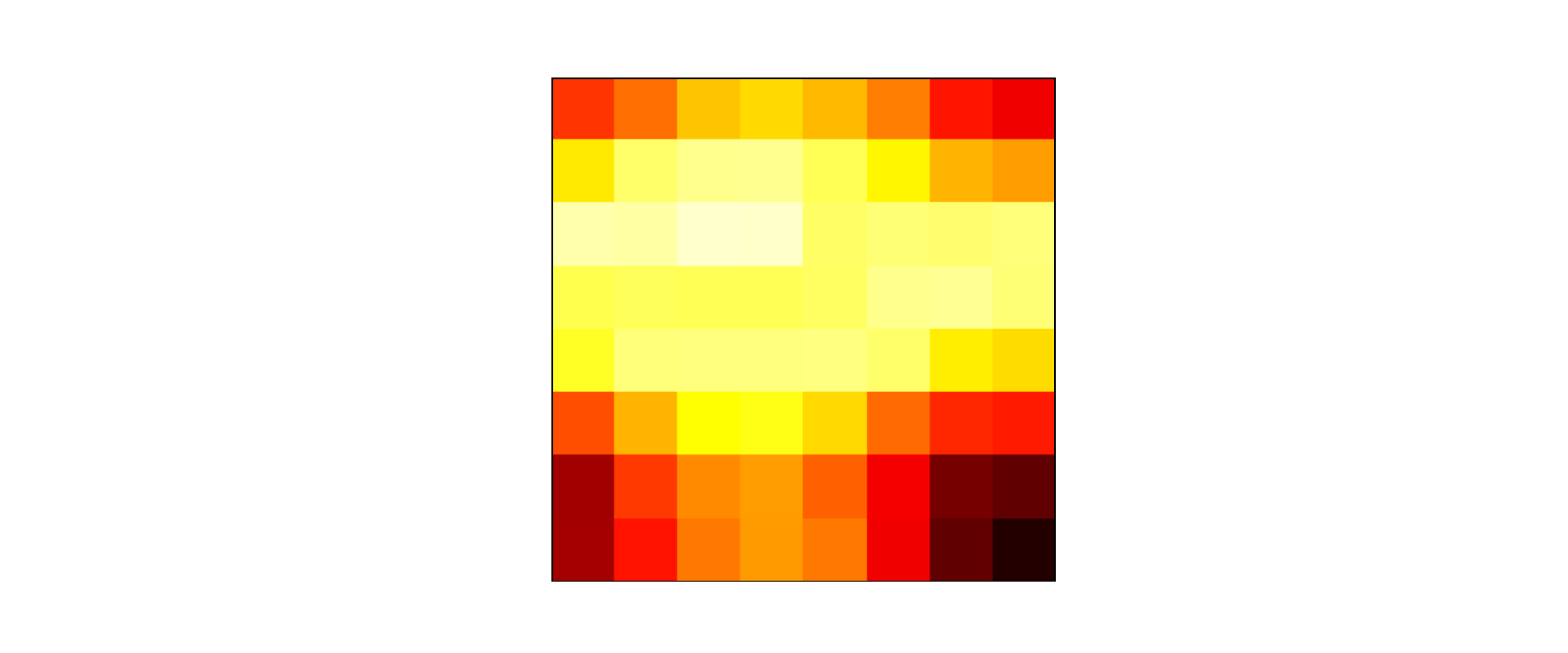}};
		\node[minimum width=2cm, minimum height=2cm] at (2.5,5) {\includegraphics[height=2cm, clip, trim=12.5cm 1cm 7.3cm 1cm]{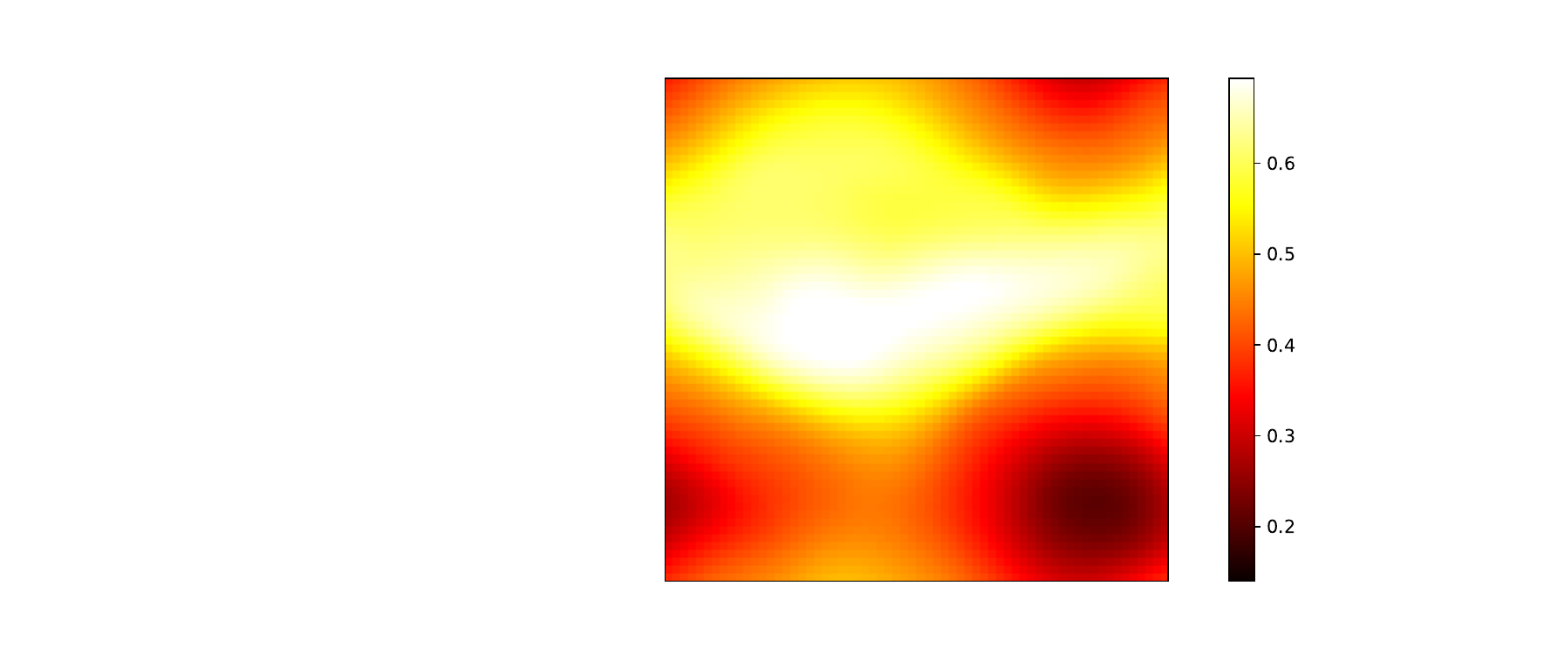}};
		\node[minimum width=2cm, minimum height=2cm] at (5,5) {\includegraphics[height=2cm, clip, trim=8cm 1cm 7.3cm 1cm]{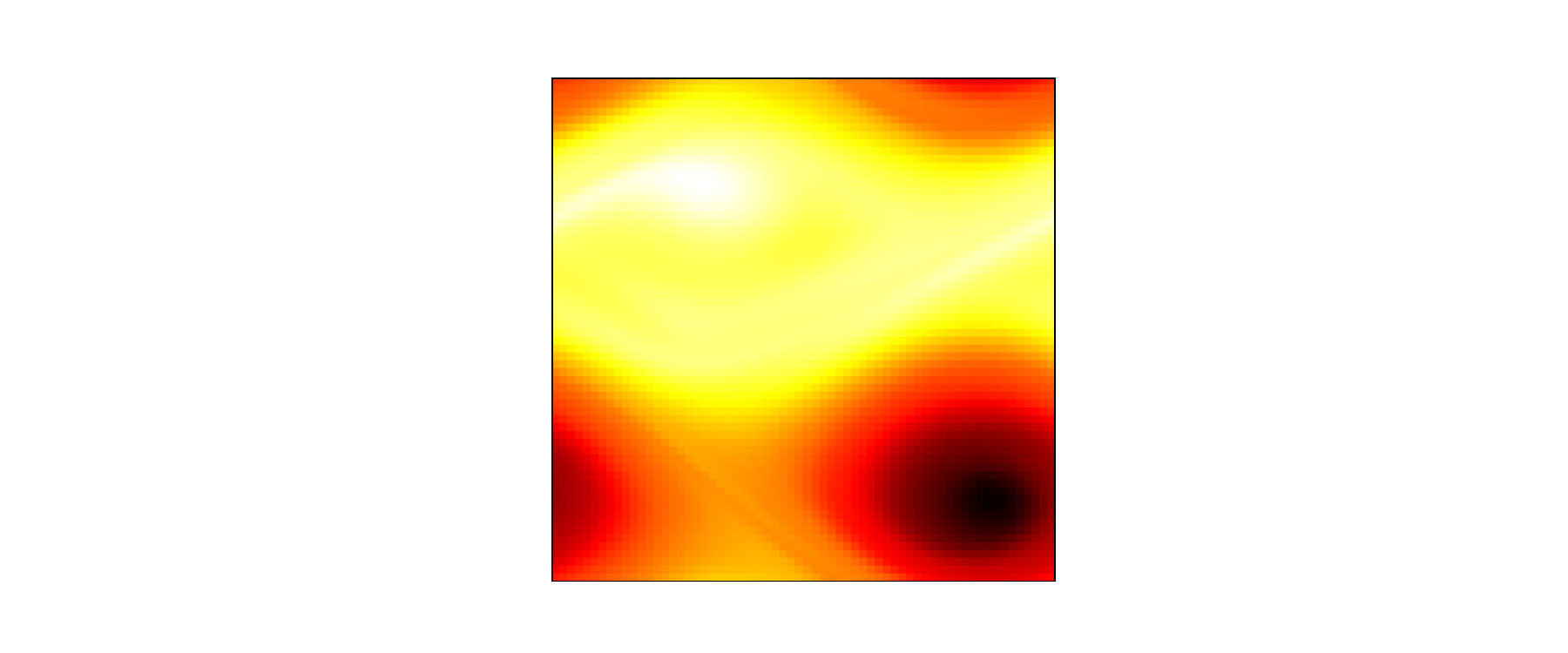}};
		\node[minimum width=2cm, minimum height=2cm] at (6.1,5) {\includegraphics[height=2cm, clip, trim=23.7cm 1cm 5.9cm 1cm]{images/ns_superresolution/8/reconstruction}};
		\node at (0, 3.83) {\footnotesize $8 \times 8$};
		\node at (2.5, 3.83) {\footnotesize $64 \times 64$};
		\node at (5, 3.83) {\footnotesize  $64 \times 64$};
		\node at (6.35, 4.28) {\tiny 0.2};
		\node at (6.35, 4.28+0.33) {\tiny 0.3};
		\node at (6.35, 4.28+0.66) {\tiny 0.4};
		\node at (6.35, 4.28+0.99) {\tiny 0.5};
		\node at (6.35, 4.28+1.32) {\tiny 0.6};

		\node at (-1.22, 5) {\footnotesize (i)};
		\node at (-1.22, 2.65) {\footnotesize (ii)};
	\end{tikzpicture}
	\vspace{-1.5em}
	\caption{(a) FAE can encode low-resolution inputs and decode at higher resolution, recovering fine-scale features using knowledge of the underlying data. Further examples in \cref{subsec:details_Navier--Stokes}. (b) Evaluating the decoder in a specific subregion can lead to significant computational savings compared to performing superresolution on the full grid.}
	\label{fig:Navier--Stokes_superres}
\end{figure}

For \defterm{data-driven superresolution}---where we train a model at high resolution and use it to enhance low-resolution inputs at inference time---FAE is able to resolve unseen features from $8 \times 8$ and $16 \times 16$ inputs on a $64 \times 64$ output grid after training at resolution $64 \times 64$ (\cref{fig:Navier--Stokes_superres}(a)).
As with inpainting, superresolution performance could be further improved with an architecture that is better able to capture the turbulent dynamics in the data.

We also investigate the stability of FAE for \defterm{zero-shot superresolution} \citep{Lietal2021}, where the model is evaluated on higher resolutions than seen during training.
Since FAE is purely data-driven, we view this as a test of the model's mesh-invariance and do not expect to resolve high-frequency features that were not seen during training.
Our architecture proves robust when autoencoding on meshes much finer than the original $64 \times 64$ training grid (\cref{fig:Navier--Stokes_zero-shot_and_latent_interpolation}(a));
moreover our coordinate MLP architecture allows us to decode on extremely fine meshes without exhausting the GPU memory (details in \cref{subsec:details_Navier--Stokes}).
While zero-shot superresolution is possible with VANO when the input is given on the mesh seen during training, FAE can be used for superresolution with any input.

\paragraph{Efficient Superresolution on Regions of Interest.}
Since our decoder can be evaluated on any mesh, we can perform superresolution in a specific subregion without upsampling across the whole domain. 
Doing this can significantly reduce inference time, memory usage, and energy cost.
As an example, we consider the task of reconstructing a circular subregion of interest with target mesh spacing $\nicefrac{1}{400}$ (\cref{fig:Navier--Stokes_superres}(b)(i)). 
Achieving this resolution over the whole domain---corresponding to a $400 \times 400$ grid---would involve 160,000 evaluations of the decoder network;
decoding on the subregion requires just $\nicefrac{1}{4}$ of this (\cref{fig:Navier--Stokes_superres}(b)(ii)).

\paragraph{Applications of the Latent Space $\latentspace$.}

\begin{figure}[ht]
	\centering
	\vspace{-1em}
	\begin{tikzpicture}
		\node[minimum width=2.5cm, minimum height=2.5cm, inner sep=0, outer sep=0] at (-0, -0.35) {\includegraphics[height=2.5cm,clip,trim=3cm 0.9cm 3cm 0.5cm]{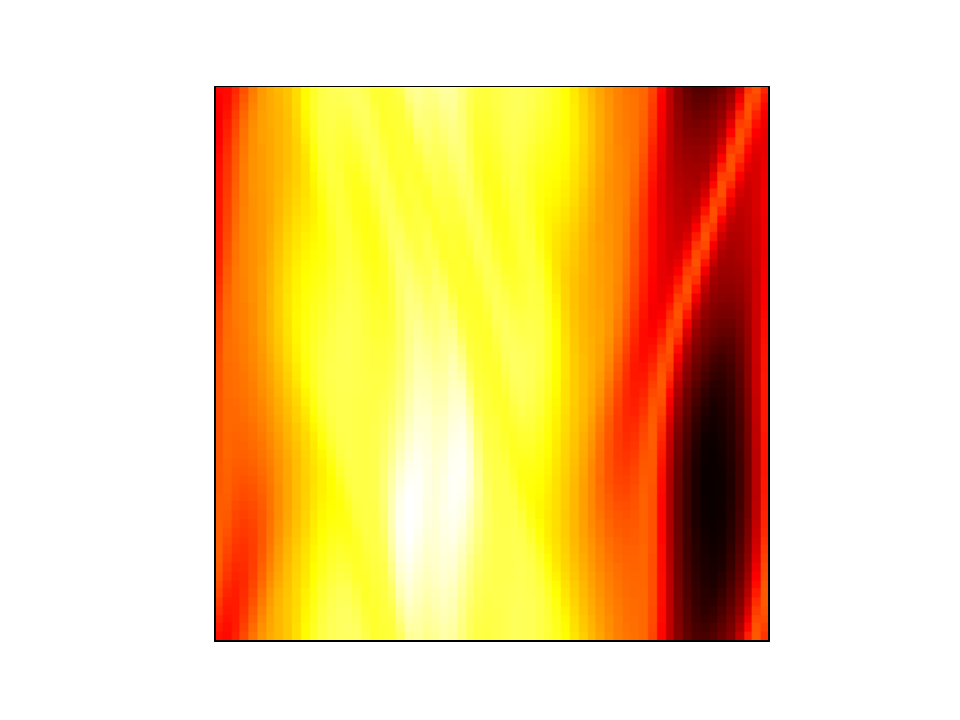}};
            \draw[-stealth] (1.5, -0.35) -- (2.5, -0.35);
            \node at (2, -0.6) {\footnotesize $32\times$};
            \node at (2, -0.88) {\footnotesize increase};
		\node at (-0, -1.85) {\small $64 \times 64$};
		\node[minimum width=2.5cm, minimum height=2.5cm, inner sep=0, outer sep=0] at (4, -0.35) {\includegraphics[height=2.5cm,clip,trim=2.7cm 0.9cm 2.9cm 0.5cm]{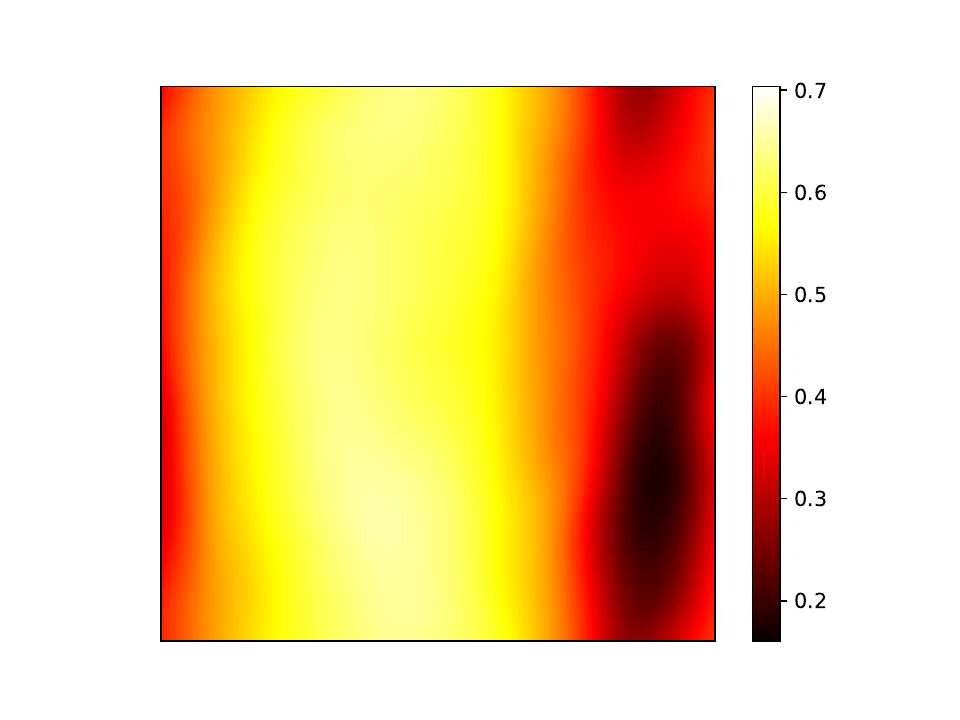}};
		\node at (3.95, -1.85) {\small $\text{2,048} \times \text{2,048}$};
		\node at (5.45, -1.34) {\tiny 0.2};
		\node at (5.45, -1.34+0.4) {\tiny 0.3};
		\node at (5.45, -1.34+0.8) {\tiny 0.4};
		\node at (5.45, -1.34+1.2) {\tiny 0.5};
		\node at (5.45, -1.34+1.6) {\tiny 0.6};
		\node at (5.45, -1.34+2) {\tiny 0.7};
  
		\node[minimum width=2.5cm, minimum height=2.5cm, inner sep=0, outer sep=0] at (7.2, -0.35) {\includegraphics[height=2.5cm,clip,trim=3cm 0.9cm 3cm 0.5cm]{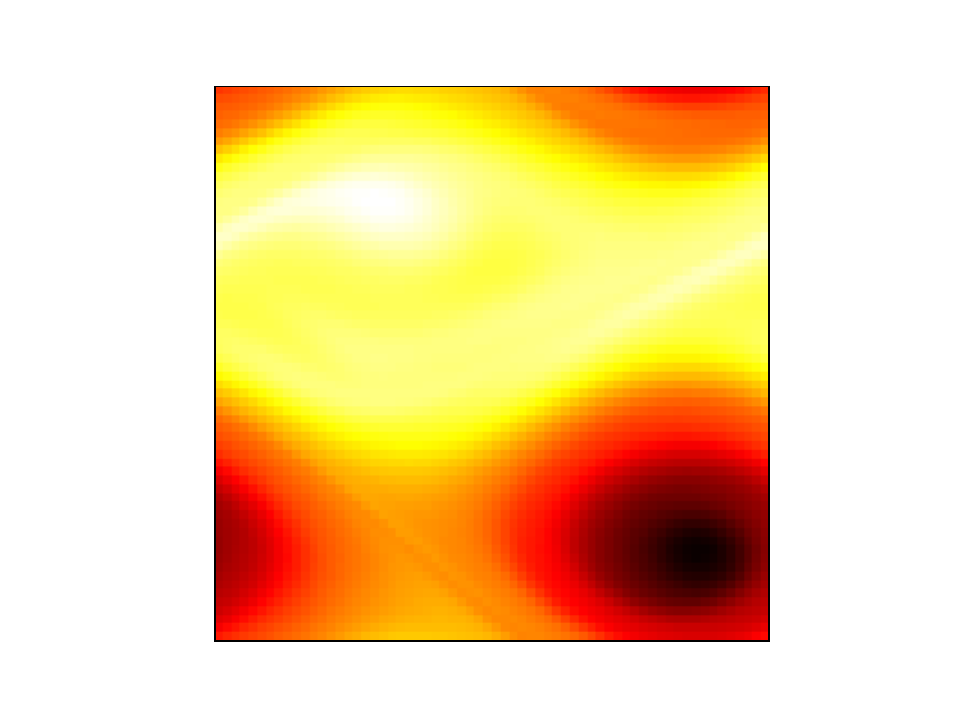}};
		\node at (7.2, -1.85) {\small $64 \times 64$};
             \draw[-stealth] (8.7, -0.35) -- (9.7, -0.35);
            \node at (9.2, -0.6) {\footnotesize $512\times$};
            \node at (9.2, -0.88) {\footnotesize increase};
		\node[minimum width=2.5cm, minimum height=2.5cm, inner sep=0, outer sep=0] at (11.2, -0.35) {\includegraphics[height=2.5cm,clip,trim=2.7cm 0.9cm 2.8cm 0.5cm]{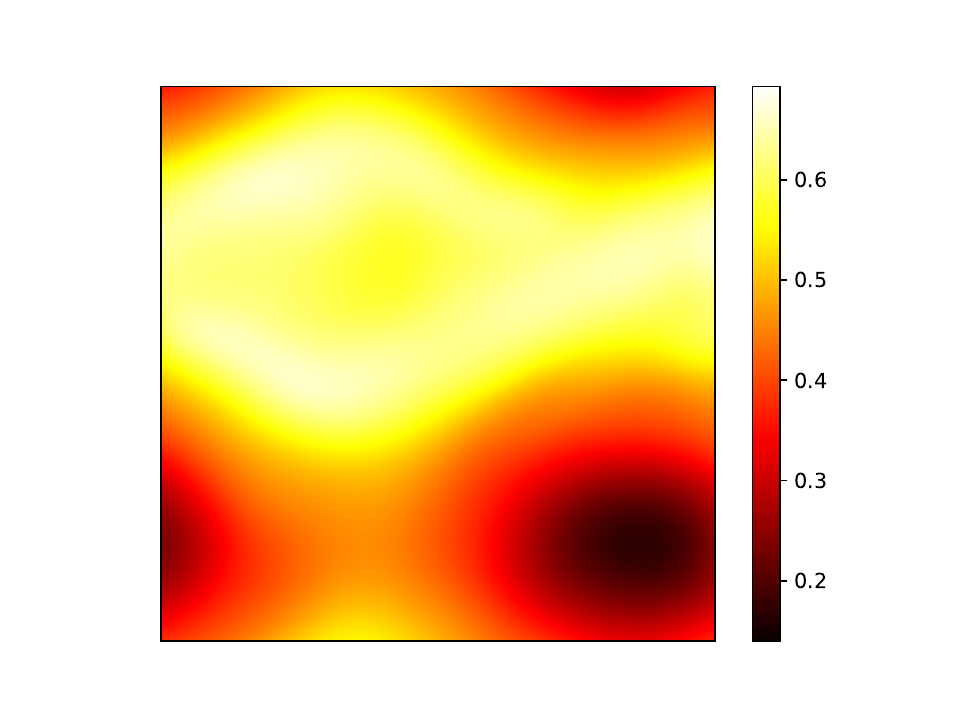}};
		\node at (12.65, -1.275) {\tiny 0.2};
		\node at (12.65, -1.275+0.4) {\tiny 0.3};
		\node at (12.65, -1.275+0.8) {\tiny 0.4};
		\node at (12.65, -1.275+1.2) {\tiny 0.5};
		\node at (12.65, -1.275+1.6) {\tiny 0.6};
		\node at (11.1, -1.85) {\small $\text{32,768} \times \text{32,768}$};

		\node[anchor=center] at (5.5, -2.25) {\small (b) Latent interpolation $\decodermap\bigl(\encodermean(u_{1}; \encoderparam) \alpha + \encodermean(u_{2}; \encoderparam) (1-\alpha); \decoderparam \bigr)$};
		\node[minimum width=1.75cm, minimum height=1.75cm, inner sep=0, outer sep=0] at (-0.3, -3.45) {\includegraphics[height=1.75cm, clip, trim=22.5cm 1cm 21.5cm 1cm]{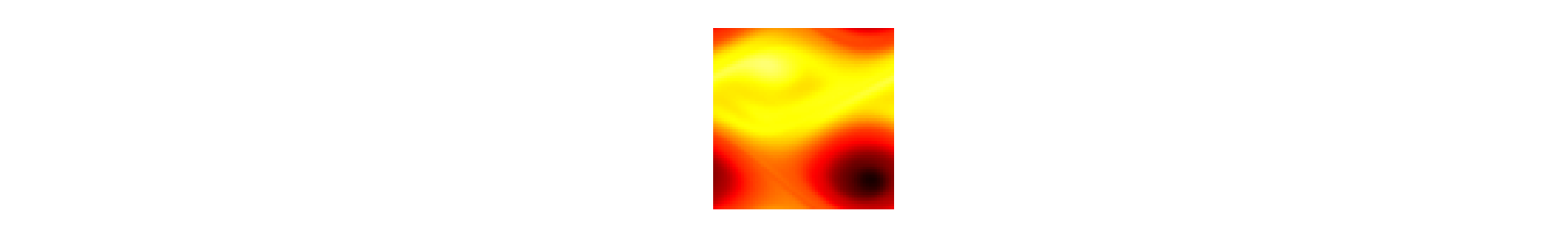}};
		\node[anchor=center] at (-0.3, -4.6) {$u_{1}$};
		\node[minimum width=1.75cm, minimum height=1.75cm, inner sep=0, outer sep=0] at (1.8, -3.45) {\includegraphics[height=1.75cm, clip, trim=22.5cm 1cm 21.5cm 1cm]{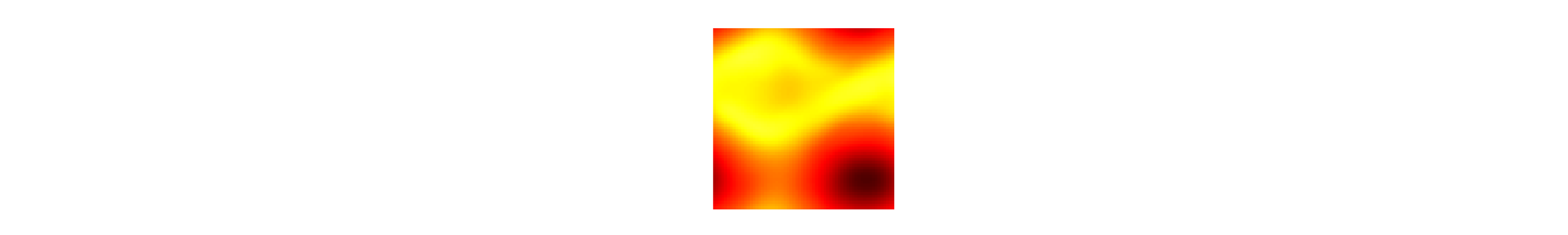}};
		\node[anchor=center] at (1.8, -4.55) {$\alpha = 0.1$};
		\node[minimum width=1.75cm, minimum height=1.75cm, inner sep=0, outer sep=0] at (3.7, -3.45) {\includegraphics[height=1.75cm, clip, trim=22.5cm 1cm 21.5cm 1cm]{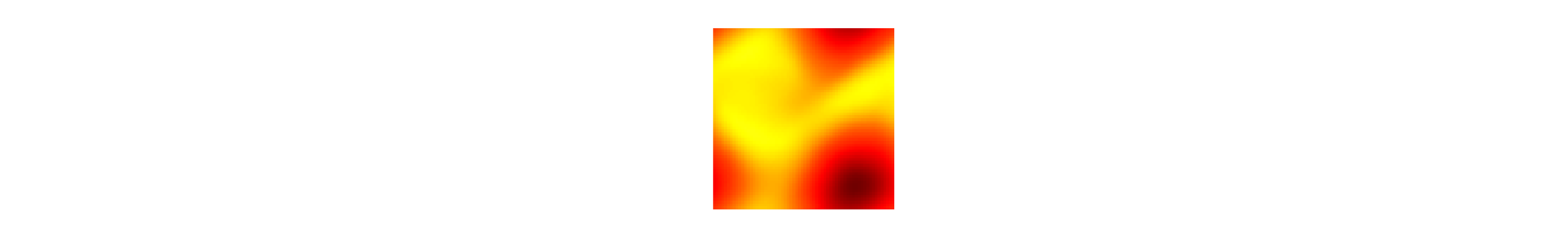}};
		\node[anchor=center] at (3.7, -4.55) {$\alpha = 0.3$};
		\node[minimum width=1.75cm, minimum height=1.75cm, inner sep=0, outer sep=0] at (5.6, -3.45) {\includegraphics[height=1.75cm, clip, trim=22.5cm 1cm 21.5cm 1cm]{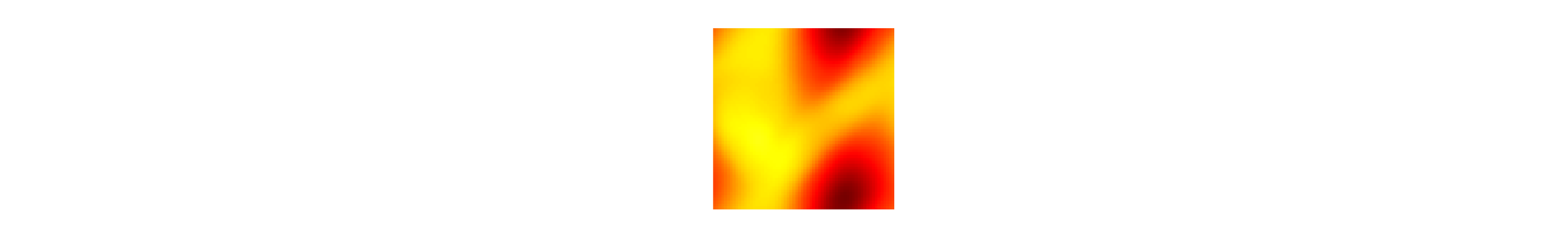}};
		\node[anchor=center] at (5.6, -4.55) {$\alpha = 0.5$};
		\node[minimum width=1.75cm, minimum height=1.75cm, inner sep=0, outer sep=0] at (7.5, -3.45) {\includegraphics[height=1.75cm, clip, trim=22.5cm 1cm 21.5cm 1cm]{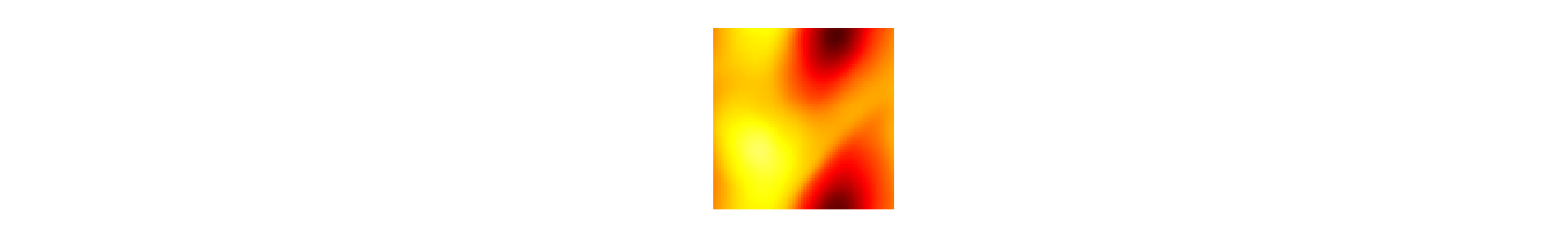}};
		\node[anchor=center] at (7.5, -4.55) {$\alpha = 0.7$};
		\node[minimum width=1.75cm, minimum height=1.75cm, inner sep=0, outer sep=0] at (9.4, -3.45) {\includegraphics[height=1.75cm, clip, trim=22.5cm 1cm 21.5cm 1cm]{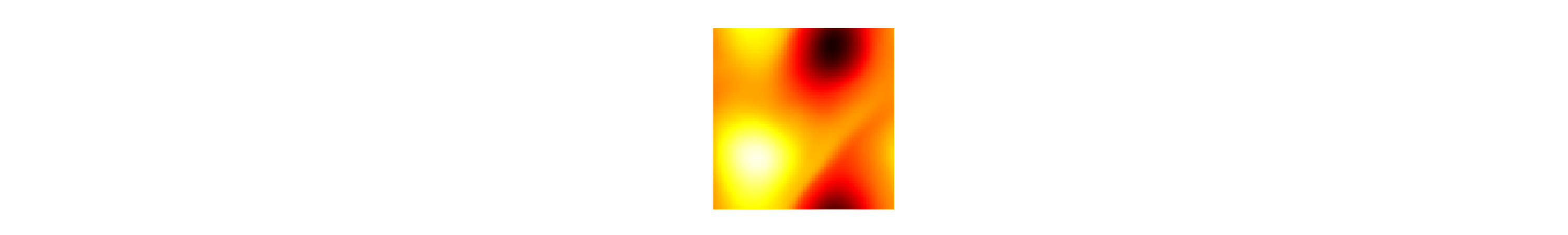}};
		\node[anchor=center] at (9.4, -4.55) {$\alpha = 0.9$};

		\node[minimum width=1.75cm, minimum height=1.75cm, inner sep=0, outer sep=0] at (11.5, -3.45) {\includegraphics[height=1.75cm, clip, trim=31.5cm 1cm 12.5cm 1cm]{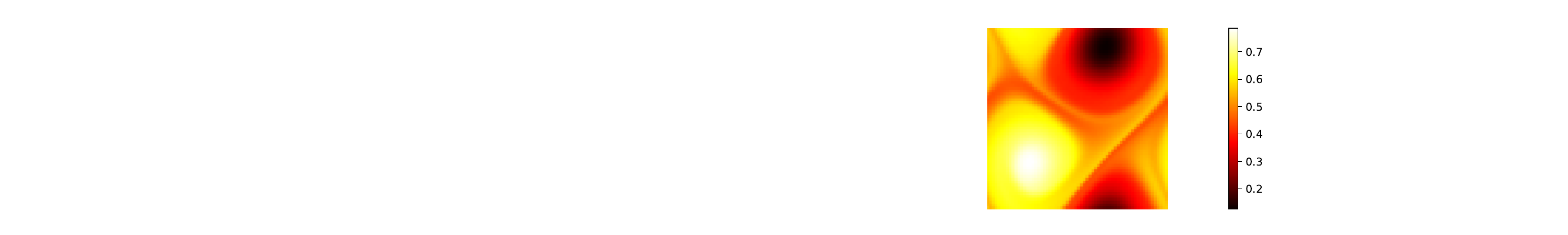}};
		\node[minimum width=1.75cm, minimum height=1.75cm, inner sep=0, outer sep=0] at (12.6, -3.44) {\includegraphics[height=1.88cm, clip, trim=39.5cm 0.6cm 10.5cm 0.6cm]{images/ns_latent_interpolation/end_true}};
		\node[anchor=center] at (11.5, -4.6) {$u_{2}$};
		\node at (12.9, -4.12) {\tiny 0.2};
		\node at (12.9, -4.12+0.26) {\tiny 0.3};
		\node at (12.9, -4.12+0.52) {\tiny 0.4};
		\node at (12.9, -4.12+0.78) {\tiny 0.5};
		\node at (12.9, -4.12+0.78+0.26) {\tiny 0.6};
		\node at (12.9, -4.12+0.78+0.52) {\tiny 0.7};
		\node[anchor=center] at (5.5, 0.95) {\small (a) Autoencoding beyond training resolution with zero-shot superresolution};
	\end{tikzpicture}
	\vspace{-0.7em}
	\caption{(a) FAE can stably decode at resolutions much higher than the training resolution (best viewed digitally). (b) The regularised latent space $\latentspace$ allows for meaningful interpolation between samples. Further examples are given in \cref{subsec:details_Navier--Stokes}.}
	\label{fig:Navier--Stokes_zero-shot_and_latent_interpolation}
	\vspace{-0.5em}
\end{figure}

The regularised FAE latent space gives a well-structured finite representation of the infinite-dimensional data $u \in \dataspace$.
We expect there to be benefit in using this representation as a building block for applications such as supervised learning and generative modelling on functional data, similar in spirit to other supervised operator-learning methods with encoder--decoder structure \citep{SeidmanKissasPerdikarisPappas2022}.

As a first step towards verifying that the latent space does indeed capture useful structure beyond mere memorisation of the training data, we draw $u_{1}$ and $u_{2}$ from the held-out set, compute latent vectors $z_{1} = \encodermean(u_{1}; \encoderparam)$ and $z_{2} = \encodermean(u_{2}; \encoderparam) \in \latentspace$, and evaluate the decoder $\decodermap$ along the convex combination $z_{1} \alpha +  z_{2} (1-\alpha)$.
This leads to a sensible interpolation in $\dataspace$, suggesting that the latent representation is robust and well-regularised (\cref{fig:Navier--Stokes_zero-shot_and_latent_interpolation}(b)).

\subsubsection{Darcy Flow}
\label{subsec:Darcy}

Darcy flow is a model of steady-state flow in a porous medium, derivable from first principles using homogenisation;
see, e.g., \citet[Sec.~2.11]{FreezeCherry1979} and \citet{Keller1980}.
We restrict attention to the two-dimensional domain $\Omega = [0, 1]^{2}$ and suppose that, for some permeability field $k \colon \Omega \to \Reals$ and forcing $\varphi \colon \Omega \to \Reals$, the pressure field $p \colon \Omega \to \Reals$ satisfies
\begin{equation} \label{eq:Darcy_flow}
	\begin{alignedat}{2}
		- \nabla \cdot \bigl( k \nabla p \bigr) &= \varphi &&\text{~~~on $\Omega$,} \\
		p &= 0 &&\text{~~~on $\partial \Omega$.}
	\end{alignedat}
\end{equation}
We assume $\varphi = 1$ and that $k$ is distributed as the pushforward of the distribution $N\bigl(0, (-\Delta + 9I)^{-2}\bigr)$, where $\Delta$ is the Laplacian restricted to functions defined on $\Omega$  with zero Neumann data on $\partial \Omega$, under the map $x \mapsto 3 + 9 \cdot \one \bigl[x \geq 0\bigr]$.
We take $\dataspace = L^{2}(\Omega)$ and define $\datameas \in \prob{\dataspace}$ to be the distribution of pressure fields $p$ solving \eqref{eq:Darcy_flow} with permeability $k$.
While solutions to this elliptic PDE can be expected to have greater smoothness \citep[Sec.~6.3]{Evans2010}, we assume only that $p \in L^{2}(\Omega)$ and use the $L^{2}$-norm in the FAE objective \eqref{eq:FAE_objective}.

The training data set is based on that of \citet{Lietal2021} and consists of 1,024 samples from $\datameas$ on a $421\times421$ grid, with a further 1,024 samples held out as an evaluation set.
Data are scaled so that $p(x) \in [0, 1]$ for all $x \in \Omega$ and, where specified, we downsample as described in \cref{subsec:details_Darcy_flow}.
We train FAE with $d_{\latentspace} = 64$ and $\beta = 10^{-3}$, and use complement masking with a point ratio $r_{\text{enc}}$ of 30\%.

\begin{wrapfigure}{r}{0.46\textwidth}
    \centering
    \vspace{0em}
		\begin{tikzpicture}
			\node at (0.2, 0.02) {\includegraphics[clip, trim=0.3cm 0.14cm 1.1cm 0.6cm, width=7.1cm]{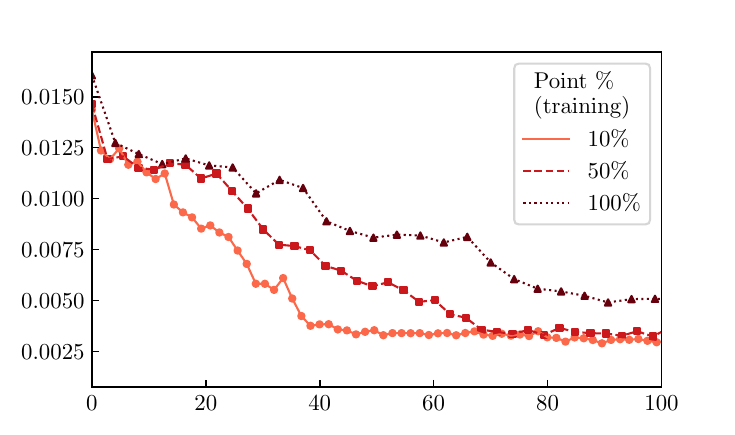}};
			\node at (0.4, 2.6) {\footnotesize Reconstruction MSE on held-out set ($211 \times 211$)};
			\node at (0.4, 2.25) {\footnotesize (mean over $5$ training runs)};
			\node at (0.4, -2.18) {\footnotesize Wall-clock time [s]};
			\node[fill=white, minimum width=1cm, minimum height=0.5cm] at (3.95, -2) {};
		\end{tikzpicture}
	\vspace{-2.3em}
	\caption{Training with masking in the encoder and decoder reduces training time.}
	\label{fig:Darcy_wallclock}
	\vspace{-0.7em}
\end{wrapfigure}

\paragraph{Accelerating Training Using Masking.}
As well as improving reconstructions and robustness to mesh changes, masked training can greatly reduce the cost of training.
To illustrate this we compare the training dynamics of FAE on data downsampled to resolution $211 \times 211$, using random masking with point ratio $r_{\text{enc}} = r_{\text{dec}} \in \{10\%, 50\%, 90\%\}$.
Since the evaluation cost of the encoder and decoder scales linearly with the number of mesh points, we expect significant computational gains when using low point ratios.
We perform five training runs for each model and compute the average reconstruction MSE over time on held-out data at resolution $211 \times 211$.
The models trained with masking converge faster as the smaller data tensors allow for better use of the GPU parallelism (\cref{fig:Darcy_wallclock}).
At higher resolutions, memory constraints may preclude training on the full grid, making masking vital.

Related ideas are used in the adaptive-subsampling training scheme for FNOs proposed by \citet{LanthalerStuartTrautner2024}, which involves training first on a coarse grid and refining the mesh each time the evaluation metric plateaus;
our approach differs by dropping mesh points randomly, which would not be possible with FNO.
One can readily imagine training FAE with a combination of adaptive subsampling and masking.

\paragraph{Generative Modelling.}
While FAE is not itself a generative model, it can be made so by training a fixed-dimension generative model on the latent space $\latentspace$ \citep{Ghoshetal2020,VahdatKreisKautz2021}.
More precisely, we know that applying the FAE encoder $\encodermean$ to data induces a distribution $\pushforwardmeas^{\encoderparam} \in \prob{\latentspace}$ for $z$ given by
\begin{equation} \label{eq:FAE_latent_distribution}
	z \mid u = \encodermean(u; \encoderparam),\quad u \sim \datameas.
\end{equation}
Unlike with FVAE, there is no reason that this should be close to Gaussian.
However, we can approximate $\pushforwardmeas^{\encoderparam}$
with a fixed-resolution generative model $\latentmeas^{\latentgenparam} \in \prob{\latentspace}$ parametrised by $\latentgenparam \in \Latentgenparam$, and 
define the FAE generative model $\genmeas^{\decoderparam, \latentgenparam}$ for data $u$ by
\begin{equation} \label{eq:FAE_generative_model}
	\text{(FAE generative model)}~~~~~~u \mid z = \decodermap(z; \decoderparam),\quad z \sim \latentmeas^{\latentgenparam}.
\end{equation}
Since applying the decoder to $\pushforwardmeas^{\encoderparam}$ should approximately recover the data distribution if $\decodermap(\encodermean(u; \encoderparam); \decoderparam) \approx u$ for $u \sim \datameas$, we hope that when $\pushforwardmeas^{\encoderparam} \approx \latentmeas^{\latentgenparam}$, samples from \eqref{eq:FAE_generative_model} will be approximately distributed according to $\datameas$.
As a simple illustration, we train FAE at resolution $47 \times 47$ and fit a Gaussian mixture model $\latentmeas^{\latentgenparam}$ with 10 components to $\pushforwardmeas^{\encoderparam}$ using the expectation-maximisation algorithm \citep[see][Sec.~9.2.2]{Bishop2006}. 
Samples from \eqref{eq:FAE_generative_model} closely resemble those from the held-out data set (\cref{fig:Darcy_generative}(a)), and as a result of our mesh-invariant architecture, it is possible to generate new samples on any mesh.

\begin{figure}[thb]
	\vspace{-0.5em}
		\begin{tikzpicture}
                \node at (0, 0) {\begin{tikzpicture}
			\node[anchor=center] at (2.63, 1) {\footnotesize (a)(i) FAE samples $\decodermap(z; \decoderparam)$, $z \sim \latentmeas$ ($47 \times 47$ grid)};
			\node[minimum width=1.5cm, minimum height=1.5cm, outer sep=0, inner sep=0] at (0, 0) {\includegraphics[width=1.5cm, clip, trim=13cm 1.2cm 12.3cm 1.2cm]{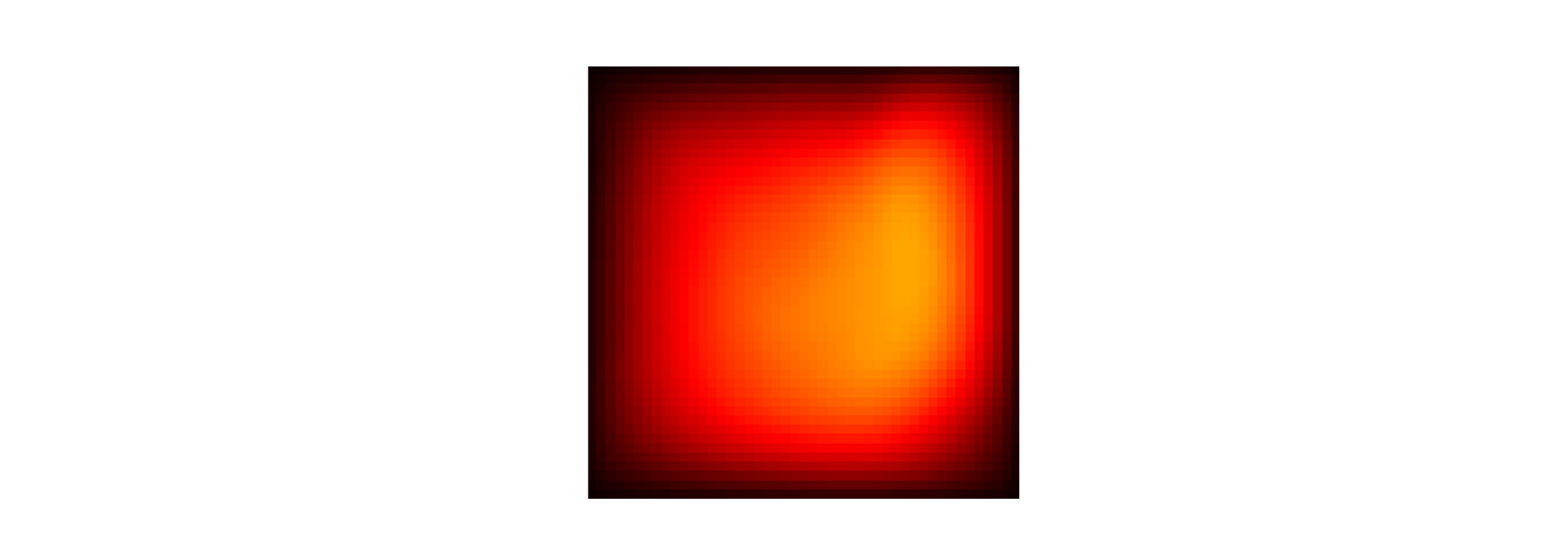}};
			\node[minimum width=1.5cm, minimum height=1.5cm, outer sep=0, inner sep=0] at (1.7, 0) {\includegraphics[width=1.5cm, clip, trim=13cm 1.2cm 12.3cm 1.2cm]{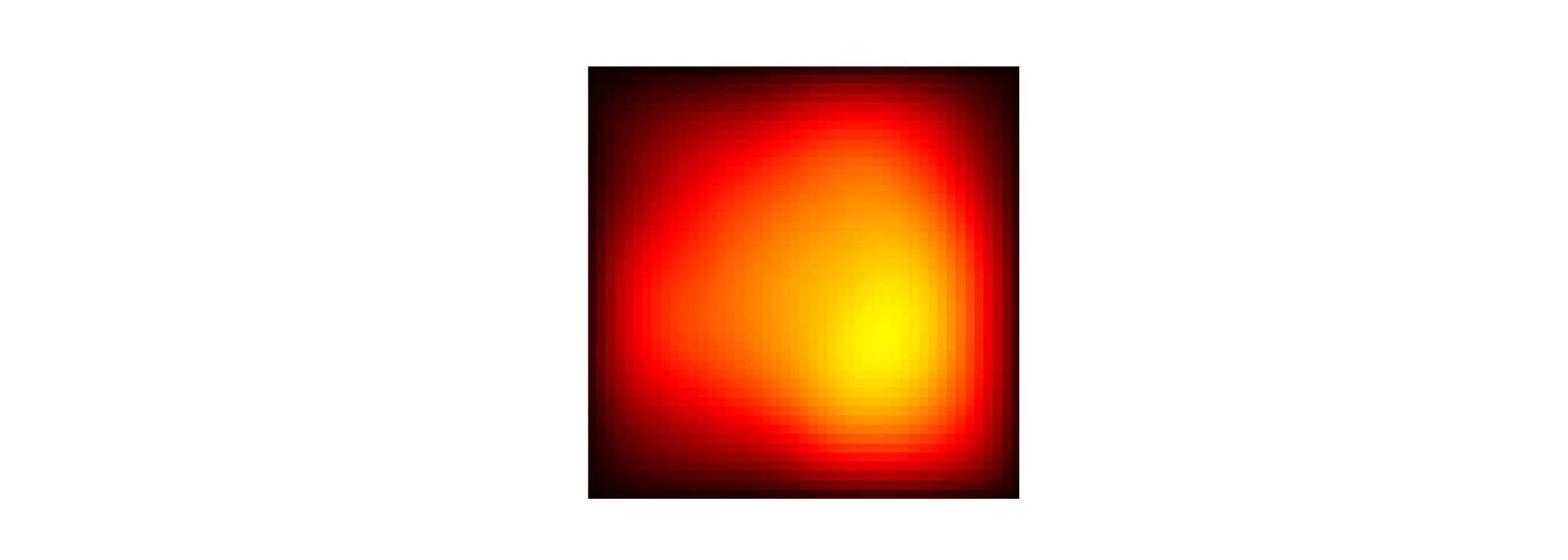}};
			\node[minimum width=1.5cm, minimum height=1.5cm, outer sep=0, inner sep=0] at (3.4, 0) {\includegraphics[width=1.5cm, clip, trim=13cm 1.2cm 12.3cm 1.2cm]{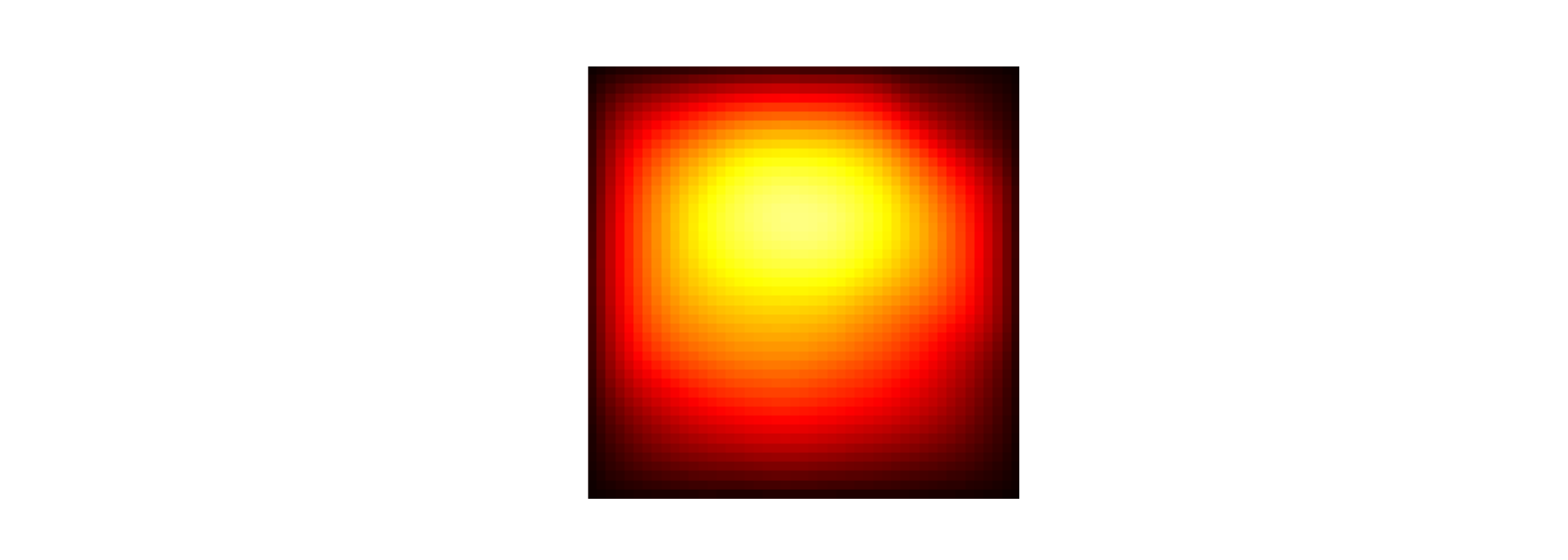}};
			\node[minimum width=1.5cm, minimum height=1.5cm, outer sep=0, inner sep=0] at (5.1, 0) {\includegraphics[width=1.5cm, clip, trim=13cm 1.2cm 12.3cm 1.2cm]{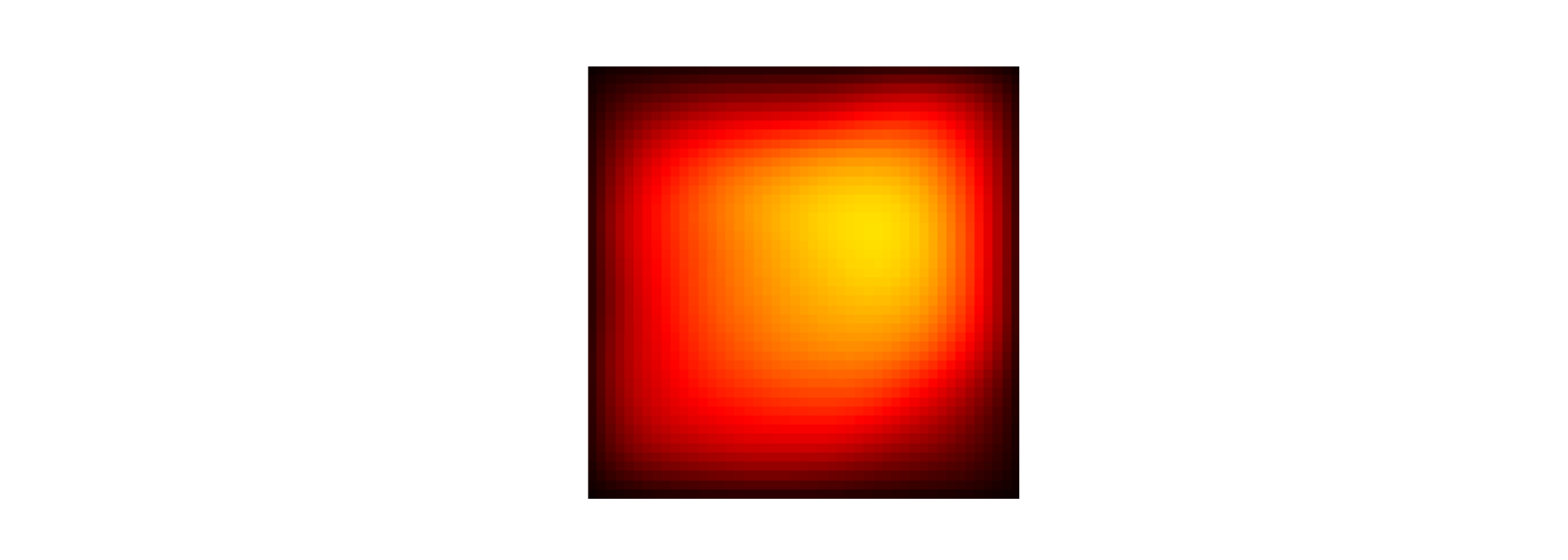}};
			\node at (6.1, -0.02){\begin{tikzpicture}
					\node at (6.1, 0) {\includegraphics[height=1.5cm, clip, trim=27.5cm 1.2cm 7cm 1.2cm]{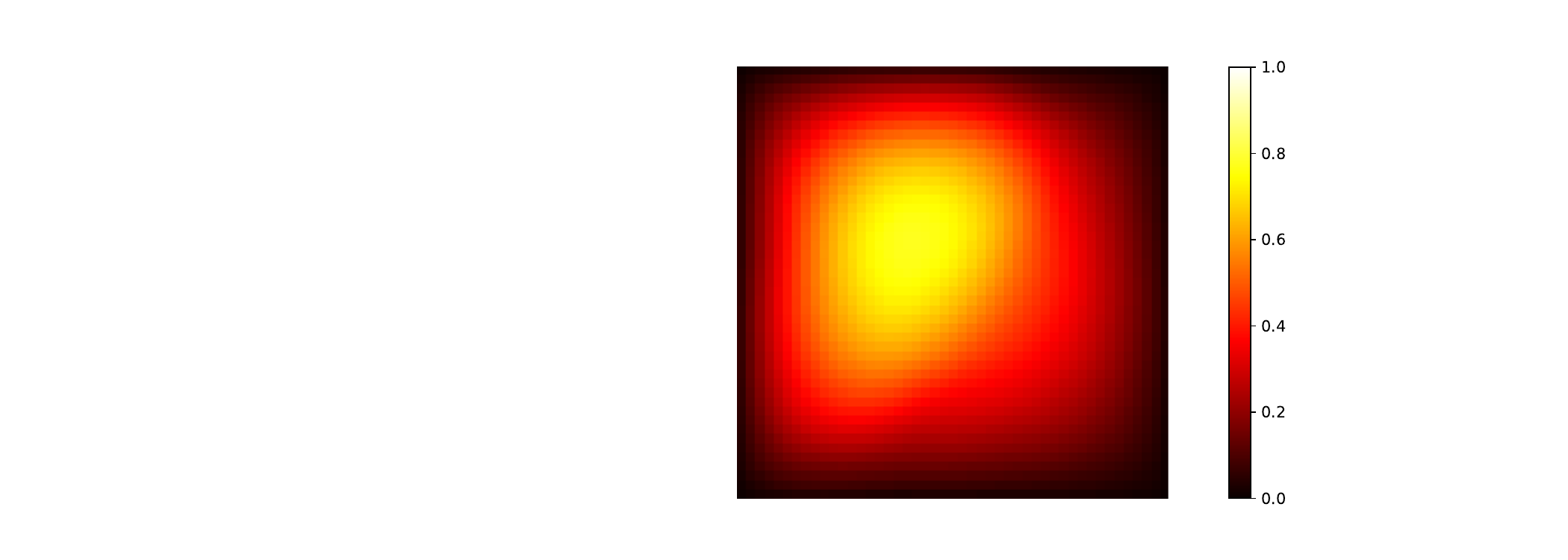}};
					\node at (6.35, -0.71) {\tiny 0.0};
					\node at (6.35, -0.71+0.27) {\tiny 0.2};
					\node at (6.35, -0.71+0.27+0.3) {\tiny 0.4};
					\node at (6.35, -0.71+0.27+0.3+0.27) {\tiny 0.6};
					\node at (6.35, -0.71+0.27+0.31+0.27+0.27) {\tiny 0.8};
					\node at (6.35, -0.71+0.27+0.31+0.27+0.33+0.22) {\tiny 1.0};
			\end{tikzpicture}};\end{tikzpicture}
};
            \node at (7.4, 0) {\begin{tikzpicture}
			\node[anchor=center] at (2.55, -1.05) {\footnotesize (a)(ii) Samples $p \sim \datameas$ ($47 \times 47$ grid)};
			\node[minimum width=1.5cm, minimum height=1.5cm, outer sep=0, inner sep=0] at (0, -2) {\includegraphics[width=1.5cm, clip, trim=13cm 1.2cm 12.3cm 1.2cm]{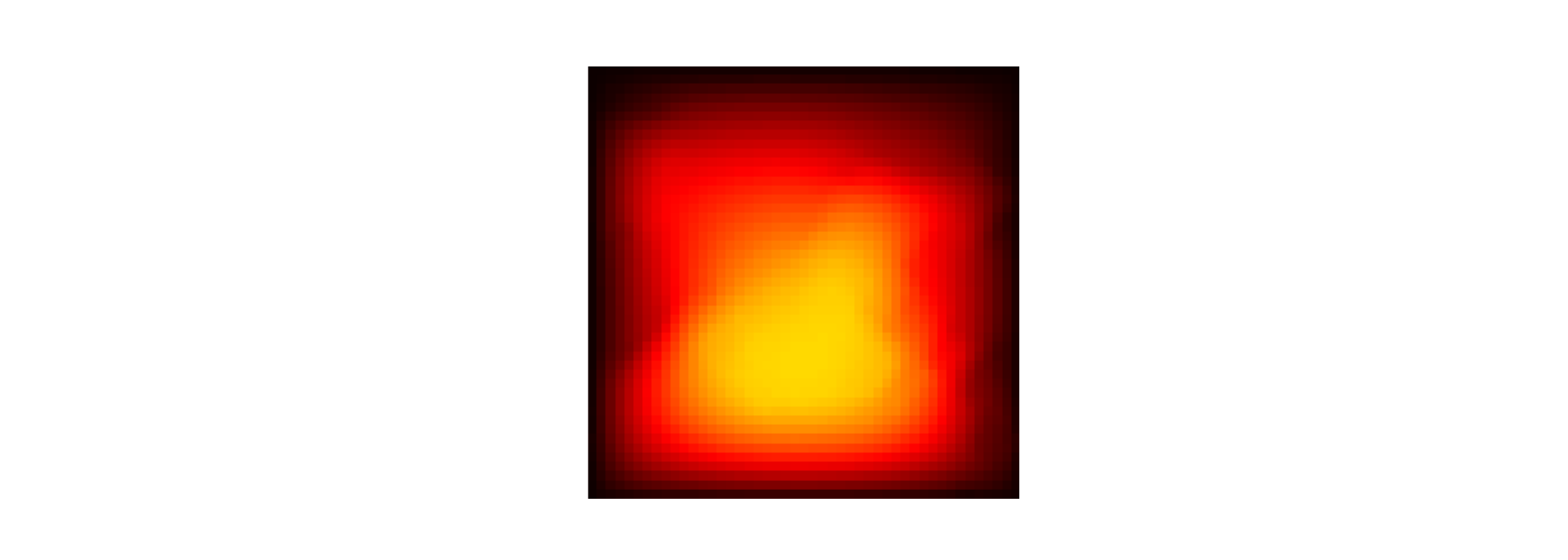}};
			\node[minimum width=1.5cm, minimum height=1.5cm, outer sep=0, inner sep=0] at (1.7, -2) {\includegraphics[width=1.5cm, clip, trim=13cm 1.2cm 12.3cm 1.2cm]{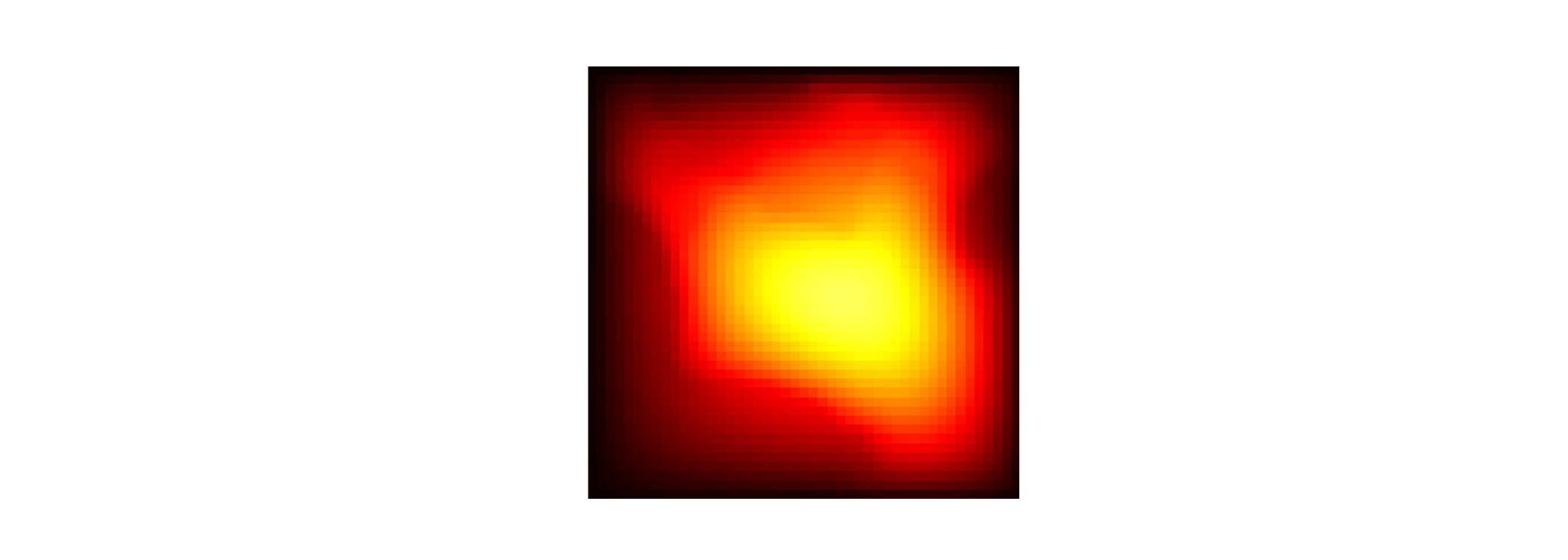}};
			\node[minimum width=1.5cm, minimum height=1.5cm, outer sep=0, inner sep=0] at (3.4, -2) {\includegraphics[width=1.5cm, clip, trim=13cm 1.2cm 12.3cm 1.2cm]{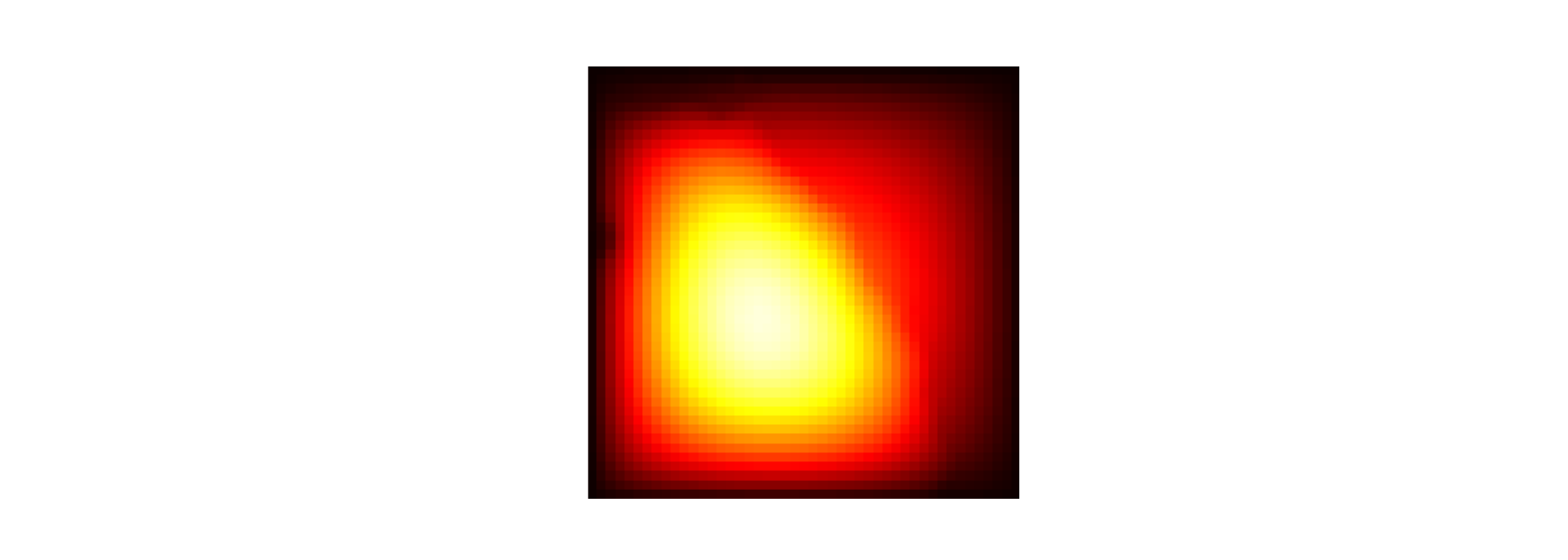}};
			\node[minimum width=1.5cm, minimum height=1.5cm, outer sep=0, inner sep=0] at (5.1, -2) {\includegraphics[width=1.5cm, clip, trim=13cm 1.2cm 12.3cm 1.2cm]{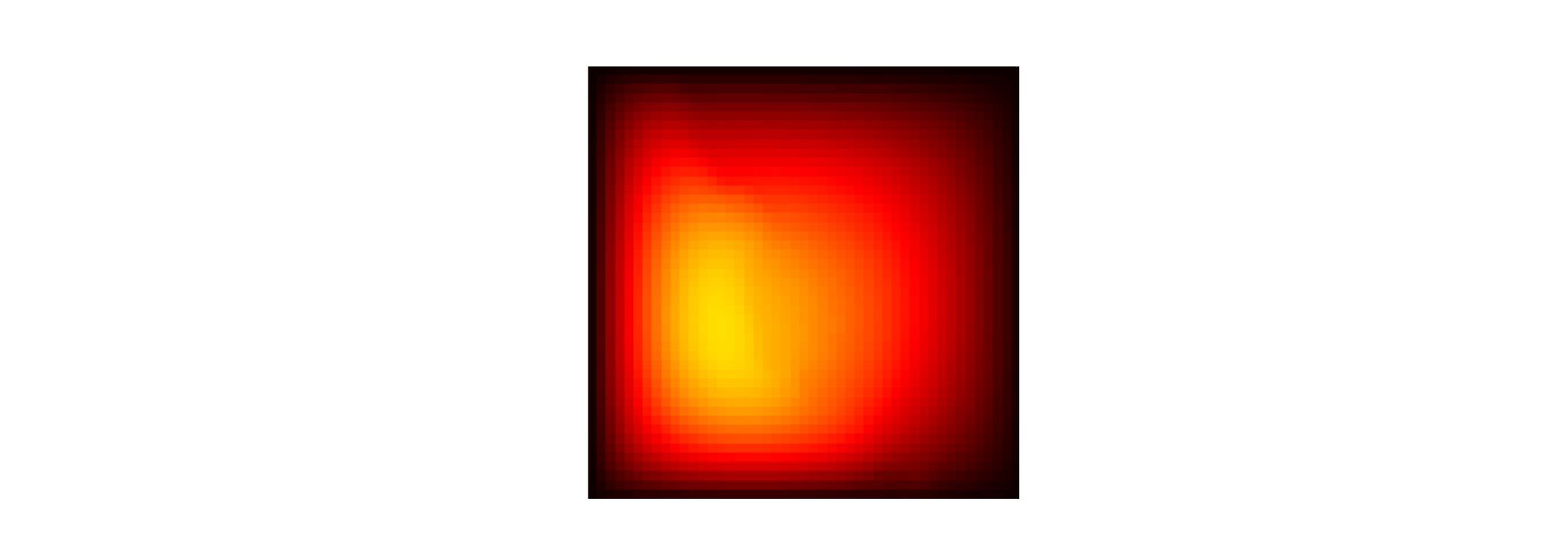}};
			\node at (6.1, -2.02){\begin{tikzpicture}
					\node at (6.1, 0) {\includegraphics[height=1.5cm, clip, trim=27.5cm 1.2cm 7cm 1.2cm]{images/darcy_generative/colorbar}};
					\node at (6.35, -0.71) {\tiny 0.0};
					\node at (6.35, -0.71+0.27) {\tiny 0.2};
					\node at (6.35, -0.71+0.27+0.3) {\tiny 0.4};
					\node at (6.35, -0.71+0.27+0.3+0.27) {\tiny 0.6};
					\node at (6.35, -0.71+0.27+0.31+0.27+0.27) {\tiny 0.8};
					\node at (6.35, -0.71+0.27+0.31+0.27+0.33+0.22) {\tiny 1.0};
			\end{tikzpicture}};
            \end{tikzpicture}};
            \node at (3.8, -1.2) {\footnotesize (b) Kernel density estimates for quantities of interest (based on 1,024 samples from generative model)};
            
            \node at (-0.2, -1.65) {\footnotesize (i) $\mathsf{Q}_{1}(p) = \displaystyle\max_{x \in \Omega} p(x)$};
            \node at (-0.2, -3.35) {
            \includegraphics[width=5.6cm, height=3cm, clip, trim=3.4cm 0.4cm 15.5cm 0.8cm]{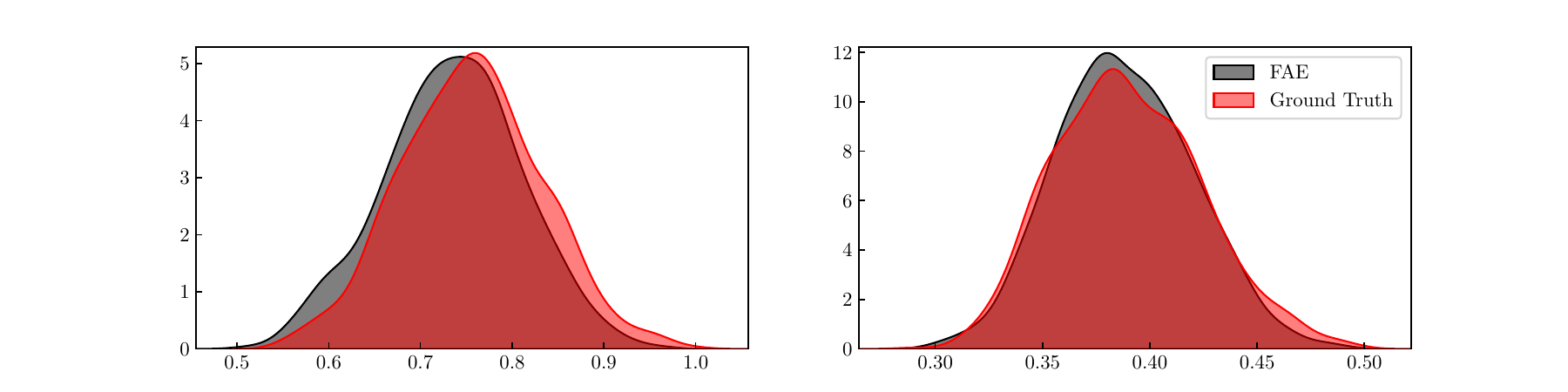}
            };
            \node at (7.3, -1.65) {\footnotesize (ii) $\mathsf{Q}_{2}(p) = \norm{p}_{L^{2}}$};
            \node at (7.2, -3.35) {
            \includegraphics[width=5.6cm, height=3cm, clip, trim=15.8cm 0.4cm 3.0cm 0.8cm]{images/Darcy_kde_plots}
            };
		\end{tikzpicture}
  
	\vspace{-1em}
	\caption{(a) Uncurated samples from the FAE generative model for the pressure field $p$. Further samples are provided in \cref{subsec:details_Darcy_flow}. (b) The distributions of quantities of interest computed using the FAE generative model closely agree with the ground truth.}
	\label{fig:Darcy_generative}
\end{figure}

To measure generative performance, we approximate the distributions of physically relevant quantities of interest $\mathsf{Q}_{i}(p)$ depending on the data $p \in \dataspace$, comparing 1,024 samples from the generative model to the held-out data.
Using kernel density estimates as in \cref{fig:sde_latent_variable}(b), we see close agreement between the distributions (\cref{fig:Darcy_generative}(b)).
While we could also evaluate the generative model using distances such as maximum mean discrepancy (MMD; \citealp{Borgwardtetal2006}), we focus on interpretable quantities relevant to the physical system at hand.

Though we adopt the convention of training the autoencoder and generative model separately \citep{Rombachetal2022} here, the models could also be trained jointly;
we leave this, and an investigation of generative models on the FAE latent space, to future work.

\section{Related Work}
\label{sec:related_work}

\paragraph{Variational Autoencoding Neural Operators.}
The VANO model \citep{SeidmanKissasPappasPerdikaris2023} was the first to attempt systematic extension of the VAE objective to function space. The paper uses ideas from operator learning to construct a model that can decode---but not encode---at any resolution. Our approach differs in both training objective and practical implementation, as we now outline.

The most significant difference between what is proposed in this paper and in VANO is the objective on function space: the VANO objective coincides with a specific case of our model \eqref{eq:encoder_infinite_dimensions}--\eqref{eq:latent_distribution_infinite_dimensions} with the decoder noise $\decodernoisemeas$ being white noise on $L^{2}([0, 1]^{d})$.
As a consequence the generative model for VANO takes values in $\dataspace = H^{s}([0, 1]^{d})$ if and only if $s <- \nicefrac{d}{2}$; in particular generated draws are not in $L^2([0,1]^{d}).$
Unlike our approach, VANO aims to maximise an extension of the ELBO \eqref{eq:VAE_ELBO}, in which a regularisation parameter $\beta > 0$ is chosen as a hyperparameter, and the ELBO takes the form
\begin{align*}
	\ELBO^{\mathrm{VANO}}_{\beta}(u; \encoderparam, \decoderparam) 
	&= \EE_{z \sim \varmeas^{\encoderparam}} \Biggl[\log \frac{\rd \decodermeas^{\decoderparam}}{\rd \decodernoisemeas}(u) \Biggr] - \beta \bigKLdiv{\varmeas^{\encoderparam}}{\latentmeas} \\
	&= \EE_{z \sim \varmeas^{\encoderparam}} \left[ \Pwint{\decodermap(z; \decoderparam)}{u}_{L^{2}} - \tfrac{1}{2} \norm{\decodermap(z; \decoderparam)}_{L^{2}}^{2} \right] - \beta \bigKLdiv{\varmeas^{\encoderparam}}{\latentmeas},
\end{align*}
where the second equality comes from the Cameron--Martin theorem as in \Cref{ex:infinite_ELBO_Dirac}.
Maximising $\ELBO^{\text{VANO}}_{\beta}$ with $\beta = 1$ is precisely equivalent to minimising the per-sample loss \eqref{eq:L2_white_noise_per-sample_loss} from \Cref{ex:infinite_ELBO_Dirac}, so naive application of $\ELBO^{\text{VANO}}_{\beta}$ will result in the same issues seen there.
In particular, 
discretisations of the ELBO may diverge as resolution is refined;
moreover, for data with $L^{2}$-regularity, the generative model $\genmeas^{\decoderparam}$ is greatly misspecified, with draws $\decodermap(z;\decoderparam) + \eta$, $z \sim \latentmeas$, $\eta \sim \decodernoisemeas$, lying in a Sobolev space of lower regularity than the data.
This issue is obscured by the convention in the VAE literature of considering only the decoder mean $\decodermap(z; \decoderparam)$;
considering the full generative model with draws $\decodermap(z; \decoderparam) + \eta$ reveals the incompatibility more clearly.
We argue that the empirical success of VANO in autoencoding is because the objective can be seen as that of a regularised autoencoder (\Cref{rk:problems_L2_data}).

Along with the differences in the training objective and its interpretation, FVAE differs greatly from VANO in architecture.
While the VANO decoders can be discretised on any mesh---and our decoder closely resembles VANO's nonlinear decoder---its encoders assume a fixed mesh for training and inference.
In contrast, our encoder can be discretised on any mesh,
enabling many of our contributions, such as masked training, inpainting, and superresolution, which are not possible within VANO.

\paragraph{Generative Models on Function Space.}
Aside from VANO, recent years have seen significant interest in the development of generative models on function space.
Several extensions of score-based \citep{Songetal2021} and denoising diffusion models \citep{SohlDicksteinetal2015,HoJainAbbeel2020} to function space have been proposed \citep[e.g.,][]{PidstrigachMarzoukReichWang2023,Hagemannetal2023,Limetal2023,KerriganLeySmyth2023,Franzeseetal2023,ZhangWonka2024}.
\Citet{Rahmanetal2022} propose the generative adversarial neural operator (GANO), extending Wasserstein generative adversarial neural networks \citep{ArjovskyChintalaBottou2017} to function space with FNOs in the generator and discriminator to achieve resolution-invariance.

\paragraph{Variational Inference on Function Space.}
In machine learning, variational inference on function space also arises in the context of Bayesian neural networks \citep{SunZhangShiGrosse2019,Burtetal2021,CinquinBamler2024}. 
In this setting one wishes to minimise the KL divergence between the posterior in function space and a computationally tractable approximation---but, as in our study, this divergence may be infinite owing to a lack of absolute continuity between the two distributions.

\paragraph{Learning on Point Clouds.}
Our architecture takes inspiration from the literature on machine learning on point clouds, where data are viewed as sets of points with arbitrary cardinality.
Several models for autoencoding and generative modelling with point clouds have been proposed, such as energy-based processes \citep{YangDaiDaiSchuurmans2020} and SetVAE \citep{KimYooLeeHong2021};
our work differs by defining a loss in function space, ensuring that our model converges to a continuum limit as the mesh is refined.
Continuum limits of semisupervised algorithms for graphs and point clouds have also been studied \citep[e.g.,][]{DunlopSlepcevStuartThorpe2020}.

\section{Outlook}
\label{sec:outlook}

Our study of autoencoders on function space has led to FVAE, an extension of VAEs which imposes stringent requirements on the data distribution in infinite dimensions but benefits from firm probabilistic foundations; it has also led to the non-probabilistic FAE, a regularised autoencoder which can be applied much more broadly to functional data.

\paragraph{Benefits.} 
Both FVAE and FAE offer significant benefits when working with functional data, such as enabling training with data across resolutions, inpainting, superresolution, and generative modelling.
These benefits are possible only through our pairing of a well-defined objective in function space with mesh-invariant encoder and decoder architectures.

\paragraph{Limitations.}
FVAE can be applied only when the generative model is sufficiently compatible with the data distribution---a condition that is difficult to satisfy in infinite dimensions, and restricts the applicability to FVAE to specific problem classes.
FAE overcomes this restriction, but does not share the probabilistic foundations of FVAE.

The desire to discretise the encoder and decoder on arbitrary meshes rules out many high-performing grid-based architectures, including convolutional networks and FNOs.
We believe this is a limiting factor in the numerical experiments, and that combining our work with more complex operator architectures \citep[e.g.,][]{Kovachkietal2023} or continuum extensions of point-cloud CNNs \citep{Lietal2018} would yield further improvements.

\paragraph{Future Work.}

Our work gives new methods for nonlinear dimension reduction in function space, and we expect there to be benefit in building operator-learning methods that make use of the resulting latent space, in the spirit of PCA-Net \citep{BhattacharyaHosseiniKovachkiStuart2021}.
For FAE, which unlike FVAE is not inherently a generative model, we expect particular benefit in the use of more sophisticated generative models on the latent space, for example diffusion models, analogous to Stable Diffusion \citep{Rombachetal2022}.

While our focus has been on scientific problems with synthetic data, our methods could also be applied to real-world data, for example in computer vision;
for these challenging data sets, further research on improved mesh-invariant architectures will be vital.
Our study has also focussed on the typical machine-learning setting of a fixed dataset of size $N$; 
research into the behaviour of FVAE and FAE in the infinite-data limits using tools from statistical learning theory would also be of interest.

\acks{%
JB is supported by Splunk Inc.
MG is supported by a Royal Academy of Engineering Research Chair, and Engineering and Physical Sciences Research Council (EPSRC) grants EP/T000414/1, EP/W005816/1, EP/V056441/1, EP/V056522/1, EP/R018413/2, EP/R034710/1, and EP/R004889/1.
HL is supported by the Warwick Mathematics Institute Centre for Doctoral Training and gratefully acknowledges funding from the University of Warwick and the EPSRC (grant EP/W524645/1).
AMS is supported by a Department of Defense Vannevar Bush Faculty Fellowship and by the SciAI Center, funded by the Office of Naval Research (ONR), under grant N00014-23-1-2729.
For the purpose of open access, the authors have applied a Creative Commons Attribution (CC BY) licence to any Author Accepted Manuscript version arising.
}

\clearpage
\appendix

\section{Supporting Results}

\label{sec:supporting_results}

 In the following proof we use the fact that the norm of the Sobolev space $H^{s}([0, 1])$ can be written as a weighted sum of frequencies \citep[see][Sec.~9]{KreinPetunin1966}:
\begin{equation} \label{eq:Sobolev_norm}
    \norm{u}_{H^{s}([0, 1])}^{2} = \sum_{j \in \Naturals} \bigl(1 + j^{2}\bigr)^{s} |\alpha_{j}|^{2},\quad u = \sum_{j \in \Naturals} \alpha_{j} e_{j}, \quad e_{j}(x) = \sqrt{2} \sin(\pi j x).
\end{equation}

\begin{proofof}{\Cref{prop:white_noise}}
    Let $\eta$ be an $L^{2}$-white noise, let $h = \sum_{j \in \Naturals} h_{j} e_{j} \in L^{2}([0, 1])$, and note that $\decodernoisemeas(\quark - h)$ is the distribution of the random variable $\eta + h$.
    Thus, writing out the $H^{s}$-norm using the Karhunen--Lo\`eve expansion of $\eta$, we see that 
    \begin{equation} \label{eq:white_noise_norm_series}
        \norm{\eta + h}_{H^{s}([0, 1])}^{2} = \sum_{j \in \Naturals} \bigl(1+j^{2}\bigr)^{s} \bigabsval{\xi_{j} + h_{j}}^{2}.
    \end{equation}
    First, we show that $\norm{\eta + h}_{H^{s}([0, 1])} < \infty$ almost surely when $s < -\nicefrac{1}{2}$. 
    To do this we apply the Kolmogorov two-series theorem \citep[Theorem~2.5.6]{Durrett2019}, which states that the random series \eqref{eq:white_noise_norm_series} converges almost surely if 
    \begin{equation*}
        \sum_{j \in \Naturals}  \bigl(1 + j^{2}\bigr)^{s} \EE\Bigl[ \bigl(\xi_{j} + h_{j}\bigr)^{2} \Bigr] < \infty \text{~~~~and~~~~} \sum_{j \in \Naturals} \bigl(1 + j^{2}\bigr)^{2s} \Var \Bigl(  \bigl(\xi_{j} + h_{j}\bigr)^{2} \Bigr) < \infty.
    \end{equation*}
    But, since $\xi_{j} \sim N(0, 1)$, we know that $\EE[\xi_{j}^{2}] = 1$ and $\Var(\xi_{j}^{2}) = 2$;
    applying this, the elementary identity $(x + y)^{2} \leq 2x^{2} + 2y^{2}$ for $x, y \in \Reals$, and the fact that $\sum_{j \in \Naturals} j^{\alpha} < \infty$ for $\alpha < -1$ shows that the two series are finite.
    To see that $\decodernoisemeas(\quark - h)$ assigns zero probability to $L^{2}([0, 1])$, suppose for contradiction that $\eta + h$ had finite $L^{2}$-norm; then $\eta$ would also have finite $L^{2}$-norm.
    But as a consequence of the Borel--Cantelli lemma,
\begin{equation*}
	\bignorm{\eta}_{L^{2}([0, 1])}^{2} = \sum_{j \in \Naturals} \xi_{j}^{2} = \infty \text{~~almost surely,}
\end{equation*}
because the summands are independent and identically distributed, and thus for any constant $c > 0$, infinitely many summands exceed $c$ with probability one.
\end{proofof}

\begin{lemma}
    \label[lemma]{lem:SDE_KL}
	Suppose that $\dataspace = C_{0}([0, T], \Reals^{m})$ and that $\mu \in \prob{\dataspace}$ and $\nu \in \prob{\dataspace}$ are the laws of the $\Reals^{m}$-valued diffusions
    \begin{equation*}
	\begin{alignedat}{3}
		\rd u_{t} &= b(u_{t}) \,\rd t + \sqrt{\varepsilon} \,\rd w_{t},\qquad &u_{0} =0,\qquad &t \in [0, T]\\
		\rd v_{t} &= c(v_{t}) \,\rd t + \sqrt{\varepsilon} \,\rd w_{t},\qquad &v_{0} = 0,\qquad &t \in [0, T].
	\end{alignedat}
    \end{equation*}
	where $(w_{t})_{t \in [0, T]}$ is a Brownian motion on $\Reals^{m}$.
	Suppose that the Novikov condition \eqref{eq:Novikov_condition} holds for both processes.
	Then 
    \begin{equation*}
		\KLdiv{\mu}{\nu} = \EE_{u \sim \mu} \left[ \frac{1}{2\varepsilon} \int_{0}^{T} \norm{b(u_{t}) - c(u_{t})}_{2}^{2} \,\rd t  \right].
    \end{equation*}
\end{lemma}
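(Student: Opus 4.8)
The plan is to apply the Girsanov theorem (\Cref{prop:Girsanov}) with $p = b$ and $q = c$, which is legitimate because the Novikov conditions \eqref{eq:Novikov_condition} are assumed for both drifts and hence guarantee that $\mu$ and $\nu$ are mutually absolutely continuous with density given by \eqref{eq:Girsanov_density}. By definition of the \ac{KL} divergence, $\KLdiv{\mu}{\nu} = \EE_{u \sim \mu}\bigl[\log \tfrac{\rd \mu}{\rd \nu}(u)\bigr]$, so taking logarithms in \eqref{eq:Girsanov_density} reduces the problem to computing the expectation under $\mu$ of
\[
	\log \frac{\rd \mu}{\rd \nu}(u) = \frac{1}{2\varepsilon} \int_{0}^{T} \norm{c(u_{t})}_{2}^{2} - \norm{b(u_{t})}_{2}^{2} \,\rd t - \frac{1}{\varepsilon} \int_{0}^{T} \innerprod{c(u_{t}) - b(u_{t})}{\rd u_{t}}.
\]

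The key step is to evaluate the stochastic-integral term. Since under $\mu$ the path satisfies $\rd u_{t} = b(u_{t})\,\rd t + \sqrt{\varepsilon}\,\rd w_{t}$, I would substitute this decomposition into the second integral to split it as
\[
	\int_{0}^{T} \innerprod{c(u_{t}) - b(u_{t})}{\rd u_{t}} = \int_{0}^{T} \innerprod{c(u_{t}) - b(u_{t})}{b(u_{t})} \,\rd t + \sqrt{\varepsilon} \int_{0}^{T} \innerprod{c(u_{t}) - b(u_{t})}{\rd w_{t}}.
\]
The Novikov condition ensures that $t \mapsto c(u_{t}) - b(u_{t})$ lies in the class of integrands for which the It\^o integral against $(w_{t})_{t \in [0, T]}$ is a genuine martingale, so its expectation under $\mu$ vanishes. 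This is the main obstacle: it requires the square-integrability needed to rule out a nonzero contribution from a merely local-martingale part, and I would justify it by invoking the martingale property under the Novikov hypothesis rather than through explicit moment estimates.

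It then remains to combine the two surviving deterministic integrands pointwise. Suppressing the argument $u_{t}$, the combined integrand is $\tfrac{1}{2\varepsilon}\bigl(\norm{c}_{2}^{2} - \norm{b}_{2}^{2}\bigr) - \tfrac{1}{\varepsilon}\innerprod{c - b}{b}$; expanding the inner product and cancelling, the cross terms reorganise by completing the square into $\tfrac{1}{2\varepsilon}\bigl(\norm{b}_{2}^{2} + \norm{c}_{2}^{2} - 2\innerprod{b}{c}\bigr) = \tfrac{1}{2\varepsilon}\norm{b - c}_{2}^{2}$. Substituting back, and using Fubini's theorem (justified by the integrability afforded by Novikov) to interchange the expectation with the time integral, yields the claimed identity $\KLdiv{\mu}{\nu} = \EE_{u \sim \mu}\bigl[\tfrac{1}{2\varepsilon}\int_{0}^{T}\norm{b(u_{t}) - c(u_{t})}_{2}^{2}\,\rd t\bigr]$.
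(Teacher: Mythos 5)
Your proposal is correct and follows essentially the same route as the paper's proof: apply Girsanov with $p = b$, $q = c$, take logarithms, substitute $\rd u_{t} = b(u_{t})\,\rd t + \sqrt{\varepsilon}\,\rd w_{t}$ so the stochastic integral splits into a drift part and an It\^o integral against $\rd w_{t}$ whose expectation vanishes, and complete the square on the remaining deterministic integrand to obtain $\tfrac{1}{2\varepsilon}\norm{b - c}_{2}^{2}$. You are merely more explicit than the paper about the martingale justification for discarding the $\rd w_{t}$ term and about the algebra; the substance is identical.
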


\begin{proof}
	Applying the Girsanov formula \eqref{eq:Girsanov_density} to obtain the density $\rd \mu / \rd \nu$, taking logarithms to evaluate $\KLdiv{\mu}{\nu}$, and noting that under $\mu$ we have
$\rd u_{t} = b(u_{t}) \,\rd t + \sqrt{\varepsilon}\,\rd w_{t}$, we obtain
\begin{equation*}
		\KLdiv{\mu}{\nu} = \EE_{u \sim \mu} \left[  \frac{1}{2\varepsilon} \int_{0}^{T} \norm{b(u_{t}) - c(u_{t})}_{2}^{2} \,\rd t - \frac{1}{\sqrt{\varepsilon}} \int_{0}^{T} \langle b(u_{t}) - c(u_{t}), \,\rd w_{t}\rangle \right].
	\end{equation*}
	Under $\mu$, the process $(w_{t})_{t \in [0, T]}$ is Brownian motion and so the second expectation is zero.
\end{proof}

\section{Experimental Details}

In this section, we provide additional details, training configurations, samples, and analysis for the numerical experiments in \cref{subsec:FVAE_numerical_examples} and \cref{subsec:FAE_numerical_examples}.
All experiments were run on a single NVIDIA GeForce RTX 4090 GPU with 24 GB of VRAM.

\subsection{Base Architecture} 
\label{subsec:details_base_architecture} 
We use the common architecture described in \cref{subsec:VAE_architecture} and \cref{subsec:FAE_architecture} for all experiments, using the Adam optimiser \citep{KingmaBa2015} with the default hyperparameters $\varepsilon$, $\beta_{1}$, and $\beta_{2}$;
we specify the learning rate and learning-rate decay schedule for each experiment in what follows.

\paragraph{Positional Encodings.}
Where specified, both the encoder and decoder will make use of Gaussian random Fourier features \citep{Tanciketal2020}, pairing the query coordinate $x \in \Omega \subset \Reals^{d}$ with a positional encoding $\gamma(x) \in \Reals^{2k}$.
To generate these encodings, a matrix $B \in \Reals^{k \times d}$ with independent $N(0, I)$ entries is sampled and viewed as a hyperparameter of the model to be used in both the encoder and decoder.
The positional encoding $\gamma(x)$ is then given by the concatenated vector $\gamma(x) = \bigl[ \cos(2\pi Bx); \sin(2\pi Bx) \bigr]^{T} \in \Reals^{2k}$
where the sine and cosine functions are applied componentwise to the vector $2\pi Bx$.

\subsection{Brownian Dynamics}
\label{subsec:details_Brownian_dynamics}

The training data consists of 8,192 samples from the path distribution $\datameas$ of the SDE \eqref{eq:Brownian_dynamics},\eqref{eq:Brownian_dynamics_potential} on the time interval $[0, T]$, $T = 5$. Trajectories are generated using the Euler--Maruyama scheme with internal time step $\nicefrac{1}{8,192}$ (unrelated to the choice to take 8,192 training samples), and the resulting paths are then subsampled by a factor of $80$ to obtain the training data.
Thus the data have effective time increment $\nicefrac{5}{512}$; moreover the path information is removed at $50\%$ of the points resulting from these time increments,  chosen uniformly at random.

\paragraph{Experimental Setup.}
We train for 100,000 steps with initial learning rate $10^{-3}$ and an exponential decay of 0.98 applied every 1,000 steps, with batch size 32 and $4$ Monte Carlo samples for $\varmeas^{\encoderparam}$.
We use latent dimension $d_{\latentspace} = 1$, $\beta = 1.2$ and $\lambda = 10$.
The three sets of simulations shown in \cref{fig:sde_reconstructions_and_samples} 
use $\kappa = 0$, 25, and 10,000 respectively.

\subsection{Estimation of Markov State Models}
\label{subsec:details_MSM}

\begin{figure}[htb]
    \centering
    \includegraphics[width=0.85\linewidth]{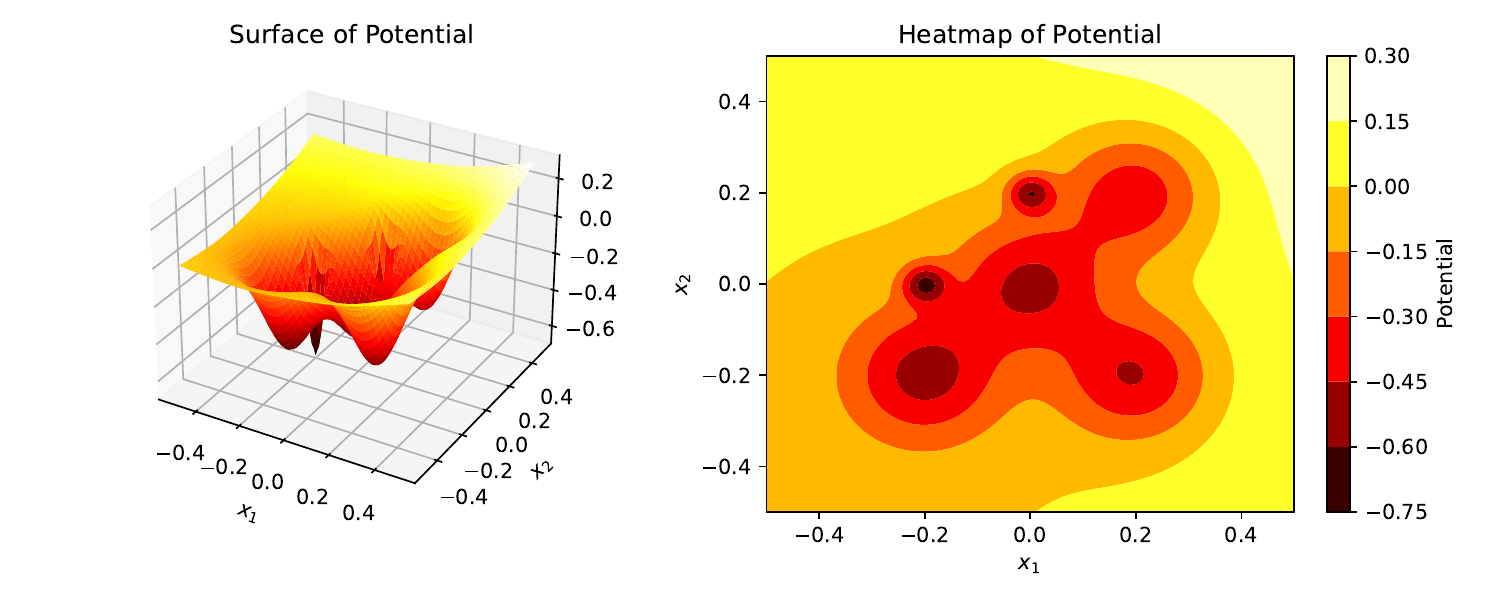}
	\caption{Potential function $U \colon \Reals^{2} \to \Reals$ for \cref{subsec:MSM}.}
    \label{fig:sde2d_potential}
\end{figure}

\paragraph{Data and Discretisation.}

To validate the ability of FVAE to model higher-dimensional SDE trajectories, we specify a simple potential with qualitative features similar to those arising in the complex potential surfaces arising in molecular dynamics.
To this end, define the centres $c_{1}  = (0, 0)$, $c_{2} = (0.2, 0.2)$, $c_{3} = (-0.2, -0.2)$, $c_{4} = (0.2, -0.2)$, $c_{5} = (0, 0.2)$ and $c_{6} = (-0.2, 0)$;
standard deviations $\sigma_{1} = \sigma_{2} = \sigma_{3} = \sigma_{4} = 0.1$ and $\sigma_{5} = \sigma_{6} = 0.03$;
and masses $m_{1} = m_{2} = m_{3} = m_{4} = 0.1$ and $m_{5} = m_{6} = 0.01$.
Then  let
\begin{equation*}
U(x) = 0.3 \Biggl[  0.5(x_1 + x_2) + x_1^2 + x_2^2 - \sum_{i = 1}^{6} m_{i} N\left(x; c_{i}, \sigma_{i}^{2} I_{2} \right) \Biggr].
\end{equation*}
This potential has three key components: a linear term breaking the symmetry, a quadratic term preventing paths from veering too far from the path's starting point, the origin, and negative Gaussian densities---serving as potential wells---positioned at the centres $c_{i}$ (\cref{fig:sde2d_potential}). 
Sample paths of \eqref{eq:Brownian_dynamics} with initial condition $u_{0} = 0$, temperature $\varepsilon = 0.1$ and final time $T = 3$ show significant diversity, with many paths transitioning at least once between different wells (see ground truth in \cref{fig:sde2d_recs_dataset}).

\paragraph{Experimental Setup.}

The training set consists of 16,384 paths generated with an Euler--Maruyama scheme with internal time step $\nicefrac{1}{8,192}$, subsampled by a factor $48$ to obtain an equally spaced mesh of $513$ points.
We take $d_{\latentspace} = 16$, $\beta = 10$, $\kappa = 50$, and $\lambda = 50$, and, as in \cref{subsec:details_MSM}, train on data where 50\% of the points on the path are missing. We also use the same learning rate, learning-rate decay schedule, step limit, and batch size.

\paragraph{Results.}
FVAE's reconstructions closely match the inputs (\cref{fig:sde2d_recs_dataset}), and FVAE produces convincing generative samples capturing qualitative features of the data (\cref{fig:sde2d_samples_dataset}).

\begin{figure}[htb]
    \centering
    \begin{subfigure}[b]{\linewidth}
        \centering
        \includegraphics[width=0.8\linewidth]{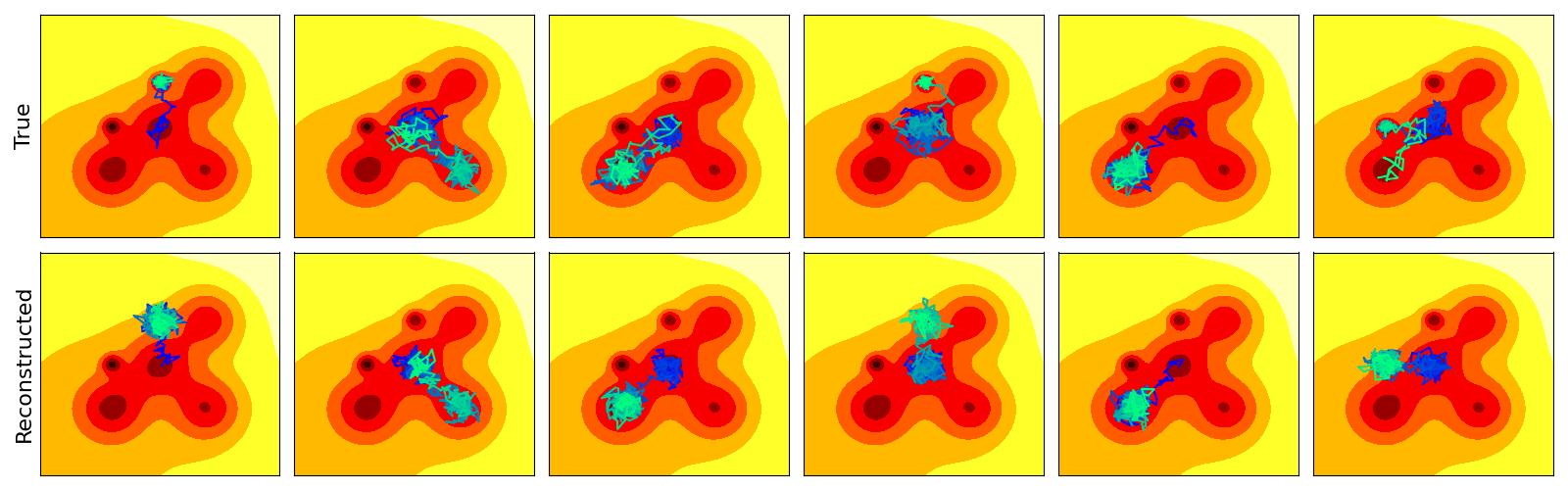}
    \end{subfigure}
    
    \vspace{-1em} 
    \rule{0.85\linewidth}{0.4pt}
    
    \begin{subfigure}[b]{\linewidth}
        \centering
        \includegraphics[width=0.8\linewidth]{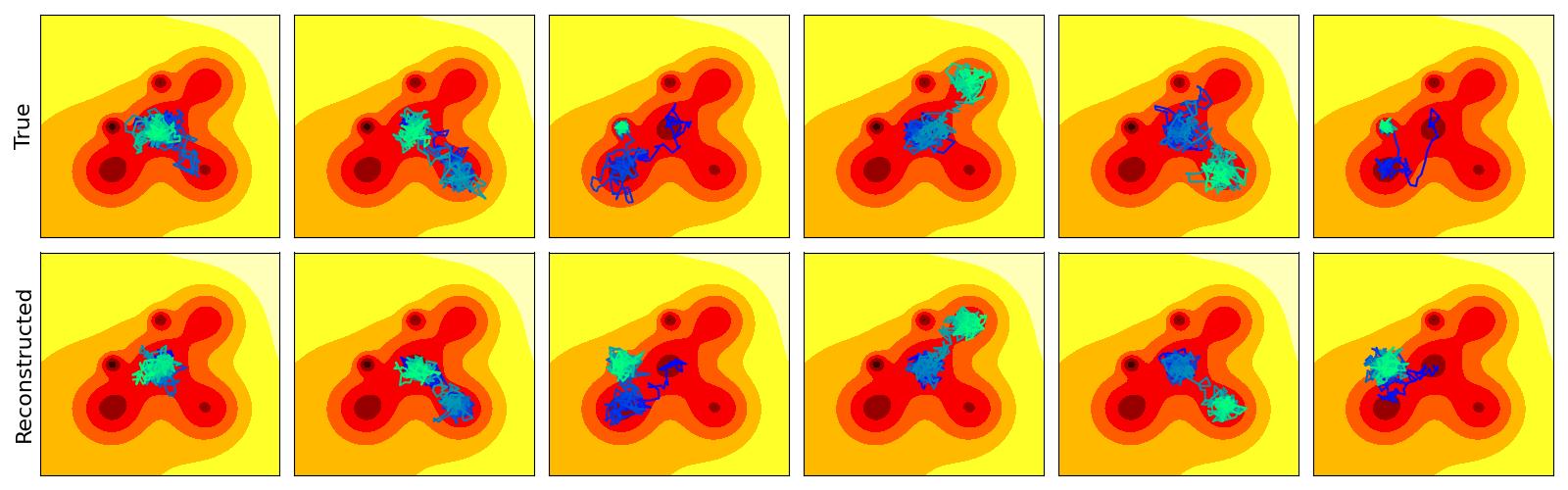}
    \end{subfigure}
    
	\caption{Held-out ground-truth data from the SDE in \cref{subsec:MSM} (``True'' row) and the corresponding FVAE reconstructions of sample paths (``Reconstructed'' row).}
    \label{fig:sde2d_recs_dataset}
\end{figure}

\begin{figure}[htbp]
    \centering
    \begin{subfigure}{0.49\textwidth}
        \centering
        \includegraphics[width=1\textwidth]{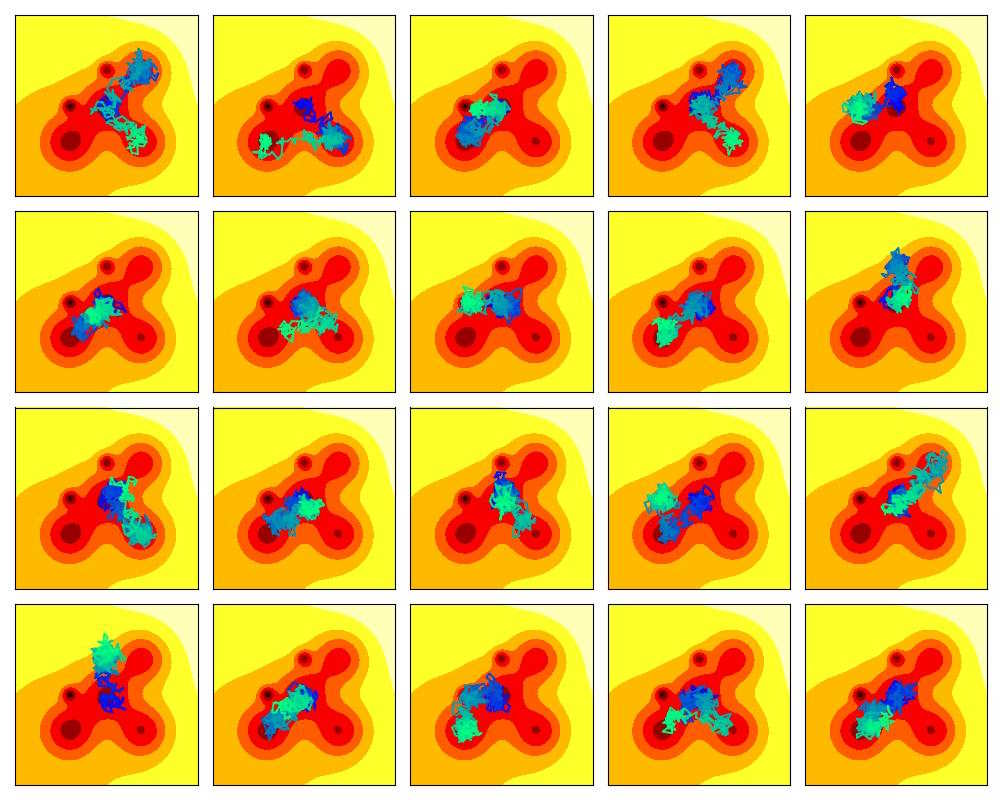}
        \caption{FVAE}
    \end{subfigure}
    \begin{subfigure}{0.49\textwidth}
        \centering
        \includegraphics[width=1\textwidth]{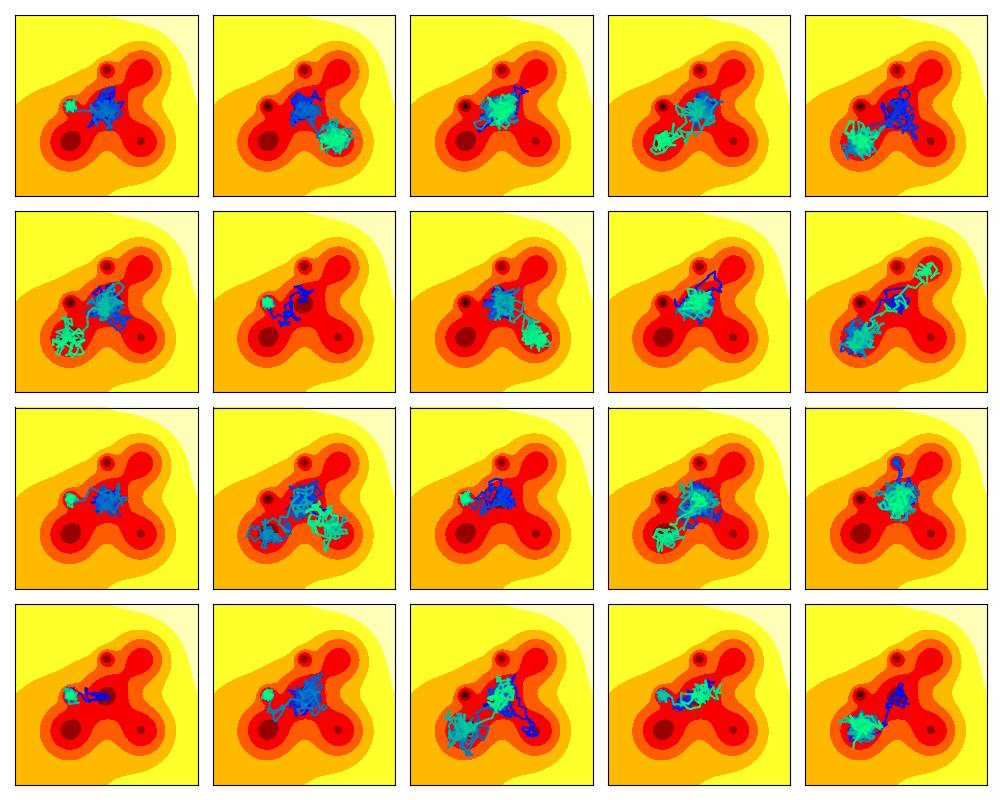}
        \caption{Data set}
    \end{subfigure}
    
	\caption{(a) Samples of the SDE in \cref{subsec:MSM} drawn from the FVAE generative model with randomly drawn latent vector $z \sim \latentmeas$. (b) Ground-truth paths of the SDE in \cref{subsec:MSM} generated using an Euler--Maruyama solver. In both subfigures, the evolution through time $t \in [0, 3]$ is depicted as a transition in colour from blue to green.}
    \label{fig:sde2d_samples_dataset}
\end{figure}

\subsection{Dirac Distributions}
\label{subsec:details_random_dirac}

\paragraph{Data and Discretisation.}
We view  $\datameas$ as a probability distribution on $\dataspace = H^{-1}([0, 1])$. 
At each resolution $I$, we discretise the domain $[0, 1]$ using an evenly spaced mesh of points $\{\nicefrac{i}{I + 1}\}_{i = 1, \dots, I}$ and approximate the Dirac mass $\dirac{\xi}$, $\xi \in [0, 1]$, by the optimal $L^{1}$-approximation:
a discretised function which is zero except at the mesh point closest to $\xi$, normalised to have unit $L^{1}$-norm.
The training data set consists of discretised Dirac functions at each mesh point;
the goal is not to train a practical model for generalisation, but to isolate the effect of the objective.

\paragraph{Experimental Setup.}
We train FVAE and FAE models at resolutions $I \in \{8, 16, 32, 64, 128\}$.
For each model, we perform 50 independent runs of 30,000 steps with batch size 6.

\paragraph{Architecture.}
The neural network $\rho \colon \Reals \times \Encoderparam \to \Reals \times \Reals$ in the encoder map $\encodermap$ is assumed to have 3 hidden layers of width 128, and the mean $\mu(z; \decoderparam)$ and standard deviation $\sigma(z; \decoderparam)$ in the decoder are computed from a 3-layer neural network of width 128.
For numerical stability, we impose a lower bound on $\sigma$ based on the mesh spacing $\Delta x$, given by $\sigma_{\mathrm{min}}(\Delta x) = (2\pi)^{-1/2} \Delta x$.

\paragraph{FVAE Configuration.}
We view data $u \sim \datameas$ as lying in the Sobolev space $\dataspace = H^{-1}([0, 1])$; 
the decoder $\decodermap$ will output functions in $L^{2}([0, 1])$ and we take decoder-noise distribution $\decodernoisemeas = N(0, I)$, noting that $\decodernoisemeas \in \prob{H^{s}([0, 1])}$ if and only if $s < -\nicefrac{1}{2}$; in particular
white-noise samples do not lie in the space $L^{2}([0, 1])$.
We modify the per-sample loss \eqref{eq:per-sample_loss} by reweighting the term $\KLdiv{\varmeas^{\encoderparam}}{\latentmeas}$ by $\beta = 10^{-4}$, and take $16$ Monte Carlo samples for $\varmeas^{\encoderparam}$.
We use an initial learning rate of $10^{-4}$, decaying exponentially by a factor $0.7$ every 1,000 steps. 

\paragraph{FAE Configuration.}
To compute the $H^{-1}$-norm we truncate the series expansion \eqref{eq:Sobolev_norm} and compute coefficients $\alpha_{j}$ from a discretisation of $u$ using the discrete sine transform.
We use initial learning rate $10^{-4}$, decaying exponentially by a factor $0.9$ every 1,000 steps, and take $\beta = 10^{-12}$.
For consistency with the FVAE loss, we subtract the squared data norm $\frac{1}{2}\norm{u}_{H^{-1}}^{2}$ from the FAE loss, yielding the expression
\begin{equation*}
    \frac{1}{2} \bignorm{\decodermap(\encodermean(u; \encoderparam); \decoderparam) - u}_{H^{-1}}^{2} - \frac{1}{2} \bignorm{u}_{H^{-1}}^{2} = \frac{1}{2} \Norm{\decodermap\bigl(\encodermean(u; \encoderparam); \decoderparam\bigr)}_{H^{-1}}^{2} - \Innerprod{\decodermap\bigl(\encodermean(u; \encoderparam); \decoderparam\bigr)}{u}_{H^{-1}}.
\end{equation*}

\paragraph{Results.}
As expected, the final training loss under both models decreases as the resolution is refined, since the lower bound $\sigma_{\mathrm{min}}$ decreases.
However, the FAE loss appears to converge and is stable across runs, while the FVAE loss appears to diverge and becomes increasingly unstable across runs.
This gives convincing empirical evidence that the joint divergence \eqref{eq:joint_Kullback--Leibler_divergence} for FVAE is not defined as a result of the misspecified decoder noise;
the use of FAE with an appropriate data norm alleviates this issue.
Since the FVAE objective with $\decodernoisemeas = N(0, I)$ coincides with the VANO objective, this issue would also be present for VANO.
Under both models, training becomes increasingly unstable at high resolutions:
when $\sigma$ is small, the loss becomes highly sensitive to changes in $\mu$;
this instability is unrelated to the divergence of the FVAE training loss and is a consequence of training through gradient descent.

\subsection{Incompressible Navier--Stokes Equations}
\label{subsec:details_Navier--Stokes}

\paragraph{Data and Discretisation.}

\begin{table}[h!]
\centering

\begin{tabular}{ccccc}
\toprule
\textbf{Viscosity $\nu$} & \textbf{Resolution} & \textbf{Train Samples} & \textbf{Eval.\ Samples} & \textbf{Snapshot Time $T$} \\\midrule
$10^{-3}$ & $64\times64$ & 4,000 & 1,000  & 50 \\
$10^{-4}$ & $64\times64$ & 8,000 & 2,000 & 50 \\
$10^{-5}$ & $64\times64$ & 960 & 240 & 20 \\
\bottomrule
\end{tabular}
\caption{Details of Navier--Stokes data sets.}
\label{table:ns_data}
\end{table}

We use data as provided online by \citet{Lietal2021}. 
Solutions of \eqref{eq:Navier--Stokes_velocity--vorticity} are generated by sampling the initial condition from the Gaussian random field $N(0, C)$, $C = 7^{3/2} (49I - \Delta)^{-5/2}$, and evolving in time using a pseudospectral method.
While the data of \citet{Lietal2021} includes the full time evolution, we use only snapshots of the vorticity at the final time. Every snapshot is a $64\times64$ image, normalised to take values in $[0, 1]$; 
details of this data set are given in \Cref{table:ns_data}.

\paragraph{Effects of Point Ratios.}
Here, we extend the analysis of \cref{fig:Navier--Stokes_inpainting}(b) to understand how the point ratio used during training affects reconstruction performance.
We first train two FAE models on the Navier--Stokes data set with viscosity $\nu = 10^{-4}$, using complement masking with a point ratio $r_{\text{enc}}$ of 10\% and 90\% respectively.
Then, we fix an arbitrary sample from the held-out set and, for each model, generate 1,000 distinct masks with point ratios 10\%, 30\%, 50\%, 70\%, and 90\%.
We then encode on each mesh and decode on the full grid and compute kernel density estimates of the reconstruction MSE (\cref{fig:appendix_points_ratio_study}).
The model trained with $r_{\text{enc}} = 10\%$ is much more sensitive to the location of the evaluation mesh points, especially when the evaluation point ratio is low;
with sufficiently high encoder point ratio at evaluation time, however, the reconstruction MSE of the model trained using $r_{\text{enc}} = 10\%$ surpasses that of the model trained at $r_{\text{enc}} = 90\%$.
This suggests a tradeoff whereby a higher training point ratio provides more stability, at the cost of increasing autoencoding MSE, particularly when the point ratio of the evaluation data is high. 
We hypothesise that a lower training ratio regularises the model to attain a more robust internal representation.

\begin{figure}[htb]
    \centering
    
    \begin{subfigure}[b]{0.49\linewidth}
        \centering
        \includegraphics[width=0.99\linewidth]{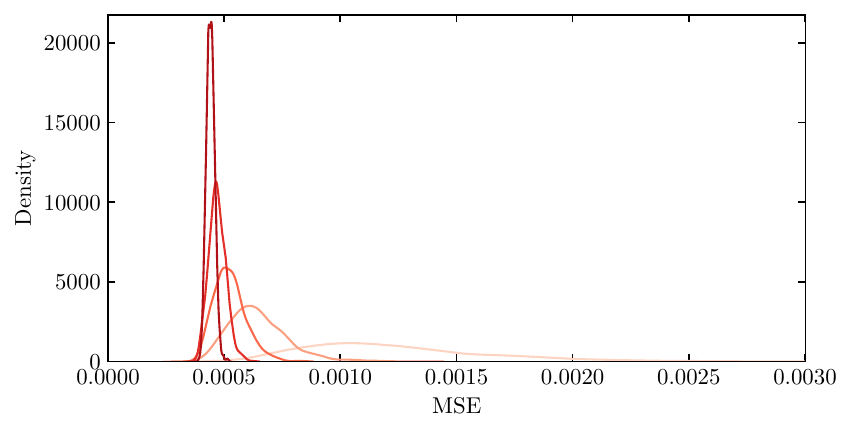}
        \caption{For a model trained with point ratio 10\%.}
    \end{subfigure}
    \begin{subfigure}[b]{0.49\linewidth}
        \centering
        \includegraphics[width=0.99\linewidth]{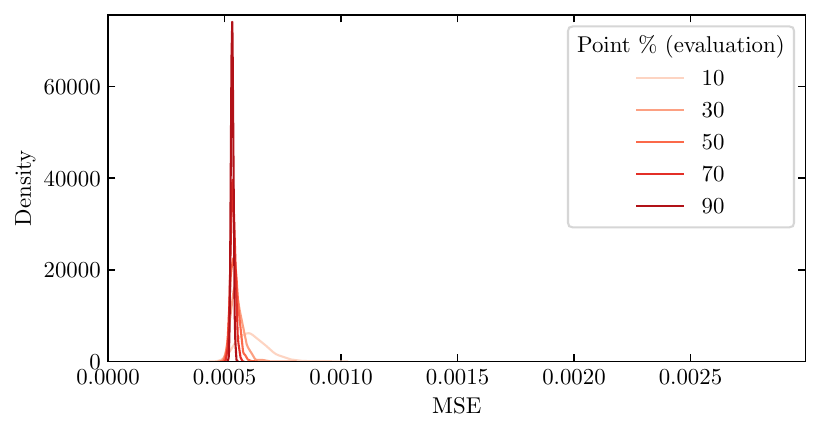}
        \caption{For a model trained with a point ratio 90\%.}
    \end{subfigure}
    
	\caption{Kernel density estimates for full-grid reconstruction MSE on the reference sample across 1,000 randomly chosen meshes. Training with a low point ratio regularises, reducing MSE when the evaluation data has a high point ratio, but at the cost of greater variance when evaluating on low point ratios.}
    \label{fig:appendix_points_ratio_study}
\end{figure}

We also investigate the sensitivity of the models to a specific encoder mesh, seeking to understand whether an encoder mesh achieving low MSE on one image leads to low MSE on other images.
The procedure is as follows:
we select an image arbitrarily from the held-out set (the \defterm{reference sample}) and draw 1,000 random meshes with point ratio 10\%;
then, we select the mesh resulting in the lowest reconstruction MSE for each of the two models.
For the nearest neighbours of the chosen sample in the held-out set, the reconstruction error on this MSE-minimising mesh is lower than average (\cref{fig:comparison}(a); dashed lines), suggesting that a good configuration will yield good results on similar samples.
On the other hand, using the MSE-minimising mesh on arbitrary samples from the held-out set yields an MSE somewhat lower than a randomly chosen mesh;
unsurprisingly, however, the arbitrarily chosen samples appear to benefit less than the nearest neighbours (\cref{fig:comparison}(b)).

\begin{figure}[htb]  
    \centering
    \begin{subfigure}[b]{\textwidth}
        \centering
        \includegraphics[clip, trim=0cm 0cm 0cm 0.5cm, width=\textwidth]{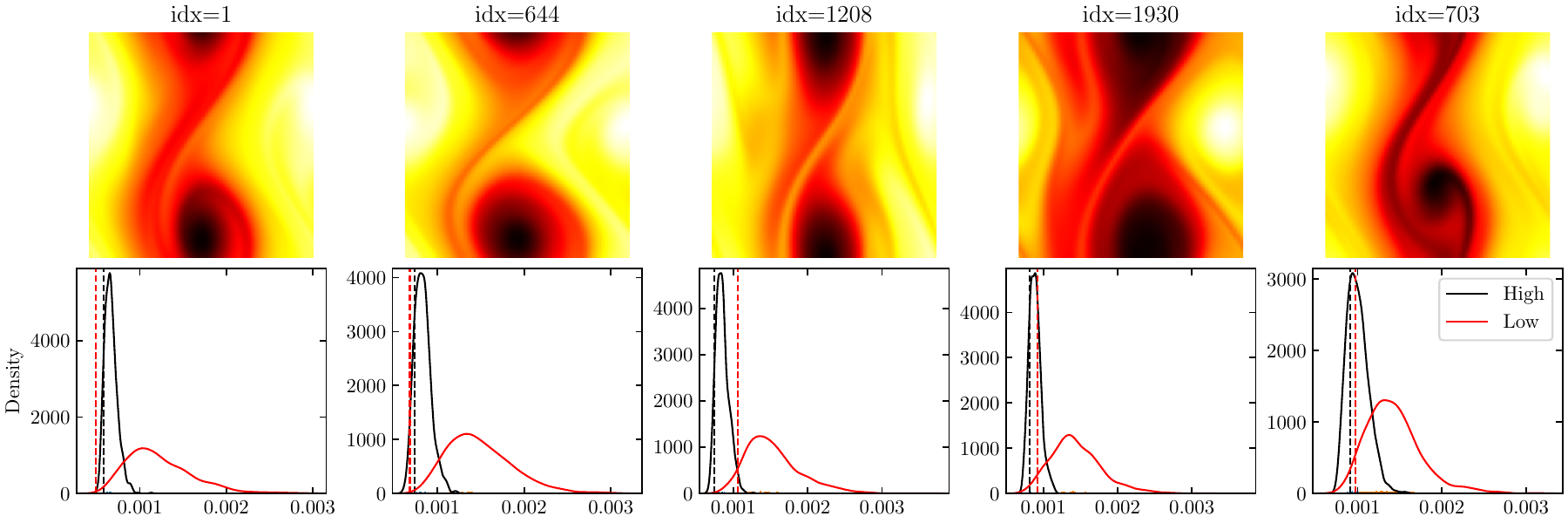}
        \caption{Nearest neighbours of the reference sample in the held-out set.}
    \end{subfigure}
    \vspace{1em}  
    \begin{subfigure}[b]{\textwidth}
        \centering
        \includegraphics[clip, trim=0cm 0cm 0cm 0.5cm, width=\textwidth]{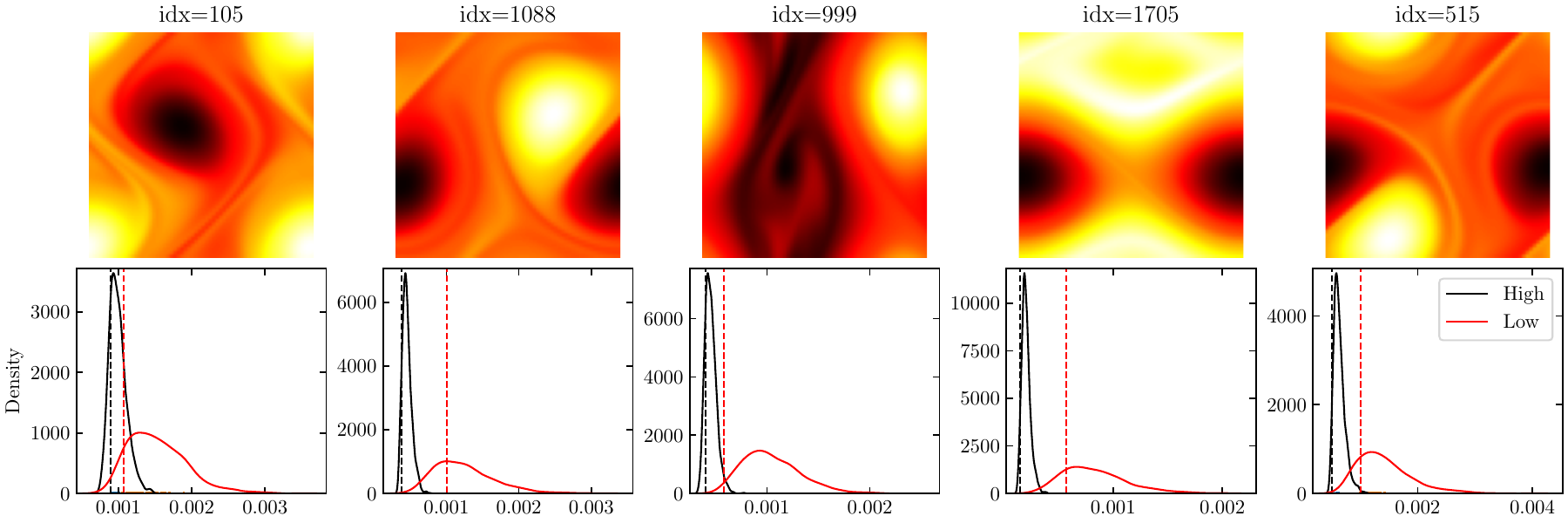}
        \caption{Arbitrarily chosen samples from the held-out set.}
    \end{subfigure}
	\vspace{-2.6em}
    \caption{Kernel density estimates of full-grid reconstruction MSE for models trained at 10\% (Low) and 90\% (High) point ratios on samples from the held-out set. Dashed lines indicate the MSE obtained using the mesh minimising MSE on the reference sample.}
    \label{fig:comparison}
    \vspace{-0.9em}
\end{figure}

\paragraph{Experimental Setup.}

We train for 50,000 steps with batch size 32 and initial learning rate $10^{-3}$, decaying exponentially by a factor 0.98 every 1,000 steps.
We use complement masking with $r_{\text{enc}} = 0.3$, providing a good balance of performance and robustness to masking.

\paragraph{Architecture.}

Both the CNN and FAE architecture use Gaussian random positional encodings with $k=16$.
For the sake of comparison, we use a standard CNN architecture inspired by the VGG model \citep{SimonyanZisserman2015}, gradually contracting/expanding the feature map while increasing/decreasing the channel dimensions. 
The architecture we use was identified using a search over parameters such as the network depth while maintaining a similar parameter count to our baseline FAE model. 
The encoder consists of four CNN layers with output channel dimension 4, 4, 8, and 16 respectively and kernel sizes are 2, 2, 4, and 4 respectively, all with stride 2.
The result is flattened and passed through a single-hidden-layer MLP of width 64 to obtain a vector of dimension 64.
The decoder consists of a single-layer MLP of width 64 and output dimension 512, which is then rearranged to a $4 \times 4$ feature map with channel size 32. 
This feature map is then passed through four layers of transposed convolutions that respectively map to 16, 8, 4, and 4 channel dimensions, with kernel sizes 4, 4, 2, and 2 respectively, and stride 2.
The result is then mapped by two CNN layers with kernel size 3, stride 1, and output channel dimension 8 and 1 respectively.

\paragraph{Uncurated Reconstructions and Samples.}

Reconstructions of randomly selected data from the held-out sets for viscosities $\nu = 10^{-3}$, $10^{-4}$ and $10^{-5}$ are provided in \cref{fig:appendix_Navier--Stokes_rec_1e-3,fig:appendix_Navier--Stokes_rec_1e-4,fig:appendix_Navier--Stokes_rec_1e-5} respectively.
\begin{figure}[htbp]
    \centering
    \begin{subfigure}[b]{0.42\textwidth}
        \centering
        \includegraphics[width=\textwidth]{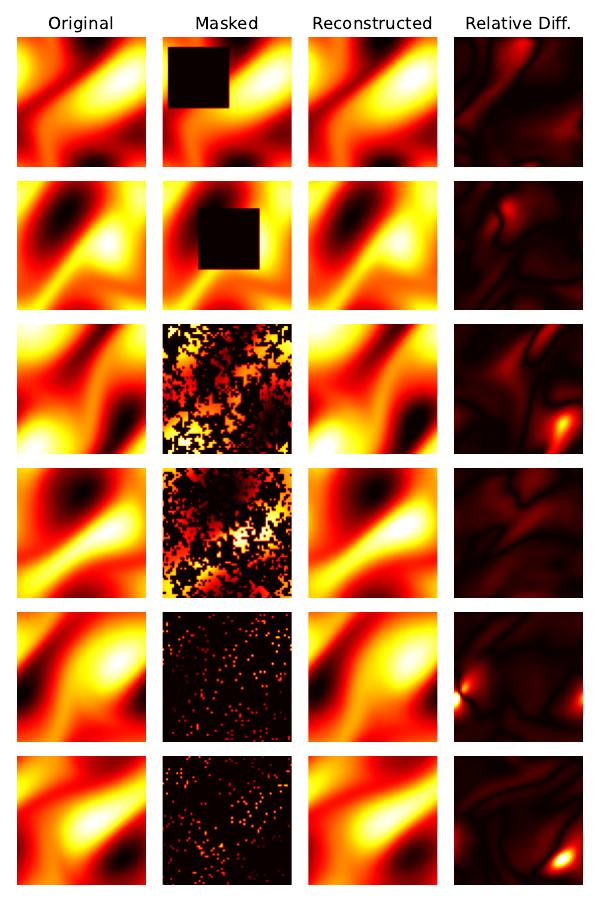}
    \end{subfigure}
    \begin{tikzpicture}
        \draw[thick] (0,0) -- (0,8); 
    \end{tikzpicture}
    \begin{subfigure}[b]{0.42\textwidth}
        \centering
        \includegraphics[width=\textwidth]{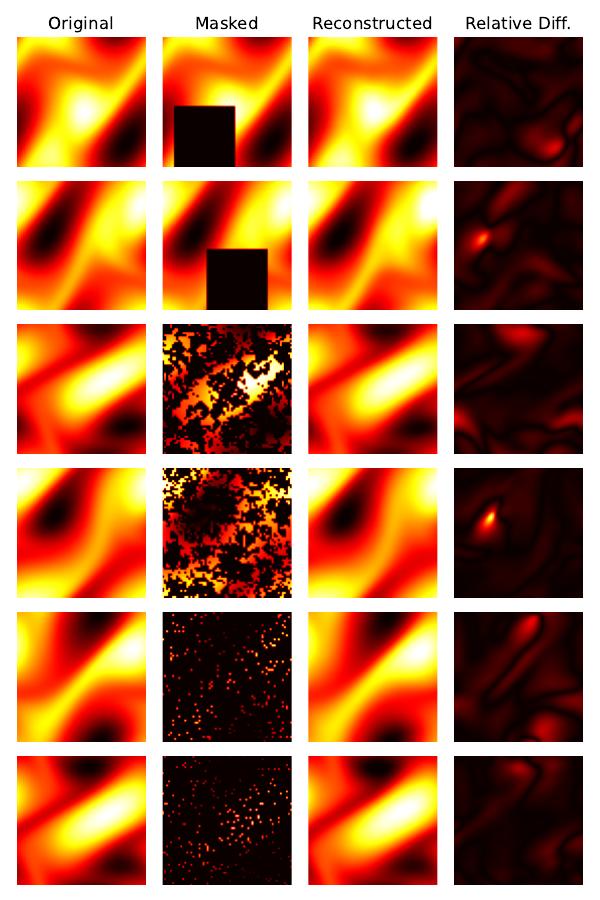}
    \end{subfigure}
    \caption{FAE reconstructions of Navier--Stokes data with viscosity $10^{-3}$.}
    \label{fig:appendix_Navier--Stokes_rec_1e-3}
\end{figure}
\begin{figure}[htbp]
    \centering
    \begin{subfigure}[b]{0.42\textwidth}
        \centering
        \includegraphics[width=\textwidth]{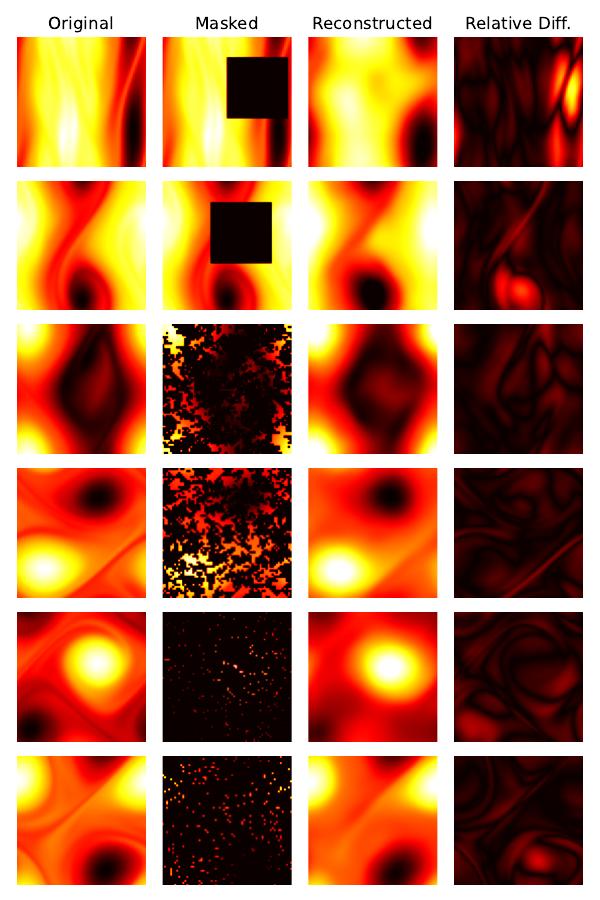}
    \end{subfigure}
    \begin{tikzpicture}
        \draw[thick] (0,0) -- (0,8); 
    \end{tikzpicture}
    \begin{subfigure}[b]{0.42\textwidth}
        \centering
        \includegraphics[width=\textwidth]{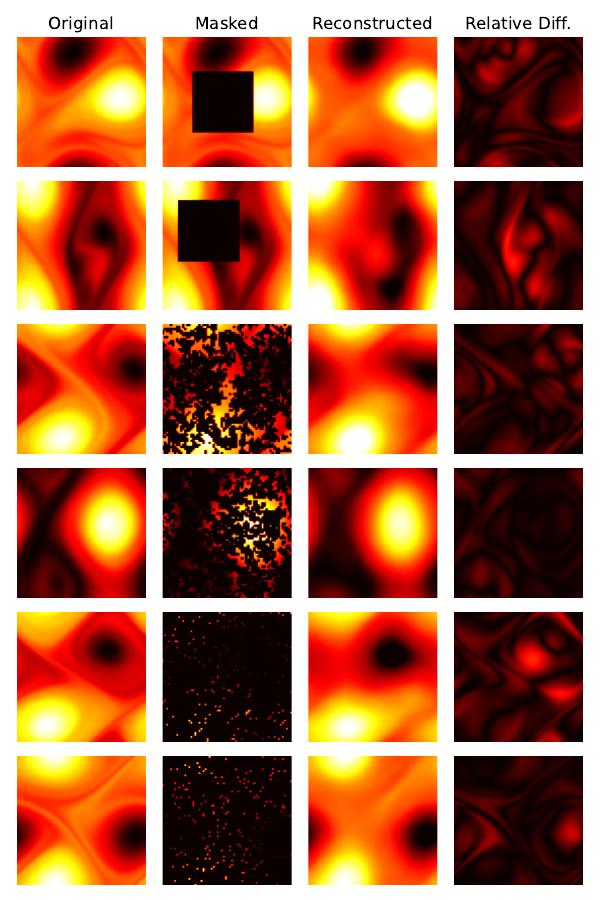}
    \end{subfigure}
    \caption{FAE reconstructions of Navier--Stokes data with viscosity $10^{-4}$.}
        \label{fig:appendix_Navier--Stokes_rec_1e-4}
\end{figure}
\begin{figure}[htbp]
    \centering
    \begin{subfigure}[b]{0.42\textwidth}
        \centering
        \includegraphics[width=\textwidth]{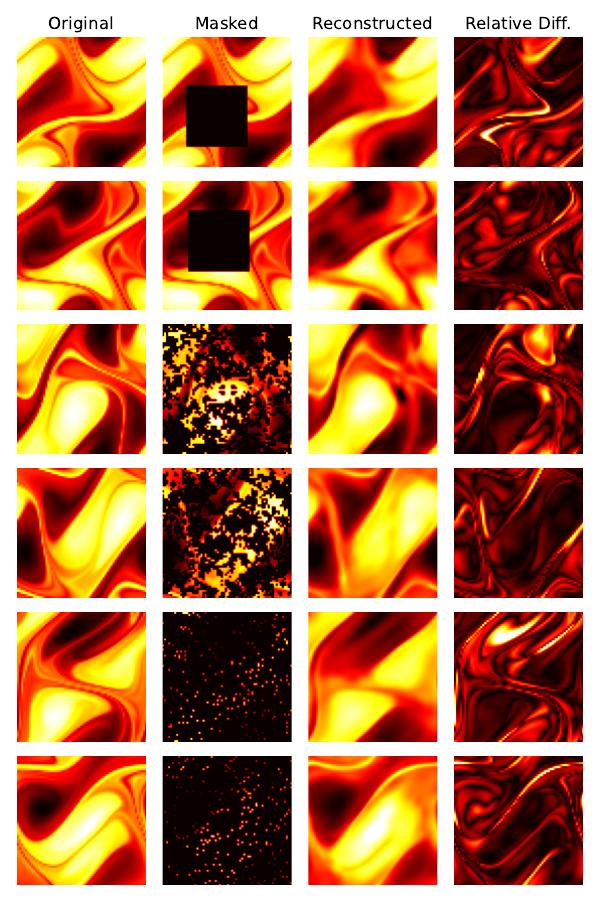}
    \end{subfigure}
    \begin{tikzpicture}
        \draw[thick] (0,0) -- (0,8); 
    \end{tikzpicture}
    \begin{subfigure}[b]{0.42\textwidth}
        \centering
        \includegraphics[width=\textwidth]{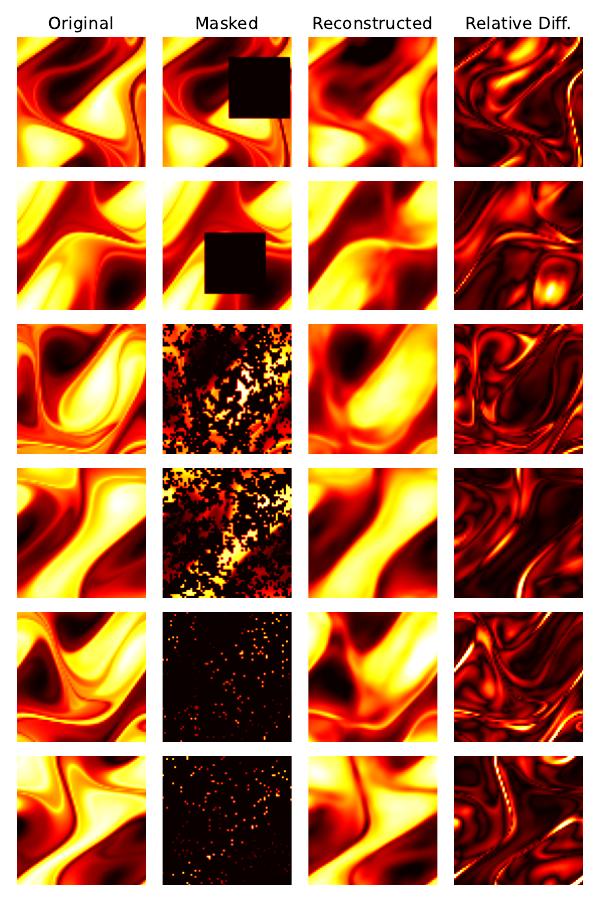}
    \end{subfigure}
    \caption{FAE reconstructions of Navier--Stokes data with viscosity $\nu = 10^{-5}$.}    \label{fig:appendix_Navier--Stokes_rec_1e-5}
\end{figure}
As described in \cref{subsec:Darcy}, we apply FAE as a generative model by fitting a Gaussian mixture with 10 components on the latent space.
Samples from models trained at $\nu = 10^{-3}$, $10^{-4}$ and $10^{-5}$ are shown in \cref{fig:appendix_Navier--Stokes_generative_1e-3,fig:appendix_Navier--Stokes_generative_1e-4,fig:appendix_Navier--Stokes_generative_1e-5} respectively.

\begin{figure}[htbp]
    \centering
    \begin{subfigure}[b]{0.49\textwidth}
        \centering
        \includegraphics[width=\textwidth]{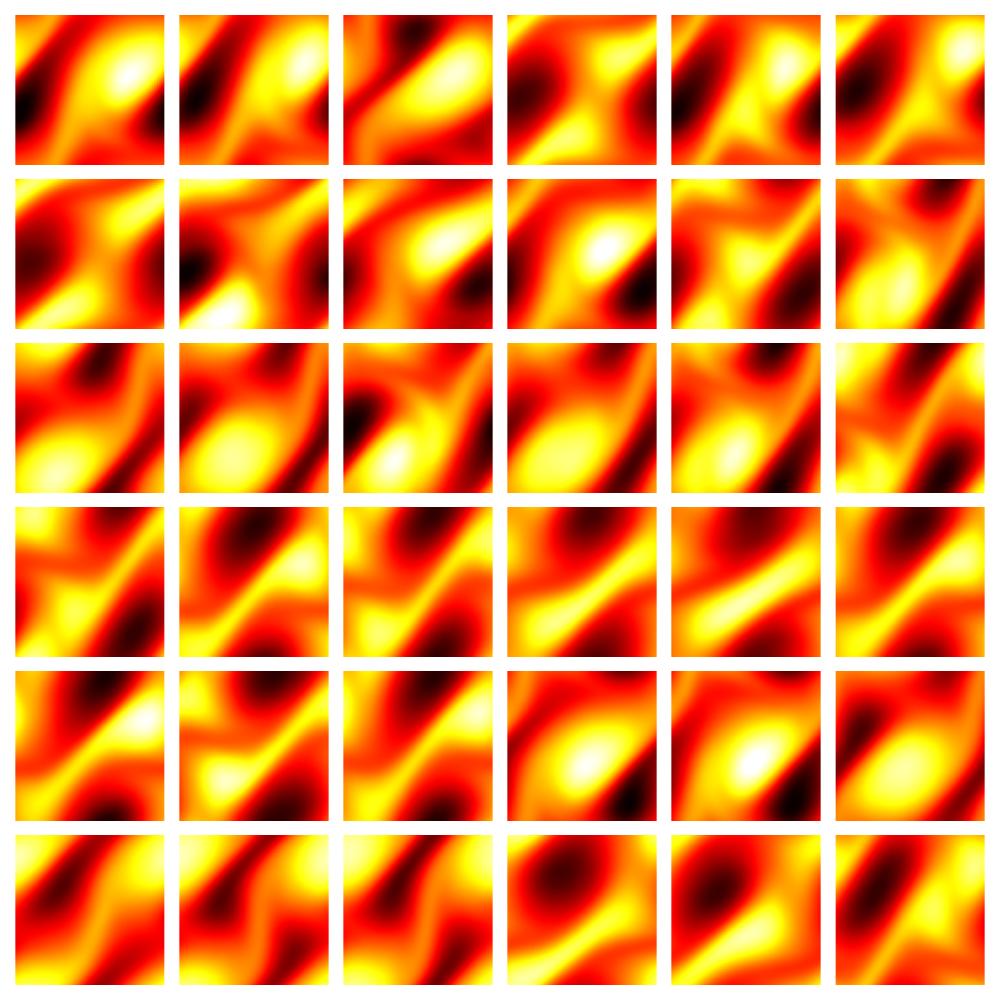}
        \caption{FAE}
    \end{subfigure}
    \begin{subfigure}[b]{0.49\textwidth}
        \centering
        \includegraphics[width=\textwidth]{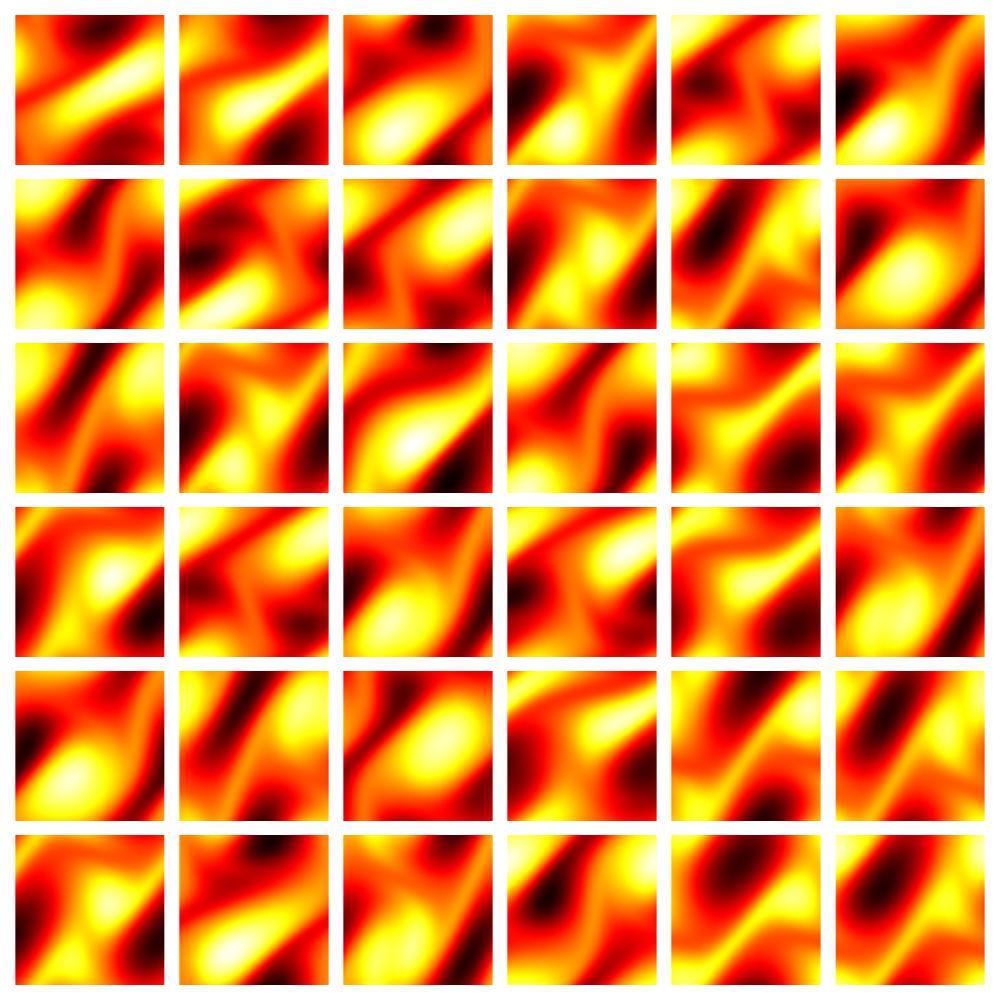}
        \caption{Data set}
    \end{subfigure}
    
    \caption{Samples of Navier--Stokes data with viscosity $\nu = 10^{-3}$.}
    \label{fig:appendix_Navier--Stokes_generative_1e-3}
\end{figure}

\begin{figure}[htbp]
    \centering
    \begin{subfigure}[b]{0.49\textwidth}
        \centering
        \includegraphics[width=\textwidth]{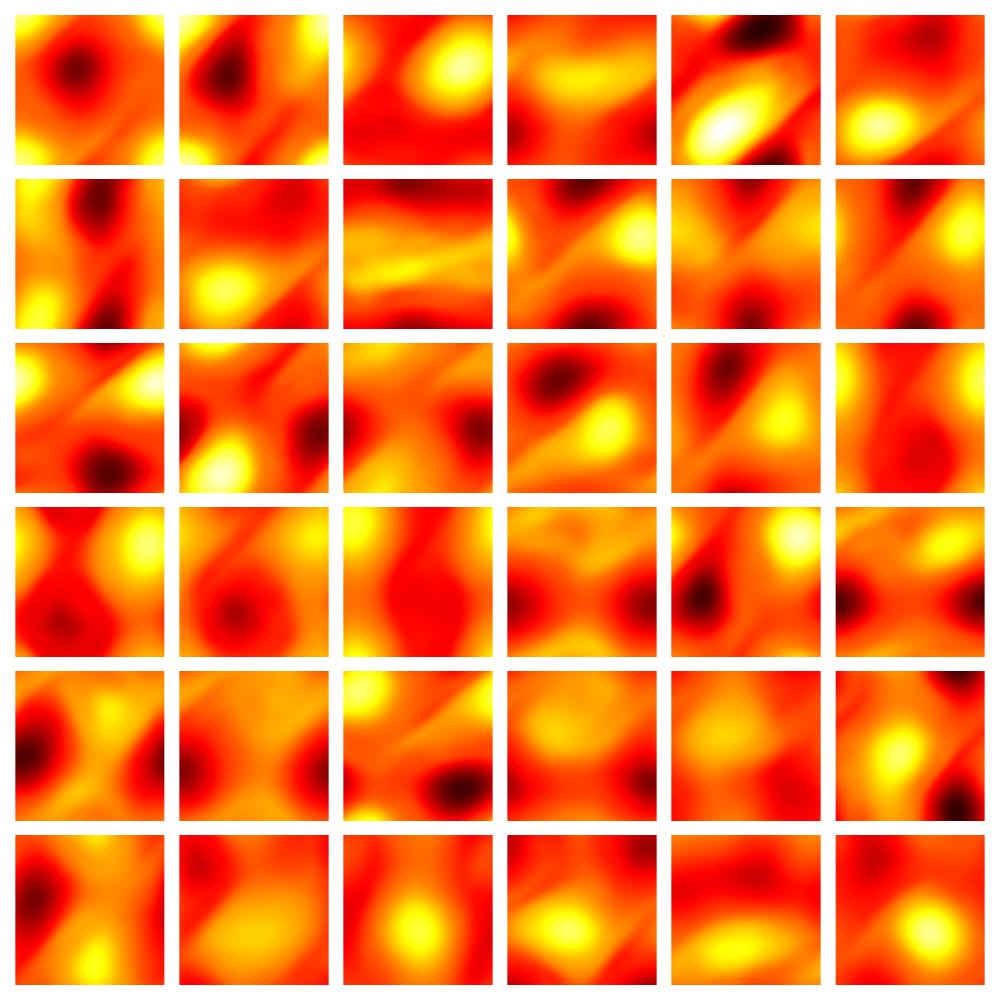}
        \caption{FAE}
    \end{subfigure}
    \begin{subfigure}[b]{0.49\textwidth}
        \centering
        \includegraphics[width=\textwidth]{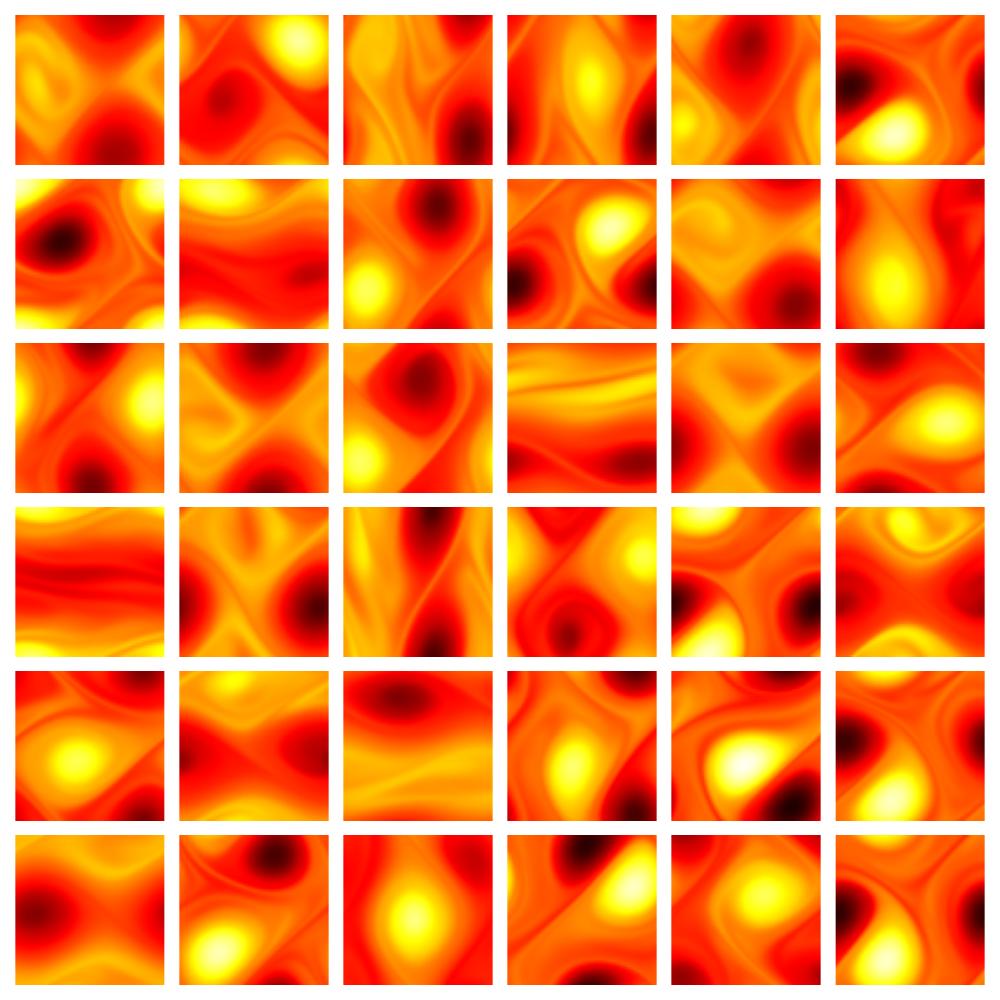}
        \caption{Data set}
    \end{subfigure}
    
    \caption{Samples of Navier--Stokes data with viscosity $\nu = 10^{-4}$.}
    \label{fig:appendix_Navier--Stokes_generative_1e-4}
\end{figure}

\begin{figure}[htbp]
    \centering
    \begin{subfigure}[b]{0.49\textwidth}
        \centering
        \includegraphics[width=\textwidth]{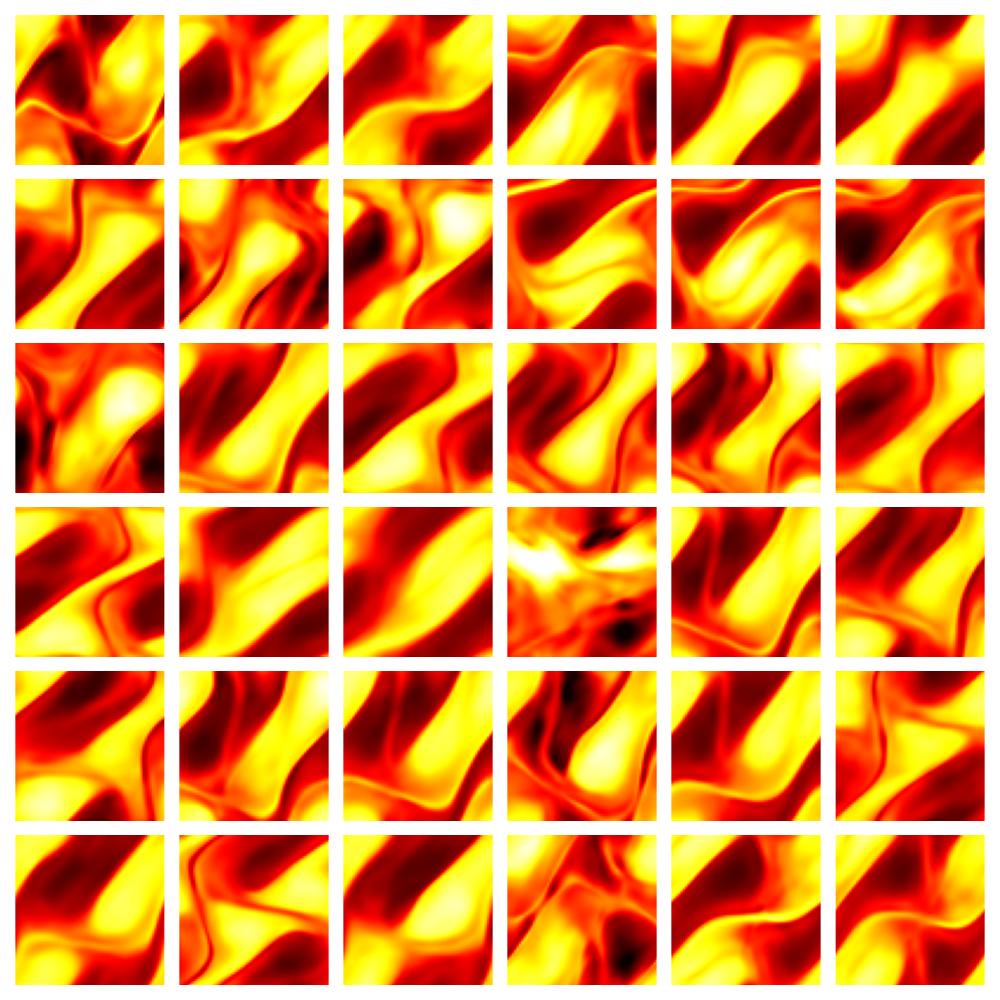}
        \caption{FAE}
    \end{subfigure}
    \begin{subfigure}[b]{0.49\textwidth}
        \centering
        \includegraphics[width=\textwidth]{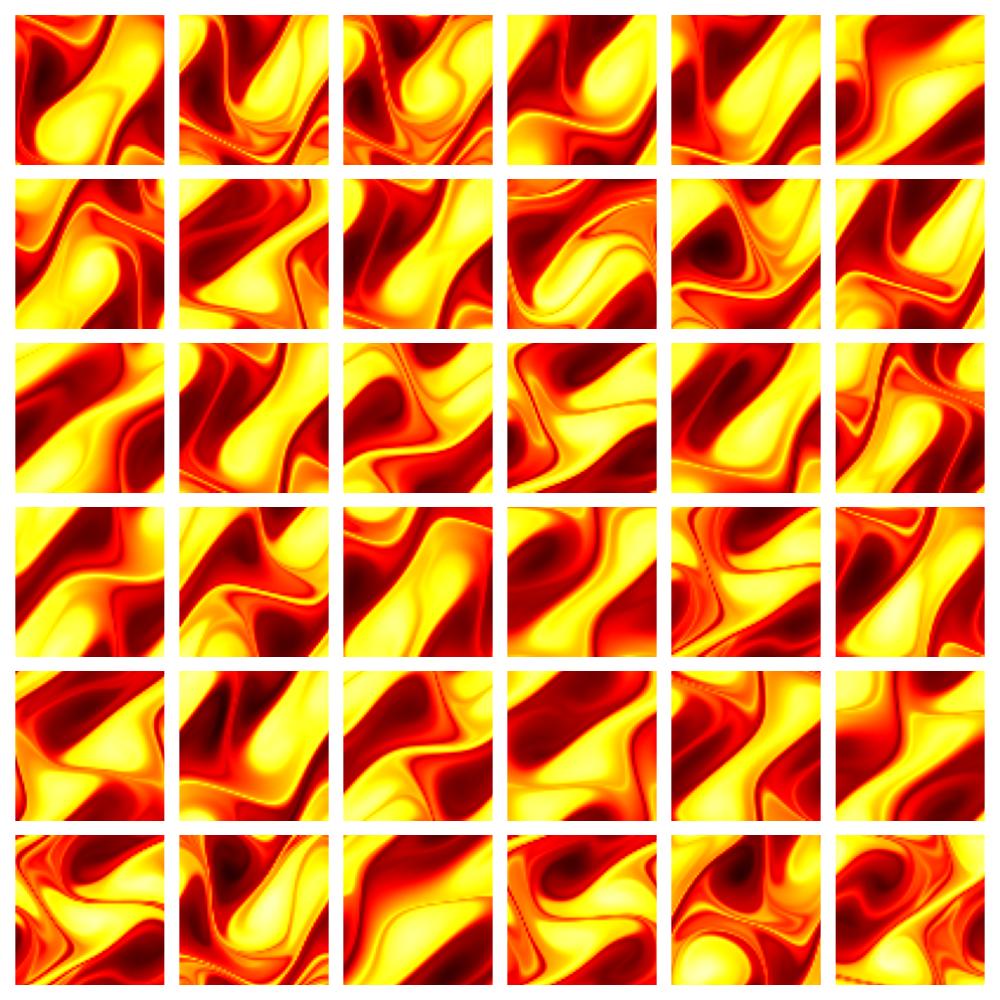}
        \caption{Data set}
    \end{subfigure}
    
    \caption{Samples of Navier--Stokes data with viscosity $\nu = 10^{-5}$.}
    \label{fig:appendix_Navier--Stokes_generative_1e-5}
\end{figure}

\paragraph{Evaluation at Very High Resolutions.}
In \cref{fig:Navier--Stokes_zero-shot_and_latent_interpolation}, we demonstrate zero-shot resolution by evaluating the decoder on grids of resolution $\text{2,048} \times \text{2,048}$ and $\text{32,768} \times \text{32,768}$.
While the former requires approximately 16~MB to store using 32-bit floating-point numbers, the latter requires 4.3~GB, and thus applying a neural network directly to the $\text{32,768} \times \text{32,768}$ image is more likely to exhaust GPU memory.
To allow evaluation of the decoder at this resolution, we partition the domain into 1,000 chunks and evaluate the decoder on each chunk in turn;
we then reassemble the resulting data in the RAM.
To ensure that each chunk has an integer number of points, we take the first 824 chunks to contain 1,073,742 mesh points ($\approx$ 4~MB), and take the remaining 176 chunks to contain 1,073,741 points.

\subsection{Darcy Flow}
\label{subsec:details_Darcy_flow}

\paragraph{Data Set.}
The data we use is based on that provided online by \citet{Lietal2021}, given on a $421 \times 421$ grid and generated through a finite-difference scheme.
Where described, we downsample this data to lower resolutions by applying a low-pass filter in Fourier space and subsampling the resulting image.
The low-pass filter is a mollification of an ideal $\mathrm{sinc}$ filter with bandwidth selected to eliminate frequencies beyond the Nyquist frequency of the target resolution, computed by convolving the ideal filter in Fourier space with a Gaussian kernel with standard deviation $\sigma = 0.1$, truncated to a $7 \times 7$ convolutional filter.

\paragraph{Experimental Setup.}
We follow the same setup used for the Navier--Stokes data set: 
we train for 50,000 steps, with batch size 32 and complement masking with $r_{\text{enc}} = 30\%$.
An initial learning rate of $10^{-3}$ is used with an exponential decay factor of 0.98 applied every 1,000 steps. 
We make use of positional embeddings (\cref{subsec:details_base_architecture}) using $k = 16$ Gaussian random Fourier features.
When performing the wall-clock training time experiment (\cref{fig:Darcy_wallclock}), we downsample the training and evaluation data to resolution $211\times211$.

\paragraph{Uncurated Reconstructions and Samples.}
Reconstructions of randomly selected examples from the held-out evaluation data set are shown in \cref{fig:appendix_Darcy_reconstructions}.
Samples from the FAE generative model and draws from the evaluation data set are shown in \cref{fig:appendix_Darcy_generative}.

\begin{figure}[htbp]
    \centering
    \begin{subfigure}[b]{0.42\textwidth}
        \centering
        \includegraphics[width=\textwidth]{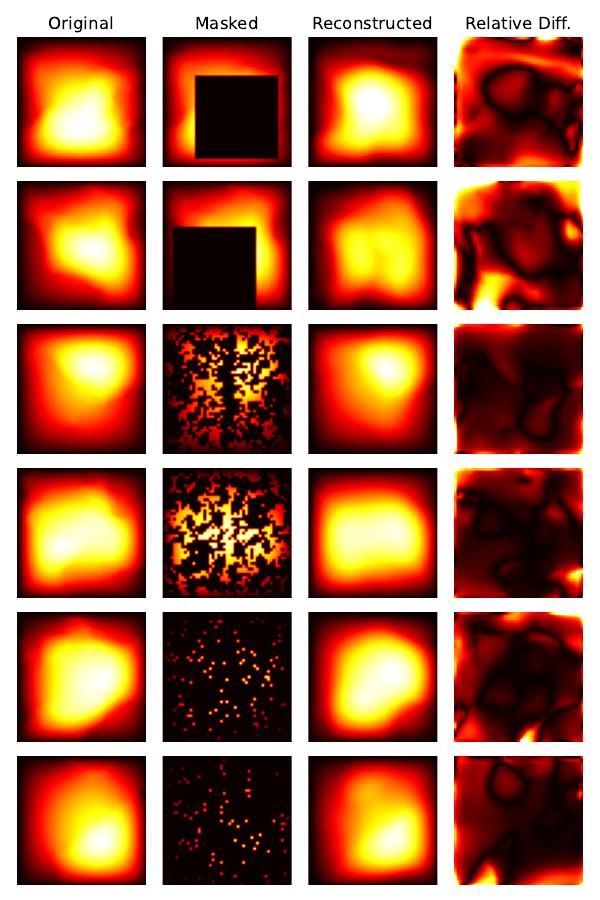}
    \end{subfigure}
    \begin{tikzpicture}
        \draw[thick] (0,0) -- (0,8); 
    \end{tikzpicture}
    \begin{subfigure}[b]{0.42\textwidth}
        \centering
        \includegraphics[width=\textwidth]{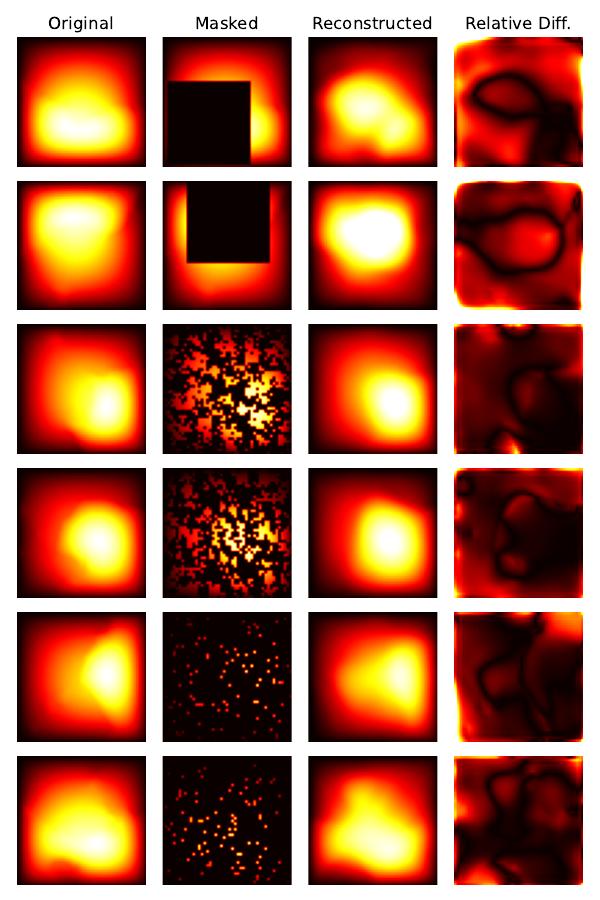}
    \end{subfigure}
    \caption{FAE reconstructions of Darcy flow data.}
    \label{fig:appendix_Darcy_reconstructions}
\end{figure}

\begin{figure}[htbp]
    \centering
    \begin{subfigure}[b]{0.48\textwidth}
        \centering
        \includegraphics[width=\textwidth]{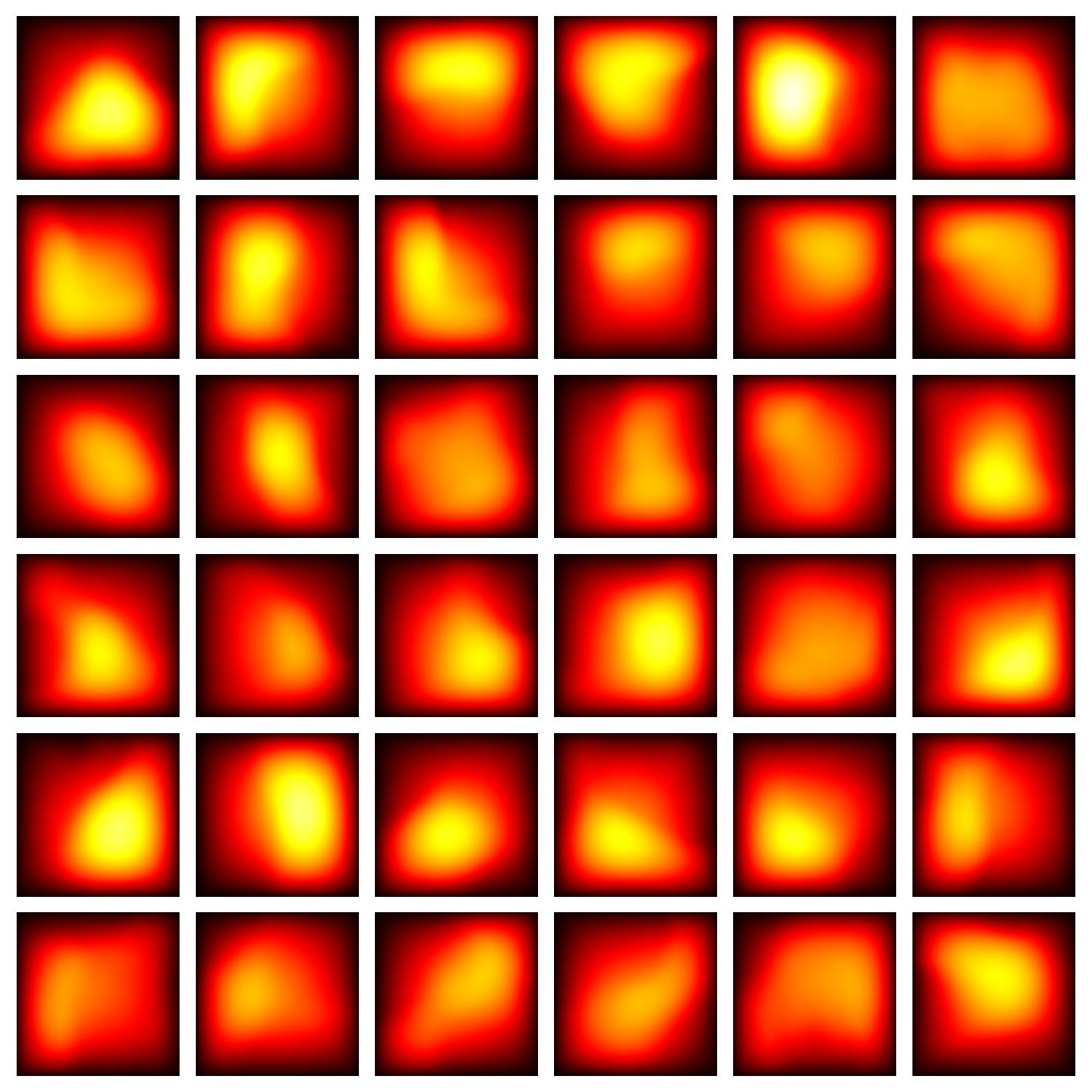}
        \caption{FAE}
    \end{subfigure}
    \begin{subfigure}[b]{0.48\textwidth}
        \centering
        \includegraphics[width=\textwidth]{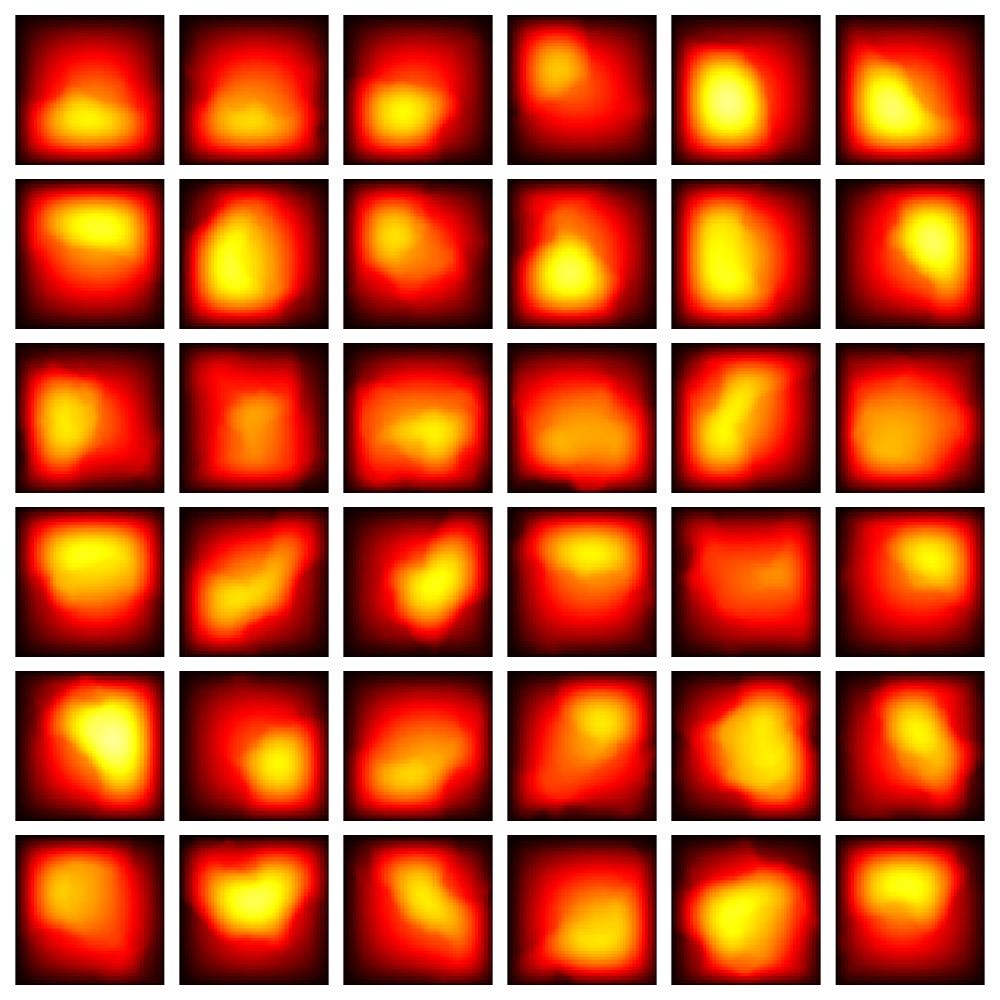}
        \caption{Data set}
    \end{subfigure}
    
    \caption{Samples of Darcy flow data.}
    \label{fig:appendix_Darcy_generative}
\end{figure}

\vskip 0.2in
\bibliography{references}

\end{document}